\pdfoutput=1
\documentclass[letterpaper]{article}
\usepackage[totalwidth=480pt, totalheight=680pt]{geometry}

\usepackage{algorithm}
\usepackage{algorithmicx}
\usepackage[noend]{algpseudocode}
\usepackage{subcaption}
\usepackage{listings}
\usepackage{enumerate}
\usepackage{bm}
\usepackage{amsmath}
\usepackage{amssymb}
\usepackage{amsthm}
\usepackage{pifont}
\usepackage{booktabs}
\usepackage{xcolor}
\definecolor{darkgreen}{rgb}{0,0.5,0}
\usepackage{hyperref}
\hypersetup{
    unicode=false,          
    colorlinks=true,        
    linkcolor=red,          
    citecolor=darkgreen,    
    filecolor=magenta,      
    urlcolor=blue           
}
\usepackage[capitalize, nameinlink]{cleveref}

\usepackage{thm-restate}
\theoremstyle{plain}
\newtheorem{theorem}{Theorem}

\newtheorem{lemma}[theorem]{Lemma}
\newtheorem{fact}[theorem]{Fact}

\theoremstyle{definition}
\newtheorem{definition}[theorem]{Definition}

\theoremstyle{remark}
\newtheorem{remark}[theorem]{Remark}

\usepackage{tikz}
\usetikzlibrary{calc, graphs, graphs.standard, shapes, arrows, arrows.meta, positioning, decorations.pathreplacing, decorations.markings, decorations.pathmorphing, fit, matrix, patterns, shapes.misc, tikzmark}

\newcommand{\ba}{\mathbf{a}}
\newcommand{\bb}{\mathbf{b}}

\newcommand{\bg}{\mathbf{g}}
\newcommand{\bu}{\mathbf{u}}
\newcommand{\bv}{\mathbf{v}}
\newcommand{\bx}{\mathbf{x}}
\newcommand{\by}{\mathbf{y}}
\newcommand{\bz}{\mathbf{z}}

\newcommand{\bA}{\mathbf{A}}
\newcommand{\bB}{\mathbf{B}}
\newcommand{\bC}{\mathbf{C}}
\newcommand{\bD}{\mathbf{D}}

\newcommand{\bI}{\mathbf{I}}

\newcommand{\bM}{\mathbf{M}}

\newcommand{\bS}{\mathbf{S}}
\newcommand{\bT}{\mathbf{T}}
\newcommand{\bU}{\mathbf{U}}
\newcommand{\bV}{\mathbf{V}}

\newcommand{\bX}{\mathbf{X}}
\newcommand{\bY}{\mathbf{Y}}

\newcommand{\cB}{\mathcal{B}}

\newcommand{\cE}{\mathcal{E}}
\newcommand{\cF}{\mathcal{F}}

\newcommand{\cM}{\mathcal{M}}

\newcommand{\cO}{\mathcal{O}}
\newcommand{\cP}{\mathcal{P}}
\newcommand{\cQ}{\mathcal{Q}}

\newcommand{\cS}{\mathcal{S}}
\newcommand{\cT}{\mathcal{T}}

\newcommand{\eps}{\varepsilon}

\newcommand{\R}{\mathbb{R}}
\newcommand{\wt}{\widetilde}
\newcommand{\wh}{\widehat}
\newcommand{\tv}{\mathrm{d_{TV}}}
\newcommand{\kl}{\mathrm{d_{KL}}}
\newcommand{\Tr}{\mathrm{Tr}}
\newcommand{\poly}{\mathrm{poly}}

\DeclareMathOperator*{\argmin}{argmin}
\renewcommand{\vec}{\mathrm{vec}}
\DeclareMathOperator{\mat}{\mathrm{mat}}
\newcommand{\diag}{\mathrm{diag}}
\DeclareMathOperator{\E}{\mathbb{E}} 
\DeclareMathOperator{\Var}{\mathsf{Var}} 
\DeclareMathOperator{\Fail}{\mathsf{Fail}}
\DeclareMathOperator{\Accept}{\mathsf{Accept}}
\DeclareMathOperator{\maj}{\mathsf{majority}}
\DeclareMathOperator{\Reject}{\mathsf{Reject}}
\DeclareMathOperator{\OK}{\mathsf{OK}}

\allowdisplaybreaks

\title{Learning multivariate Gaussians with imperfect advice}
\author{
Arnab Bhattacharyya\thanks{Part of work done while the authors were affiliated with the National University of Singapore, Singapore.}\\
University of Warwick\\
\texttt{arnab.bhattacharyya@warwick.ac.uk}
\and
Davin Choo$^*$\\
Harvard University\\
\texttt{davinchoo@seas.harvard.edu}
\and
Philips George John\\
CNRS-CREATE \& National University of Singapore\\
\texttt{philips.george.john@u.nus.edu}
\and
Themis Gouleakis$^*$\\
Nanyang Technological University\\
\texttt{themis.gouleakis@ntu.edu.sg}
}
\date{}

\begin{document}

\maketitle

\begin{abstract}
We revisit the problem of distribution learning within the framework of learning-augmented algorithms.
In this setting, we explore the scenario where a probability distribution is provided as potentially inaccurate advice on the true, unknown distribution.
Our objective is to develop learning algorithms whose sample complexity decreases as the quality of the advice improves, thereby surpassing standard learning lower bounds when the advice is sufficiently accurate.

Specifically, we demonstrate that this outcome is achievable for the problem of learning a multivariate Gaussian distribution $N(\bm{\mu}, \bm{\Sigma})$ in the PAC learning setting.
Classically, in the advice-free setting, $\wt{\Theta}(d^2/\varepsilon^2)$ samples are sufficient and worst case necessary to learn $d$-dimensional Gaussians up to TV distance $\varepsilon$ with constant probability.
When we are additionally given a parameter $\wt{\bm{\Sigma}}$ as advice, we show that $\wt{\cO}(d^{2-\beta}/\varepsilon^2)$ samples suffices whenever $\| \wt{\bm{\Sigma}}^{-1/2} \bm{\Sigma} \wt{\bm{\Sigma}}^{-1/2} - \bm{I_d} \|_1 \leq \varepsilon d^{1-\beta}$ (where $\|\cdot\|_1$ denotes the entrywise $\ell_1$ norm) for any $\beta > 0$, yielding a polynomial improvement over the advice-free setting.
\end{abstract}

\section{Introduction}
\label{sec:introduction}

The problem of approximating an underlying distribution from its observed samples is a fundamental scientific problem.
The \emph{distribution learning} problem has been studied for more than a century in statistics, and it is the underlying engine for much of applied machine learning.
The emphasis in modern applications is on high-dimensional distributions, with the goal being to understand when one can escape the curse of dimensionality.
The survey by \cite{diakonikolas2016learning} gives an excellent overview of classical and modern techniques for distribution learning, especially when there is some underlying structure to be exploited. 

In this work, we investigate how to go beyond worst case sample complexities for learning distributions by considering situations where one is also given the aid of possibly imperfect advice regarding the input distribution.
We position our study in the context of \emph{algorithms with predictions}, where the usual problem input is supplemented by ``predictions'' or ``advice'' (potentially drawn from modern machine learning models). The algorithm's goal is to incorporate the advice in a way that improves performance if the advice is of high quality, but if the advice is inaccurate, there should not be degradation below the performance in the no-advice setting.
Most previous works in this setting are in the context of online algorithms, e.g.\ for the ski-rental problem \cite{gollapudi2019online,wang2020online,angelopoulos2020online}, non-clairvoyant scheduling \cite{purohit2018improving}, scheduling \cite{lattanzi2020online,bamas2020learning,antoniadis2022novel}, augmenting classical data structures with predictions (e.g.\ indexing \cite{kraska2018case} and Bloom filters \cite{mitzenmacher2018model}), online selection and matching problems \cite{antoniadis2020secretary,dutting2021secretaries, choo2024online}, online TSP \cite{bernardiniuniversal,gouleakis2023learning}, and a more general framework of online primal-dual algorithms \cite{bamas2020primal}.
However, there have been some recent applications to other areas, e.g.\ graph algorithms \cite{chen2022faster,dinitz2021faster}, causal learning \cite{choo2023active}, and mechanism design \cite{gkatzelis2022improved,agrawal2022learning}.

We apply the algorithms with predictions perspective to the classical problem of learning high-dimensional Gaussian distributions.
For a $d$-dimensional Gaussian $N(\bm{\mu}, \bm{\Sigma})$, it is known (e.g.\ see Appendix C of \cite{ashtiani2020gaussian}) that
\begin{enumerate}
    \item When $\bm{\Sigma} = \bI_d$, $\wt{\Theta}(d / \eps^2)$ i.i.d.\ samples suffice to learn a $\wh{\bm{\mu}} \in \R^d$ such that $\tv(N(\bm{\mu}, \bI_d), N(\wh{\bm{\mu}}, \bI_d)) \leq \eps$.
    \item In general, $\wt{\Theta}(d^2 / \eps^2)$ i.i.d.\ samples suffice to learn $\wh{\bm{\mu}}$ and $\wh{\bm{\Sigma}}$ such that $\tv(N(\bm{\mu}, \bm{\Sigma}), N(\wh{\bm{\mu}}, \wh{\bm{\Sigma}})) \leq \eps$.
\end{enumerate}
Here, $\tv$ denotes the \emph{total variation distance}, and the algorithm for both cases is the most natural one: compute the empirical mean and empirical covariance.
Meanwhile, note that if one is given as advice the correct mean $\wt{\bm{\mu}} = \bm{\mu}$, then using distribution testing, one can certify that $\|\wt{\bm{\mu}}-\bm{\mu}\|_2 \leq \eps$ using only $\wt{\Theta}(\sqrt{d}/\eps^2)$ samples, quadratically better than without advice; see Appendix C of \cite{DiakonikolasKaneStewart2017}.
This observation motivates the object of our study.

\renewenvironment{quote}
{\list{}{\leftmargin=10pt\rightmargin=10pt}\item[]}
{\endlist}
\begin{quote}
\textbf{\textsc{Gaussian Learning with Advice}}: Given samples from a Gaussian $N(\bm{\mu}, \bm{\Sigma})$, as well as advice $\wt{\bm{\mu}}$ and $\wt{\bm{\Sigma}}$, how many samples are required to recover $\wh{\bm{\mu}}$ and $\wh{\bm{\Sigma}}$ such that $\tv(N(\bm{\mu}, \bm{\Sigma}), N(\wh{\bm{\mu}}, \wh{\bm{\Sigma}})\leq \eps$ with probability at least $1-\delta$? The sample complexity should be a function of the dimension, $\eps, \delta$, as well as a measure of how close $\wt{\bm{\mu}}$ and $\wt{\bm{\Sigma}}$ are to $\bm{\mu}$ and $\bm{\Sigma}$ respectively.
\end{quote}

\textbf{Notation.}
We use \emph{lowercase letters} for scalars, set elements, random variable instantiations, \emph{uppercase letters} for random variables, \emph{bolded lowercase letters} for vectors and sets, \emph{bolded uppercase letters} for set of random variables and matrices, \emph{calligraphic letters} for probability distributions and sets of sets, and \emph{small caps} for algorithm names.
Intuitively, we use non-bolded versions for singletons, bolded versions for collections of items, and calligraphic for more complicated objects.
The context should be clear enough to distinguish between various representations.

\subsection{Our main results}

We give the first known results in distribution learning with imperfect advice.
Our techniques are piecewise elementary and easy to follow.
Furthermore, we provide polynomial algorithms for producing the estimates $\wh{\bm{\mu}}$ and $\wh{\bm{\Sigma}}$ based on LASSO and SDP formulations.

Given a mean $\wt{\bm{\mu}} \in \R^d$ and covariance matrix $\wt{\bm{\Sigma}} \in \R^{d \times d}$ as advice, we present two algorithms \textsc{TestAndOptimizeMean} and \textsc{TestAndOptimizeCovariance} that provably improve on the sample complexities of $\wt{\Theta}(d / \eps^2)$ and $\wt{\Theta}(d^2 / \eps^2)$ for identity and general covariances respectively when given high quality advice.

\begin{restatable}{theorem}{identitycovariance}
\label{thm:main-result-identity-covariance}
For any given $\eps, \delta \in (0,1)$, $\eta \in [0,\frac{1}{4}]$, and $\wt{\bm{\mu}} \in \R^d$, the \textsc{TestAndOptimizeMean} algorithm uses
$n \in \wt{\cO} \left( \frac{d}{\eps^2} \cdot \left( d^{- \eta} + \min\{ 1, f(\bm{\mu}, \wt{\bm{\mu}}, d, \eta, \eps) \} \right) \right)$,
where
\[
f(\bm{\mu}, \wt{\bm{\mu}}, d, \eta, \eps) = \frac{\|\bm{\mu} - \wt{\bm{\mu}} \|_1^2}{d^{1 - 4 \eta}\eps^2} \;,
\]
i.i.d.\ samples from $N(\bm{\mu}, \bI_d)$ for some unknown mean $\bm{\mu}$ and identity covariance $\bI_d$, and can produce $\wh{\bm{\mu}}$ in $\poly(n, d)$ time such that $\tv(N(\bm{\mu}, \bI_d), N(\wh{\bm{\mu}}, \bI_d)) \leq \eps$ with success probability at least $1 - \delta$.
\end{restatable}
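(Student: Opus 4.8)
The plan is to split the sample budget into two phases: a \emph{testing} phase that certifies whether the advice $\wt{\bm{\mu}}$ is of sufficiently high quality, and an \emph{optimization} phase that exploits the advice when it passes (and otherwise falls back to the standard empirical-mean estimator). The key reduction is the standard one from TV distance to $\ell_2$ distance for identity-covariance Gaussians: $\tv(N(\bm{\mu},\bI_d),N(\wh{\bm{\mu}},\bI_d)) = \Theta(\min\{1,\|\bm{\mu}-\wh{\bm{\mu}}\|_2\})$, so it suffices to produce $\wh{\bm{\mu}}$ with $\|\bm{\mu}-\wh{\bm{\mu}}\|_2 \leq c\eps$ for a suitable constant. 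The crucial structural observation is that if the advice is good in the promised sense, then the ``correction vector'' $\bm{\mu}-\wt{\bm{\mu}}$ is either small in $\ell_2$ norm or approximately sparse, and in either regime one can estimate it with far fewer than $d/\eps^2$ samples.

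First I would set up the testing step: using the certification result quoted from Appendix~C of \cite{DiakonikolasKaneStewart2017}, with $\wt{\cO}(\sqrt{d}/\eps^2)$ samples one can distinguish $\|\bm{\mu}-\wt{\bm{\mu}}\|_2 \leq \eps$ from $\|\bm{\mu}-\wt{\bm{\mu}}\|_2 > 2\eps$ (say), with confidence $1-\delta/2$; this already gives the $d^{-\eta}$-type savings for a suitable range of $\eta$, but more importantly it lets the algorithm \emph{commit} to one of two branches. If the test reports the advice is good, output $\wh{\bm{\mu}}=\wt{\bm{\mu}}$ and we are done with only $\wt{\cO}(\sqrt d/\eps^2)$ samples. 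If the test is inconclusive/rejects, we move to the optimization phase. The heart of the argument is the optimization phase when the hypothesis $\|\wt{\bm{\Sigma}}^{-1/2}\cdots\|$-analogue for the mean — namely $\|\bm{\mu}-\wt{\bm{\mu}}\|_1$ being small — holds. Write $\bm{v} = \bm{\mu}-\wt{\bm{\mu}}$, so that the empirical mean of the \emph{shifted} samples $X_i - \wt{\bm{\mu}}$ is an unbiased estimate of $\bm{v}$ with isotropic Gaussian noise of variance $1/n$ per coordinate. I would then recover $\bm{v}$ by a LASSO / $\ell_1$-constrained least squares estimate: since $\|\bm{v}\|_1$ is bounded by $\eps d^{1-4\eta}$ (the mean-analogue of the covariance promise, matching the definition of $f$), soft-thresholding the empirical mean at the noise level $\sqrt{\log d / n}$ yields an estimator whose $\ell_2$ error scales like $\|\bm{v}\|_1 \cdot \sqrt{\log d / n}$ (the standard slow-rate LASSO bound), so choosing $n \asymp \|\bm{v}\|_1^2 \log d / (d^{?}\eps^2)$ — equivalently $n \asymp \frac{d}{\eps^2} f(\bm{\mu},\wt{\bm{\mu}},d,\eta,\eps)$ up to logs — drives the $\ell_2$ error below $\eps$. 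Taking the minimum of this branch's cost with the advice-free cost $\wt\cO(d/\eps^2)$ (via the $\min\{1,\cdot\}$) and adding the testing cost $\wt\cO(d^{1-\eta}/\eps^2)$ gives the stated bound; running the LASSO is a convex program solvable in $\poly(n,d)$ time.

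The main obstacle I anticipate is making the two phases compose correctly in all parameter regimes and under the failure event of the tester. Specifically: (i) the tester only certifies an $\ell_2$ bound, whereas the optimization phase needs the $\ell_1$ \emph{promise}, which is an assumption on the instance, not something we can verify — so I must argue that the guarantee is "if the promise holds then $n$ suffices", and separately that the fallback to empirical mean is always safe, so the $\min\{1,\cdot\}$ is the right way to state it; (ii) the LASSO slow-rate bound $\|\wh{\bm v}-\bm v\|_2 \lesssim \sqrt{\|\bm v\|_1}\,(\log d/n)^{1/4}$ versus the sparse (fast) rate — I need to check which one yields exactly the exponent $d^{1-4\eta}$ appearing in the denominator of $f$, and reconcile the $\eta$ range $[0,1/4]$ with that exponent being nonnegative; (iii) the constant-probability amplification to $1-\delta$ costs only an extra $\log(1/\delta)$ factor hidden in the $\wt\cO$, via a standard median-of-means or repetition-and-test argument, which I would state but not belabor. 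A secondary subtlety is that the shifted empirical mean has \emph{known} covariance $\tfrac1n\bI_d$, so no re-estimation of the noise level is needed, which is what keeps the LASSO analysis elementary.
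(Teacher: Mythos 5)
There is a genuine gap, and it sits exactly at the point you flag as obstacle (i) but then wave away. \Cref{thm:main-result-identity-covariance} is not a promise-style statement: the sample bound $\wt{\cO}\bigl(\tfrac{d}{\eps^2}(d^{-\eta}+\min\{1,f\})\bigr)$ depends on the \emph{unknown} quantity $\|\bm{\mu}-\wt{\bm{\mu}}\|_1$, so the algorithm must adaptively decide how many samples to draw and what constraint to impose in the optimization step, for every instance, with no $\ell_1$ assumption available. Your plan runs a single $\ell_2$ tolerant test (distinguishing $\|\bm{\mu}-\wt{\bm{\mu}}\|_2\leq\eps$ from $>2\eps$) and then, in the reject branch, invokes LASSO "under the $\ell_1$ promise." But with no promise you have no legitimate way to set the LASSO radius (or, in the Lagrangian form, the sample budget $n$), and no mechanism to detect that the promise fails and that you must fall back to the empirical mean; "the fallback is always safe" does not tell the algorithm \emph{when} to take it. A single $\ell_2$ test cannot substitute, because converting an $\ell_2$ estimate into an $\ell_1$ bound loses a factor $\sqrt{d}$, which squared wipes out any savings. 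The paper's missing ingredient is precisely a data-driven, certified upper bound on $\|\bm{\mu}-\wt{\bm{\mu}}\|_1$: the \textsc{ApproxL1} routine partitions $[d]$ into blocks of size $k=\lceil d^{4\eta}\rceil$, runs the tolerant $\ell_2$ tester with exponential search on each block's marginal (cost $\wt{\cO}(\sqrt{k}/\alpha^2)=\wt{\cO}(d^{1-\eta}/\eps^2)$ with $\alpha=\eps d^{-(1-3\eta)/2}$), and assembles $\lambda$ with $\|\bm{\mu}-\wt{\bm{\mu}}\|_1\leq\lambda\leq 2\sqrt{k}(\lceil d/k\rceil\alpha+2\|\bm{\mu}-\wt{\bm{\mu}}\|_1)$ (\cref{lem:guarantees-of-approxl1}). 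This guarantees $\bm{\mu}$ is feasible for the constrained program, fixes the optimization sample count $\wt{\cO}(\lambda^2/\eps^4)$, and gives a concrete rule ($\lambda\geq\eps\sqrt{d}$, or $\Fail$/$\OK$) for falling back; it is also the sole source of the exponents in the theorem, since $\sqrt{k}/\alpha^2$ yields the $d^{-\eta}$ term and the $\sqrt{k}$ slack yields the $d^{1-4\eta}$ in $f$.

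This mismatch also shows up quantitatively in your sketch: you take $n\asymp\|\bm{v}\|_1^2\log d/\eps^4$ and call it $\tfrac{d}{\eps^2}f$ "up to logs," but $\tfrac{d}{\eps^2}f=d^{4\eta}\|\bm{v}\|_1^2/\eps^4$; the extra $d^{4\eta}$ is exactly the cost of the block-wise $\ell_2\to\ell_1$ conversion you have not built into the algorithm (and your stated "promise" threshold $\eps d^{1-4\eta}$ does not match the regime $\|\bm{\mu}-\wt{\bm{\mu}}\|_1<\eps d^{(1-5\eta)/2}$ where the theorem gives savings). The pieces you do have — the TV-to-$\ell_2$ reduction, the slow-rate $\ell_1$-constrained least-squares analysis with noise level $\sqrt{\log d/n}$, the $\poly(n,d)$ solvability, and the $\log(1/\delta)$ amplification — all match the paper's \cref{lem:constrained-LASSO-given-l1-upper-bound} and would slot in once the certified $\ell_1$ estimate is supplied; alternatively you could try to make your two-phase idea adaptive via a doubling-and-verify loop, but that is a different algorithm and argument that you would need to spell out, not a repair of the promise-based one you wrote.
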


\begin{restatable}{theorem}{generalcovariance}
\label{thm:main-result-general-covariance}
For any given $\eps, \delta \in (0,1)$, $\eta \in [0, 1]$ and $\wt{\bm{\Sigma}} \in \R^{d \times d}$, \textsc{TestAndOptimizeCovariance} uses
$n \in \wt{\cO} \left( \frac{d^2}{\eps^2} \cdot \left( d^{- \eta} + \min \left\{ 1, f(\bm{\Sigma}, \wt{\bm{\Sigma}}, d, \eta, \eps) \right\} \right) \right)$,
where
\[
f(\bm{\Sigma}, \wt{\bm{\Sigma}}, d, \eta, \eps) = \frac{\| \vec( \wt{\bm{\Sigma}}^{-1/2} \bm{\Sigma} \wt{\bm{\Sigma}}^{-1/2} - \bI_d) \|_1^2}{d^{2 - \eta} \eps^2} \;,
\]
i.i.d.\ samples from $N(\bm{\mu}, \bm{\Sigma})$ for some unknown mean $\bm{\mu}$ and unknown covariance $\bm{\Sigma}$, and can produce $\wh{\bm{\mu}}$ and $\wh{\bm{\Sigma}}$ in $\poly(n, d, \log(1/\eps))$ time such that $\tv(N(\bm{\mu}, \bm{\Sigma}), N(\wh{\bm{\mu}}, \wh{\bm{\Sigma}})) \leq \eps$ with success probability at least $1 - \delta$.
\end{restatable}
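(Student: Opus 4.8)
The plan is to follow the template of \Cref{thm:main-result-identity-covariance}: reduce to a near-identity instance, exploit approximate sparsity by a convex program, and wrap everything in a test so that good advice is exploited while bad advice triggers a safe fallback. First I would \emph{whiten by the advice}: replace every sample $\bx$ with $\wt{\bm{\Sigma}}^{-1/2}\bx$, which turns $N(\bm{\mu},\bm{\Sigma})$ into $N(\wt{\bm{\Sigma}}^{-1/2}\bm{\mu},\bM)$ with $\bM := \wt{\bm{\Sigma}}^{-1/2}\bm{\Sigma}\wt{\bm{\Sigma}}^{-1/2}$, and makes the advice exactly $\bI_d$. Since $\tv$ is invariant under the invertible map $\wt{\bm{\Sigma}}^{-1/2}$, it suffices to output $\wh{\bm{\nu}},\wh{\bM}$ with $\tv(N(\wt{\bm{\Sigma}}^{-1/2}\bm{\mu},\bM),N(\wh{\bm{\nu}},\wh{\bM}))\le\eps$ and then report $\wh{\bm{\mu}}=\wt{\bm{\Sigma}}^{1/2}\wh{\bm{\nu}}$, $\wh{\bm{\Sigma}}=\wt{\bm{\Sigma}}^{1/2}\wh{\bM}\wt{\bm{\Sigma}}^{1/2}$. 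Under this reduction the advice-quality hypothesis becomes exactly that $\bM-\bI_d$ is an \emph{approximately sparse} perturbation of the identity, since $\|\vec(\bM-\bI_d)\|_1$ is the quantity appearing in $f$. I will use the standard Gaussian-TV estimate of the form $\tv(N(\ba,\bA),N(\bb,\bB))\lesssim \|\bA^{-1/2}\bB\bA^{-1/2}-\bI_d\|_F+\|\bA^{-1/2}(\ba-\bb)\|_2$ (valid when the Frobenius term is $O(1)$) to convert estimation error into TV error; because $\bM$ is spectrally $O(1)$ in the good-advice regime, this relative-Frobenius/Mahalanobis quantity is within absolute constants of $\|\wh{\bM}-\bM\|_F$ and of $\|\wh{\bm{\nu}}-\wt{\bm{\Sigma}}^{-1/2}\bm{\mu}\|_2$.

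The mean is cheap and the covariance carries the content. The transformed mean can be estimated to within $\eps$ in the relevant Mahalanobis norm from $\wt{\cO}(d/\eps^2)$ samples by the empirical mean, which is $\le\wt{\cO}(d^{2-\eta}/\eps^2)$ for $\eta\le 1$ and hence absorbed (and if the covariance branch falls back, the mean is re-estimated there). For the covariance, from $n_0$ samples form the empirical centered covariance $\wh{\bM}_0$; each entry is a quadratic form in Gaussians, concentrating around $\bM_{ij}$ with sub-exponential fluctuations of order $\sqrt{(\bM_{ii}\bM_{jj}+\bM_{ij}^2)/n_0}$, so a union bound over $d^2$ entries gives $\|\wh{\bM}_0-\bM\|_\infty\lesssim \sqrt{\log d/n_0}=:\lambda$ with high probability. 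Now denoise by exploiting the $\ell_1$ structure: either soft-threshold, $\wh{\bM}=\bI_d+S_\lambda(\wh{\bM}_0-\bI_d)$ followed by a projection onto the PSD cone (the LASSO view), or solve the SDP $\min\{\|\bM'-\bI_d\|_1:\|\bM'-\wh{\bM}_0\|_\infty\le\lambda,\ \bM'\succeq 0\}$. The standard oracle inequality for estimation over an $\ell_1$ ball yields $\|\wh{\bM}-\bM\|_F^2\lesssim \min\{\,d^2\lambda^2,\ \lambda\|\bM-\bI_d\|_1\,\}$ up to log factors: the first term is the trivial dense rate, the second is the gain from approximate sparsity. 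Converting via the TV estimate, $\tv\le\eps$ once $n_0\gtrsim\wt{\cO}\!\big(\min\{d^2/\eps^2,\ \|\vec(\bM-\bI_d)\|_1^2/\eps^4\}\big)$, which is exactly the claimed form $\wt{\cO}\!\big(\tfrac{d^2}{\eps^2}\min\{1,f\}\big)$.

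Since this bound depends on the unknown $\|\vec(\bM-\bI_d)\|_1$, I would make the algorithm adaptive by a doubling schedule: run the structured estimator at sample sizes $n_0=\wt{\Theta}(d^{2-\eta}/\eps^2),2n_0,4n_0,\ldots$ (fresh samples each round, capped at $\wt{\Theta}(d^2/\eps^2)$); after each round use an independent batch of $\wt{\cO}(d/\eps^2)$ samples to run a Gaussian-closeness tester distinguishing $\tv(N(\wh{\bm{\nu}},\wh{\bM}),N(\wt{\bm{\Sigma}}^{-1/2}\bm{\mu},\bM))\le\eps$ from $\ge 2\eps$ (such testers exist because the hypothesis Gaussian is explicit); output the first accepted estimate, and at the cap output the vanilla empirical estimate, which is correct unconditionally by the classical $\wt{\cO}(d^2/\eps^2)$ bound. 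By the previous paragraph the test accepts by the round with $n_0=\wt{\Theta}(\tfrac{d^2}{\eps^2}\min\{1,f\})$, so the geometric sum of sample counts is $\wt{\cO}(\tfrac{d^2}{\eps^2}(d^{-\eta}+\min\{1,f\}))$; a union bound over the $O(\log(d/\eps))$ rounds folds into $\delta$ at the cost of an extra $\log\log$ factor. The starting point $\wt{\Theta}(d^{2-\eta}/\eps^2)$ is what produces the $d^{-\eta}$ term and keeps the round count bounded.

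The crux — and the step I expect to be the main obstacle — is making the test simultaneously cheap and sound: one cannot directly test whether $\|\vec(\bM-\bI_d)\|_1$ is small, because noise alone inflates $\|\wh{\bM}_0-\bI_d\|_1$ by $\sim d^2\sqrt{1/n_0}$ and estimating it to the required precision already costs $\gg d^2/\eps^2$ samples; instead one must certify the \emph{output} with a Gaussian closeness tester and argue its sample complexity is dominated by $d^{2-\eta}/\eps^2$ for $\eta\le1$, and that it composes correctly with the doubling schedule (including the subtlety that a failed test must not be caused by an unlucky mean estimate). Secondary hurdles: (i) the entrywise concentration $\|\wh{\bM}_0-\bM\|_\infty\lesssim\sqrt{\log d/n_0}$ needs sub-exponential tail bounds for the quadratic-form entries and is only this clean when $\bM$ is spectrally $O(1)$, which must itself be certified (or follows from a coarser test passing); (ii) soft-thresholding does not preserve positive semidefiniteness, so the PSD projection — or the SDP formulation — must be shown not to spoil the Frobenius guarantee; and (iii) the chain $\tv\leftrightarrow\|\bM^{-1/2}\wh{\bM}\bM^{-1/2}-\bI_d\|_F\leftrightarrow\|\wh{\bM}-\bM\|_F$, together with its Mahalanobis analogue for the mean, must be tracked carefully, as these equivalences hold with absolute constants only because $\bM$ is near $\bI_d$ in the good-advice regime.
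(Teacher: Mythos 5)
Your route is genuinely different from the paper's: the paper first \emph{certifies the advice} --- it coarsely estimates $\|\vec(\bM-\bI_d)\|_1$ with $\wt{\cO}(d^{2-\eta}/\eps^2)$ samples via a $(q=2,d,k,1,b)$-partitioning scheme and the tolerant Frobenius tester of \cref{lem:tolerant-covariance-tester} (this also answers your objection that estimating the $\ell_1$ norm is too expensive: only a crude upper bound with additive slack $\approx \sqrt{k}\,w\alpha$ is needed, not $\eps$-precision), and then solves a single $\ell_1$-constrained SDP with the certified radius --- whereas you \emph{certify the output} via a doubling schedule plus a per-round closeness test, and denoise by entrywise thresholding/a Dantzig-type program rather than a constrained least-squares with a known radius. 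The guess-and-verify architecture is legitimate and its soundness is fine, but as written it does not deliver the theorem's quantitative bound, for two concrete reasons. First, your $\ell_\infty$ concentration $\|\wh{\bM}_0-\bM\|_\infty \lesssim \sqrt{\log d/n_0}$ requires $\max_i \bM_{ii}=O(1)$, and the advice-quality measure does not give this: good advice only guarantees $\max_i\bM_{ii}\le 1+\|\vec(\bM-\bI_d)\|_1$, and in the regime the theorem still covers ($1\ll\|\vec(\bM-\bI_d)\|_1\lesssim \eps d^{1-\eta/2}$) the thresholds, and hence the oracle inequality $\|\wh{\bM}-\bM\|_F^2\lesssim \lambda\|\vec(\bM-\bI_d)\|_1$, pick up an extra factor polynomial in $\|\vec(\bM-\bI_d)\|_1$. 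Your test then only accepts at a round with $n_0$ larger than the claimed $\wt{\cO}\bigl(\tfrac{d^2}{\eps^2}\min\{1,f\}\bigr)$ by a factor like $\|\vec(\bM-\bI_d)\|_1/d^{\eta}$, which is polynomially large whenever $\|\vec(\bM-\bI_d)\|_1\gg d^{\eta}$; so the sample-complexity claim fails in that regime rather than being a ``secondary hurdle.''

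Second, your conversion from absolute Frobenius error to TV silently assumes $\lambda_{\min}(\bM)=\Omega(1)$, and this is also not implied by the advice measure once $\|\vec(\bM-\bI_d)\|_1\ge 1$ (e.g.\ $\bM=\diag(\tau,1,\dots,1)$ with $\tau\to 0$ has $\ell_1$ advice error below $1$, yet $\|\wh{\bM}-\bM\|_F\le\eps$ gives no TV control). The paper handles exactly this with an explicit preconditioning step (\cref{lem:preconditioning-adjustment}): $d$ extra samples are used to build $\bA$ with $\lambda_{\min}(\bA\bm{\Sigma}\bA)\ge 1$, after which $\|\bm{\Sigma}^{-1}\|_2\le 1$ lets absolute Frobenius error pass to KL and TV, and the constraint $\lambda_{\min}(\bA)\ge 1$ is imposed in the SDP so the estimate inherits it. Your proposal has no ingredient playing this role; without it, in the small-eigenvalue case your closeness test (correctly) rejects the structured estimate every round and the scheme degenerates to the $\wt{\cO}(d^2/\eps^2)$ fallback even when $f\ll 1$. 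A smaller point: the per-round certifier should be phrased as a tolerant test of the parameter surrogate (relative Frobenius after whitening by the hypothesis, plus a Mahalanobis mean test, as in \cref{lem:tolerant-covariance-tester,lem:tolerant-mean-tester}) with constant-factor gaps, not as a tolerant TV tester with an $\eps$-vs-$2\eps$ gap, since TV and the surrogate agree only up to loose constants; this is fixable but needs to be said.
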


In particular, the \textsc{TestAndOptimizeMean} algorithm uses only $\wt{\cO}(\frac{d^{1-\eta}}{\eps^2})$ samples when $\| \bm{\mu} - \wt{\bm{\mu}} \|_1 < \eps d^{(1 - 5 \eta)/2} = \eps \sqrt{d} \cdot d^{- 5 \eta/2}$, for any $\eta \in [0, \frac{1}{4}]$.
Similarly, \textsc{TestAndOptimizeCovariance} algorithm uses only $\wt{\cO}(\frac{d^{2 - \eta}}{\eps^2})$ samples when $\| \vec( \wt{\bm{\Sigma}}^{-1/2} \bm{\Sigma} \wt{\bm{\Sigma}}^{-1/2} - \bI_d ) \|_1 < \eps d^{1 - \eta} = \eps d \cdot d^{- \eta}$, for any $\eta \in [0, 1]$.
Moreover, both algorithms \textsc{TestAndOptimizeMean} and \textsc{TestAndOptimizeCovariance} have polynomial runtime.

The choice of representing the quality of the advice in terms of the $\ell_1$-norm is well-motivated.
It is known, e.g.\ see Theorem 2.5 of \cite{10.5555/2526243}, that if a vector $\bm{x}$ satisfies $\|\bm{x}\|_1 \leq \tau$, then for any positive integer $s$, $\sigma_s(\bm{x})\leq \tau/(2\sqrt{s})$, where $\sigma_s(\bm{x})$ is the $\ell_2$-error of the best $s$-sparse approximation to $\bm{x}$.
Thus, if $\|\wt{\bm{\mu}}-\bm{\mu}\|_1\leq 2\eps d^{(1- \eta)/2}$, then $\sigma_{d^{1-\eta}}(\wt{\bm{\mu}}-\bm{\mu}) \leq \eps$.
The latter may be very reasonable, as one may have good predictions for most of the coordinates of the mean with the error in the advice concentrated on a sublinear ($d^{1-\eta}$) number of coordinates.
Algorithmically, we employ sublinear property testing algorithms to evaluate the quality of the given advice before deciding how to produce a final estimate, similar in spirit to the \textsc{TestAndMatch} approach in \cite{choo2024online}.
The idea of incorporating  property testing as a way to verify whether certain distributional assumptions are satisfied that enable efficient subsequent learning has also been explored in recent works on testable learning \cite{rubinfeld2023testing,klivans24testable,vasilyan2024enhancing}.

We supplement our algorithmic upper bounds with information-theoretic lower bounds.
Here, we say that an algorithm $(\eps,1-\delta)$-PAC learns a distribution $\cP$ if it can produce another distribution $\wh{\cP}$ such that $\tv(\cP, \wh{\cP}) \leq \eps$ with success probability at least $1-\delta$.
Our lower bounds tell us that $\wt{\Omega}(d/\eps^2)$ and $\wt{\Omega}(d^2/\eps^2)$ samples are unavoidable for PAC-learning $N(\bm{\mu}, \bI_d)$ and $N(\bm{\mu}, \bm{\Sigma})$ respectively when given low quality advice.

\begin{restatable}{theorem}{mainresultlowerbound}\label{thm:mainresultlowerbound}
Suppose we are given $\wt{\bm{\mu}} \in \R^d$ as advice with only the guarantee that $\|\bm{\mu} - \wt{\bm{\mu}}\|_1 \leq \Delta$.
Then, any algorithm that $(\eps,\frac{2}{3})$-PAC learns $N(\bm{\mu}, \bI_d)$ requires $\Omega\left(\frac{\min\{d, \Delta^2/\eps^2\}}{\eps^2 \log (1/\eps)}\right)$ samples in the worst case.
\end{restatable}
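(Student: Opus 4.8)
The natural route is a packing lower bound via Fano's inequality, after normalizing the advice. Since the advice guarantee $\|\bm{\mu} - \wt{\bm{\mu}}\|_1 \le \Delta$ and the learning objective are both invariant under simultaneously translating $\bm{\mu}$, $\wt{\bm{\mu}}$, and the observed samples, I would first assume without loss of generality that $\wt{\bm{\mu}} = \bm{0}$; the adversary then gets to pick any true mean with $\|\bm{\mu}\|_1 \le \Delta$. Set $m := \Theta(\min\{d, \Delta^2/\eps^2\})$, with the implicit constant chosen small enough below; this $m$ is the ``effective dimension'' that the $\ell_1$ budget can support, and when $\Delta \gtrsim \eps\sqrt{d}$ it saturates at $d$, so that generous-radius $\ell_1$-advice is essentially useless and we recover the advice-free bound.

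The heart is constructing a large, well-separated family of advice-consistent hypotheses supported on the first $m$ coordinates. By the Gilbert--Varshamov bound there is a code $\cC \subseteq \{-1,+1\}^m$ with $|\cC| \ge 2^{\Omega(m)}$ and pairwise Hamming distance at least $m/4$. For each $\bv \in \cC$ let $\bm{\mu}_{\bv} \in \R^d$ have $i$-th coordinate $c\eps \cdot v_i/\sqrt{m}$ for $i \le m$ and $0$ otherwise, where $c$ is an absolute constant. Then $\|\bm{\mu}_{\bv}\|_1 = c\eps\sqrt{m} \le \Delta$ by the choice of $m$, so every $N(\bm{\mu}_{\bv}, \bI_d)$ is consistent with the advice, while for distinct $\bv, \bv'$ we have $c\eps/\sqrt{2} \le \|\bm{\mu}_{\bv} - \bm{\mu}_{\bv'}\|_2 \le 2c\eps$. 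Using the exact identity $\tv(N(\ba,\bI_d), N(\bb,\bI_d)) = 2\Phi(\|\ba-\bb\|_2/2) - 1$ ($\Phi$ the standard Gaussian CDF), taking $c$ a large enough constant makes $\tv(N(\bm{\mu}_{\bv},\bI_d), N(\bm{\mu}_{\bv'},\bI_d)) \ge 3\eps$, while $\kl(N(\bm{\mu}_{\bv},\bI_d), N(\bm{\mu}_{\bv'},\bI_d)) = \tfrac12\|\bm{\mu}_{\bv} - \bm{\mu}_{\bv'}\|_2^2 = O(\eps^2)$.

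Given an algorithm $\cA$ that $(\eps, \frac{2}{3})$-PAC learns $N(\bm{\mu}, \bI_d)$ from $n$ samples and the advice $\bm{0}$, run it with $\bm{\mu} = \bm{\mu}_{\bv}$ for $\bv$ uniform on $\cC$: its output $\wh{\cP}$ is within $\eps$ in TV of $N(\bm{\mu}_{\bv}, \bI_d)$ with probability at least $\frac{2}{3}$, and since the packing points are $>2\eps$ apart in TV, rounding $\wh{\cP}$ to the nearest one recovers $\bv$ with probability at least $\frac{2}{3}$. Fano's inequality, together with $I(\bv; \text{samples}) \le n \cdot \max_{\bv \neq \bv'} \kl(N(\bm{\mu}_{\bv},\bI_d), N(\bm{\mu}_{\bv'},\bI_d))$, then forces
\[
\tfrac13 \ge 1 - \frac{n \cdot O(\eps^2) + \log 2}{\log|\cC|} \,,
\]
hence $n = \Omega(\log|\cC| / \eps^2) = \Omega(m/\eps^2) = \Omega(\min\{d, \Delta^2/\eps^2\}/\eps^2)$. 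The $\log(1/\eps)$ in the denominator of the claim is then comfortable slack: it leaves room for not optimizing the constant $c$, or for deriving the bound by instead quoting an off-the-shelf advice-free learning lower bound on the $m$-coordinate sub-cube (on which $\wt{\bm{\mu}}$ conveys nothing).

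I expect the main obstacle to be engineering the packing to meet three competing demands simultaneously: it must lie in the $\ell_1$-ball of radius $\Delta$ (advice-consistency), be $\Omega(\eps)$-separated in TV (so that an $\eps$-accurate output identifies the instance), and have pairwise $\kl = O(\eps^2)$ (so that Fano yields a $1/\eps^2$, not a worse, dependence). Spreading a fixed $\Theta(\eps)$-sized $\ell_2$ perturbation across the maximum number of coordinates the $\ell_1$ budget allows is precisely what pins the effective dimension at $m = \Theta(\min\{d,\Delta^2/\eps^2\})$; the remaining ingredients --- density and minimum distance of GV codes, the Gaussian TV/KL identities, and the reduction from ``output a distribution within $\eps$ in TV'' to ``identify the planted mean'' --- are routine, and a separate two-point argument handles the degenerate case where $\min\{d, \Delta^2/\eps^2\}$ is below an absolute constant.
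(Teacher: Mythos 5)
Your proposal is correct and follows essentially the same route as the paper: reduce to $\wt{\bm{\mu}}=\bm{0}$, construct a Gilbert--Varshamov packing of means supported on $\Theta(\min\{d,\Delta^2/\eps^2\})$ coordinates with equal-magnitude $\pm$ entries so the $\ell_1$ advice budget is respected while pairwise $\ell_2$ distances are $\Theta(\eps)$ (hence pairwise KL $O(\eps^2)$ and TV separation above $2\eps$), and conclude via a Fano-type argument. The only difference is cosmetic: you apply Fano directly (which in fact avoids the $\log(1/\eps)$ loss), whereas the paper invokes the packaged covering lemma of Ashtiani et al.\ (its Lemma 5.1), which is where that factor enters.
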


\begin{restatable}{theorem}{mainresultlowerboundcovariance}\label{thm:mainresultlowerboundcovariance}
Suppose we are given a symmetric and positive-definite $\wt{\bm{\Sigma}} \in \R^{d \times d}$ as advice with only the guarantee that $\|\vec\left(\wt{\bm{\Sigma}}^{-\frac{1}{2}} \bm{\Sigma} \wt{\bm{\Sigma}}^{-\frac{1}{2}} -\bI_d\right)\|_1 \leq \Delta$. Then, any algorithm that $(\eps,\frac{2}{3})$-PAC learns $N(\bm{0}, \bm{\Sigma})$ requires $\Omega \left( \frac{\min\{d^2, \Delta^2/\eps^2\}}{\eps^2 \log(1/\eps)} \right)$ samples in the worst case.
\end{restatable}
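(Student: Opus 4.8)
The plan is to run the standard packing-plus-Fano lower bound, instantiated at the least helpful possible advice. I would set $\wt{\bm{\Sigma}} = \bI_d$, so that $\wt{\bm{\Sigma}}^{-1/2}\bm{\Sigma}\wt{\bm{\Sigma}}^{-1/2} = \bm{\Sigma}$ and the promise collapses to $\|\vec(\bm{\Sigma} - \bI_d)\|_1 \le \Delta$. It then suffices to show that $(\eps,\tfrac23)$-PAC learning $N(\bm{0},\bm{\Sigma})$ over the class $\cF_\Delta = \{\bm{\Sigma}\succ 0 : \|\vec(\bm{\Sigma}-\bI_d)\|_1 \le \Delta\}$ requires the claimed number of samples, where the learner is told $\wt{\bm{\Sigma}} = \bI_d$ and the value of $\Delta$.

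Inside $\cF_\Delta$ I would build an exponentially large family of Gaussians whose covariances differ only on one off-diagonal block. Fix $k = \Theta(\min\{d,\Delta/\eps\})$ (so $k^2 = \Theta(\min\{d^2,\Delta^2/\eps^2\})$) and $\gamma = c\eps/k$ for a small absolute constant $c$. Partition the coordinates into two blocks of size at least $k$, and for each sign matrix $\bM\in\{-1,+1\}^{k\times k}$ set $\bm{\Sigma}_{\bM} = \bI_d + \gamma\bE_{\bM}$, where $\bE_{\bM}$ is the symmetric matrix carrying $\bM$ in a $k\times k$ sub-block across the two groups (and $\bM^\top$ in the transpose) and zeros elsewhere. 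Three checks do the bookkeeping: (i) $\|\gamma\bE_{\bM}\|_{\mathrm{op}} \le \gamma k = c\eps < 1$, so $\bm{\Sigma}_{\bM}\succ 0$ with spectrum in $(\tfrac12,\tfrac32)$; (ii) $\|\vec(\bm{\Sigma}_{\bM}-\bI_d)\|_1 = 2k^2\gamma = 2ck\eps \le \Delta$ by the choice of $k$, so $\bm{\Sigma}_{\bM}\in\cF_\Delta$; (iii) Gilbert--Varshamov yields a subset $\cC\subseteq\{\pm1\}^{k\times k}$ with $|\cC| = 2^{\Omega(k^2)}$ and pairwise Hamming distance at least $k^2/4$, hence $\|\bM-\bM'\|_F\in[k,2k]$ for distinct $\bM,\bM'\in\cC$.

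The two distributional estimates I would need are: (a) since the eigenvalues of every $\bm{\Sigma}_{\bM}$ lie in a fixed range, the two-sided Gaussian TV/Frobenius equivalence (the same inequality from which the quantity $f$ in \Cref{thm:main-result-general-covariance} is built) gives $\tv(N(\bm{0},\bm{\Sigma}_{\bM}),N(\bm{0},\bm{\Sigma}_{\bM'})) = \Theta(\|\bm{\Sigma}_{\bM}^{-1/2}\bm{\Sigma}_{\bM'}\bm{\Sigma}_{\bM}^{-1/2} - \bI_d\|_F) = \Theta(\gamma\|\bM-\bM'\|_F) = \Theta(\eps)$; and (b) a second-order expansion in $\gamma$ of $\kl = \tfrac12(\Tr(\bm{\Sigma}_{\bM'}^{-1}\bm{\Sigma}_{\bM}) - d + \ln\det\bm{\Sigma}_{\bM'} - \ln\det\bm{\Sigma}_{\bM})$ gives $\kl(N(\bm{0},\bm{\Sigma}_{\bM})\,\|\,N(\bm{0},\bm{\Sigma}_{\bM'})) = O(\gamma^2\|\bM-\bM'\|_F^2) = O(\eps^2)$. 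Given these, the usual reduction finishes it: an $(\eps',\tfrac23)$-PAC learner with $\eps'$ a small enough constant multiple of $\eps$, run on $N(\bm{0},\bm{\Sigma}_{\bM^{*}})$ for $\bM^{*}$ uniform on $\cC$, outputs some $\wh{\cP}$ whose nearest member of $\{N(\bm{0},\bm{\Sigma}_{\bM})\}_{\bM\in\cC}$ must be $\bM^{*}$ (the family is pairwise more than $2\eps'$-separated), so we obtain an $|\cC|$-ary test succeeding with probability $\ge\tfrac23$; Fano's inequality then forces $n\cdot O(\eps^2) \ge \tfrac23\log|\cC| - O(1) = \Omega(k^2)$, i.e.\ $n = \Omega(k^2/\eps^2) = \Omega(\min\{d^2,\Delta^2/\eps^2\}/\eps^2)$, which implies the stated bound (after the usual rescaling of $\eps$ by an absolute constant).

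I expect the real work to sit in step (iii)'s distributional estimates, part (b) in particular: one needs both the TV \emph{lower} bound (so distinct hypotheses are genuinely $\Omega(\eps)$-far, not merely $O(\eps)$-far) and the matching $O(\eps^2)$ KL \emph{upper} bound for covariances whose perturbation is a \emph{dense} off-diagonal block rather than a rank-one or diagonal piece, so the $\Tr$ and $\ln\det$ expansions (and the range of validity of the TV/Frobenius equivalence) must be controlled carefully and the several constants governing positive-definiteness, separation, and these expansions chosen consistently. Everything else --- the Gilbert--Varshamov count, the reduction from learning to identification, and the application of Fano --- is routine.
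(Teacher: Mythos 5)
Your proposal is correct in outline, but it takes a genuinely different construction from the paper's. The paper also reduces to $\wt{\bm{\Sigma}} = \bI_d$ and then invokes the Fano-type corollary of \cite{ashtiani2020gaussian} (\cref{lem:cover_fano}), but its packing is built by adapting Ashtiani et al.'s construction: block-diagonal covariances with $s$ blocks of size $p$, each block of the form $\bI_p + \mu \bU \bU^\top$ where $\bU$ ranges over $2^{\Omega(p^2)}$ near-orthogonal column slices (\cref{lem:good_orthonormal_matrix_slices}), combined with a Gilbert--Varshamov-type code over $[N]^s$ (\cref{lem:gilbert_varshamov_type_sets}); the block size $p$ is the knob that trades off the entrywise $\ell_1$ distance to $\bI_d$ against $\log M$. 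You instead perturb $\bI_d$ by a scaled $\pm 1$ sign pattern on a single off-diagonal $k \times k$ block and use a plain binary GV code, with $k$ playing the role of the paper's $p$-versus-$\Delta$ knob. Your bookkeeping checks out: $\|\gamma \bE_{\bm{M}}\|_2 \le \gamma k$ keeps the spectrum in $(1/2,3/2)$, the $\ell_1$ budget is exactly $2k^2\gamma \le \Delta$, Hamming distance $\ge k^2/4$ gives Frobenius separation $\Theta(\gamma k) = \Theta(\eps)$, the TV lower bound follows from the same Devroye--Mehrabian--Reddad inequality the paper uses, and the KL bound can even be read off from \cref{lem:kl-known-fact} (via $\kl \le \tfrac12 \| \bX \|_F^2$ with $\|\bX\|_F \le \|\bm{\Sigma}_{\bm{M}'}^{-1}\|_2 \, \|\bm{\Sigma}_{\bm{M}} - \bm{\Sigma}_{\bm{M}'}\|_F = \cO(\eps)$) rather than a fresh expansion. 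What each approach buys: the paper's route reuses existing machinery wholesale but inherits the $\log(1/\eps)$ loss from \cref{lem:cover_fano} and some divisibility conditions on $d$; your route is more elementary and self-contained (no near-orthogonal slices needed, all perturbation entries have equal magnitude so the $\ell_1$/Frobenius/operator-norm accounting is immediate), and a direct application of Fano yields $\Omega(\min\{d^2,\Delta^2/\eps^2\}/\eps^2)$, which is slightly stronger than the stated bound. The only caveats, which match the paper's implicit assumptions, are that the construction needs $2k \le d$ and is nonvacuous only for $\Delta \gtrsim \eps$, and that the constants in $\gamma$, the positive-definiteness margin, and the TV/KL bounds must be fixed consistently before the final rescaling of $\eps$ --- exactly the points you flagged.
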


Both of our lower bounds are tight in the following sense. Our algorithm \textsc{TestAndOptimizeMean} gives a polynomially-smaller sample complexity compared to $\wt{\cO}(d/\varepsilon^2)$ when the advice quality (measured in terms of the $\ell_1$-norm) is polynomially smaller compared to $\eps \sqrt{d}$.
\cref{thm:mainresultlowerbound} shows that this is the best we can do; there is a hard instance where the advice quality is $\leq \eps \sqrt{d}$ and we need $\wt{\Omega}(d/\eps^2)$ samples.
A similar situation happens between \textsc{TestAndOptimizeCovariance} and \cref{thm:mainresultlowerboundcovariance}, when the guarantee on the advice quality is at most $\eps d$.

The lower bounds in \cref{thm:mainresultlowerbound} and \cref{thm:mainresultlowerboundcovariance} apply when the parameter $\Delta$ is known to the algorithm. Our algorithms are stronger since they do not need to know $\Delta$ beforehand. In case $\Delta$ is known, the sample complexity of the distribution learning component of our algorithms match the above lower bounds up to log factors. 

\subsection{Technical overview}

To obtain our upper bounds, we first show that the existing test statistics for non-tolerant testing can actually be used for tolerant testing with the same asymptotic sample complexity bounds and then use these new tolerant testers to test the advice quality. 
The tolerance is with respect to the $\ell_2$-norm for mean testing and with respect to the Frobenius norm for covariance testing.
These results are folklore, but we did not manage to find formal proofs for them.
As these may be of independent interest, we present their proofs in \cref{sec:appendix-tolerant-testing} for completeness.

\begin{restatable}[Tolerant mean tester]{lemma}{tolerantmeantester}
\label{lem:tolerant-mean-tester}
Given $\eps_2 > \eps_1 > 0$, $\delta \in (0,1)$, and $d \geq \left( \frac{16 \eps_2^2}{\eps_2^2 - \eps_1^2} \right)^2$, there is a tolerant tester that uses $\cO \left( \frac{\sqrt{d}}{\eps_2^2 - \eps_1^2} \log \left( \frac{1}{\delta} \right) \right)$ i.i.d.\ samples from $N(\bm{\mu}, \bI_d)$ and satisfies both conditions below:\\
1. If $\| \bm{\mu} \|_2 \leq \eps_1$, then the tester outputs $\Accept$,\\
2. If $\| \bm{\mu} \|_2 \geq \eps_2$, then the tester outputs $\Reject$,\\
each with success probability at least $1 - \delta$.
\end{restatable}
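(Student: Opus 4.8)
The plan is to build the tolerant mean tester on top of the standard non-tolerant mean-testing statistic used for testing $N(\bm\mu, \bI_d)$ against $N(\bm 0, \bI_d)$. Recall that the usual approach draws $n$ i.i.d. samples $X^{(1)}, \dots, X^{(n)}$ and forms an unbiased quadratic estimator of $\|\bm\mu\|_2^2$, e.g. $Z = \frac{1}{n(n-1)} \sum_{i \neq j} \langle X^{(i)}, X^{(j)} \rangle$, which satisfies $\E[Z] = \|\bm\mu\|_2^2$. The key point for \emph{tolerant} testing is that the same statistic, with the same sample bound $n = \Theta\!\left(\frac{\sqrt d}{\eps_2^2 - \eps_1^2}\log(1/\delta)\right)$, concentrates tightly enough to distinguish $\E[Z] \leq \eps_1^2$ from $\E[Z] \geq \eps_2^2$; there is nothing special about the case $\eps_1 = 0$ in the variance computation. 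So the steps are: (i) write down the estimator $Z$ and verify $\E[Z] = \|\bm\mu\|_2^2$; (ii) bound $\Var(Z)$ as a function of $n$, $d$, and $\|\bm\mu\|_2$; (iii) choose the decision threshold $\tau = \frac{\eps_1^2 + \eps_2^2}{2}$ and apply Chebyshev to show that a single run errs with constant probability; (iv) boost the success probability to $1 - \delta$ by taking $\Theta(\log(1/\delta))$ independent repetitions and outputting the majority vote, which multiplies the sample count by the $\log(1/\delta)$ factor.

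The quantitative heart is step (ii)–(iii). A standard computation gives something of the form $\Var(Z) \lesssim \frac{\|\bm\mu\|_2^2}{n} + \frac{d}{n^2}$ (the first term from the ``$\bm\mu$-heavy'' cross terms, the second from the ``noise'' terms $\langle E^{(i)}, E^{(j)}\rangle$ where $E^{(i)} = X^{(i)} - \bm\mu$). To separate $\|\bm\mu\|_2^2 \le \eps_1^2$ from $\|\bm\mu\|_2^2 \ge \eps_2^2$ we need the standard deviation to be a small constant times the gap $\eps_2^2 - \eps_1^2$. The $\frac{d}{n^2}$ term forces $n \gtrsim \frac{\sqrt d}{\eps_2^2 - \eps_1^2}$, which is exactly the claimed bound. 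For the $\frac{\|\bm\mu\|_2^2}{n}$ term one has to be slightly careful: in the reject case $\|\bm\mu\|_2^2$ can be large, so this term is not automatically dominated. The fix is the usual one — either note that when $\|\bm\mu\|_2^2 \ge \eps_2^2$ the relative fluctuation $\sqrt{\Var(Z)}/\E[Z]$ is what matters and it is controlled once $n \gtrsim \frac{\sqrt d}{\eps_2^2}$ (a weaker requirement), or invoke a one-sided Bernstein/Chebyshev argument on the centered statistic. This is precisely where the hypothesis $d \geq \left(\frac{16\eps_2^2}{\eps_2^2 - \eps_1^2}\right)^2$ enters: it guarantees $\sqrt d \geq \frac{16 \eps_2^2}{\eps_2^2 - \eps_1^2}$, i.e. the ``clean'' sample bound $\frac{\sqrt d}{\eps_2^2 - \eps_1^2}$ already exceeds $\frac{16\eps_2^2}{(\eps_2^2-\eps_1^2)^2}$, which is what is needed to absorb the $\|\bm\mu\|_2^2/n$ term in the reject regime.

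**The main obstacle** I expect is bookkeeping the variance of the degree-2 U-statistic cleanly: expanding $Z$ into the four types of terms ($\bm\mu$-$\bm\mu$, $\bm\mu$-$E$, $E$-$\bm\mu$, $E$-$E$) and tracking which pairs of summands are correlated (those sharing an index) versus independent. This is routine but error-prone, and getting the constants right is what justifies the specific threshold on $d$. A secondary subtlety is making the two one-sided guarantees both hold simultaneously with probability $1 - \delta$ after the majority-vote amplification — a standard Chernoff bound over the $\Theta(\log(1/\delta))$ repetitions handles this, since each repetition is correct (on whichever side the true $\|\bm\mu\|_2$ lies) with probability $\geq 2/3$. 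Once these pieces are assembled, choosing the constant in $n$ large enough and the threshold $\tau = (\eps_1^2 + \eps_2^2)/2$ completes the argument; I would defer the full computation to \cref{sec:appendix-tolerant-testing} as the authors indicate.
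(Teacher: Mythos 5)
Your proposal is correct, and it reaches the same sample complexity with the same overall architecture as the paper (threshold at the midpoint $(\eps_1^2+\eps_2^2)/2$, constant success probability per batch of $\Theta\bigl(\tfrac{\sqrt d}{\eps_2^2-\eps_1^2}\bigr)$ samples, majority vote over $\Theta(\log(1/\delta))$ batches), but the technical core is different. The paper keeps the diagonal terms: its statistic is $y_n = \bigl\| \tfrac{1}{\sqrt n}\sum_i \bx_i \bigr\|_2^2$, which is \emph{exactly} a non-central chi-squared variable $\chi^{\prime 2}_d(\lambda)$ with $\lambda = n\|\bm{\mu}\|_2^2$, tested against $\tau = d + \tfrac{n(\eps_1^2+\eps_2^2)}{2}$; the two cases are then handled with sharp non-central chi-squared tail bounds (\cref{lem:non-central-chi-square}), and the hypothesis $d \geq \bigl(\tfrac{16\eps_2^2}{\eps_2^2-\eps_1^2}\bigr)^2$ enters to control the ratios $\tfrac{n\eps_2^2}{d}$ appearing inside those exponents. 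You instead drop the diagonal and use the off-diagonal U-statistic $Z$, which is unbiased for $\|\bm{\mu}\|_2^2$, and argue via a variance bound $\Var(Z) \lesssim \tfrac{\|\bm{\mu}\|_2^2}{n} + \tfrac{d}{n^2}$ plus Chebyshev; note that $(y_n - d)/n$ has variance of exactly the same order ($\tfrac{2d}{n^2} + \tfrac{4\|\bm{\mu}\|_2^2}{n}$), so the two statistics are essentially interchangeable here. Your route is more elementary (no distributional tail bounds needed, hence more robust beyond Gaussianity) at the cost of the U-statistic bookkeeping you flag; the paper's route trades that bookkeeping for off-the-shelf concentration of $\chi^{\prime 2}_d(\lambda)$. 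Your reading of the role of the hypothesis on $d$ is also accurate: in the reject regime with $\|\bm{\mu}\|_2^2$ near $\eps_2^2$, the allowed deviation is only $\tfrac{\eps_2^2-\eps_1^2}{2}$, so one needs $n \gtrsim \tfrac{\eps_2^2}{(\eps_2^2-\eps_1^2)^2}$, which is exactly what $\sqrt d \geq \tfrac{16\eps_2^2}{\eps_2^2-\eps_1^2}$ provides given $n \asymp \tfrac{\sqrt d}{\eps_2^2-\eps_1^2}$ (and your "relative fluctuation" remark covers the regime $\|\bm{\mu}\|_2^2 \gg \eps_2^2$). With the constants tuned as you indicate and the standard Chernoff argument for the majority vote, your sketch completes to a full proof.
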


\begin{restatable}[Tolerant covariance tester]{lemma}{tolerantcovariancetester}
\label{lem:tolerant-covariance-tester}
Given $\eps_2 > \eps_1 > 0$, $\delta \in (0,1)$, and $d \geq \eps_2^2$, there is a tolerant tester that uses $\cO \left( d \cdot \max \left\{ \frac{1}{\eps_1^2}, \left( \frac{\eps_2^2}{\eps_2^2 - \eps_1^2} \right)^2, \left( \frac{\eps_2}{\eps_2^2 - \eps_1^2} \right)^2 \right\} \log \left( \frac{1}{\delta} \right) \right)$ i.i.d.\ samples from $N(\mathbf{0}, \bm{\Sigma})$ and satisfies both conditions below:\\
1. If $\| \bm{\Sigma} - \bI_d \|_F \leq \eps_1$, then the tester outputs $\Accept$,\\
2. If $\| \bm{\Sigma} - \bI_d \|_F \geq \eps_2$, then the tester outputs $\Reject$,\\
each with success probability at least $1 - \delta$.
\end{restatable}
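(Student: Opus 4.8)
The plan is to show that the standard non‑tolerant Frobenius‑distance covariance test statistic already has fluctuations small enough to separate the two tolerant regimes, so essentially no new idea beyond a careful variance bound is required. Write $\bM = \bm{\Sigma} - \bI_d$. From $n$ i.i.d.\ samples $X_1, \dots, X_n \sim N(\mathbf{0}, \bm{\Sigma})$ I would form the degree‑two $U$‑statistic
\[
Z \;=\; \binom{n}{2}^{-1} \sum_{1 \le i < j \le n} \Bigl( \langle X_i, X_j \rangle^2 - \|X_i\|_2^2 - \|X_j\|_2^2 + d \Bigr),
\]
(one may instead split the $n$ samples into two halves and use only cross‑pairs; the asymptotics are unchanged). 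Using the Gaussian moment identities $\E[\langle X_i, X_j\rangle^2] = \Tr(\bm{\Sigma}^2)$ and $\E[\|X_i\|_2^2] = \Tr(\bm{\Sigma})$ for independent $X_i, X_j$, one checks at once that $\E[Z] = \Tr(\bm{\Sigma}^2) - 2\Tr(\bm{\Sigma}) + d = \Tr\bigl((\bm{\Sigma}-\bI_d)^2\bigr) = \|\bM\|_F^2$, so $Z$ is an unbiased estimate of the squared advice quality.

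The crux is a bound on $\Var[Z]$ as a function of $\|\bM\|_F$, $d$, and $n$. Via the Hoeffding decomposition of the $U$‑statistic, $\Var[Z] = \Theta\bigl( \tfrac{\zeta_1}{n} + \tfrac{\zeta_2}{n^2} \bigr)$ with $\zeta_1 = \Var\bigl[ X_1^{\top} \bM X_1 \bigr]$ and $\zeta_2 = \Var\bigl[ \langle X_1, X_2\rangle^2 - \|X_1\|_2^2 - \|X_2\|_2^2 \bigr]$. Writing $X_1 = \bm{\Sigma}^{1/2} W$ with $W \sim N(\mathbf{0}, \bI_d)$ and using $\Var[W^{\top} A W] = 2\|A\|_F^2$ for symmetric $A$ gives $\zeta_1 = 2\Tr(\bm{\Sigma} \bM \bm{\Sigma} \bM) \le 2\|\bm{\Sigma}\|_{\mathrm{op}}^2 \|\bM\|_F^2$, and a similar (more tedious) quartic‑moment computation, tracking the cross‑covariance between the $\langle X_1, X_2\rangle^2$ and $\|X_i\|_2^2$ pieces, gives $\zeta_2 = O\bigl( \|\bm{\Sigma}\|_F^4 + \|\bm{\Sigma}\|_F^2 \|\bm{\Sigma}\|_{\mathrm{op}}^2 \bigr)$. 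Using $\|\bm{\Sigma}\|_{\mathrm{op}} \le 1 + \|\bM\|_F$, $\|\bm{\Sigma}\|_F^2 \le 2d + 2\|\bM\|_F^2$, and the hypothesis $d \ge \eps_2^2$ to absorb lower‑order terms, this collapses to $\zeta_1 = O\bigl((1+\|\bM\|_F)^2 \|\bM\|_F^2\bigr)$ and $\zeta_2 = O\bigl((d + \|\bM\|_F^2)^2\bigr)$. The point — exactly why one recovers the \emph{same} asymptotic sample complexity as the non‑tolerant tester — is that this is a single, benign function of $\|\bM\|_F$: it is precisely what the non‑tolerant analysis already controls at $\|\bM\|_F = 0$ and at $\|\bM\|_F \ge \eps$, and being continuous in $\|\bM\|_F$ it also governs the intermediate range $\|\bM\|_F \le \eps_1$.

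With the estimator in hand, the tester outputs $\Accept$ iff $Z < t := \tfrac{\eps_1^2 + \eps_2^2}{2}$. If $\|\bM\|_F \le \eps_1$ then $\E[Z] \le \eps_1^2$ and the separation from $t$ is at least $t - \eps_1^2 = \tfrac{\eps_2^2 - \eps_1^2}{2}$; if $\|\bM\|_F \ge \eps_2$ then $\E[Z] \ge \eps_2^2$ and the separation is at least $\max\bigl\{ \tfrac{\eps_2^2 - \eps_1^2}{2},\ \|\bM\|_F^2 - \eps_2^2 \bigr\} = \Omega\bigl(\|\bM\|_F^2\bigr)$ once $\|\bM\|_F$ exceeds a constant factor of $\eps_2$. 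Plugging the variance bound into Chebyshev's inequality and requiring the error probability below $1/3$ forces $n$ to dominate the worst of the $\zeta_1/n$ and $\zeta_2/n^2$ terms over these regimes; the $\zeta_1/n$ term near $\|\bM\|_F \approx \eps_2$ contributes the $\bigl(\tfrac{\eps_2}{\eps_2^2-\eps_1^2}\bigr)^2$ and $\bigl(\tfrac{\eps_2^2}{\eps_2^2-\eps_1^2}\bigr)^2$ terms, while the $\zeta_2/n^2 \approx d^2/n^2$ term together with the boundary cases (small $\|\bM\|_F$, and $\|\bM\|_F \gg \eps_2$) contributes the $\tfrac{1}{\eps_1^2}$‑type term, yielding $n = \cO\bigl( d \cdot \max\{\tfrac{1}{\eps_1^2}, (\tfrac{\eps_2^2}{\eps_2^2-\eps_1^2})^2, (\tfrac{\eps_2}{\eps_2^2-\eps_1^2})^2\} \bigr)$ for constant‑probability correctness. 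Running $\Theta(\log(1/\delta))$ independent copies on disjoint batches and taking the majority answer boosts both guarantees to probability $1 - \delta$ by a standard Chernoff bound, giving the claimed sample complexity.

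I expect the variance computation to be the main obstacle: one must be scrupulous about all the Wick pairings, about the cross‑covariance between the quadratic ($\|X_i\|_2^2$) and quartic ($\langle X_i, X_j\rangle^2$) parts of the statistic, and about the dependence of every term on both $\|\bM\|_F$ and $\|\bm{\Sigma}\|_{\mathrm{op}}$ (the latter can be as large as $\Theta(\sqrt{d})$ exactly when $\|\bM\|_F$ is), and then verify that substituting the bound into Chebyshev in each regime collapses the constraints on $n$ to precisely the three‑term maximum in the statement — and, along the way, to pin down where the hypothesis $d \ge \eps_2^2$ is genuinely used (it is what keeps the $\|\bM\|_F^2$ contributions to $\zeta_2$ from dominating the $d^2$ term).
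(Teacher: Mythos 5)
Your proposal follows essentially the same route as the paper: the identical $U$-statistic $h(x_i,x_j)=(x_i^\top x_j)^2-\|x_i\|_2^2-\|x_j\|_2^2+d$ (the Cai--Ma statistic $\texttt{T}_n$ of \cref{def:test-statistic-Tn}), the midpoint threshold $\tau=\frac{\eps_1^2+\eps_2^2}{2}$, a Chebyshev argument in the two regimes driven by a variance bound of exactly the form you state (the paper imports the exact mean/variance from \cite{cai2013optimal} and upper-bounds it as in \cref{lem:test-statistic-variance-upper-bound}, rather than re-deriving it via the Hoeffding decomposition, and uses $d\geq\eps_2^2$ in the same places you identify), and majority amplification over $\Theta(\log(1/\delta))$ batches. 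So the approach is correct and matches the paper's proof.
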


We will first explain how to obtain our result for \textsc{TestAndOptimizeMean} before explaining how a similar approach works for \textsc{TestAndOptimizeCovariance}.

\subsubsection{Approach for \textsc{TestAndOptimizeMean}}
\label{sec:technical-overview-identity-covariance}

Without loss of generality, we may assume henceforth that $\wt{\bm{\mu}} = \bm{0}$ since one can always pre-process samples by subtracting $\wt{\bm{\mu}}$ and then add $\wt{\bm{\mu}}$ back to the estimated $\wh{\bm{\mu}}$. Our overall approach is quite natural: (i) use the tolerant testing algorithm in \cref{lem:tolerant-mean-tester} to get an upper bound on the ``advice quality'', and (ii) enforce the constraint on the ``advice quality'' when learning $\wh{\bm{\mu}}$. 

The most immediate notion of advice quality one may posit is $\|\bm{\mu}-\bm{0}\|_2 = \|\bm{\mu}\|_2$.
Let us see what issues arise.
Using an exponential search process, we can invoke \cref{lem:tolerant-mean-tester} directly to find some $r > 0$, such that $r/2 \leq \| \bm{\mu} - \wt{\bm{\mu}} \|_2  =  \| \bm{\mu} \|_2 \leq r$.
To argue about the sample complexity for learning $\wh{\bm{\mu}}$, and ignoring computational efficiency, one can invoke the Scheff\'{e} tournament approach for density estimation.
Let $\mathcal{N}$ be an $\eps$-cover in $\ell_2$ of the the $\ell_2$-ball of radius $r$ around $\bm{0}$.
Clearly, $\bm{\mu}$ is $\eps$-close in $\ell_2$ to one of the points in $\mathcal{N}$. It is known (e.g.\ see Chapter 4 of \cite{devroye2001combinatorial}) that the sample complexity of the Scheff\'{e} tournament algorithm scales as $\log |\mathcal{N}|$.
However, we have that $\log |\mathcal{N}| = \Omega(d)$; e.g.\ see Proposition 4.2.13 of \cite{vershynin2018high}.
Indeed, one can get a formal lower bound showing that the sample complexity cannot be made sublinear in $d$ for non-trivial values of $r$.
To get around this barrier, we will instead take the notion of advice quality to be $\|\bm{\mu}\|_1$ instead of $\|\bm{\mu}\|_2$. It is known that $d^{\frac{c r^2}{\eps^2}}$ $\ell_2$ balls of radius $\eps$ suffice to cover an $\ell_1$-ball of radius $r$, for some absolute constant $c > 0$; e.g.\ see Chapter 4, Example 2.8 of \cite{vershynin2012lectures}.
Using this modified approach, the Scheff\'{e} tournament only requires $\cO(\frac{r^2}{\eps^4} \log d)$ samples which could be $o(d/\eps^2)$ for a wide range of values of $r$.

There are still two issues to address: (i) how to obtain an $\ell_1$ estimate $r$ of $\bm{\mu}$, i.e., $r/2 \leq \| \bm{\mu} \|_1 \leq r$, and (ii) how to get a computationally efficient learning algorithm. 

To address (i), we can apply the standard inequality $\| \bm{\mu} \|_2 \leq \| \bm{\mu} \|_1 \leq \sqrt{d} \| \bm{\mu} \|_2$ bound to transform our $\ell_2$ estimate from \cref{lem:tolerant-mean-tester} into an $\ell_1$ one.
However, since the number of samples has a quadratic relation with $r$, we need a better approximation than $\sqrt{d}$ to achieve sample complexity that is sublinear in $d$. To achieve this, we partition the $\bm{\mu}$ vector into blocks of size at most $k \leq d$ and approximate the $\ell_1$ norm of each smaller dimension vector separately and then add them up to obtain an $\ell_1$ estimate of the overall $\bm{\mu}$.
Doing so improves the resulting multiplicative error to $\approx \sqrt{d/k}$ instead of $\sqrt{d}$. In effect, we devise a tolerant tester for a mixed $\ell_{1,2}$ norm instead of the $\ell_1$ or $\ell_2$ norms directly. 

To address (ii), observe that the Scheff\'{e} tournament approach requires time at least linear in the size of the $\eps$-cover.
In order to do better, we observe that we can formulate our task as an optimization problem with an $\ell_1$-constraint. Specifically, given samples $\by_1, \dots, \by_n$, we solve the following program:
\[
\wh{\bm{\mu}}= \argmin_{\|\bm{\beta}\|_1 \leq r} \frac{1}{n} \sum_{i=1}^n \|\by_i - \bm{\beta}\|_2^2
\]
The error $\|\bm{\mu}-\bm{\wh{\mu}}\|_2$ can be analyzed by similar techniques as those used for analyzing $\ell_1$-regularization in the context of LASSO or compressive sensing; e.g.\ see \cite{tibshirani1996regression, tibshirani1997lasso, hastie2015statistical}.

\subsubsection{Approach for \textsc{TestAndOptimizeCovariance}}
\label{sec:technical-overview-general-covariance}

As before, we may assume without loss of generality that $\wt{\bm{\Sigma}} = \bI_d$ by pre-processing the samples appropriately.
Furthermore, we can invest $\Omega(d/\eps^2)$ samples up-front to ensure that the empirical mean $\wh{\bm{\mu}}$ will be an $\eps$-good estimate of $\bm{\mu}$.
Then, it will suffice to obtain an estimate $\wh{\bm{\Sigma}}$ of $\bm{\Sigma}$ such that $\| \bm{\Sigma}^{-1} \wh{\bm{\Sigma}} - \bI_d \|_F \leq \cO(\eps)$ suffices.
At a high level, the approach for \textsc{TestAndOptimizeCovariance} is the same as \textsc{TestAndOptimizeMean} after three key adjustments to adapt the approach from vectors to matrices.

The first adjustment is that we perform a suitable preconditioning process using an additional $\cO(d)$ samples so that we can subsequently argue that $\| \bm{\Sigma}^{-1} \|_2 \leq 1$.
This will then allow us to argue that $\| \bm{\Sigma}^{-1} \wh{\bm{\Sigma}} - \bI_d \|_F \leq \| \bm{\Sigma}^{-1} \|_2 \| \wh{\bm{\Sigma}} - \bm{\Sigma} \|_F \in \cO(\eps)$.
Our preconditioning technique is inspired by \cite{kamath2019privately}; while they use $\cO(d)$ samples to construct a preconditioner to control the maximum eigenvalue, we use a similar approach to control the minimum eigenvalue.

The second adjustment pertains to the partitioning idea used for multiplicatively approximating $\|\vec(\bm{\Sigma}-\bI_d)\|_1$.
Observe that the covariance matrix of a marginal of a multivariate Gaussian is precisely the principal submatrix of the original covariance $\bm{\Sigma}$ on the corresponding projected coordinates.
For example, if one focuses on coordinates $\{i,j\} \subseteq [d]$ of each sample, then the corresponding covariance matrix is
$\begin{bmatrix}
\bm{\Sigma}_{i,i} & \bm{\Sigma}_{i,j}\\
\bm{\Sigma}_{j,i} & \bm{\Sigma}_{j,j}
\end{bmatrix}$, for $i < j$.
To this end, we generalize the partitioning scheme described for \textsc{TestAndOptimizeMean} to higher ordered objects.

\begin{definition}[Partitioning scheme]
\label{def:partitioning-scheme}
Fix $q \geq 1$, $d \geq 1$, and a $q$-ordered $d$-dimensional tensor $\cT \in \R^{d^{\otimes q}}$.
Let $\bB \subseteq [d]$ be a subset of indices and define $\cT_{\cB}$ as the principal subtensor of $\cT$ indexed by $\bB$.
A collection of subsets $\bB_1, \ldots, \bB_w \subseteq [d]$ is called an $(q,d,k,a,b)$-partitioning of the tensor $\cT$ if the following three properties hold:
\begin{itemize}
    \item $|\bB_1| \leq k, \ldots, |\bB_w| \leq k$
    \item For every cell of $\cT$ appears in \emph{at least} $a$ of the $w$ principal subtensors $\cT_{\bB_1}, \ldots, \cT_{\bB_w}$.
    \item For every cell of $\cT$ appears in \emph{at most} $b$ of the $w$ principal subtensors $\cT_{\bB_1}, \ldots, \cT_{\bB_w}$.
\end{itemize}
\end{definition}

For example, when $q = 2$, $\bT \in \R^{d \times d}$ is just a $d \times d$ matrix.
Observe one can always obtain a partitioning with $k \leq d^q$ by letting the index sets $\bB_1, \ldots, \bB_w$ encode every possible index, but this results in a large $w = \binom{d}{q}$ which can be undesirable for downstream analysis.
The partitioning used in \textsc{TestAndOptimizeMean} is a special case of \cref{def:partitioning-scheme} with $q = a = b = 1$, $k = \lceil d/w \rceil$.
For \textsc{TestAndOptimizeCovariance}, we are interested in the case where $q = 2$ and $a = 1$.
Ideally, we want to minimize $k$ and $b$ as well.
\cref{fig:partitioning-visualization} illustrates an example of a $(q=2, d=5, k=3, a=1, b=3)$-partitioning.

\begin{figure}[htb]
\centering
\resizebox{0.9\linewidth}{!}{
    \begin{tikzpicture}
\def\rows{5}
\def\cols{5}


\def\offsetx{0}
\def\offsety{0}
\foreach \x in {1,2,3} {
    \foreach \y in {1,2,3} {
        \fill[blue!30!white] (\offsetx + \x, \offsety + \y) rectangle (\offsetx + \x + 1, \offsety + \y + 1);
    }
}
\foreach \x in {1, ..., \cols} {
    \foreach \y in {1, ..., \rows} {
        \draw[black, thick] (\offsetx + \x, \offsety + \y) rectangle (\offsetx + \x + 1, \offsety + \y + 1);
    }
}
\foreach \x in {1, ..., \cols} {
    \node at (\offsetx + \x + 0.5, \offsety + 0.5) {\Large \x};
}
\foreach \y in {1, ..., \cols} {
    \node at (\offsetx + 0.5, \offsety + \y + 0.5) {\Large \y};
}
\node at (\offsetx + 3.5, \offsety + 1.5 + \rows) {\Large Block $\{1,2,3\}$};

\def\offsetx{7}
\def\offsety{0}
\foreach \x in {1,4,5} {
    \foreach \y in {1,4,5} {
        \fill[red!30!white] (\offsetx + \x, \offsety + \y) rectangle (\offsetx + \x + 1, \offsety + \y + 1);
    }
}
\foreach \x in {1, ..., \cols} {
    \foreach \y in {1, ..., \rows} {
        \draw[black, thick] (\offsetx + \x, \offsety + \y) rectangle (\offsetx + \x + 1, \offsety + \y + 1);
    }
}
\foreach \x in {1, ..., \cols} {
    \node at (\offsetx + \x + 0.5, \offsety + 0.5) {\Large \x};
}
\foreach \y in {1, ..., \cols} {
    \node at (\offsetx + 0.5, \offsety + \y + 0.5) {\Large \y};
}
\node at (\offsetx + 3.5, \offsety + 1.5 + \rows) {\Large Block $\{1,4,5\}$};

\def\offsetx{14}
\def\offsety{0}
\foreach \x in {2,4,5} {
    \foreach \y in {2,4,5} {
        \fill[green!30!white] (\offsetx + \x, \offsety + \y) rectangle (\offsetx + \x + 1, \offsety + \y + 1);
    }
}
\foreach \x in {1, ..., \cols} {
    \foreach \y in {1, ..., \rows} {
        \draw[black, thick] (\offsetx + \x, \offsety + \y) rectangle (\offsetx + \x + 1, \offsety + \y + 1);
    }
}
\foreach \x in {1, ..., \cols} {
    \node at (\offsetx + \x + 0.5, \offsety + 0.5) {\Large \x};
}
\foreach \y in {1, ..., \cols} {
    \node at (\offsetx + 0.5, \offsety + \y + 0.5) {\Large \y};
}
\node at (\offsetx + 3.5, \offsety + 1.5 + \rows) {\Large Block $\{2,4,5\}$};

\def\offsetx{21}
\def\offsety{0}
\foreach \x in {3,4,5} {
    \foreach \y in {3,4,5} {
        \fill[orange!30!white] (\offsetx + \x, \offsety + \y) rectangle (\offsetx + \x + 1, \offsety + \y + 1);
    }
}
\foreach \x in {1, ..., \cols} {
    \foreach \y in {1, ..., \rows} {
        \draw[black, thick] (\offsetx + \x, \offsety + \y) rectangle (\offsetx + \x + 1, \offsety + \y + 1);
    }
}
\foreach \x in {1, ..., \cols} {
    \node at (\offsetx + \x + 0.5, \offsety + 0.5) {\Large \x};
}
\foreach \y in {1, ..., \cols} {
    \node at (\offsetx + 0.5, \offsety + \y + 0.5) {\Large \y};
}
\node at (\offsetx + 3.5, \offsety + 1.5 + \rows) {\Large Block $\{3,4,5\}$};
\end{tikzpicture}
}
\caption{Consider partitioning a $d \times d$ matrix (i.e.\ $d = 5$, $q = 2$) with $w = 4$ blocks $\{(1,2,3), (1,4,5), (2,4,5), (3,4,5)\}$, each of size $k = 3$. Every cell in the original $5 \times 5$ matrix appears in at least $a = 1$ and at most $b = 3$ times across all the induced submatrices.}
\label{fig:partitioning-visualization}
\end{figure}
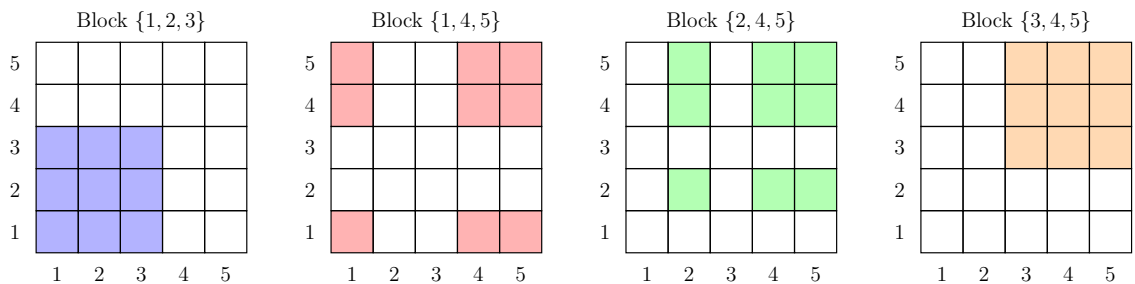

The last change is to the optimization program for learning $\wh{\bm{\Sigma}}$. Given samples $\by_1, \dots, \by_n$ from $N(\bm{\mu}, \bm{\Sigma})$, we define:
\[
\wh{\bm{\Sigma}} = \argmin_{\substack{\text{$\bA \in \R^{d \times d}$ is p.s.d.}\\ \| \vec(\bA - \bI_d) \|_1 \leq r\\ \| \bA^{-1} \|_2 \leq 1}} \sum_{i=1}^n \| \bA - \by_i \by_i^\top \|_F^2
\]
Observe that $\bm{\Sigma}$ is a feasible solution to the above program.
The optimization problem can be solved efficiently since it can be written as an SDP with convex constraints; see \cref{sec:appendix-cov-estimation-program}.
We finally bound $\|\bm{\Sigma} - \wh{\bm{\Sigma}}\|_F$ using an analysis that mirrors that for \textsc{TestAndOptimizeMean} but is in terms of matrix algebra.

\subsubsection{Lower bound}

To prove our lower bound results (\cref{thm:mainresultlowerbound} and \cref{thm:mainresultlowerboundcovariance}), we make use of a lemma in \cite{ashtiani2020gaussian} that informally says the following: If we can construct a cover $f_1,\ldots,f_M$ of distributions such that the pairwise KL divergence is at most $\kappa$ and the pairwise TV distance is $> 2\varepsilon$, then, given sample access to an unknown $f_i$, the sample complexity of learning a distribution which is $\varepsilon$-close to $f_i$ in total variation with probability $\geq \frac23$ over the samples (which is referred to as $(\varepsilon, \frac23)$-PAC learning in total variation) is $\geq \wt{\Omega}\left(\frac{\log M}{\kappa}\right)$. This lemma gives an information-theoretic lower bound and is a consequence of the generalized Fano's inequality.

To apply this lemma in the context of learning with advice, we need to fix an advice $\bm{a}$ (mean or covariance, in the case of our problem) and find a large cover of distributions $f_1,\ldots,f_M$ that satisfy the conditions of the lemma (pairwise KL $\leq \kappa$ and pairwise TV $> 2\eps$), while also satisfying a guarantee on the advice quality with respect to \emph{all} $f_1,\ldots,f_M$ (say, the quality of $\bm{a}$ is $Q$). Then, applying the lemma will show a sample complexity lower bound for learning a distribution given advice with quality $Q$, since an adversary can choose an $f_i$ in the cover set and give $\bm{a}$ (fixed) as the advice in each case while still satisfying the advice quality requirement. Since, in this context, we know that the underlying ground truth is one of $f_1,\ldots,f_M$, the advice $\bm{a}$ is immaterial. The lemma asserts that we still need $\wt{\Omega}\left(\frac{\log M}{\kappa}\right)$ samples to learn a distribution close to the given $f_i$ (where the pairwise TV separation of $> 2\eps$ is crucial in ensuring that the learning algorithm would need to identify the correct $f_i$ to succeed, since no distribution $f$ will be $\varepsilon$-close in TV to $f_i$ and $f_j$ for $i \neq j$ due to the triangle inequality).

In the context of learning a Gaussian with unknown mean, the advice quality that we consider is $\|\wt{\bm{\mu}} - \bm{\mu}\|_1$, where $\wt{\bm{\mu}}$ is the advice and $\bm{\mu}$ is the ground truth.
To show \cref{thm:mainresultlowerbound}, we construct a cover of $M$ distributions $N(\bm{\mu}_i, \bI_d)$ such that $\|\wt{\bm{\mu}} - \bm{\mu}_i\|_1$ is precisely the same for all $\bm{\mu}_i$'s.
Then, we ensure that the pairwise TV and KL requirements are satisfied by controlling the $\ell_2$ distance $\|\bm{\mu}_i - \bm{\mu}_j\|_2$ for each pair $i \neq j$. This enables us to use a construction where we set the first $k$ coordinates of each $\bm{\mu}_i$ based on the codewords of an error correcting code with distance $\geq \Omega(k)$, and we can show the existence of such a code with $2^{\Omega(k)}$ codewords using the Gilbert-Varshamov bound.

In the context of learning Gaussians with unknown covariance, we consider the advice quality $\|\wt{\bm{\Sigma}}^{-\frac{1}{2}} \bm{\Sigma} \wt{\bm{\Sigma}}^{-\frac{1}{2}} - \bI_d\|_1$ where $\bm{\Sigma}$ is the ground truth and $\wt{\bm{\Sigma}}$ is the advice. To prove a lower bound on the sample complexity of learning given good advice, we follow a similar strategy where again, we want to construct a cover of $M$ distributions $N(\bm{0}, \bm{\Sigma}_i)$ which all satisfy a bound on the advice quality and also satisfy the pairwise TV and KL requirements. \cite{ashtiani2020gaussian} also pursue the same goal but without the advice quality constraint. We adapt their construction by defining a family of block-diagonal orthogonal matrices such that the size of the submatrices can be used to control the entrywise $\ell_1$-norm distance to the identity. Quantifying the KL divergences and TV distances between the constructed gaussians then gives the desired lower bound.

\paragraph{Outline of the paper}

We begin with preliminary materials and related work in \cref{sec:preliminaries}.
Then, we present \textsc{TestAndOptimizeMean} and \textsc{TestAndOptimizeCovariance} in \cref{sec:identity-covariance} and \cref{sec:general-covariance} respectively.
Our hardness results are given in \cref{sec:lower-bounds} and some experimental results illustrating the savings in sample complexity are shown in \cref{sec:experiments}.

\section{Preliminaries}
\label{sec:preliminaries}

\paragraph{Notation}
We use \emph{lowercase letters} for scalars, set elements, random variable instantiations, \emph{uppercase letters} for random variables, \emph{bolded lowercase letters} for vectors and sets, \emph{bolded uppercase letters} for set of random variables and matrices, \emph{calligraphic letters} for probability distributions and sets of sets, and \emph{small caps} for algorithm names.
Intuitively, we use non-bolded versions for singletons, bolded versions for collections of items, and calligraphic for more complicated objects.
The context should be clear enough to distinguish between various representations.

For any integer $d \geq 1$, we write $[d]$ to mean the set of integers $\{1, \ldots, d\}$.
We will write $\bv \sim N(\bm{\mu}, \bm{\Sigma})$ to mean drawing a multivariate Gaussian sample and $\cM = \{\bv_1, \ldots, \bv_{|\cM|}\}$ to mean a collection of $|\cM|$ independently drawn such vectors.

In the rest of this section, we will state some basic facts and lemmas that would be useful for our work.
Most of them are folklore results and we supplement proofs in \cref{sec:appendix-preliminaries-proofs} for them when we could not nail down a direct reference.

\subsection{Matrix facts}

\begin{fact}[e.g.\ see Exercise 5.4.P3 of \cite{horn2012matrix}]
\label{lem:l1-l2-inequality}
Let $\bx \in \R^d$ be an arbitrary $d$-dimensional real vector.
Then, the $\ell_1$ and $\ell_2$ norms of $\bx$ are defined as $\| \bx \|_1 = \sum_{i=1}^d |\bx_i|$ and $ \| \bx \|_2 = \sqrt{\sum_{i=1}^d \bx_i^2}$ respectively. They satisfy the inequality: $\|\bx\|_2 \leq \|\bx\|_1 \leq \sqrt{d}\cdot \|\bx\|_2$. 
\end{fact}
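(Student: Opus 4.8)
The plan is to establish the two inequalities $\|\bx\|_2 \le \|\bx\|_1$ and $\|\bx\|_1 \le \sqrt{d}\,\|\bx\|_2$ separately, each by a short direct computation. The definitions of both norms are already in the statement, so no further setup is needed; everything is finite-dimensional and elementary.

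For the lower bound $\|\bx\|_2 \le \|\bx\|_1$, I would square both sides (legitimate since both quantities are nonnegative) and expand: $\|\bx\|_1^2 = \big(\sum_{i=1}^d |\bx_i|\big)^2 = \sum_{i=1}^d \bx_i^2 + 2\sum_{1 \le i < j \le d} |\bx_i|\,|\bx_j| = \|\bx\|_2^2 + 2\sum_{i<j}|\bx_i|\,|\bx_j|$. Every cross term $|\bx_i|\,|\bx_j|$ is nonnegative, so the right-hand side is at least $\|\bx\|_2^2$, and taking square roots yields $\|\bx\|_2 \le \|\bx\|_1$. For the upper bound $\|\bx\|_1 \le \sqrt{d}\,\|\bx\|_2$, I would apply the Cauchy--Schwarz inequality to the vectors $(|\bx_1|,\dots,|\bx_d|)$ and $(1,\dots,1)$ in $\R^d$: $\|\bx\|_1 = \sum_{i=1}^d |\bx_i|\cdot 1 \le \big(\sum_{i=1}^d \bx_i^2\big)^{1/2}\big(\sum_{i=1}^d 1^2\big)^{1/2} = \sqrt{d}\,\|\bx\|_2$. (One could equivalently invoke the power-mean inequality or Jensen's inequality applied to the convex map $t \mapsto t^2$, but the Cauchy--Schwarz route is the shortest.)

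There is no genuine obstacle here; both steps are one-line manipulations. The only things worth double-checking are the directions of the two bounds and that the multiplicative constant in the second is $\sqrt{d}$ rather than $d$, which the Cauchy--Schwarz computation pins down exactly. For completeness one may note the equality cases: the first inequality is tight iff at most one coordinate of $\bx$ is nonzero, and the second is tight iff all the $|\bx_i|$ are equal.
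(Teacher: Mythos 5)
Your proof is correct. The paper does not supply its own proof of this fact --- it is stated with a citation to Exercise 5.4.P3 of \cite{horn2012matrix} --- and your argument (squaring and dropping the nonnegative cross terms for $\|\bx\|_2 \leq \|\bx\|_1$, then Cauchy--Schwarz against the all-ones vector for $\|\bx\|_1 \leq \sqrt{d}\,\|\bx\|_2$) is exactly the standard textbook derivation of the cited result, with the constant $\sqrt{d}$ and the equality cases correctly identified.
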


For a real matrix $\bM \in \R^{d \times d}$, we define its vectorized form $\vec(M) \in \R^{d^2}$ by $\vec(\bM) = (\bM_{1,1}, \ldots, \bM_{d,d})$ and we see that $\| \bM \|_F^2 = \| \vec(\bM) \|_2^2$.
We recover a matrix given its vectorized form via $\bM = \mat(\vec(\bM))$.
For any matrix $\bA$, we use $\sigma_{\min}(\bA)$ to denote its smallest eigenvalue.
Note that for any full rank matrix $\bA \in \R^{d \times d}$, we have $\frac{1}{\| \bA \|_2} \leq \| \bA^{-1} \|_2$, $\| \bA \|_2 \leq \| \bA \|_F \leq \sqrt{d} \cdot \| \bA \|_2$ (e.g.\ see Exercise 5.6.P23 of \cite{horn2012matrix}), and $\| \bA \|_F = \| \vec(\bA) \|_2 \leq \| \vec(\bA) \|_1 \leq \sqrt{d} \cdot \| \vec(\bA) \|_2$.
For any two matrices $\bA$ and $\bB$ of the same dimension, we also know that $\| \bA \bB \|_F \leq \min\{\| \bA \|_2 \| \bB \|_F, \| \bA \|_F \| \bB \|_2\}$.

\begin{restatable}[Chapter 5.6 of \cite{horn2012matrix}]{lemma}{rotatingnorm}
\label{lem:rotating-norm}
Let $\bA$ and $\bB$ be two square real matrices where $\bA$ is an invertible matrix.
Then, $\| \bA \bB \| = \| \bB \bA \|$.
\end{restatable}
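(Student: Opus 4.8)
The plan is to reduce the claimed identity to the fact that $\bA \bB$ and $\bB \bA$ are \emph{similar} matrices whenever $\bA$ is invertible, and then to read off the equality of the relevant similarity-invariant quantity.

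First I would record the conjugation identity: since $\bA$ is invertible, $\bA^{-1}$ exists and
\[
\bA^{-1} (\bA \bB) \bA = (\bA^{-1} \bA) \bB \bA = \bB \bA ,
\]
so $\bB \bA$ is obtained from $\bA \bB$ by conjugation with $\bA$ --- equivalently, $\bA \bB$ and $\bB \bA$ represent the same linear map in two bases related by the change-of-basis matrix $\bA$. Next I would invoke the standard consequence that similar matrices have the same characteristic polynomial: by multiplicativity of the determinant,
\[
\det(\bB \bA - \lambda \bI_d) = \det\!\big( \bA^{-1} (\bA \bB - \lambda \bI_d) \bA \big) = \det(\bA^{-1}) \det(\bA \bB - \lambda \bI_d) \det(\bA) = \det(\bA \bB - \lambda \bI_d) .
\]
Hence $\bA \bB$ and $\bB \bA$ have the same multiset of eigenvalues, and therefore the same spectral radius (as well as the same trace, determinant, and every spectral functional), which is precisely the asserted equality $\| \bA \bB \| = \| \bB \bA \|$ for the norm quantity $\|\cdot\|$ in the statement. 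In the ``rotation'' case that names the lemma, where the invertible matrix $\bA$ is moreover orthogonal, I would additionally note that $\bA^{-1} = \bA^\top$, so $\bA \bB = \bA (\bB \bA) \bA^\top$ and hence $\| \bA \bB \| = \| \bB \bA \|$ holds even for \emph{every} unitarily invariant norm --- in particular $\|\cdot\|_2$ and $\|\cdot\|_F$ --- since conjugation by an orthogonal matrix preserves such norms; this is the form used in the sequel.

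The argument has no genuine obstacle: it is a one-line consequence of similarity. The only point to be careful about is that the equality is one of \emph{similarity invariants} of the product $\bA \bB$, which is exactly why invertibility of $\bA$ is the natural hypothesis here; in fact $\bA \bB$ and $\bB \bA$ share a characteristic polynomial even for singular square $\bA$ (via the standard $2d \times 2d$ block-determinant identity, or by continuity), so the spectral form of the statement is robust.
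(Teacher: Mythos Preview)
Your similarity argument is correct as far as it goes: $\bA^{-1}(\bA\bB)\bA=\bB\bA$ does show that $\bA\bB$ and $\bB\bA$ share the same eigenvalues, spectral radius, trace, determinant, and every other similarity invariant. The gap is in identifying this with the claimed norm equality. Spectral radius is \emph{not} a matrix norm (it vanishes on nonzero nilpotents), and ordinary matrix norms such as $\|\cdot\|_2$ and $\|\cdot\|_F$ are \emph{not} similarity invariants. Indeed, with $\bA=\diag(2,1)$ (invertible, even symmetric positive definite) and $\bB=\begin{pmatrix}0&1\\0&0\end{pmatrix}$ one has $\|\bA\bB\|_F=\|\bA\bB\|_2=2$ while $\|\bB\bA\|_F=\|\bB\bA\|_2=1$. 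So the route ``similar $\Rightarrow$ same norm'' simply fails for the norms actually used downstream in the paper, and your assertion that the shared spectral radius ``is precisely the asserted equality'' is not justified. Your orthogonal fallback is correct but does not cover the applications, where the rotating matrix is a positive-definite square root, not orthogonal.

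The paper's own proof takes a different route: it defers to an exercise in Horn--Johnson (5.6.P58(b)) asserting $\|\bA\bB\|=\|\bB\bA\|$ under the additional hypotheses that $\bA$ is normal and $\bB$ is Hermitian, and then tries to specialize. That citation encodes exactly the extra structure that your pure similarity argument is missing; the paper's subsequent sentence attempting to deduce the stated lemma from it is itself garbled (not all normal matrices are invertible, and not every real matrix is Hermitian), so the lemma as written is problematic in the paper too. The upshot for you is that proving $\|\bA\bB\|=\|\bB\bA\|$ for a genuine unitarily invariant norm requires more than invertibility of $\bA$ --- one needs either $\bA$ unitary/orthogonal (your correct side remark) or additional symmetry hypotheses on both factors, and your main argument does not supply either.
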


\begin{restatable}{lemma}{applyingrotatingnorm}
\label{lem:applying-rotating-norm}
Let $\bA$ and $\bB$ be two square $d \times d$ matrices where $\bA$ is an invertible matrix with a square root.
Then, $\| \bA^{-1/2} \bB \bA^{-1/2} - I \| = \| \bA^{-1} \bB - \bI_d \|$
\end{restatable}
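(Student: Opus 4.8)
The plan is to rewrite both sides so that each collapses to the common expression $\|(\bB - \bA)\bA^{-1}\|$, using only algebra with the square root of $\bA$ and the ``rotating norm'' identity of \cref{lem:rotating-norm}. The key observation is that the $\bI_d$ in each expression can be absorbed into a difference $\bB - \bA$.

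For the left-hand side, note that since $\bA$ is invertible and admits a square root $\bA^{1/2}$ (so that $\bA^{-1/2} := (\bA^{1/2})^{-1}$ exists, $\bA^{1/2}\bA^{1/2} = \bA$, and $\bA^{-1/2}\bA^{-1/2} = \bA^{-1}$), we have $\bI_d = \bA^{-1/2}\bA\bA^{-1/2}$, hence
\[
\bA^{-1/2}\bB\bA^{-1/2} - \bI_d = \bA^{-1/2}(\bB - \bA)\bA^{-1/2}.
\]
Applying \cref{lem:rotating-norm} with the invertible matrix $\bA^{-1/2}$ as the leading factor (cyclically moving it to the rear of the product) gives
\[
\|\bA^{-1/2}\bB\bA^{-1/2} - \bI_d\| = \|(\bB - \bA)\bA^{-1/2}\bA^{-1/2}\| = \|(\bB - \bA)\bA^{-1}\|.
\]
For the right-hand side I would do the analogous step: $\bA^{-1}\bB - \bI_d = \bA^{-1}(\bB - \bA)$, and a single application of \cref{lem:rotating-norm} with the invertible matrix $\bA^{-1}$ in front yields $\|\bA^{-1}\bB - \bI_d\| = \|(\bB - \bA)\bA^{-1}\|$. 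Comparing the two identities finishes the proof.

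The argument involves no estimation, so there is no real ``hard part''; the only points requiring care are (i) verifying the bookkeeping identities for $\bA^{1/2}$ and $\bA^{-1/2}$, which is where the hypothesis that $\bA$ is invertible \emph{and} has a square root is used, and (ii) checking that each invocation of \cref{lem:rotating-norm} has its invertibility hypothesis satisfied, which holds since $\bA^{-1/2}$ and $\bA^{-1}$ are both invertible whenever $\bA$ is.
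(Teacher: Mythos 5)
Your proof is correct and follows essentially the same route as the paper: both hinge on factoring out a power of $\bA^{1/2}$ and invoking \cref{lem:rotating-norm} to cycle it around. The only cosmetic difference is that the paper writes $\bA^{-1/2}\bB\bA^{-1/2} - \bI_d = (\bA^{-1/2}\bB - \bA^{1/2})\bA^{-1/2}$ and uses a single application of the lemma to pass directly to $\|\bA^{-1}\bB - \bI_d\|$, while you reduce both sides to the common expression $\|(\bB-\bA)\bA^{-1}\|$ via two applications.
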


\begin{definition}[Projected vector]
\label{def:projected-vector}
Let $\bm{\bx} = (\bm{\bx}_1, \ldots, \bm{\bx}_d) \in \R^d$ be a $d$-dimensional vector and $\bB = \{i_1, \ldots, i_w\} \subseteq [d]$ be a subset of $1 \leq w \leq d$ indices, where $i_1 < \ldots < i_w$.
Then, we define $\bm{\bx}_{\bB} = (\bm{\bx}_{i_1}, \ldots, \bm{\bx}_{i_w}) \in \R^{w}$ as the projection of the vector $\bm{\bx}$ to the coordinates indicated by $\bB$.
\end{definition}

\begin{restatable}[Trace inequality]{lemma}{traceinequality}
\label{lem:trace-inequality}
For any three matrices $\bA, \bB, \bC \in \R^{d \times d}$, we have $\Tr(\bA \bB \bC) \leq \| \vec(\bB \bA) \|_1 \cdot \| \bC \|_2$.
\end{restatable}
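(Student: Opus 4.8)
The plan is to peel off $\bC$ and reduce the trace of the triple product to a sum of pairwise entry products, which admits a one-line Hölder-type estimate. First I would expand the trace in coordinates: $\Tr(\bA\bB\bC) = \sum_i (\bA\bB\bC)_{ii} = \sum_{i,j}(\bA\bB)_{ij}\,\bC_{ji}$.

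Next I would bound this sum by the $\ell_1$ norm of one factor times the largest-magnitude entry of the other. By the triangle inequality,
\[
\Tr(\bA\bB\bC) \;=\; \sum_{i,j}(\bA\bB)_{ij}\,\bC_{ji} \;\le\; \Big(\max_{i,j}|\bC_{ji}|\Big)\sum_{i,j}\big|(\bA\bB)_{ij}\big| \;=\; \Big(\max_{i,j}|\bC_{ij}|\Big)\cdot\|\vec(\bA\bB)\|_1 .
\]
The last ingredient is the elementary fact that each entry of a matrix is dominated by its spectral norm: $\max_{i,j}|\bC_{ij}| = \max_{i,j}|\be_i^\top\bC\be_j| \le \|\be_i\|_2\,\|\bC\|_2\,\|\be_j\|_2 = \|\bC\|_2$. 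Combining these gives $\Tr(\bA\bB\bC) \le \|\vec(\bA\bB)\|_1\cdot\|\bC\|_2$.

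It then remains to reconcile the factor $\|\vec(\bA\bB)\|_1$ produced by this argument with the factor $\|\vec(\bB\bA)\|_1$ appearing in the statement. Since $\vec$ simply lists all entries of a matrix and $\|\cdot\|_1$ sums their magnitudes, transposition leaves it unchanged: $\|\vec(\bM)\|_1 = \|\vec(\bM^\top)\|_1$ for every $\bM$, so $\|\vec(\bA\bB)\|_1 = \|\vec(\bB^\top\bA^\top)\|_1$. Whenever $\bA$ and $\bB$ are symmetric — as in the downstream applications, where these will be covariance-type matrices — this equals $\|\vec(\bB\bA)\|_1$, giving the bound exactly as stated (in general one should read the right-hand side as $\|\vec(\bB^\top\bA^\top)\|_1\cdot\|\bC\|_2$). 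I expect this last bookkeeping step, and in particular being careful to invoke the symmetry hypothesis it relies on, to be the only point requiring attention; the core estimate is two applications of the triangle inequality together with the entrywise-versus-spectral-norm comparison, so I foresee no genuine obstacle.
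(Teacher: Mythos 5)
Your core estimate is fine as far as it goes—the expansion $\Tr(\bA\bB\bC)=\sum_{i,j}(\bA\bB)_{ij}\bC_{ji}$, the triangle inequality, and $\max_{i,j}|\bC_{ij}|\leq\|\bC\|_2$ are all correct—but it proves a \emph{different} inequality, namely $\Tr(\bA\bB\bC)\leq\|\vec(\bA\bB)\|_1\cdot\|\bC\|_2$, and the reconciliation with the stated $\|\vec(\bB\bA)\|_1$ does not go through. Your patch assumes $\bA$ and $\bB$ are symmetric ``as in the downstream applications,'' but in the one place the lemma is used (the proof of \cref{lem:SDP-given-l1-upper-bound}) it is invoked with $\bA=\bm{\Sigma}^{1/2}\wh{\bm{\Sigma}}-\bm{\Sigma}^{1/2}\bm{\Sigma}$, which is \emph{not} symmetric, and $\bB=\bm{\Sigma}^{1/2}$; the whole point of the $\bB\bA$ grouping is that $\bB\bA=\bm{\Sigma}\wh{\bm{\Sigma}}-\bm{\Sigma}^2$ contains no square roots, so its entrywise $\ell_1$ norm can be bounded by $(r+1)\cdot 2r$ using the program constraints $\|\vec(\wh{\bm{\Sigma}}-\bI_d)\|_1\leq r$ and $\|\vec(\bm{\Sigma}-\bI_d)\|_1\leq r$. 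Your version leaves $\|\vec\bigl(\bm{\Sigma}^{1/2}(\wh{\bm{\Sigma}}-\bm{\Sigma})\bm{\Sigma}^{1/2}\bigr)\|_1$, which that argument cannot control. Nor can any bookkeeping convert your bound into the stated one in general: with $\bA=\begin{pmatrix}1&0\\1&0\end{pmatrix}$, $\bB=\begin{pmatrix}1&-1\\0&0\end{pmatrix}$, $\bC=\begin{pmatrix}1&0\\0&0\end{pmatrix}$ one has $\bB\bA=\bm{0}$ yet $\Tr(\bA\bB\bC)=1$, so $\|\vec(\bB\bA)\|_1$ can vanish while the trace does not. (This example also shows that the lemma read literally for arbitrary matrices needs the symmetric structure present in its application—there $\bA\bB=\bm{\Sigma}^{1/2}(\wh{\bm{\Sigma}}-\bm{\Sigma})\bm{\Sigma}^{1/2}$ and $\bC$ are symmetric—so your instinct that a hypothesis is lurking is sound, but the hypothesis you chose is not the one satisfied downstream.)

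The paper reaches $\bB\bA$ by a spectral route that an entrywise expansion cannot imitate: it first bounds $\Tr((\bA\bB)\bC)$ by $\sum_i\lambda_i(\bA\bB)\lambda_i(\bC)$ (von Neumann-type trace inequality), then uses the fact that $\bA\bB$ and $\bB\bA$ have the same eigenvalues to switch the grouping, applies H\"{o}lder to get $\sum_i|\lambda_i(\bB\bA)|\cdot\|\bC\|_2$, and finally bounds the sum of absolute eigenvalues by the nuclear norm and then by $\|\vec(\bB\bA)\|_1$ via the SVD. The eigenvalue-sharing identity $\lambda_i(\bA\bB)=\lambda_i(\bB\bA)$ is the ingredient your argument is missing, and it has no analogue at the level of matrix entries; to repair your proposal you would either need to reproduce that spectral step or restate the lemma in the $\bA\bB$ form and then redo the application, which as noted above does not work.
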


\begin{restatable}{lemma}{vectorizedinequalities}
\label{lem:vectorized-inequalities}
For any two matrices $\bA, \bB \in \R^{d \times d}$, we have $\| \vec(\bA + \bB) \|_1 \leq \| \vec(\bA) \|_1 + \| \vec(\bB) \|_1$ and $\| \vec(\bA \bB) \|_1 \leq \| \vec(\bA) \|_1 \cdot \| \vec(\bB) \|_1$.
\end{restatable}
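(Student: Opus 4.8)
The plan is entirely elementary: unpack both sides into coordinate sums and apply the triangle inequality together with nonnegativity bookkeeping. For the first inequality, since $\vec$ is linear, $\vec(\bA+\bB) = \vec(\bA) + \vec(\bB)$, and the claim $\|\vec(\bA)+\vec(\bB)\|_1 \leq \|\vec(\bA)\|_1 + \|\vec(\bB)\|_1$ is just the triangle inequality for the $\ell_1$ norm on $\R^{d^2}$ (equivalently, apply $|x+y|\leq|x|+|y|$ entrywise and sum).

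For the second inequality, I would write $\|\vec(\bA\bB)\|_1 = \sum_{i=1}^d \sum_{j=1}^d |(\bA\bB)_{i,j}| = \sum_{i,j} \bigl| \sum_{k=1}^d \bA_{i,k}\bB_{k,j} \bigr|$, apply the triangle inequality inside to get the upper bound $\sum_{i,j,k} |\bA_{i,k}|\,|\bB_{k,j}|$, and then reorganize the triple sum by pulling out the index $k$:
\[
\sum_{i,j,k} |\bA_{i,k}|\,|\bB_{k,j}| \;=\; \sum_{k=1}^d \Bigl( \sum_{i=1}^d |\bA_{i,k}| \Bigr)\Bigl( \sum_{j=1}^d |\bB_{k,j}| \Bigr) \;=\; \sum_{k=1}^d c_k r_k,
\]
where $c_k = \sum_i |\bA_{i,k}|$ is the $\ell_1$ norm of the $k$-th column of $\bA$ and $r_k = \sum_j |\bB_{k,j}|$ is the $\ell_1$ norm of the $k$-th row of $\bB$. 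Since every $c_k, r_k \geq 0$, expanding the product of the two sums shows $\bigl(\sum_k c_k\bigr)\bigl(\sum_k r_k\bigr) = \sum_{k,\ell} c_k r_\ell \geq \sum_k c_k r_k$. Finally $\sum_k c_k = \sum_{i,k} |\bA_{i,k}| = \|\vec(\bA)\|_1$ and $\sum_k r_k = \sum_{k,j} |\bB_{k,j}| = \|\vec(\bB)\|_1$, which chains together to give $\|\vec(\bA\bB)\|_1 \leq \|\vec(\bA)\|_1 \cdot \|\vec(\bB)\|_1$.

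There is no real obstacle here; the only step that is marginally more than mechanical is recognizing that the sum of the products $c_k r_k$ is dominated by the product of the sums, which is immediate from nonnegativity of all the cross terms. I would present the whole argument in a few lines without any auxiliary lemmas.
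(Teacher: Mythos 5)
Your proof is correct and follows essentially the same route as the paper: the first part is the $\ell_1$ triangle inequality applied entrywise, and the second bounds the triple sum $\sum_{i,j,k}|\bA_{i,k}||\bB_{k,j}|$ by the product of the full entrywise sums using nonnegativity. Your version just makes explicit the intermediate factorization into column sums of $\bA$ times row sums of $\bB$, which the paper elides.
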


\subsection{Distance measures between distributions}

\begin{definition}[Kullback–Leibler (KL) divergence]
\hspace{0pt}\\
For two continuous distributions $\cP$ and $\cQ$ over $\bX$,
\[
\kl(\cP, \cQ)
= \int_{\bx \in \bX} \cP(\bx) \log \left( \frac{\cP(\bx)}{\cQ(\bx)} \right) \,d \bx
\]
\end{definition}

Note that KL divergence is not symmetric in general.

\begin{restatable}[Known fact about KL divergence]{lemma}{klknownfact}
\label{lem:kl-known-fact}
Given two $d$-dimensional multivariate Gaussian distributions $\cP \sim N(\bm{\mu}_{\cP}, \bm{\Sigma}_{\cP})$ and $\cQ \sim N(\bm{\mu}_{\cQ}, \bm{\Sigma}_{\cQ})$ where $\bm{\Sigma}_{\cP}$ and $\bm{\Sigma}_{\cQ}$ are invertible, we have
\begin{align*}
\kl(\cP, \cQ)
&= \frac{1}{2} \cdot \left( \Tr(\bm{\Sigma}_{\cQ}^{-1} \bm{\Sigma}_{\cP}) - d + (\bm{\mu}_{\cQ} - \bm{\mu}_{\cP})^\top \bm{\Sigma}_{\cQ}^{-1} (\bm{\mu}_{\cQ} - \bm{\mu}_{\cP}) + \ln \left( \frac{\det \bm{\Sigma}_{\cQ}}{\det \bm{\Sigma}_{\cP}} \right) \right)\\
&\leq \frac{1}{2} \cdot \left( (\bm{\mu}_{\cQ} - \bm{\mu}_{\cP})^\top \bm{\Sigma}_{\cQ}^{-1} (\bm{\mu}_{\cQ} - \bm{\mu}_{\cP}) + \| \bX \|_F^2 \right)
\end{align*}
where $\bX = \bm{\Sigma}_{\cQ}^{-1/2} \bm{\Sigma}_{\cP} \bm{\Sigma}_{\cQ}^{-1/2} - \bI_d$ with eigenvalues $\lambda_1, \ldots, \lambda_d$.
In particular, $\kl(\cP, \cQ) = \frac{1}{2} \| \bm{\mu}_{\cQ} - \bm{\mu}_{\cP} \|_2^2$ when $\bm{\Sigma}_{\cP} = \bm{\Sigma}_{\cQ} = \bI_d$ and $\kl(\cP, \cQ) \leq \frac{1}{2} \| \bX \|_F^2$ when $\bm{\mu}_{\cP} = \bm{\mu}_{\cQ}$.
\end{restatable}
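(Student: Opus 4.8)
The plan is a direct density computation for the equality, followed by a short eigenvalue reduction plus an elementary scalar inequality for the bound.

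For the equality, I would start from $\kl(\cP,\cQ) = \E_{\bx \sim \cP}[\ln \cP(\bx) - \ln \cQ(\bx)]$ and substitute the Gaussian log-densities $\ln \cP(\bx) = -\tfrac{1}{2}\big((\bx - \bm{\mu}_{\cP})^\top \bm{\Sigma}_{\cP}^{-1}(\bx - \bm{\mu}_{\cP}) + \ln\det\bm{\Sigma}_{\cP} + d\ln(2\pi)\big)$ and likewise for $\cQ$. The $d\ln(2\pi)$ terms cancel, leaving $\tfrac{1}{2}\ln\tfrac{\det\bm{\Sigma}_{\cQ}}{\det\bm{\Sigma}_{\cP}}$ plus $\tfrac{1}{2}\E_{\bx\sim\cP}\big[(\bx - \bm{\mu}_{\cQ})^\top \bm{\Sigma}_{\cQ}^{-1}(\bx - \bm{\mu}_{\cQ}) - (\bx - \bm{\mu}_{\cP})^\top \bm{\Sigma}_{\cP}^{-1}(\bx - \bm{\mu}_{\cP})\big]$. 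Using the identity $\E_{\bx\sim\cP}[(\bx - \ba)^\top \bM (\bx - \ba)] = \Tr(\bM \bm{\Sigma}_{\cP}) + (\bm{\mu}_{\cP} - \ba)^\top \bM (\bm{\mu}_{\cP} - \ba)$ (from $\E[\bx]=\bm{\mu}_{\cP}$, $\E[(\bx-\bm{\mu}_{\cP})(\bx-\bm{\mu}_{\cP})^\top]=\bm{\Sigma}_{\cP}$, cyclicity of the trace, and vanishing of the cross term), the second quadratic form has expectation $\Tr(\bI_d) = d$ and the first has expectation $\Tr(\bm{\Sigma}_{\cQ}^{-1}\bm{\Sigma}_{\cP}) + (\bm{\mu}_{\cQ}-\bm{\mu}_{\cP})^\top \bm{\Sigma}_{\cQ}^{-1}(\bm{\mu}_{\cQ}-\bm{\mu}_{\cP})$; collecting terms gives the stated formula.

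For the inequality it suffices to show $\Tr(\bm{\Sigma}_{\cQ}^{-1}\bm{\Sigma}_{\cP}) - d + \ln\tfrac{\det\bm{\Sigma}_{\cQ}}{\det\bm{\Sigma}_{\cP}} \le \|\bX\|_F^2$. Here I would note that $\bm{\Sigma}_{\cQ}^{-1}\bm{\Sigma}_{\cP} = \bm{\Sigma}_{\cQ}^{-1/2}\big(\bm{\Sigma}_{\cQ}^{-1/2}\bm{\Sigma}_{\cP}\bm{\Sigma}_{\cQ}^{-1/2}\big)\bm{\Sigma}_{\cQ}^{1/2}$ is similar to the symmetric positive-definite matrix $\bm{\Sigma}_{\cQ}^{-1/2}\bm{\Sigma}_{\cP}\bm{\Sigma}_{\cQ}^{-1/2} = \bX + \bI_d$, so both have eigenvalues $\mu_i = 1+\lambda_i > 0$. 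Hence $\Tr(\bm{\Sigma}_{\cQ}^{-1}\bm{\Sigma}_{\cP}) = d + \sum_i \lambda_i$ and $\ln\tfrac{\det\bm{\Sigma}_{\cQ}}{\det\bm{\Sigma}_{\cP}} = -\sum_i \ln(1+\lambda_i)$, so the left-hand side equals $\sum_i\big(\lambda_i - \ln(1+\lambda_i)\big)$. Applying the scalar bound $x - \ln(1+x) \le x^2$ — which holds for $x \ge 0$ since $x - \ln(1+x) = \int_0^x \tfrac{t}{1+t}\,dt \le \tfrac{x^2}{2}$ there, and in fact for all $x \gtrsim -2/3$, covering the near-identity regime in which the lemma is invoked — term by term, and using that $\bX$ is symmetric so $\|\bX\|_F^2 = \sum_i \lambda_i^2$, yields $\sum_i(\lambda_i - \ln(1+\lambda_i)) \le \sum_i \lambda_i^2 = \|\bX\|_F^2$. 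The two special cases then follow by substituting $\bm{\Sigma}_{\cP} = \bm{\Sigma}_{\cQ} = \bI_d$ (so $\bX = \mathbf{0}$) and $\bm{\mu}_{\cP} = \bm{\mu}_{\cQ}$ into the equality and the inequality respectively.

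There is no serious obstacle: this is a textbook-style computation. The only points needing care are the bookkeeping in the expectation step and the mild caveat on the scalar inequality $x - \ln(1+x) \le x^2$, whose failure as an eigenvalue $\mu_i = 1+\lambda_i \to 0^+$ is immaterial here because the lemma is used only after preconditioning has brought $\bm{\Sigma}_{\cQ}^{-1}\bm{\Sigma}_{\cP}$ close to the identity.
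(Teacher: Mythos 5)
Your proposal follows essentially the same route as the paper: the equality is the standard Gaussian KL formula (which the paper simply invokes by definition, while you derive it via the expectation identity $\E_{\bx\sim\cP}[(\bx-\ba)^\top \bM (\bx-\ba)] = \Tr(\bM\bm{\Sigma}_{\cP}) + (\bm{\mu}_{\cP}-\ba)^\top \bM(\bm{\mu}_{\cP}-\ba)$), and the bound is obtained exactly as in the paper by passing to the eigenvalues of $\bX$, writing $\Tr(\bm{\Sigma}_{\cQ}^{-1}\bm{\Sigma}_{\cP}) = d + \sum_i\lambda_i$ and $\ln\det(\bX+\bI_d) = \sum_i\ln(1+\lambda_i)$, and applying the scalar inequality $x - \ln(1+x) \le x^2$ termwise.

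The one place you diverge is in how that scalar step is justified, and here you are actually more careful than the paper. The paper asserts that ``$\bX$ is invertible because $\bm{\Sigma}_{\cP}$ and $\bm{\Sigma}_{\cQ}$ are invertible, so $\lambda_1,\ldots,\lambda_d > 0$,'' which is not right: positive definiteness of $\bm{\Sigma}_{\cQ}^{-1/2}\bm{\Sigma}_{\cP}\bm{\Sigma}_{\cQ}^{-1/2}$ only gives $\lambda_i > -1$, and $\ln(1+x) \ge x - x^2$ does fail as $x \to -1^+$. Indeed, as your caveat implicitly recognizes, the inequality in the lemma is not true for arbitrary invertible covariances: in $d=1$ with $\bm{\mu}_{\cP}=\bm{\mu}_{\cQ}$, $\bm{\Sigma}_{\cP}=\eps$, $\bm{\Sigma}_{\cQ}=1$, the KL divergence blows up as $\eps\to 0$ while $\tfrac12\|\bX\|_F^2 \le \tfrac12$. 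So your restriction of the scalar bound to $x \ge 0$ (or $x$ bounded away from $-1$) is not a gap relative to the paper---it is the honest version of the same step, and it suffices for every invocation of the lemma in the paper, where $\bX$ is either $\mathbf{0}$ or has small Frobenius norm. If anything, the lemma statement or the paper's proof should carry the eigenvalue caveat you flag.
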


\begin{definition}[Total variation (TV) distance]
\hspace{0pt}
For two continuous distributions $\cP$ and $\cQ$ over domain $\bX$, with density functions $f$ and $g$ respectively, $\tv(\cP, \cQ) = \frac{1}{2} \int_{\bx \in \bX} | f(\bx) - g(\bx) | \,dx$.
\end{definition}

\begin{theorem}[Pinsker's inequality]
\label{thm:pinsker}
If $\cP$ and $\cQ$ are two probability distributions on the same measurable space, then $\tv(\cP, \cQ) \leq \sqrt{\kl(\cP, \cQ)/2}$.
\end{theorem}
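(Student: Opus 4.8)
The plan is to reduce the general statement to a one-dimensional (binary) inequality via the data-processing property of KL divergence, and then establish that binary inequality by elementary calculus; no high-dimensional or Gaussian-specific structure is needed, since Pinsker's inequality holds for arbitrary distributions on a common measurable space.

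First I would set up the reduction. Let $f$ and $g$ be the densities of $\cP$ and $\cQ$, and let $A = \{\bx \in \bX : f(\bx) \geq g(\bx)\}$. A standard fact is that $\tv(\cP, \cQ) = \cP(A) - \cQ(A)$: the positive part of $f - g$ is supported exactly on $A$, and since $\int (f-g) = 0$ we get $\int_A (f-g) = \int_{\bX \setminus A} (g-f)$, so $\tfrac12 \int |f-g| = \int_A (f-g) = \cP(A) - \cQ(A)$. Write $p = \cP(A)$ and $q = \cQ(A)$, so $\tv(\cP,\cQ) = p - q$. The map $\bx \mapsto \mathbf{1}[\bx \in A]$ pushes $\cP$ and $\cQ$ forward to $\mathrm{Bernoulli}(p)$ and $\mathrm{Bernoulli}(q)$; since applying a deterministic measurable function to the underlying random variable cannot increase KL divergence, $\kl(\cP, \cQ) \geq \kl(\mathrm{Bernoulli}(p), \mathrm{Bernoulli}(q))$. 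If one prefers to avoid invoking data processing as a black box, this particular instance — coarsening to the two-cell partition $\{A, \bX \setminus A\}$ — follows directly from the log-sum inequality applied to those two cells.

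Second, it remains to prove the scalar inequality $\kl(\mathrm{Bernoulli}(p), \mathrm{Bernoulli}(q)) \geq 2(p-q)^2$ for all $p, q \in [0,1]$; chaining this with the reduction gives $\kl(\cP,\cQ) \geq 2\,\tv(\cP,\cQ)^2$, which rearranges to the claim. Fixing $q$, I would define $\phi(p) = p\ln(p/q) + (1-p)\ln\!\big((1-p)/(1-q)\big) - 2(p-q)^2$ on $[0,1]$. Then $\phi(q) = 0$ and $\phi'(q) = 0$, while $\phi''(p) = \frac{1}{p(1-p)} - 4 \geq 0$ because $p(1-p) \leq \tfrac14$. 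Hence $\phi$ is convex on $[0,1]$ with a stationary point at $p = q$, so $\phi(p) \geq \phi(q) = 0$ everywhere, which is the desired bound. The degenerate cases $p \in \{0,1\}$ are handled by taking limits with the convention $0 \ln 0 = 0$, and when $q \in \{0,1\}$ with $p \neq q$ the KL is $+\infty$ and there is nothing to prove.

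The only mildly delicate point is the reduction step — verifying that $\tv$ equals $\cP(A) - \cQ(A)$ and that coarsening cannot increase KL — but both are completely standard, so the ``hard part,'' such as it is, is just the convexity computation for $\phi$, which is routine.
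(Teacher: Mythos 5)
Your proof is correct. The paper does not prove this statement at all — Pinsker's inequality is stated in the preliminaries as a standard known fact — so there is no in-paper argument to compare against; your route (reduce to the Bernoulli case via $A=\{f\geq g\}$ using the two-cell log-sum/data-processing step, then verify $\kl(\mathrm{Bernoulli}(p),\mathrm{Bernoulli}(q))\geq 2(p-q)^2$ by the convexity of $\phi$ with $\phi''(p)=\tfrac{1}{p(1-p)}-4\geq 0$) is the classical textbook proof, and every step, including the degenerate cases, checks out. The only thing worth noting is the logarithm convention: the constant $1/2$ in the statement presumes the KL divergence is defined with the natural logarithm, which is what your computation with $\ln$ uses, so just confirm that the $\log$ in the paper's KL definition is read the same way.
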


\subsection{Properties of Gaussians}

The following are standard results about empirical statistics of Gaussian samples.

\begin{lemma}[Lemma C.4 in \cite{ashtiani2020gaussian}; Corollary 5.50 in \cite{vershynin2010introduction}]
\label{lem:concentration-of-empirical-covariance}
Let $\bg_1, \ldots, \bg_n \sim N(\bm{0}, \bI_d)$ and let $0 < \eps < 1 < t$.
If $n \geq c_0 \cdot \frac{t^2 d}{\eps^2}$, for some absolute constant $c_0$, then
\[
\Pr \left( \left\| \frac{1}{n} \sum_{i=1}^n \bg_i \bg_i^\top - \bI_d \right\|_2 > \eps \right) \leq 2 \exp(-t^2 d)
\]
\end{lemma}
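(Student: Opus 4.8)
The plan is to prove this via the standard covering-net argument for the operator norm of a sum of i.i.d.\ rank-one sub-exponential matrices. Write $\bM_n = \frac{1}{n}\sum_{i=1}^n \bg_i\bg_i^\top - \bI_d$; this matrix is symmetric, so $\|\bM_n\|_2 = \max_{\bx \in S^{d-1}}|\bx^\top \bM_n \bx|$, where $S^{d-1}$ denotes the unit sphere in $\R^d$. First I would discretize the sphere: fix a $\tfrac14$-net $\cN$ of $S^{d-1}$ with $|\cN| \leq 9^d$ (e.g.\ Corollary 4.2.13 of \cite{vershynin2018high}) and use the standard fact that for symmetric matrices $\|\bM_n\|_2 \leq 2\max_{\bx\in\cN}|\bx^\top \bM_n\bx|$. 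Hence it suffices to show that $|\bx^\top \bM_n\bx| \leq \eps/2$ holds simultaneously for all $\bx \in \cN$ with the claimed probability.

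Next I would control a single term. For a fixed unit vector $\bx$, we have $\bx^\top \bM_n \bx = \frac{1}{n}\sum_{i=1}^n\big(\langle \bg_i,\bx\rangle^2 - 1\big)$, and since $\langle \bg_i,\bx\rangle \sim N(0,1)$, the random variables $\langle \bg_i,\bx\rangle^2$ are i.i.d.\ $\chi^2_1$ with mean $1$ and with an absolute-constant sub-exponential norm. By Bernstein's inequality for centered sub-exponential variables (e.g.\ Theorem 2.8.1 of \cite{vershynin2018high}), there is an absolute constant $c_1 > 0$ with
\[
\Pr\!\left(\big|\bx^\top \bM_n \bx\big| > \eps/2\right) \;\leq\; 2\exp\!\big(-c_1 n \min\{\eps^2,\eps\}\big) \;=\; 2\exp\!\big(-c_1 n \eps^2\big),
\]
where the last equality uses $\eps < 1$. (Alternatively one can replace Bernstein by a direct Chernoff bound on the moment generating function of the $\chi^2$ distribution, which yields explicit constants.)

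Finally I would union-bound over the net and choose constants:
\[
\Pr\!\left(\|\bM_n\|_2 > \eps\right) \;\leq\; 2\cdot 9^d\exp(-c_1 n\eps^2),
\]
and since $t > 1$ we have $(\ln 9)d + t^2 d \leq 4 t^2 d$, so taking $c_0 = 4/c_1$ ensures $c_1 n \eps^2 \geq (\ln 9) d + t^2 d$ whenever $n \geq c_0 t^2 d/\eps^2$, which makes the right-hand side at most $2\exp(-t^2 d)$, as desired.

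The main obstacle is the middle step: establishing the Bernstein-type tail for the $\chi^2_1$ average with the correct shape $\exp(-\Theta(n\eps^2))$ when $\eps < 1$ — linear in $n$ and quadratic in $\eps$, not merely linear in $\eps$. Once that is in hand the rest is bookkeeping: the net entropy $\log|\cN| \leq d\ln 9$ is dominated by the $t^2 d$ term precisely because $t > 1$, so a single absolute constant $c_0$ absorbs both and no dependence on $d$, $t$, or $\eps$ leaks into $c_0$.
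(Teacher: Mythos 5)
Your argument is correct, and it matches the source the paper relies on: the paper does not prove this lemma itself but cites it (Lemma C.4 of \cite{ashtiani2020gaussian}, Corollary 5.50 of \cite{vershynin2010introduction}), and the standard proof of that cited result is exactly your route — a $1/4$-net of the sphere with $|\cN|\leq 9^d$, Bernstein's inequality for the centered $\chi^2_1$ averages giving the $\exp(-c_1 n\eps^2)$ tail since $\eps<1$, and a union bound with the net entropy absorbed into $t^2 d$ using $t>1$. Your constant bookkeeping ($c_0 = 4/c_1$) is also fine.
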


\begin{lemma}[Folklore; e.g.\ see Appendix C of \cite{ashtiani2020gaussian}]
\label{lem:empirical-is-good-with-linear-samples}
\label{lem:0-preliminaries-empirical-is-good-with-linear-samples}
Fix $\eps, \delta \in (0,1)$.
Given $2n$ i.i.d.\ samples $\bx_1, \ldots, \bx_{2n} \sim N(\bm{\mu}, \bm{\Sigma})$ for some unknown mean $\bm{\mu}$ and unknown covariance $\bm{\Sigma}$, define empirical mean and covariance as
\[
\wh{\bm{\mu}} = \frac{1}{2n} \sum_{i=1}^{2n} \bm{x}_i
\quad \text{and} \quad
\wh{\bm{\Sigma}} = \frac{1}{2n} \sum_{i=1}^n (\bm{x}_{2i} - \bm{x}_{2i-1}) (\bm{x}_{2i} - \bm{x}_{2i-1})^\top
\]
Then,
\begin{itemize}
    \item When $n \in \cO \left( \frac{d^2 + d \log(1/\delta)}{\eps^2} \right)$, we have $\Pr\left( \tv(N(\bm{\mu}, \bm{\Sigma}), N(\wh{\bm{\mu}}, \wh{\bm{\Sigma}})) \leq \eps \right) \geq 1 - \delta$
    \item When $n \in \cO \left( \frac{d + \sqrt{d \log(1/\delta)}}{\eps^2} \right)$, we have $\Pr\left( (\wh{\bm{\mu}} - \bm{\mu})^\top \bm{\Sigma}^{-1} (\wh{\bm{\mu}} - \bm{\mu}) \leq \eps^2 \right) \geq 1 - \delta$
\end{itemize}
\end{lemma}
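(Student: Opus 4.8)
The plan is to dispatch the easy second bullet directly and then build the first bullet on top of it, using an operator-norm concentration estimate for the empirical covariance together with the $\kl$/Pinsker machinery in \cref{lem:kl-known-fact} and \cref{thm:pinsker}. For the second bullet, note that $\wh{\bm{\mu}}$ is the mean of $2n$ i.i.d.\ samples from $N(\bm{\mu},\bm{\Sigma})$, so $\bm{\Sigma}^{-1/2}(\wh{\bm{\mu}}-\bm{\mu})\sim N(\bm{0},\tfrac{1}{2n}\bI_d)$ and hence $2n\,(\wh{\bm{\mu}}-\bm{\mu})^\top\bm{\Sigma}^{-1}(\wh{\bm{\mu}}-\bm{\mu}) = \|\sqrt{2n}\,\bm{\Sigma}^{-1/2}(\wh{\bm{\mu}}-\bm{\mu})\|_2^2$ is a $\chi^2_d$ random variable. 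A standard chi-squared tail bound (Laurent--Massart: $\Pr(\chi^2_d \ge d + 2\sqrt{dx} + 2x) \le e^{-x}$, with $x = \log(1/\delta)$) then shows that, with probability at least $1-\delta$, this quantity is at most $d + 2\sqrt{d\log(1/\delta)} + 2\log(1/\delta)$, which after dividing by $2n$ is at most $\eps^2$ once $n\in\cO\!\big((d+\sqrt{d\log(1/\delta)})/\eps^2\big)$.

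For the first bullet I would first reduce to $\bm{\mu}=\bm{0}$, $\bm{\Sigma}=\bI_d$: total variation distance is invariant under the invertible affine map $\bx\mapsto\bm{\Sigma}^{-1/2}(\bx-\bm{\mu})$, which turns the samples into i.i.d.\ standard Gaussians and turns $(\wh{\bm{\mu}},\wh{\bm{\Sigma}})$ into their empirical mean/covariance pair. Then I would chain \cref{thm:pinsker} and \cref{lem:kl-known-fact} (with $\cP=N(\wh{\bm{\mu}},\wh{\bm{\Sigma}})$, $\cQ=N(\bm{0},\bI_d)$, so that the mean term is $\|\wh{\bm{\mu}}\|_2^2$ and $\bX=\wh{\bm{\Sigma}}-\bI_d$) to obtain
\[
\tv\!\big(N(\wh{\bm{\mu}},\wh{\bm{\Sigma}}),N(\bm{0},\bI_d)\big)^2 \;\le\; \tfrac12\,\kl\!\big(N(\wh{\bm{\mu}},\wh{\bm{\Sigma}}),N(\bm{0},\bI_d)\big) \;\le\; \tfrac14\big(\|\wh{\bm{\mu}}\|_2^2+\|\wh{\bm{\Sigma}}-\bI_d\|_F^2\big),
\]
the last step being legitimate on the concentration event below, where $\wh{\bm{\Sigma}}-\bI_d$ has eigenvalues of magnitude at most $\tfrac12$. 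It then remains to achieve $\|\wh{\bm{\mu}}\|_2^2\le 2\eps^2$ and $\|\wh{\bm{\Sigma}}-\bI_d\|_F^2\le 2\eps^2$, each with probability at least $1-\delta/2$. The first is again the chi-squared bound (now with $\bm{\Sigma}=\bI_d$), holding for $n\in\cO((d+\sqrt{d\log(1/\delta)})/\eps^2)$, which is dominated by the target. For the second, I would write $\wh{\bm{\Sigma}}=\tfrac1n\sum_{i=1}^n\bz_i\bz_i^\top$ with $\bz_i=\tfrac{1}{\sqrt2}(\bx_{2i}-\bx_{2i-1})$; since $\bx_{2i}-\bx_{2i-1}\sim N(\bm{0},2\bI_d)$ these are i.i.d.\ $N(\bm{0},\bI_d)$, so \cref{lem:concentration-of-empirical-covariance} applies to them. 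Applying it with accuracy parameter $\Theta(\eps/\sqrt d)$ (so that $\|\wh{\bm{\Sigma}}-\bI_d\|_F\le\sqrt d\cdot\Theta(\eps/\sqrt d)=\Theta(\eps)$, and, shrinking the constant if needed, also $\|\wh{\bm{\Sigma}}-\bI_d\|_2\le\tfrac12$) and deviation parameter $t=\sqrt{\max\{2,\log(4/\delta)/d\}}$ (so $t>1$ and the failure probability $2e^{-t^2 d}$ is at most $\delta/2$) requires $n\ge c_0 t^2 d\cdot\Theta(d/\eps^2)\in\cO\!\big((d^2+d\log(1/\delta))/\eps^2\big)$ samples. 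A union bound over the two events, together with the displayed inequality and a suitable choice of the hidden constants, then gives $\tv\le\eps$.

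The place where the $d^2$ is unavoidable — and the one real point in the argument — is the gap between operator norm and Frobenius norm: \cref{lem:concentration-of-empirical-covariance} controls $\|\wh{\bm{\Sigma}}-\bI_d\|_2$ with only $\cO(d/\eps^2)$ samples, whereas the $\kl$ bound needs $\|\wh{\bm{\Sigma}}-\bI_d\|_F\le\eps$, i.e.\ $\|\wh{\bm{\Sigma}}-\bI_d\|_2\le\eps/\sqrt d$, which costs $\cO(d^2/\eps^2)$ samples. This $\sqrt d$ loss is genuine rather than a slack in the analysis, since $\E\|\wh{\bm{\Sigma}}-\bI_d\|_F^2=\Theta(d^2/n)$. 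A minor wrinkle is that \cref{lem:concentration-of-empirical-covariance} is stated only for $t>1$, which is why the $d^2/\eps^2$ term survives even when $\log(1/\delta)$ is tiny; the $\max\{2,\cdot\}$ choice of $t$ handles that.
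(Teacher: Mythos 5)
The paper never proves this lemma itself --- it is quoted as folklore with a pointer to Appendix~C of \cite{ashtiani2020gaussian} --- so there is no internal proof to compare against; judged on its own, your argument is correct in outline and is essentially the standard route the cited reference takes. The second bullet via the exact $\chi^2_d$ law of $2n(\wh{\bm{\mu}}-\bm{\mu})^\top\bm{\Sigma}^{-1}(\wh{\bm{\mu}}-\bm{\mu})$ is fine; the affine reduction to $\bm{\mu}=\bm{0}$, $\bm{\Sigma}=\bI_d$ is legitimate (TV is invariant under the invertible map, and the paired-difference estimator transforms as $\bA\wh{\bm{\Sigma}}\bA^\top$); and chaining \cref{thm:pinsker} with \cref{lem:kl-known-fact} (taking $\bX=\wh{\bm{\Sigma}}-\bI_d$), together with \cref{lem:concentration-of-empirical-covariance} applied to $\bz_i=(\bx_{2i}-\bx_{2i-1})/\sqrt{2}$ at accuracy $\Theta(\eps/\sqrt{d})$ and deviation $t^2=\max\{2,\log(4/\delta)/d\}$, does give the claimed $\cO\!\left((d^2+d\log(1/\delta))/\eps^2\right)$ rate. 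Your insistence that the eigenvalues of $\wh{\bm{\Sigma}}-\bI_d$ lie in $[-\tfrac12,\tfrac12]$ before invoking the KL bound is in fact more careful than the paper's unconditional statement of \cref{lem:kl-known-fact}, whose proof tacitly needs $\ln(1+x)\ge x-x^2$, valid for $x\ge-\tfrac12$ but not for eigenvalues near $-1$.

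One small mismatch to flag: Laurent--Massart gives the threshold $d+2\sqrt{d\log(1/\delta)}+2\log(1/\delta)$, so what your tail bound actually delivers for the second bullet is $n\in\cO\!\left((d+\sqrt{d\log(1/\delta)}+\log(1/\delta))/\eps^2\right)$; the stated $\cO\!\left((d+\sqrt{d\log(1/\delta)})/\eps^2\right)$ follows from it only when $\log(1/\delta)=\cO(d)$, since for $\log(1/\delta)\gg d$ the linear $\log(1/\delta)$ term dominates and cannot be hidden in $d+\sqrt{d\log(1/\delta)}$. This looseness is arguably inherited from the folklore statement itself rather than a flaw in your approach, and it does not affect the first bullet, where the required $\log(2/\delta)/\eps^2$ for the mean is absorbed by $d\log(1/\delta)/\eps^2$ (as $d\ge1$). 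To make the write-up match the stated rate verbatim, either carry the extra $\log(1/\delta)/\eps^2$ term or record the standing assumption $\log(1/\delta)\lesssim d$.
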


\begin{restatable}[Properties of empirical covariance]{lemma}{knownpropertiesofempiricalcovariance}
\label{lem:known-properties-of-empirical-covariance}
Let $\wh{\bm{\Sigma}} \in \R^{d \times d}$ be the empirical covariance constructed from $n$ i.i.d.\ samples from $N(\bm{0}, \bm{\Sigma})$ for some unknown covariance $\bm{\Sigma}$.
Then,
\begin{itemize}
    \item When $n = d$, with probability 1, we have that $\wh{\bm{\Sigma}}$ and $\bm{\Sigma}$ share the same eigenspace.
    \item Let $\lambda_1 \leq \ldots \leq \lambda_d$ and $\wh{\lambda}_1 \leq \ldots \leq \wh{\lambda}_d$ be the eigenvalues of $\bm{\Sigma}$ and $\wh{\bm{\Sigma}}$ respectively.
    With probability at least $1 - \delta$, we have $\frac{\wh{\lambda}_1}{\lambda_1} \leq 1 + \cO \left( \sqrt{\frac{d + \log 1/\delta}{n}} \right)$.
\end{itemize}
\end{restatable}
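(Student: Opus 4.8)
The plan is to treat the two bullets separately, in both cases passing to the whitened samples $\bg_i = \bm{\Sigma}^{-1/2}\bx_i \sim N(\bm{0}, \bI_d)$, so that, writing $\wh{\bG} = \tfrac1n\sum_{i=1}^n \bg_i\bg_i^\top$ for their empirical covariance and $\wh{\bm{\Sigma}} = \tfrac1n\sum_{i=1}^n \bx_i\bx_i^\top$ for ours, the symmetry of $\bm{\Sigma}^{1/2}$ yields the factorization $\wh{\bm{\Sigma}} = \bm{\Sigma}^{1/2}\,\wh{\bG}\,\bm{\Sigma}^{1/2}$.

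\emph{First bullet.} The substantive content is that $\wh{\bm{\Sigma}}$ is positive definite with probability $1$ when $n = d$, so that, like $\bm{\Sigma}$, its eigenvectors span all of $\R^d$. I would first recall that $d$ i.i.d.\ standard Gaussians $\bg_1, \dots, \bg_d$ are linearly independent almost surely: the event that they are dependent is contained in $\bigcup_{i=2}^d \{\bg_i \in \mathrm{span}\{\bg_1, \dots, \bg_{i-1}\}\}$, and conditioned on $\bg_1, \dots, \bg_{i-1}$ this span is a subspace of dimension at most $i-1 < d$, hence Lebesgue-null, so the conditionally-$N(\bm{0},\bI_d)$ vector $\bg_i$ falls into it with probability $0$; a union bound concludes. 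Since $\bm{\Sigma}^{1/2}$ is invertible, $\bx_1, \dots, \bx_d$ are then linearly independent a.s., i.e.\ the matrix $\bX = [\bx_1 \,|\, \cdots \,|\, \bx_d]$ is invertible, whence $\wh{\bm{\Sigma}} = \tfrac1d \bX\bX^\top$ is invertible and positive semidefinite, i.e.\ positive definite, with probability $1$.

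\emph{Second bullet.} To upper bound $\wh{\lambda}_1 = \lambda_{\min}(\wh{\bm{\Sigma}})$, I would evaluate the Rayleigh quotient of $\wh{\bm{\Sigma}}$ at a unit $\lambda_1$-eigenvector $\bv_1$ of $\bm{\Sigma}$. As $\bv_1$ is then a $\sqrt{\lambda_1}$-eigenvector of $\bm{\Sigma}^{1/2}$, the factorization above gives
\[
\wh{\lambda}_1 \;\le\; \bv_1^\top \wh{\bm{\Sigma}}\, \bv_1 \;=\; \lambda_1 \cdot \bv_1^\top \wh{\bG}\, \bv_1 \;\le\; \lambda_1 \, \|\wh{\bG}\|_2 ,
\]
so $\wh{\lambda}_1 / \lambda_1 \le \|\wh{\bG}\|_2 \le 1 + \|\wh{\bG} - \bI_d\|_2$. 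It remains to bound $\|\wh{\bG} - \bI_d\|_2$ via \cref{lem:concentration-of-empirical-covariance}: choosing $t = \sqrt{1 + \ln(2/\delta)/d}$ (which satisfies $t > 1$) and $\eps = \sqrt{c_0(d + \ln(2/\delta))/n}$ makes the hypothesis $n \ge c_0 t^2 d / \eps^2$ hold with equality and the failure probability $2\exp(-t^2 d) = \delta e^{-d} \le \delta$; hence with probability at least $1 - \delta$ we get $\|\wh{\bG} - \bI_d\|_2 \le \eps$ and therefore $\wh{\lambda}_1 / \lambda_1 \le 1 + \cO\!\big( \sqrt{(d + \log(1/\delta))/n} \big)$.

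\emph{Main obstacle.} Neither step is deep; the points needing care are (i) that one must test the Rayleigh quotient at an \emph{eigenvector} of $\bm{\Sigma}$, since that is precisely what isolates the clean factor $\lambda_1$ (for a generic test vector, $\bm{\Sigma}^{1/2}\bv$ mixes the spectrum of $\bm{\Sigma}$), and (ii) the bookkeeping around the hypotheses $0 < \eps < 1 < t$ of \cref{lem:concentration-of-empirical-covariance}, which make the stated bound meaningful only once $n \gtrsim d + \log(1/\delta)$ — the relevant regime anyway, since the preconditioner is built from $\Theta(d)$ samples.
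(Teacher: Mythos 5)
Your second bullet takes a genuinely different route from the paper: the paper disposes of it by citing Fact 3.4 of \cite{kamath2019privately}, whereas you derive the eigenvalue-ratio bound yourself from the factorization $\wh{\bm{\Sigma}} = \bm{\Sigma}^{1/2}\wh{\bG}\bm{\Sigma}^{1/2}$, a Rayleigh-quotient evaluation at a $\lambda_1$-eigenvector of $\bm{\Sigma}$, and \cref{lem:concentration-of-empirical-covariance}; that chain of inequalities is correct and has the virtue of being self-contained. The caveat you relegate to ``bookkeeping'' is more than that, though: your choice $\eps = \sqrt{c_0(d+\ln(2/\delta))/n}$ satisfies the hypothesis $\eps<1$ of \cref{lem:concentration-of-empirical-covariance} only when $n > c_0\,(d+\ln(2/\delta))$, and the paper applies this bullet in \cref{lem:preconditioning-adjustment} with exactly $n=d$ samples, where your $\eps$ is at least $\sqrt{c_0}\ge 1$ and the cited concentration lemma yields nothing. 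So, contrary to your closing remark, the ``relevant regime'' is precisely the one your parameter choice misses; covering it requires a bound valid for deviations of size $\ge 1$ (e.g.\ the $\max\{\xi,\xi^2\}$ form of Corollary 5.50 in \cite{vershynin2010introduction}, or the external fact the paper cites), not the $\eps<1$ statement as quoted.

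For the first bullet there is a genuine gap relative to what the lemma is for. Your argument whitens with $\bm{\Sigma}^{-1/2}$ and concludes that $\wh{\bm{\Sigma}}$ is positive definite, which presupposes $\bm{\Sigma}$ is full rank; but the statement allows an arbitrary covariance, and the paper invokes this bullet in \cref{sec:general-covariance} exactly to handle a possibly rank-deficient $\bm{\Sigma}$ (it is what licenses projecting onto the full-rank subspace of the empirical covariance). Accordingly, the paper's proof works with a rank-$r$ Gaussian $\bm{\Gamma}_r$ and runs an induction on $r\le d$ showing the samples almost surely span the range of $\bm{\Sigma}$. Your linear-independence argument is the right idea and extends easily — the samples lie in the range of $\bm{\Sigma}$ almost surely, and the same conditional measure-zero argument applied inside that $r$-dimensional subspace shows the first $r$ samples span it — but as written you only prove the trivial instance of the claim (both eigenspaces equal $\R^d$) and omit the case the paper actually needs.
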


\begin{restatable}{lemma}{noncentralchisquare}
\label{lem:non-central-chi-square}
Fix $n \geq 1$ and $d \geq 1$.
Suppose $\bm{\mu} \in \R^d$ is a hidden mean vector and we draw $n$ samples $\bx_1, \ldots, \bx_n \sim N(\bm{\mu}, \bI_d)$.
Define $\bz_n = \frac{1}{\sqrt{n}} \sum_{i=1}^{n} \bx_i$ and $y_n = \|\bz_n \|_2^2$.
Then,
\begin{enumerate}
    \item $y_n$ follows the \emph{non-central chi-squared distribution} $\chi^{\prime^2}_d(\lambda)$ for $\lambda = n \| \bm{\mu} \|_2^2$.
    This also implies that $\E[y_n] = d + \lambda$ and $\Var(y_n) = 2d + 4\lambda$.
    \item For any $t > 0$,
    \begin{align*}
    \Pr(y_n > d + \lambda + t)
    &\leq \exp\left(-\frac{d}{2} \left(\frac{t}{d + 2\lambda} - \log\left(1 + \frac{t}{d + 2\lambda}\right)\right)\right)\\
    &\leq \exp\left(-\frac{dt^2}{4(d + 2\lambda)(d + 2\lambda + t)}\right)
    \end{align*}
    \item For any $t \in (0, d + \lambda)$,
    \begin{align*}
    \Pr(y_n < d + \lambda - t)
    &\leq \exp\left(\frac{d}{2}\left(\frac{t}{d+2\lambda} + \log\left(1 - \frac{t}{d+2\lambda}\right)\right)\right)
    \\
    &\leq \exp\left(-\frac{dt^2}{4(d + 2\lambda)^2}\right)
    \end{align*}
\end{enumerate}
\end{restatable}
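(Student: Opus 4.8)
The plan is to prove part 1 by directly identifying the law of $y_n$, and parts 2 and 3 by the Chernoff (exponential-moment) method with an explicitly chosen --- clean but slightly suboptimal --- tilting parameter.

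For part 1: because $\bx_1, \dots, \bx_n$ are i.i.d.\ $N(\bm{\mu}, \bI_d)$, their sum is distributed as $N(n\bm{\mu}, n\bI_d)$, hence $\bz_n = \frac{1}{\sqrt n}\sum_{i=1}^n \bx_i \sim N(\sqrt n\,\bm{\mu}, \bI_d)$. Writing $y_n = \|\bz_n\|_2^2 = \sum_{j=1}^d (\bz_n)_j^2$ with the $(\bz_n)_j \sim N(\sqrt n\,\mu_j, 1)$ independent, this is by definition the non-central chi-squared $\chi^{\prime 2}_d(\lambda)$ with $\lambda = \sum_{j=1}^d (\sqrt n\,\mu_j)^2 = n\|\bm{\mu}\|_2^2$; the identities $\E[y_n] = d + \lambda$ and $\Var(y_n) = 2d + 4\lambda$ are its standard moments (alternatively, sum the per-coordinate contributions $\E[(\bz_n)_j^2] = 1 + (\sqrt n\,\mu_j)^2$ and $\Var((\bz_n)_j^2) = 2 + 4(\sqrt n\,\mu_j)^2$).

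For parts 2 and 3 I would first record the moment generating function, obtained by completing the square in each coordinate: $\E[e^{s y_n}] = (1-2s)^{-d/2}\exp\!\big(\lambda s/(1-2s)\big)$ for $s < 1/2$. For the upper tail, Markov's inequality applied to $e^{s y_n}$ gives, for every $s \in (0,\tfrac12)$,
$$\Pr(y_n > d+\lambda+t) \le \exp\!\Big( -s(d+\lambda+t) + \tfrac{\lambda s}{1-2s} - \tfrac d2 \log(1-2s)\Big),$$
and I would substitute $s = \frac{t}{2(d+2\lambda+t)}$, which makes $\tfrac{1}{1-2s} = 1 + \tfrac{t}{d+2\lambda}$. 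A direct simplification shows the exponent equals $-\tfrac d2\big(u - \log(1+u)\big) - \tfrac{\lambda t^2}{2(d+2\lambda)(d+2\lambda+t)}$ with $u = \tfrac{t}{d+2\lambda}$; dropping the nonpositive last term gives the first claimed bound, and the elementary inequality $u - \log(1+u) \ge \tfrac{u^2}{2(1+u)}$ (valid for $u \ge 0$) yields the second. Part 3 is the mirror image: apply Markov to $e^{-s y_n}$ with $s > 0$, using $\E[e^{-s y_n}] = (1+2s)^{-d/2}\exp\!\big(-\lambda s/(1+2s)\big)$, and choose $s = \frac{t}{2(d+2\lambda-t)}$ (legitimate since $t < d+\lambda \le d+2\lambda$ forces $u < 1$); the exponent then simplifies to $\tfrac d2\big(u + \log(1-u)\big) - \tfrac{\lambda t^2}{2(d+2\lambda-t)(d+2\lambda)} \le \tfrac d2\big(u + \log(1-u)\big)$, and $u + \log(1-u) \le -\tfrac{u^2}{2}$ on $[0,1)$ finishes it.

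The only genuinely fiddly step is the algebraic collapse of the Chernoff exponent at the chosen $s$: one must verify that, after clearing denominators, the cross terms organize exactly into the ``$u - \log(1+u)$'' (resp.\ ``$u + \log(1-u)$'') shape plus a remainder of definite sign. Everything else --- the MGF computation and the two scalar inequalities for $u \pm \log(1\pm u)$ --- is routine. (A Poisson-mixture representation $\chi^{\prime 2}_d(\lambda) \stackrel{d}{=} \chi^2_{d+2K}$ with $K \sim \mathrm{Poisson}(\lambda/2)$ combined with Laurent--Massart tail bounds would be an alternative, but the direct tilting route produces exactly the stated exponents.)
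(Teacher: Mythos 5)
Your proposal is correct, and I checked the one step you flag as fiddly: with $u = t/(d+2\lambda)$ your choice $s = \tfrac{t}{2(d+2\lambda+t)} = \tfrac{u}{2(1+u)}$ does make the Chernoff exponent collapse to $-\tfrac{d}{2}\bigl(u-\log(1+u)\bigr) - \tfrac{\lambda u^2}{2(1+u)}$, and likewise $s=\tfrac{u}{2(1-u)}$ gives $\tfrac{d}{2}\bigl(u+\log(1-u)\bigr) - \tfrac{\lambda u^2}{2(1-u)}$ for the lower tail, so dropping the nonpositive remainders and applying $u-\log(1+u)\ge \tfrac{u^2}{2(1+u)}$ and $u+\log(1-u)\le -\tfrac{u^2}{2}$ reproduces exactly the stated bounds. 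The route differs from the paper's, which is essentially citation-only: the paper notes that item 1 follows from $\bz_n \sim N(\sqrt{n}\,\bm{\mu},\bI_d)$ (as you do) and then invokes Theorems 3 and 4 of Ghosh (2021) for items 2 and 3, rather than re-deriving them. Your MGF-plus-explicit-tilt argument is therefore a self-contained replacement for that external reference, at the cost of the algebraic verification you outline; the paper's version buys brevity but leaves the exponential tail bounds as a black box. To turn your plan into a complete proof you would only need to write out the exponent simplification (as above) and the two one-line scalar inequalities, both of which are routine.
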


\begin{restatable}{lemma}{gaussianmaxconcentration}
\label{lem:gaussian-max-concentration}
Suppose $\bg_1, \ldots, \bg_n \sim N(0, \bI_d)$.
Then,
\[
\Pr \left( \left\| \sum_{i=1}^n \bg_i \right\|_\infty \geq \sqrt{2n \log \left( \frac{2d}{\delta} \right)} \right) \leq \delta
\]
\end{restatable}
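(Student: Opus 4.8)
\textbf{Proof proposal for \cref{lem:gaussian-max-concentration}.}
The plan is to reduce to a one-dimensional Gaussian tail bound followed by a union bound. First I would observe that the sum $\bS := \sum_{i=1}^n \bg_i$ of independent $N(\bm{0}, \bI_d)$ vectors is itself Gaussian, namely $\bS \sim N(\bm{0}, n \bI_d)$, so each coordinate $\bS_j$ is distributed as $N(0,n)$, equivalently $\sqrt{n}\,Z$ for a standard Gaussian $Z$. This is the only ``structural'' step and it is immediate from the fact that sums of independent Gaussians are Gaussian with added covariances.

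Next I would invoke the standard subgaussian tail estimate for a scalar $Z \sim N(0,1)$: for every $t > 0$, $\Pr(|Z| \ge t) \le 2 e^{-t^2/2}$ (this follows, e.g., from a Chernoff/MGF computation, or it may be cited as folklore). Applying this to $\bS_j = \sqrt{n}\,Z$ gives, for any threshold $s > 0$,
\[
\Pr\big(|\bS_j| \ge s\big) = \Pr\big(|Z| \ge s/\sqrt{n}\big) \le 2 \exp\!\left(-\frac{s^2}{2n}\right).
\]
A union bound over the $d$ coordinates then yields $\Pr\big(\|\bS\|_\infty \ge s\big) \le 2d \exp(-s^2/(2n))$.

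Finally I would substitute the specific value $s = \sqrt{2n \log(2d/\delta)}$, so that $\exp(-s^2/(2n)) = \exp(-\log(2d/\delta)) = \delta/(2d)$, and hence the right-hand side becomes $2d \cdot \delta/(2d) = \delta$, which is exactly the claimed bound. There is no real obstacle here; the only things to be careful about are the constant factors in the scalar Gaussian tail bound (using the $2e^{-t^2/2}$ form so that the factor $2$ in front of $d/\delta$ inside the logarithm comes out cleanly) and the fact that the variance of each coordinate of $\bS$ is $n$ rather than $1$.
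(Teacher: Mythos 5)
Your proposal is correct and follows essentially the same route as the paper: both identify $\sum_{i=1}^n \bg_i \sim N(\bm{0}, n\bI_d)$, apply the standard scalar Gaussian tail bound $\Pr(|y_j| \geq t) \leq 2\exp(-t^2/(2n))$ coordinatewise, union bound over the $d$ coordinates, and plug in the threshold $\sqrt{2n\log(2d/\delta)}$. No gaps.
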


\section{Identity covariance setting}
\label{sec:identity-covariance}

We begin by defining a parameterized sample count $m(d, \eps, \delta)$.
Then, we will state our \textsc{ApproxL1} algorithm and show how to use it according to the strategy outlined in \cref{sec:technical-overview-identity-covariance}.

\begin{definition}
\label{def:m-d-alpha-delta}
Fix any $d \geq 1$, $\eps > 0$, and $\delta \in (0,1)$.
We define $m(d, \eps, \delta) = n_{d,\eps} \cdot r_{\delta}$, where
\[
n_{d,\eps} = \left\lceil \frac{16 \sqrt{d}}{3 \eps^2} \right\rceil
\qquad \text{and} \qquad
r_{\delta} = 1 + \left\lceil \log \left( \frac{12}{\delta} \right) \right\rceil
\]
\end{definition}

Given samples from a $d$-dimensional isotropic Gaussian $N(\bm{\mu}, \bI_d)$ with unknown mean $\bm{\mu}$ and identity covariance, the \textsc{ApproxL1} algorithm partitions the $d$ coordinates into $w = \lceil d/k \rceil$ buckets each of length at most $k \in [d]$ and separately perform an exponential search to find the 2-approximation of the $\ell_2$ norm of each bucket by repeatedly invoking the tolerant tester from \cref{lem:tolerant-mean-tester}.
In the terminology of \cref{def:partitioning-scheme}, this is a partitioning scheme with $q=1$, $a=1$, and $b=1$.
Crucially, projecting the samples in $\R^{d}$ of $N(\bm{\mu}, \bI_d)$ into the subcoordinates of $\bB \subseteq [d]$ yields samples in $\R^{|\bB|}$ from $N(\bm{\mu}_{\bB}, \bI_{|\bB|})$ so we can obtain valid estimates using each of these marginals.
After obtaining the $\ell_2$ estimate of each bucket, we use \cref{lem:l1-l2-inequality} to obtain bounds on the $\ell_1$ and then combine them by summing up these estimates: if we have an $\eps$-multiplicative approximation of each bucket's $\ell_1$, then their sum will be an $\cO(\eps)$-multiplicative approximation of the entire $\bm{\mu}$ vector whenever the partition overlap parameters $a$ and $b$ of \cref{def:partitioning-scheme} are constants.

\begin{algorithm}[htb]
\begin{algorithmic}[1]
\caption{The \textsc{ApproxL1} algorithm.}
\label{alg:approxl1}
    \Statex \textbf{Input}: Error rate $\eps > 0$, failure rate $\delta \in (0,1)$, block size $k \in [d]$, lower bound $\alpha > 0$, upper bound $\zeta > 2 \alpha$, and i.i.d.\ samples $\cS$ from $N(\bm{\mu}, \bI_d)$
    \Statex \textbf{Output}: $\Fail$, $\OK$, or $\lambda \in \R$
    \State Define $w = \left\lceil {d}/{k} \right\rceil$ and $\delta' = \frac{\delta}{w \cdot \lceil \log_2 \zeta/\alpha \rceil}$
    \State Partition the index set $[d]$ into $w$ blocks:
    \[
    \bB_1 = \{1, \ldots, k\}, \bB_2 = \{k+1, \ldots, 2k\}, \ldots, \bB_w = \{k(w-1)+1, \ldots, d\}
    \]
    \For{$j \in \{1, \ldots, w\}$}
        \State Define $\cS_{j} = \{ \bx_{\bB_j} \in \R^{|\bB_j|} : \bx \in \cS \}$ as the samples projected to $\bB_j$
        \Comment{See \cref{def:projected-vector}}
        \State Initialize $o_j = \Fail$
        \For{$i = 1, 2, \ldots, \lceil \log_2 \zeta/\alpha \rceil$}
            \State Define $l_i = 2^{i-1} \cdot \alpha$
            \State Let \texttt{Outcome} be the output of the tolerant tester of \cref{lem:tolerant-mean-tester} using sample
            \Statex\hspace{\algorithmicindent}\hspace{\algorithmicindent}set $\cS_j$ with parameters $\eps_1 = l_i$, $\eps_2 = 2l_i$, and $\delta = \delta'$
            \If {\texttt{Outcome} is $\Accept$}
                \State Set $o_j = l_i$ and \textbf{break} \Comment{Escape inner loop for block $j$}
            \EndIf
        \EndFor
    \EndFor
    \If{there exists a $\Fail$ amongst $\{o_1, \ldots, o_w\}$}
        \State\Return $\Fail$
    \ElsIf{$4 \sum_{j=1}^w o_j^2 \leq \alpha^2$}
        \State\Return $\OK$ \Comment{Note: $o_j$ is an estimate for $\| \bm{\mu}_{\bB_j} \|_2$}
    \Else
        \Return $\lambda = 2 \sum_{j=1}^w \sqrt{|\bB_j|} \cdot o_j$ \Comment{$\lambda$ is an estimate for $\| \bm{\mu} \|_1$}
    \EndIf
\end{algorithmic}
\end{algorithm}

In \cref{sec:appendix-approxl1-guarantees}, we show that the \textsc{ApproxL1} algorithm has the following guarantees.

\begin{restatable}{lemma}{guaranteesofapproxlone}
\label{lem:guarantees-of-approxl1}
Let $\eps$, $\delta$, $k$, $\alpha$, and $\zeta$ be the input parameters to the \textsc{ApproxL1} algorithm (\cref{alg:approxl1}).
Given $m(k, \alpha, \delta')$ i.i.d.\ samples from $N(\bm{\mu}, \bI_d)$, the \textsc{ApproxL1} algorithm succeeds with probability at least $1 - \delta$ and has the following properties:
\begin{itemize}
    \item If \textsc{ApproxL1} outputs $\Fail$, then $\| \bm{\mu} \|_2 > \zeta / 2$.
    \item If \textsc{ApproxL1} outputs $\OK$, then $\| \bm{\mu} \|_2 \leq \alpha$.
    \item If \textsc{ApproxL1} outputs $\lambda \in \R$, then $\| \bm{\mu} \|_1 \leq \lambda \leq 2 \sqrt{k} \cdot \left( \lceil d/k \rceil \cdot \alpha + 2 \| \bm{\mu} \|_1 \right)$.
\end{itemize}
\end{restatable}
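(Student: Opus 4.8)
The plan is to condition on one global ``good event'' on which every call to the tolerant tester of \cref{lem:tolerant-mean-tester} behaves as promised, and then read off the three conclusions from a short per-block comparison between the true block norm $\|\bm{\mu}_{\bB_j}\|_2$ and the estimate $o_j$ produced by the exponential search.

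First I would set up the good event. Since the coordinates of $N(\bm{\mu},\bI_d)$ are independent, the projection of $\cS$ onto any block $\bB_j$ is a set of $|\cS|$ i.i.d.\ samples from the marginal $N(\bm{\mu}_{\bB_j},\bI_{|\bB_j|})$, so each tolerant-tester call inside the inner loop is a legitimate instance of \cref{lem:tolerant-mean-tester}. The algorithm makes at most $w\lceil\log_2(\zeta/\alpha)\rceil$ such calls, each with failure parameter $\delta'=\delta/(w\lceil\log_2(\zeta/\alpha)\rceil)$ and parameters $(\eps_1,\eps_2)=(l_i,2l_i)$, so $\eps_2^2-\eps_1^2=3l_i^2\ge 3\alpha^2$ since $l_i\ge\alpha$; hence the most sample-hungry call is the one at the smallest scale $l_i=\alpha$, and the fixed budget $m(k,\alpha,\delta')$ of \cref{def:m-d-alpha-delta} is calibrated (via \cref{lem:tolerant-mean-tester}, using $|\bB_j|\le k$) to cover that call, and therefore all of them, at failure level $\delta'$. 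A union bound over all calls then gives that, with probability at least $1-\delta$, every call outputs $\Accept$ whenever $\|\bm{\mu}_{\bB_j}\|_2\le\eps_1$ and $\Reject$ whenever $\|\bm{\mu}_{\bB_j}\|_2\ge\eps_2$. I condition on this event from now on. (A minor point: each block must meet the mild dimension precondition of \cref{lem:tolerant-mean-tester}; this holds in the regime of interest, e.g.\ when $k\mid d$ and $k$ exceeds an absolute constant, and I would suppress it.)

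Next I would extract the key per-block inequality. If block $j$'s inner loop breaks at iteration $i^\star$ with $o_j=l_{i^\star}$, the accepting call at $(\eps_1,\eps_2)=(o_j,2o_j)$ rules out $\|\bm{\mu}_{\bB_j}\|_2\ge 2o_j$ on the good event, so $\|\bm{\mu}_{\bB_j}\|_2<2o_j$; and if $i^\star>1$, the non-accepting call at the previous scale rules out $\|\bm{\mu}_{\bB_j}\|_2\le o_j/2$, so $\|\bm{\mu}_{\bB_j}\|_2>o_j/2$, while if $i^\star=1$ then $o_j=\alpha$. In all cases this yields $o_j\le\alpha+2\|\bm{\mu}_{\bB_j}\|_2$. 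If block $j$ instead exhausts the loop without accepting, then at the final iteration $l_i=2^{\lceil\log_2(\zeta/\alpha)\rceil-1}\alpha\ge\zeta/2$ and the absence of $\Accept$ forces $\|\bm{\mu}_{\bB_j}\|_2>\zeta/2$. Because $\bB_1,\dots,\bB_w$ partition $[d]$, we have $\|\bm{\mu}\|_2^2=\sum_j\|\bm{\mu}_{\bB_j}\|_2^2$ and $\|\bm{\mu}\|_1=\sum_j\|\bm{\mu}_{\bB_j}\|_1$, and the three cases follow: for $\Fail$, the exhausted block gives $\|\bm{\mu}\|_2\ge\|\bm{\mu}_{\bB_j}\|_2>\zeta/2$; for $\OK$, combining $\|\bm{\mu}_{\bB_j}\|_2<2o_j$ with the test condition $4\sum_j o_j^2\le\alpha^2$ gives $\|\bm{\mu}\|_2^2<\alpha^2$; and for the numeric output $\lambda=2\sum_j\sqrt{|\bB_j|}\,o_j$, the lower bound $\|\bm{\mu}\|_1\le\lambda$ comes from $\|\bm{\mu}_{\bB_j}\|_1\le\sqrt{|\bB_j|}\,\|\bm{\mu}_{\bB_j}\|_2<2\sqrt{|\bB_j|}\,o_j$ (\cref{lem:l1-l2-inequality}), and the upper bound comes from substituting $o_j\le\alpha+2\|\bm{\mu}_{\bB_j}\|_2$ and then bounding $\sqrt{|\bB_j|}\le\sqrt{k}$, the number of blocks by $\lceil d/k\rceil$, and $\sum_j\|\bm{\mu}_{\bB_j}\|_2\le\sum_j\|\bm{\mu}_{\bB_j}\|_1=\|\bm{\mu}\|_1$, which produces exactly $2\sqrt{k}\bigl(\lceil d/k\rceil\alpha+2\|\bm{\mu}\|_1\bigr)$.

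The main obstacle is not any individual estimate but the sample-budget bookkeeping: one must check that a single fixed sample set simultaneously supports all $\cO(w\log(\zeta/\alpha))$ tester calls, across every block and every scale $l_i\ge\alpha$, at failure level $\delta'$ — which is exactly what the definition of $m(\cdot,\cdot,\cdot)$ and the choice of $\delta'$ in the algorithm are engineered to guarantee — and then keep careful track of the factors of $2$ and the ceiling $\lceil d/k\rceil$ so that the final bound matches the stated constant.
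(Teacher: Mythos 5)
Your proposal is correct and follows essentially the same route as the paper's proof: project onto blocks, union-bound over all $w\lceil\log_2(\zeta/\alpha)\rceil$ tolerant-tester calls at budget $m(k,\alpha,\delta')$, derive the per-block relations between $o_j$ and $\|\bm{\mu}_{\bB_j}\|_2$ from the Accept/Reject guarantees, and combine via $\ell_1$--$\ell_2$ norm inequalities. Your unified bound $o_j \le \alpha + 2\|\bm{\mu}_{\bB_j}\|_2$ and your explicit treatment of the $\OK$ case are minor (and slightly cleaner) variations on the paper's case analysis, not a different approach.
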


Now, suppose \textsc{ApproxL1} tells us that $\| \bm{\mu} \|_1 \leq r$.
We can then perform a constrained version of LASSO to search for a candidate $\wh{\bm{\mu}} \in \R^d$ using $\cO \left( \frac{r^2}{\eps^4} \log \frac{d}{\delta} \right)$ samples from $N(\bm{\mu}, \bI_d)$.

\begin{lemma}
\label{lem:constrained-LASSO-given-l1-upper-bound}
Fix $d \geq 1$, $r \geq 0$, and $\eps, \delta > 0$.
Given $\cO \left( \frac{r^2}{\eps^4} \log \frac{d}{\delta} \right)$ samples from $N(\bm{\mu}, \bI_d)$ for some unknown $\bm{\mu} \in \R^d$ with $\| \bm{\mu} \|_1 \leq r$, one can produce an estimate $\wh{\bm{\mu}} \in \R^d$ in $\poly(n, d)$ time such that $\tv(N(\bm{\mu}, \bI_d), N(\wh{\bm{\mu}}, \bI_d)) \leq \eps$ with success probability at least $1 - \delta$.
\end{lemma}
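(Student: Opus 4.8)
The plan is to implement exactly the constrained least-squares program described in the technical overview,
\[
\wh{\bm{\mu}} = \argmin_{\|\bm{\beta}\|_1 \leq r} \frac{1}{n} \sum_{i=1}^n \|\by_i - \bm{\beta}\|_2^2 ,
\]
run on $n = \left\lceil 2 r^2 \eps^{-4} \log(2d/\delta) \right\rceil \in \cO\!\left(\frac{r^2}{\eps^4}\log\frac{d}{\delta}\right)$ i.i.d.\ samples $\by_1,\dots,\by_n \sim N(\bm{\mu},\bI_d)$ (if $r=0$ the only feasible point is $\bm{\mu}=\bm{0}$ and there is nothing to do), and then to show that $\|\wh{\bm{\mu}} - \bm{\mu}\|_2 \leq 2\eps$ with probability at least $1-\delta$. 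Given this, the conclusion is immediate: since the two Gaussians share covariance $\bI_d$, \cref{lem:kl-known-fact} gives $\kl(N(\bm{\mu},\bI_d),N(\wh{\bm{\mu}},\bI_d)) = \tfrac12\|\wh{\bm{\mu}}-\bm{\mu}\|_2^2$, and Pinsker's inequality (\cref{thm:pinsker}) then yields $\tv(N(\bm{\mu},\bI_d),N(\wh{\bm{\mu}},\bI_d)) \leq \tfrac12\|\wh{\bm{\mu}}-\bm{\mu}\|_2 \leq \eps$.

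The first step is to observe that the program is really just an $\ell_1$-ball projection of the empirical mean. Writing $\by_i = \bm{\mu}+\bg_i$ with $\bg_i \sim N(\bm{0},\bI_d)$, and setting $\bar{\by} = \tfrac1n\sum_i \by_i$, $\bar{\bg} = \tfrac1n\sum_i \bg_i$, the bias-variance identity $\tfrac1n\sum_i\|\by_i-\bm{\beta}\|_2^2 = \tfrac1n\sum_i\|\by_i-\bar{\by}\|_2^2 + \|\bar{\by}-\bm{\beta}\|_2^2$ (the cross term vanishes because $\sum_i(\by_i-\bar{\by}) = \bm{0}$) shows that $\wh{\bm{\mu}}$ is the Euclidean projection of $\bar{\by}$ onto the $\ell_1$-ball of radius $r$. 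Because $\|\bm{\mu}\|_1 \leq r$, the truth $\bm{\mu}$ is feasible, so optimality of $\wh{\bm{\mu}}$ gives $\|\bar{\by}-\wh{\bm{\mu}}\|_2 \leq \|\bar{\by}-\bm{\mu}\|_2 = \|\bar{\bg}\|_2$.

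Next I would run the standard LASSO ``basic inequality'' argument. Put $\bm{h} = \wh{\bm{\mu}}-\bm{\mu}$. Expanding $\|\bar{\by}-\wh{\bm{\mu}}\|_2^2 = \|\bar{\bg}-\bm{h}\|_2^2 \leq \|\bar{\bg}\|_2^2$ gives the deterministic bound $\|\bm{h}\|_2^2 \leq 2\langle \bar{\bg},\bm{h}\rangle \leq 2\|\bar{\bg}\|_\infty \|\bm{h}\|_1$ by Hölder, and since both $\wh{\bm{\mu}}$ and $\bm{\mu}$ lie in the radius-$r$ $\ell_1$-ball we have $\|\bm{h}\|_1 \leq 2r$, hence $\|\bm{h}\|_2^2 \leq 4r\|\bar{\bg}\|_\infty$. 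The only probabilistic input is a tail bound on $\|\bar{\bg}\|_\infty$: by \cref{lem:gaussian-max-concentration}, $\|\sum_i\bg_i\|_\infty \leq \sqrt{2n\log(2d/\delta)}$ with probability at least $1-\delta$, so on that event $\|\bar{\bg}\|_\infty \leq \sqrt{2\log(2d/\delta)/n}$ and therefore $\|\bm{h}\|_2^2 \leq 4r\sqrt{2\log(2d/\delta)/n}$. Substituting $n \geq 2r^2\eps^{-4}\log(2d/\delta)$ makes the right side at most $4\eps^2$, i.e.\ $\|\wh{\bm{\mu}}-\bm{\mu}\|_2 \leq 2\eps$, completing the argument. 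Computationally, the only nontrivial step is the projection onto the $\ell_1$-ball, which can be done in $\cO(d\log d)$ time by sorting and a one-dimensional search for the soft-thresholding level, so the whole procedure runs in $\poly(n,d)$ time.

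I expect no real obstacle here --- this is a textbook analysis --- but the one point that deserves care, and is conceptually the crux of the paper's whole approach, is that nothing in the basic inequality uses anything about $\bm{\mu}$ beyond $\|\bm{\mu}\|_1 \leq r$: it is precisely the $\ell_1$ (rather than $\ell_2$) control of $\bm{h}$ that lets the noise term $\langle\bar{\bg},\bm{h}\rangle$ be bounded via $\|\bar{\bg}\|_\infty$ without incurring a $\sqrt{d}$ factor, which is what makes the sample complexity sublinear in $d$ once $r \ll \eps\sqrt{d}$. Getting the clean $\tv \leq \eps$ (rather than $\cO(\eps)$) is then just a matter of carrying the constants, as done above with the choice of $n$.
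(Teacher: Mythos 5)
Your proposal is correct and follows essentially the same route as the paper's proof: the same constrained program, the same basic inequality $\|\wh{\bm{\mu}}-\bm{\mu}\|_2^2 \leq 2\langle \bar{\bg}, \wh{\bm{\mu}}-\bm{\mu}\rangle$ obtained from feasibility of $\bm{\mu}$, H\"older's inequality with $\|\wh{\bm{\mu}}-\bm{\mu}\|_1 \leq 2r$, the max-coordinate Gaussian concentration bound, and the Pinsker/KL finish with the same choice of $n$. Your added observation that the program is just Euclidean projection of the empirical mean onto the $\ell_1$-ball (giving an $\cO(d\log d)$ solver) is a nice, correct refinement of the paper's appeal to generic LASSO runtime, but does not change the argument.
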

\begin{proof}
Suppose we get $n$ samples $\by_1, \ldots, \by_n \sim N(\bm{\mu}, \bI_d)$.
For $i \in [n]$, we can re-express each $\by_i$ as $\by_i = \bm{\mu} + \bg_i$ for some $\bg_i \sim N(\bm{0}, \bI_d)$.
Let us define $\wh{\bm{\mu}} \in \R^d$ as follows:
\begin{equation}
\label{eq:mean-estimation-program}
\wh{\bm{\mu}} = \argmin_{\| \bm{\beta} \|_1 \leq r} \frac{1}{n} \sum_{i=1}^n \| \by_i - \bm{\beta} \|_2^2
\end{equation}

By optimality of $\wh{\bm{\mu}}$ in \cref{eq:mean-estimation-program}, we have
\begin{equation}
\label{eq:mean-estimation-program-optimality}
\frac{1}{n} \sum_{i=1}^n \| \by_i - \wh{\bm{\mu}} \|_2^2 \leq \frac{1}{n} \sum_{i=1}^n \| \by_i - \bm{\mu} \|_2^2
\end{equation}

By expanding and rearranging \cref{eq:mean-estimation-program-optimality}, one can show (see \cref{sec:appendix-identity-covariance-deferred-derivation})
\begin{equation}
\label{eq:mean-estimation-program-optimality-after-manipulation}
\| \wh{\bm{\mu}} - \bm{\mu} \|_2^2
\leq \frac{2}{n} \langle \sum_{i=1}^n \bg_i, \wh{\bm{\mu}} - \bm{\mu} \rangle
\end{equation}

Therefore, with probability at least $1 - \delta$,
\begin{align*}
\| \wh{\bm{\mu}} - \bm{\mu} \|_2^2
&\leq \frac{2}{n} \langle \sum_{i=1}^n \bg_i, \wh{\bm{\mu}} - \bm{\mu} \rangle \tag{From \cref{eq:mean-estimation-program-optimality-after-manipulation}}\\
&\leq \frac{2}{n} \cdot \left\| \sum_{i=1}^n \bg_i \right\|_\infty \cdot \| \wh{\bm{\mu}} - \bm{\mu} \|_1 \tag{H\"{o}lder's inequality}\\
&\leq \frac{2}{n} \cdot \left\| \sum_{i=1}^n \bg_i \right\|_\infty \cdot \left( \| \wh{\bm{\mu}} \|_1 + \| \bm{\mu} \|_1 \right) \tag{Triangle inequality}\\
&\leq 4r \cdot \sqrt{\frac{2 \log \left( \frac{2d}{\delta} \right)}{n}} \tag{From \cref{lem:gaussian-max-concentration}, $\| \wh{\bm{\mu}} \|_1 \leq r$, and $\| \bm{\mu} \|_1 \leq r$}
\end{align*}
When $n = \frac{2 r^2 \log \frac{2d}{\delta}}{\eps^4} \in \cO \left( \frac{r^2}{\eps^4} \log \frac{d}{\delta} \right)$, we have $\| \wh{\bm{\mu}} - \bm{\mu} \|_2^2 \leq 4r \cdot \sqrt{\frac{2 \log \left( \frac{2d}{\delta} \right)}{n}} = 4 \eps^2$.
So, by \cref{thm:pinsker} and \cref{lem:kl-known-fact}, we see that
\[
\tv(N(\bm{\mu}, \bI_d), N(\wh{\bm{\mu}}, \bI_d))
\leq \sqrt{\frac{1}{2} \kl(N(\bm{\mu}, \bI_d), N(\wh{\bm{\mu}}, \bI_d))}
\leq \sqrt{\frac{1}{4} \| \bm{\mu} - \wh{\bm{\mu}} \|_2^2}
\leq \sqrt{\frac{4 \eps^2}{4}}
= \eps
\]
Finally, it is well-known that LASSO runs in $\poly(n, d)$ time.
\end{proof}

\begin{algorithm}[htb]
\begin{algorithmic}[1]
\caption{The \textsc{TestAndOptimizeMean} algorithm.}
\label{alg:testandoptimizemean}
    \Statex \textbf{Input}: Error rate $\eps > 0$, failure rate $\delta \in (0,1)$, parameter $\eta \in [0, \frac{1}{4}]$, and sample access to $N(\bm{\mu}, \bI_d)$
    \Statex \textbf{Output}: $\wh{\bm{\mu}} \in \R^d$
    
    \State Define $k = \lceil d^{4\eta}\rceil$, $\alpha = \eps\cdot d^{-(1-3\eta)/2}$, $\zeta = 4 \eps \cdot \sqrt{d}$, and $\delta' = \frac{\delta}{\lceil d/k \rceil \cdot \lceil \log_2 \zeta/\alpha \rceil}$ \Comment{Note: $\zeta > 2 \alpha$}
    \State Draw $m(k, \alpha, \delta')$ i.i.d.\ samples from $N(\bm{\mu}, \bI_d)$ and store it into a set $\cS$ \Comment{See \cref{def:m-d-alpha-delta}}

    \State Let \texttt{Outcome} be the output of the \textsc{ApproxL1} algorithm given $k$, $\alpha$, $\zeta$, and $\bS$ as inputs

    \If{\texttt{Outcome} is $\lambda \in \R$ and $\lambda < \eps \sqrt{d}$}
        \State Draw $n \in \wt{\cO}(\lambda^2/\eps^4)$ i.i.d.\ samples $\by_1, \ldots, \by_n \in \R^d$ from $N(\bm{\mu}, \bI_d)$

        \State \textbf{return} $\wh{\bm{\mu}} = \argmin_{\| \bm{\beta} \|_1 \leq \lambda} \frac{1}{n} \sum_{i=1}^n \| \by_i - \bm{\beta} \|_2^2$
        \Comment{See \cref{eq:mean-estimation-program}}
    \Else
        \State Draw $n \in \wt{\cO}(d/\eps^2)$ i.i.d.\ samples $\by_1, \ldots, \by_n \in \R^d$ from $N(\bm{\mu}, \bI_d)$

        \State \textbf{return} $\wh{\bm{\mu}} = \frac{1}{n} \sum_{i=1}^n \by_i$ \Comment{Empirical mean}
    \EndIf
\end{algorithmic}
\end{algorithm}

\identitycovariance*
\begin{proof}
Without loss of generality, we may assume that $\wt{\bm{\mu}} = \bm{0}$.
This is because we can pre-process all samples by subtracting $\wt{\bm{\mu}}$ to yield i.i.d.\ samples from $N(\bm{\mu}', \bI_d)$ where $\bm{\mu}' = \bm{\mu} - \wt{\bm{\mu}}$.
Suppose we solved this problem to produce $\wh{\bm{\mu}}'$ where $\tv(N(\bm{\mu}', \bI_d), N(\wh{\bm{\mu}}', \bI_d)) \leq 10 \eps$, we can then output $\wh{\bm{\mu}} = \wh{\bm{\mu}}' + \wt{\bm{\mu}}$ and see from data processing inequality that $\tv(N(\bm{\mu}, \bI_d), N(\wh{\bm{\mu}}, \bI_d)) = \tv(N(\bm{\mu}', \bI_d), N(\wh{\bm{\mu}}', \bI_d)) \leq 10 \eps$; see the coupling characterization of TV in \cite{devroye2018total}.

\paragraph{Correctness of $\wh{\bm{\mu}}$ output.}
Consider the \textsc{TestAndOptimizeMean} algorithm given in \cref{alg:testandoptimizemean}.
There are three possible outputs for $\wh{\bm{\mu}}$:
\begin{enumerate}
    \item $\wh{\bm{\mu}} = \bm{0}$, which can only happen when \texttt{Outcome} is $\OK$
    \item $\wh{\bm{\mu}} = \argmin_{\| \bm{\beta} \|_1 \leq \lambda} \frac{1}{n} \sum_{i=1}^n \| \by_i - \bm{\beta} \|_2^2$, which can only happen when \texttt{Outcome} is $\lambda \in \R$
    \item $\wh{\bm{\mu}} = \frac{1}{n} \sum_{i=1}^n \by_i$
\end{enumerate}
Conditioned on \textsc{ApproxL1} succeeding, with probability at least $1 - \delta$, we will show that $\tv(N(\bm{\mu}, \bI_d), N(\wh{\bm{\mu}}, \bI_d)) \leq \eps$ and failure probability at most $\delta$ in each of these cases, which implies the theorem statement.

\begin{enumerate}
    \item When \texttt{Outcome} is $\OK$, \cref{lem:guarantees-of-approxl1} tells us that $\| \bm{\mu} \|_2 \leq \alpha \leq \eps$, with failure probability at most $\delta$.
    So, by \cref{thm:pinsker} and \cref{lem:kl-known-fact}, we see that
    \[
    \tv(N(\bm{\mu}, \bI_d), N(\wh{\bm{\mu}}, \bI_d))
    \leq \sqrt{\frac{1}{2} \cdot \kl(N(\bm{\mu}, \bI_d), N(\wh{\bm{\mu}}, \bI_d))}
    = \sqrt{\frac{1}{4} \cdot \| \bm{\mu} - \bm{0} \|_2^2}
    \leq \sqrt{\frac{\eps^2}{4}}
    \leq \eps
    \]

    \item Using $r = \lambda$ as the upper bound, \cref{lem:constrained-LASSO-given-l1-upper-bound} tells us that $\tv(N(\bm{\mu}, \bI_d), N(\wh{\bm{\mu}}, \bI_d)) \leq \eps$ with failure probability at most $\delta$ when $\wt{\cO}(\lambda^2/\eps^4)$ i.i.d.\ samples are used.

    \item With $\wt{\cO}(d/\eps^2)$ samples, \cref{lem:empirical-is-good-with-linear-samples} tells us that $\tv(N(\bm{\mu}, \bI_d), N(\wh{\bm{\mu}}, \bI_d)) \leq \eps$ with failure probability at most $\delta$.
\end{enumerate}

\paragraph{Sample complexity used.}
By \cref{def:m-d-alpha-delta}, \textsc{ApproxL1} uses $|\bS| = m(k, \alpha, \delta') \in \wt{\cO}(\sqrt{k}/\alpha^2)$ samples to produce \texttt{Outcome}.
Then, \textsc{ApproxL1} further uses $\wt{\cO}(\lambda^2/\eps^4)$ samples or $\wt{\cO}(d/\eps^2)$ samples depending on whether $\lambda < \eps \sqrt{d}$.
So, \textsc{TestAndOptimizeMean} has a total sample complexity of
\begin{equation}
\label{eq:identitycovariance-samplecomplexity}
\wt{\cO} \left( \frac{\sqrt{k}}{\alpha^2} + \min \left\{ \frac{\lambda^2}{\eps^4}, \frac{d}{\eps^2} \right\} \right)
\end{equation}
Meanwhile, \cref{lem:guarantees-of-approxl1} states that $\| \bm{\mu} \|_1 \leq \lambda \leq 2 \sqrt{k} \cdot \left( \lceil d/k \rceil \cdot \alpha + 2 \| \bm{\mu} \|_1 \right)$ whenever \texttt{Outcome} is $\lambda \in \R$.
Since $(a+b)^2 \leq 2a^2 + 2b^2$ for any two real numbers $a,b \in \R$, we see that
\begin{equation}
\label{eq:identitycovariance-samplecomplexity2}
\frac{\lambda^2}{\eps^4}
\in \cO \left( \frac{k}{\eps^4} \cdot \left( \frac{d^2 \alpha^2}{k^2} + \| \bm{\mu} \|_1^2 \right) \right)
\subseteq \cO \left( \frac{d}{\eps^2} \cdot \left( \frac{d \alpha^2}{\eps^2 k} + \frac{k \cdot \| \bm{\mu} \|_1^2}{d \eps^2} \right) \right)
\end{equation}
Putting together \cref{eq:identitycovariance-samplecomplexity} and \cref{eq:identitycovariance-samplecomplexity2}, we see that the total sample complexity is
\[
\wt{\cO} \left( \frac{\sqrt{k}}{\alpha^2} + \frac{d}{\eps^2} \cdot \min \left\{ 1, \frac{d \alpha^2}{\eps^2 k} + \frac{k \cdot \| \bm{\mu} \|_1^2}{d \eps^2} \right\} \right)
\]
Recalling that $\bm{\mu}$ in the analysis above actually refers to the pre-processed $\bm{\mu} - \wt{\bm{\mu}}$, and that \textsc{TestAndOptimizeMean} sets $k = \lceil d^{4\eta} \rceil$ and $\alpha = \eps d^{-(1-3\eta)/2}$, with $0 \leq \eta \leq \frac{1}{4}$, the above expression simplifies to
\[
\wt{\cO} \left( \frac{d}{\eps^2} \cdot \left( d^{- \eta} + \min\{ 1, f(\bm{\mu}, \wt{\bm{\mu}}, d, \eta, \eps) \} \right) \right)
\]
where $f(\bm{\mu}, \wt{\bm{\mu}}, d, \eta, \eps) = \frac{\|\bm{\mu} - \wt{\bm{\mu}} \|_1^2}{d^{1 - 4 \eta}\eps^2}$.
\end{proof}

\paragraph{Remark on setting upper bound $\zeta$.}
As $\zeta$ only affects the sample complexity logarithmically, one may be tempted to use a larger value than $\zeta = 4 \eps \sqrt{d}$.
However, observe that running \textsc{ApproxL1} with a larger upper bound than $\zeta = 4 \eps \sqrt{d}$ would not be helpful since $\| \bm{\mu} \|_2 > \zeta/4$ whenever \textsc{ApproxL1} currently returns $\Fail$ and we have $\| \bm{\mu} \|_1 \leq \lambda$ whenever \textsc{ApproxL1} returns $\lambda \in \R$.
So,
$
\eps \sqrt{d}
= \zeta/4
< \| \bm{\mu} \|_2
\leq \| \bm{\mu} \|_1
\leq \lambda
$ and \textsc{TestAndOptimizeMean} would have resorted to using the empirical mean anyway.
\section{General covariance setting}
\label{sec:general-covariance}

We will later define analogs of $m(d, \alpha, \delta)$ and \textsc{ApproxL1} from \cref{sec:identity-covariance} to the unknown covariance setting: $m'(d, \alpha, \delta)$ and \textsc{VectorizedApproxL1} respectively.
Then, after stating the guarantees of \textsc{VectorizedApproxL1}, we show how to use them according to the strategy outlined in \cref{sec:technical-overview-general-covariance}.
For the rest of this section, we assume that we get i.i.d.\ samples from $N(\bm{0}, \bm{\Sigma})$ and also that $\bm{\Sigma}$ is full rank.
These are without loss of generality for the following reasons:
\begin{itemize}
    \item Instead of a single sample from $N(\bm{\mu}, \bm{\Sigma})$, we will draw two samples $\bm{x}_1, \bm{x}_2 \sim N(\bm{\mu}, \bm{\Sigma})$ and consider $\bm{x}' = \frac{\bm{x}_1 + \bm{x}_2}{\sqrt{2}}$.
    One can check that $\bm{x}'$ is distributed according to $N(\bm{0}, \bm{\Sigma})$ and we only use a multiplicative factor of 2 additional samples, which is subsumed in the big-O.
    \item By \cref{lem:known-properties-of-empirical-covariance}, the empirical covariance constructed from $d$ i.i.d.\ samples of $N(\bm{0}, \bm{\Sigma})$ will have the same rank as $\bm{\Sigma}$ itself, with probability at least $1-\delta$.
    So, we can simply project and solve the problem on the full rank subspace of the empirical covariance matrix.
\end{itemize}

\subsection{The adjustments}
\label{sec:the-adjustments}

To begin, we elaborate on the adjustments mentioned in \cref{sec:technical-overview-general-covariance} to adapt the approach from the identity covariance setting to the unknown covariance setting.
The formal proofs of the following two adjustment lemmas are deferred to \cref{sec:appendix-the-adjustments}.

The first adjustment relates to performing a suitable preconditioning process using an additional $d$ samples so that we can subsequently argue that $\lambda_{\min}(\bm{\Sigma}) \geq 1$.
The idea is as follows: we will compute a preconditioning matrix $\bA$ using $d$ i.i.d.\ samples such that $\bA \bm{\Sigma} \bA$ has eigenvalues at least $1$, i.e. $\lambda_{\min}(\bA \bm{\Sigma} \bA) \geq 1$.
That is, $\| (\bA \bm{\Sigma} \bA)^{-1} \|_2 = \frac{1}{\lambda_{\min}(\bA \bm{\Sigma} \bA)} \leq 1$.
Then, we solve the problem treating $\bA \bm{\Sigma} \bA$ as our new $\bm{\Sigma}$.
This adjustment succeeds with probability at least $1 - \delta$ for any given $\delta \in (0,1)$ and is possible because, with probability 1, the empirical covariance $\wh{\bm{\Sigma}}$ formed by using $d$ i.i.d.\ samples would have the same eigenspace as $\bm{\Sigma}$, and so we would have a bound on the ratios between the minimum eigenvalues between $\wh{\bm{\Sigma}}$ and $\bm{\Sigma}$; see \cref{lem:known-properties-of-empirical-covariance}.

\begin{restatable}{lemma}{preconditioningadjustment}
\label{lem:preconditioning-adjustment}
For any $\delta \in (0,1)$, there is an explicit preconditioning process that uses $d$ i.i.d.\ samples from $N(\bm{0}, \bm{\Sigma})$ and succeeds with probability at least $1 - \delta$ in constructing a matrix $\bA \in \R^{d \times d}$ such that $\lambda_{\min}(\bA \bm{\Sigma} \bA) \geq 1$.
Furthermore, for any full rank PSD matrix $\wt{\bm{\Sigma}} \in \R^{d \times d}$, we have $\| (\bA \wt{\bm{\Sigma}} \bA)^{-1/2} \bA \bm{\Sigma} \bA (\bA \wt{\bm{\Sigma}} \bA)^{-1/2} - \bI_d \| = \| \wt{\bm{\Sigma}}^{-1/2} \bm{\Sigma} \wt{\bm{\Sigma}}^{-1/2} - \bI_d \|$.
\end{restatable}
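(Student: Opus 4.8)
The plan is to take $\bA$ to be a suitably scaled inverse square root of the empirical covariance, and to treat the two claims separately: the spectral claim $\lambda_{\min}(\bA \bm{\Sigma} \bA) \geq 1$ will follow from a Wishart operator-norm estimate, while the norm identity is pure linear algebra via the two ``rotating norm'' lemmas. Concretely, I would draw $d$ i.i.d.\ samples $\bx_1, \ldots, \bx_d \sim N(\bm{0}, \bm{\Sigma})$, form the empirical covariance $\wh{\bm{\Sigma}} = \frac{1}{d} \sum_{i=1}^d \bx_i \bx_i^\top$, and set $\bA = \sqrt{B} \cdot \wh{\bm{\Sigma}}^{-1/2}$ for an explicit constant $B = B(d,\delta)$ fixed below. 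By \cref{lem:known-properties-of-empirical-covariance}, $\wh{\bm{\Sigma}}$ has the same rank as $\bm{\Sigma}$ with probability $1$, hence is invertible since $\bm{\Sigma}$ is full rank; thus $\bA$ is a well-defined symmetric positive-definite matrix.

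For the spectral claim, write $\bx_i = \bm{\Sigma}^{1/2} \bg_i$ with $\bg_i \sim N(\bm{0}, \bI_d)$, so that $\bm{\Sigma}^{-1/2} \wh{\bm{\Sigma}} \bm{\Sigma}^{-1/2} = \frac{1}{d} \sum_{i=1}^d \bg_i \bg_i^\top$ is a normalized $d \times d$ Wishart matrix whose distribution does not depend on $\bm{\Sigma}$. A standard operator-norm concentration bound for such matrices (in the spirit of \cref{lem:concentration-of-empirical-covariance}, and compatible with the eigenvalue-ratio estimate in \cref{lem:known-properties-of-empirical-covariance}) provides an explicit, $\bm{\Sigma}$-independent $B = B(d,\delta)$ such that $\lambda_{\max}\!\big( \frac{1}{d} \sum_{i=1}^d \bg_i \bg_i^\top \big) \leq B$ with probability at least $1 - \delta$. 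Now $\wh{\bm{\Sigma}}^{-1/2} \bm{\Sigma} \wh{\bm{\Sigma}}^{-1/2}$ has the same spectrum as $\wh{\bm{\Sigma}}^{-1} \bm{\Sigma}$, whose eigenvalues are the reciprocals of those of $\bm{\Sigma}^{-1} \wh{\bm{\Sigma}}$, which in turn has the same spectrum as $\bm{\Sigma}^{-1/2} \wh{\bm{\Sigma}} \bm{\Sigma}^{-1/2}$; hence on the above event $\lambda_{\min}\!\big( \wh{\bm{\Sigma}}^{-1/2} \bm{\Sigma} \wh{\bm{\Sigma}}^{-1/2} \big) = 1 / \lambda_{\max}\!\big( \frac{1}{d} \sum_{i=1}^d \bg_i \bg_i^\top \big) \geq 1/B$. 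Therefore $\lambda_{\min}(\bA \bm{\Sigma} \bA) = B \cdot \lambda_{\min}\!\big( \wh{\bm{\Sigma}}^{-1/2} \bm{\Sigma} \wh{\bm{\Sigma}}^{-1/2} \big) \geq 1$, as desired.

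For the norm identity, which holds for \emph{any} invertible $\bA$, I would apply \cref{lem:applying-rotating-norm} to the positive-definite matrix $\bA \wt{\bm{\Sigma}} \bA$ to rewrite the left-hand side as $\| (\bA \wt{\bm{\Sigma}} \bA)^{-1} (\bA \bm{\Sigma} \bA) - \bI_d \|$. Since $(\bA \wt{\bm{\Sigma}} \bA)^{-1} = \bA^{-1} \wt{\bm{\Sigma}}^{-1} \bA^{-1}$ and $\bI_d = \bA^{-1} \bI_d \bA$, this equals $\| \bA^{-1} (\wt{\bm{\Sigma}}^{-1} \bm{\Sigma} - \bI_d) \bA \|$. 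Applying \cref{lem:rotating-norm} with the invertible matrix $\bA^{-1}$ (and the matrix $(\wt{\bm{\Sigma}}^{-1} \bm{\Sigma} - \bI_d) \bA$) gives $\| \bA^{-1} (\wt{\bm{\Sigma}}^{-1} \bm{\Sigma} - \bI_d) \bA \| = \| (\wt{\bm{\Sigma}}^{-1} \bm{\Sigma} - \bI_d) \bA \bA^{-1} \| = \| \wt{\bm{\Sigma}}^{-1} \bm{\Sigma} - \bI_d \|$, and one further application of \cref{lem:applying-rotating-norm} (this time with $\wt{\bm{\Sigma}}$) identifies this with $\| \wt{\bm{\Sigma}}^{-1/2} \bm{\Sigma} \wt{\bm{\Sigma}}^{-1/2} - \bI_d \|$, which is exactly the asserted equality.

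The only step requiring genuine care is the spectral bound: one must pin down an explicit, $\bm{\Sigma}$-independent $B(d,\delta)$ bounding $\lambda_{\max}\!\big( \frac{1}{d} \sum_i \bg_i \bg_i^\top \big)$ with probability $\geq 1 - \delta$, so that $\bA$ is genuinely computable from the samples alone, and one should keep in mind that $n = d$ is precisely the regime where a Marchenko--Pastur-type estimate on the top eigenvalue is still $\cO(1)$ while the analogous bound on the smallest eigenvalue is vacuous --- which is exactly why the preconditioner controls the \emph{minimum} eigenvalue of $\bA \bm{\Sigma} \bA$ indirectly, through a \emph{maximum} eigenvalue estimate on the Wishart matrix. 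The remaining steps are routine matrix bookkeeping.
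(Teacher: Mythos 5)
Your proposal is correct, and for the second claim it follows the paper exactly (both reduce via \cref{lem:applying-rotating-norm} and \cref{lem:rotating-norm}, and both arguments work for an arbitrary invertible $\bA$). For the first claim, however, your construction is genuinely different. The paper eigendecomposes the empirical covariance $\wh{\bm{\Sigma}}$ and sets $\bA = \sqrt{k}\,\bm{\Pi}_{\textrm{small}} + \bm{\Pi}_{\textrm{big}}$, upscaling only the eigendirections with $\wh{\lambda}_i < 1$ by $\sqrt{k}$ with $k = \bigl(1 + c_0\sqrt{(d+\log(1/\delta))/d}\bigr)/\wh{\lambda}_1$; its correctness rests on the two facts in \cref{lem:known-properties-of-empirical-covariance} (shared eigenspace with probability $1$, and the minimum-eigenvalue-ratio bound, via Fact 3.4 of the privacy paper it cites), and on splitting $\bu^\top \bA\bm{\Sigma}\bA\bu$ into the small and big blocks. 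You instead whiten fully, $\bA = \sqrt{B}\,\wh{\bm{\Sigma}}^{-1/2}$, and observe that $\lambda_{\min}(\wh{\bm{\Sigma}}^{-1/2}\bm{\Sigma}\wh{\bm{\Sigma}}^{-1/2})$ is exactly the reciprocal of $\lambda_{\max}$ of the $\bm{\Sigma}$-independent normalized Wishart $\frac{1}{d}\sum_i \bg_i\bg_i^\top$, so the only probabilistic input is a distribution-free top-eigenvalue bound; the shared-eigenspace fact is needed only for almost-sure invertibility of $\wh{\bm{\Sigma}}$. What each buys: your route is more self-contained and sidesteps the paper's somewhat delicate treatment of the ``big'' block (where one must argue $\bu^\top\bm{\Pi}_{\textrm{big}}\bm{\Sigma}\bm{\Pi}_{\textrm{big}}\bu \geq \|\bm{\Pi}_{\textrm{big}}\bu\|_2^2$), while the paper's preconditioner is gentler in that it leaves well-conditioned directions untouched instead of rescaling every direction; for this lemma and its downstream use (only $\lambda_{\min}(\bA\bm{\Sigma}\bA)\geq 1$ and the norm identity are needed) both suffice. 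One loose end to pin down in your writeup: \cref{lem:concentration-of-empirical-covariance} as stated requires $\eps<1$ and hence does not apply verbatim at $n=d$, so you should instead invoke the standard singular-value bound for a $d\times d$ Gaussian matrix (largest singular value at most $2\sqrt{d}+t$ with probability $\geq 1-2e^{-t^2/2}$), which yields the explicit choice $B(d,\delta) = \bigl(2+\sqrt{2\log(2/\delta)/d}\bigr)^2$; since you flagged exactly this point and the fact is standard with explicit constants, this is a presentational fix rather than a gap.
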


The matrix $\bA$ in \cref{lem:preconditioning-adjustment} is essentially constructed by combining the eigenspace corresponding to ``large eigenvalues'' with a suitably upscaled eigenspace corresponding to ``small eigenvalues'' in the empirical covariance matrix obtained by $d$ i.i.d.\ samples and relying on \cref{lem:known-properties-of-empirical-covariance} for correctness arguments.

The second adjustment relates to showing that the partitioning idea also works for obtaining sample efficient $\ell_1$ estimates of $\vec(\bm{\Sigma} - \bI_d)$.
While an existence result suffices, we show that a simple probabilistic construction will in fact succeed with high probability.

\begin{restatable}{lemma}{probabilisticpartitioningconstruction}
\label{lem:probabilistic-partitioning-construction}
Fix dimension $d \geq 2$ and group size $k \leq d$.
Consider the $q = 2$ setting where $\bT \in \R^{d \times d}$ is a matrix.
Define $w = \frac{10 d(d-1) \log d}{k(k-1)}$.
Pick sets $\bB_1, \ldots, \bB_w$ each of size $k$ uniformly at random (with replacement) from all the possible $\binom{d}{k}$ sets.
With high probability in $d$, this is a $(q=2, d, k, a=1, b=\frac{30 (d-1) \log d}{(k-1)})$-partitioning scheme.
\end{restatable}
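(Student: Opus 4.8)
The plan is to verify the three properties of \cref{def:partitioning-scheme} for the random collection $\bB_1, \ldots, \bB_w$. The $|\bB_j| \leq k$ property is immediate by construction since each $\bB_j$ has size exactly $k$. The remaining work is to show that every cell $(i,i')$ of the matrix $\bT$ (with $i, i' \in [d]$) is contained in at least $a = 1$ and at most $b = \frac{30(d-1)\log d}{k-1}$ of the principal submatrices $\bT_{\bB_1}, \ldots, \bT_{\bB_w}$. A cell $(i,i')$ with $i \neq i'$ is contained in $\bT_{\bB_j}$ exactly when $\{i,i'\} \subseteq \bB_j$; a diagonal cell $(i,i)$ is contained when $i \in \bB_j$, which is a strictly easier event, so it suffices to handle the off-diagonal cells and the diagonal cells will automatically be covered at least as often.

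For a fixed pair $\{i,i'\}$ with $i \neq i'$, the probability that a single uniformly random $k$-subset $\bB_j$ contains both indices is $p := \binom{d-2}{k-2}/\binom{d}{k} = \frac{k(k-1)}{d(d-1)}$. Let $X_{i,i'} = \sum_{j=1}^w \mathbf{1}[\{i,i'\} \subseteq \bB_j]$ be the number of blocks covering this pair; this is a sum of $w$ i.i.d.\ Bernoulli$(p)$ variables, so $\E[X_{i,i'}] = wp = 10\log d$, using the definition $w = \frac{10 d(d-1)\log d}{k(k-1)}$. I would now apply a multiplicative Chernoff bound in both directions. For the lower tail, $\Pr[X_{i,i'} = 0] \leq \Pr[X_{i,i'} \leq \tfrac12 \E X_{i,i'}] \leq \exp(-\E[X_{i,i'}]/8) = \exp(-\tfrac{10}{8}\log d) = d^{-5/4}$, which guarantees $a = 1$ for this pair. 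For the upper tail, a Chernoff bound gives $\Pr[X_{i,i'} \geq 3\E X_{i,i'}] = \Pr[X_{i,i'} \geq 30\log d] \leq \exp(-\E[X_{i,i'}]) = d^{-10}$; and since $\E X_{i,i'} = 10\log d \leq \frac{30(d-1)\log d}{k-1}$ whenever $k \leq 3(d-1)+1$, which certainly holds as $k \leq d$, the bound $X_{i,i'} < b = \frac{30(d-1)\log d}{k-1}$ follows on this event. Finally, I take a union bound over all $\binom{d}{2} < d^2$ pairs: the probability that some pair is covered zero times or more than $b$ times is at most $d^2 \cdot (d^{-5/4} + d^{-10}) = O(d^{-3/4}) \to 0$, so with high probability in $d$ the collection is a valid $(q=2, d, k, a=1, b)$-partitioning.

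The only mild subtlety — and the one place I would be careful — is the bookkeeping relating the constant $b = \frac{30(d-1)\log d}{k-1}$ to $3\E[X_{i,i'}] = 30\log d$: one must check that $30\log d \leq \frac{30(d-1)\log d}{k-1}$, i.e.\ $k - 1 \leq d - 1$, which is just the hypothesis $k \leq d$. Everything else is a routine two-sided Chernoff estimate followed by a union bound, so there is no genuine obstacle; the statement is engineered so that the expected coverage $10\log d$ is large enough to make both tails polynomially small in $d$ while the union bound only costs a factor $d^2$.
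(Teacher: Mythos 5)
Your overall strategy (per-cell inclusion probabilities, a two-sided Chernoff estimate, then a union bound over all $d^2$ cells) is the same as the paper's, but as written the proof has two concrete gaps.

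First, the lower-tail bookkeeping does not survive your own union bound. You bound $\Pr[X_{i,i'}=0] \leq \Pr[X_{i,i'} \leq \tfrac12 \E X_{i,i'}] \leq \exp(-10\log d/8) = d^{-5/4}$, and then claim the total failure probability is $d^2\cdot(d^{-5/4}+d^{-10}) = O(d^{-3/4})$. But $d^2 \cdot d^{-5/4} = d^{3/4}$, which diverges, so the stated conclusion is vacuous. The lossy step is passing through the event $\{X \leq \mu/2\}$ with the $\exp(-\delta^2\mu/2)$ form of Chernoff; the direct computation $\Pr[X_{i,i'}=0] = (1-p)^w \leq \exp(-wp) = \exp(-10\log d) = d^{-10}$ (which is what the paper uses) costs nothing and makes the union bound give $O(d^{-8})$.

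Second, your dismissal of the diagonal cells is valid only for the coverage lower bound, not for the upper bound $b$. A diagonal cell $(i,i)$ is hit by a block with probability $k/d$, so its expected coverage is $\tfrac{wk}{d} = \tfrac{10(d-1)\log d}{k-1}$, which is \emph{larger} than the off-diagonal mean $10\log d$ (by a factor $\tfrac{d-1}{k-1}$, potentially huge when $k \ll d$). "Covered at least as often" is exactly the wrong direction when you are trying to show a cell is covered at most $b$ times, and your upper-tail estimate for off-diagonal pairs says nothing about $Z^{i,i}$. The statement is still true because $b = \tfrac{30(d-1)\log d}{k-1}$ is exactly three times the diagonal mean, so the same Chernoff bound $\Pr[Z^{i,i} > 3\,\E Z^{i,i}] \leq \exp(-\E Z^{i,i}) \leq d^{-10}$ applies verbatim — this is precisely how the paper handles it (it sets $b = 3\max_{i,j}\E[Z^{i,j}]$ and treats diagonal and off-diagonal cells separately) — but your proof needs this extra case rather than the reduction you invoked.
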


We can obtain a $(q=2, d, k, a=1, b = \cO(\frac{d \log d}{k}))$-partitioning scheme by repeating the construction of \cref{lem:probabilistic-partitioning-construction} until it satisfies required conditions.
Since it succeeds with high probability in $d$, we should not need many tries.
The key idea behind utilizing partitioning schemes is that the marginal over a subset of indices $\bB \subseteq [d]$ of a $d$-dimensional Gaussian with covariance matrix $\bm{\Sigma}$ has covariance matrix that is the principal submatrix $\bm{\Sigma}_{\bB}$ of $\bm{\Sigma}$.
So, if we can obtain a multiplicative $\alpha$-approximation of a collection of principal submatrices $\bm{\Sigma}_{\bB_1}, \ldots \bm{\Sigma}_{\bB_w}$ such that all cells of $\bm{\Sigma}$ are present, then we can obtain a multiplicative $\alpha$-approximation of $\bm{\Sigma}$ just like in \cref{sec:identity-covariance}.
Meanwhile, the $b$ parameter allows us to upper bound the overestimation factor due to repeated occurrences of any cell of $\bm{\Sigma}$.

\subsection{Following the approach from the identity covariance setting}

We begin by defining a parameterized sample count $m'(d, \eps, \delta)$, similar to \cref{def:m-d-alpha-delta}.

\begin{definition}
\label{def:m-prime-d-alpha-delta}
Fix any $d \geq 1$, $\eps > 0$, and $\delta \in (0,1)$.
We define $m'(d, \eps, \delta) = n'_{d,\eps} \cdot r_{\delta}$, where
\[
n'_{d,\eps} = \left\lceil 3200 d \cdot \max \left\{ \frac{1}{\eps^2}, \frac{1}{\eps}, 1 \right\} \right\rceil
\qquad \text{and} \qquad
r_{\delta} = 1 + \left\lceil \log \left( \frac{12}{\delta} \right) \right\rceil
\]
\end{definition}

The \textsc{VectorizedApproxL1} algorithm corresponds to \textsc{ApproxL1} in \cref{sec:identity-covariance}: it performs an exponential search to find the 2-approximation of the $\| \bm{\Sigma} - \bI_d \|_F^2$ by repeatedly invoking the tolerant tester from \cref{lem:tolerant-covariance-tester} and then utilize a suitable partitioning scheme to bound $\| \vec(\bm{\Sigma} - \bI_d) \|_1$; see \cref{lem:probabilistic-partitioning-construction} and the discussions below it.

\begin{algorithm}[htb]
\begin{algorithmic}[1]
\caption{The \textsc{VectorizedApproxL1} algorithm.}
\label{alg:vectorizedapproxl1}
    \Statex \textbf{Input}: Error rate $\eps > 0$, failure rate $\delta \in (0,1)$, block size $k \in [d]$, lower bound $\alpha > 0$, upper bound $\zeta > 2 \alpha$, and i.i.d.\ samples $\cS$ from $N(\bm{0}, \bm{\Sigma})$
    \Statex \textbf{Output}: $\Fail$ or $\lambda \in \R$

    \State Define $w = \frac{10 d(d-1) \log d}{k(k-1)}$, $\delta' = \frac{\delta}{w \cdot \lceil \log_2 \zeta/\alpha \rceil}$, and let $\bB_1, \ldots, \bB_w \subseteq [d]^2$ be a $(q=2, d, k, a=1, b = \cO(\frac{d \log d}{k}))$-partitioning scheme as per \cref{lem:probabilistic-partitioning-construction}

    \For{$j \in \{1, \ldots, w\}$}
        \State Define $\bS_{\bB_j} = \{ \bm{x}_{\bB_j} \in \R^{|\bB_j|} : \bm{x} \in \bS \}$ as the projected samples \Comment{See \cref{def:projected-vector}}
    
        \State Initialize $o_j = \Fail$
    
        \For{$i = 1, 2, \ldots, \lceil \log_2 \zeta/\alpha \rceil$}
            \State Define $l_i = 2^{i-1} \cdot \alpha$
    
            \State Let \texttt{Outcome} be the output of the tolerant tester of \cref{lem:tolerant-covariance-tester} using sample set $\cS_{\bB_j}$ with $\eps_1 = l_i$,
            \Statex\hspace{\algorithmicindent}\hspace{\algorithmicindent}$\eps_2 = 2l_i$, and $\delta = \delta'$
            
            \If {\texttt{Outcome} is $\Accept$}
                \State Set $o_j = l_i$ and \textbf{break} \Comment{Escape inner loop for block $j$}
            \EndIf
        \EndFor
    \EndFor

    \If{there exists a $\Fail$ amongst $\{o_1, \ldots, o_w\}$}
        \State \Return $\Fail$
    \Else
        \State \Return $\lambda = 2 \sum_{j=1}^w \sqrt{|\bB_j|} \cdot o_j$ \Comment{$\lambda$ is an estimate for $\| vec(\Sigma - \bB_d) \|_1$}
    \EndIf
\end{algorithmic}
\end{algorithm}

In \cref{sec:appendix-vectorizedapproxl1-guarantees}, we show that the \textsc{VectorizedApproxL1} algorithm has the guarantees given in \cref{lem:guarantees-of-vectorizedapproxl1}.

\begin{restatable}{lemma}{guaranteesofvectorizedapproxlone}
\label{lem:guarantees-of-vectorizedapproxl1}
Let $\eps$, $\delta$, $k$, $\alpha$, and $\zeta$ be the input parameters to the \textsc{VectorizedApproxL1} algorithm (\cref{alg:vectorizedapproxl1}).
Given $m(k, \alpha, \delta')$ i.i.d.\ samples from $N(\bm{\mu}, \bI_d)$, the \textsc{VectorizedApproxL1} algorithm succeeds with probability at least $1 - \delta$ and has the following properties:
\begin{itemize}
    \item If \textsc{VectorizedApproxL1} outputs $\Fail$, then $\| \bm{\Sigma} - \bI_d \|_F^2 > \zeta / 2$.
    \item If \textsc{VectorizedApproxL1} outputs $\lambda \in \R$, then
    \[
    \| \vec(\bm{\Sigma} - \bI_d) \|_1 \leq \lambda \leq 2 \sqrt{k} \cdot \left( \frac{10 d(d-1) \log d}{k(k-1)} \cdot \alpha + 2 \| \vec(\bm{\Sigma} - \bI_d) \|_1 \right)
    \]
\end{itemize}
\end{restatable}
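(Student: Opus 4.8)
The plan is to mimic the proof of \cref{lem:guarantees-of-approxl1} essentially line by line, with the tolerant mean tester replaced by the tolerant covariance tester (\cref{lem:tolerant-covariance-tester}) and the trivial $q=a=b=1$ partition replaced by the random $q=2$ partition of \cref{lem:probabilistic-partitioning-construction}; one structural simplification is that \textsc{VectorizedApproxL1} never reports $\OK$, so only the $\Fail$ and $\lambda\in\R$ cases need handling. I would first condition on the good event that (i) each of the at most $w\lceil\log_2\zeta/\alpha\rceil$ calls to \cref{lem:tolerant-covariance-tester} (each run with failure parameter $\delta'$) answers according to its guarantee, and (ii) the sampled blocks $\bB_1,\dots,\bB_w$ form a valid $(q{=}2,d,k,a{=}1,b{=}\cO(\tfrac{d\log d}{k}))$-partitioning. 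A union bound over (i) contributes failure probability at most $w\lceil\log_2\zeta/\alpha\rceil\cdot\delta'=\delta$, while (ii) holds with high probability in $d$ by \cref{lem:probabilistic-partitioning-construction} (re-draw until valid, or fold the residual probability into the budget), so the good event has probability at least $1-\delta$ and on it all claimed conclusions hold deterministically.

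Next I would check the sample budget. For any $\bB_j\subseteq[d]$, projecting $\bx\sim N(\bm{0},\bm{\Sigma})$ onto the coordinates of $\bB_j$ yields a draw from $N(\bm{0},\bm{\Sigma}_{\bB_j})$, since a marginal of a Gaussian on a coordinate subset is Gaussian with covariance the corresponding principal submatrix; hence $\cS_{\bB_j}$ is a legitimate input to the covariance tester on a $|\bB_j|$-dimensional problem. The $i$-th inner call uses $\eps_1=l_i=2^{i-1}\alpha$ and $\eps_2=2l_i$, so $\eps_2^2-\eps_1^2=3l_i^2$ and \cref{lem:tolerant-covariance-tester} needs only $\cO\!\big(|\bB_j|\cdot\max\{1/l_i^2,1\}\cdot\log(1/\delta')\big)$ samples, a quantity maximized over the exponential search at the smallest scale $l_1=\alpha$; the constants in \cref{def:m-prime-d-alpha-delta} are chosen so that $m'(k,\alpha,\delta')=n'_{k,\alpha}\,r_{\delta'}$ dominates this requirement simultaneously for all $i$ and all $j$. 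One also has to confirm the tester's dimension precondition $|\bB_j|=k\ge\eps_2^2$ over the whole search in the parameter regime of interest.

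The core of the argument is the per-block estimate and the reassembly. Fix a block $j$ and let $i_j$ be the first step whose tester call accepts, so $o_j=l_{i_j}$: acceptance rules out $\|\bm{\Sigma}_{\bB_j}-\bI_{|\bB_j|}\|_F\ge\eps_2=2o_j$, and if $i_j\ge 2$ the rejection at step $i_j-1$ rules out $\|\bm{\Sigma}_{\bB_j}-\bI_{|\bB_j|}\|_F\le l_{i_j-1}=o_j/2$, so in all cases $\tfrac12\|\bm{\Sigma}_{\bB_j}-\bI_{|\bB_j|}\|_F<o_j\le\alpha+2\|\bm{\Sigma}_{\bB_j}-\bI_{|\bB_j|}\|_F$. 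If some block never accepts, its $\lceil\log_2\zeta/\alpha\rceil$ rejections force $\|\bm{\Sigma}_{\bB_j}-\bI_{|\bB_j|}\|_F>l_{\lceil\log_2\zeta/\alpha\rceil}\ge\zeta/2$, and as $\bm{\Sigma}_{\bB_j}-\bI_{|\bB_j|}$ is a principal submatrix of $\bm{\Sigma}-\bI_d$ this gives $\|\bm{\Sigma}-\bI_d\|_F>\zeta/2$ (the $\Fail$ case). Otherwise, using the two-sided bound on each $o_j$, I would derive both halves of the $\lambda$ bound: the lower bound $\|\vec(\bm{\Sigma}-\bI_d)\|_1\le\sum_j\|\vec(\bm{\Sigma}_{\bB_j}-\bI_{|\bB_j|})\|_1\le\lambda$ follows from $a=1$ (every cell of $\bm{\Sigma}-\bI_d$ lies in at least one block) together with the $\ell_1$--Frobenius conversion $\|\vec(M)\|_1\le(\text{linear dim})\cdot\|M\|_F$, which is exactly matched by the weight attached to $o_j$ in $\lambda$ and by $\|\bm{\Sigma}_{\bB_j}-\bI_{|\bB_j|}\|_F<2o_j$; the upper bound comes from substituting $o_j\le\alpha+2\|\bm{\Sigma}_{\bB_j}-\bI_{|\bB_j|}\|_F$ and then bounding $\sum_j\|\bm{\Sigma}_{\bB_j}-\bI_{|\bB_j|}\|_F$ against $\|\vec(\bm{\Sigma}-\bI_d)\|_1$ using $\|M\|_F=\|\vec(M)\|_2\le\|\vec(M)\|_1$ and the fact that each cell is counted at most $b$ times, finally collecting constants into $2\sqrt{k}\big(\tfrac{10d(d-1)\log d}{k(k-1)}\alpha+2\|\vec(\bm{\Sigma}-\bI_d)\|_1\big)$.

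The main obstacle is this final reassembly. In \textsc{ApproxL1} the partition is disjoint ($a=b=1$), so the per-block $\ell_2$ contributions add up exactly to the global $\ell_1$ target and the only loss is the per-block $\ell_2\!\to\!\ell_1$ conversion factor; here the $q=2$ partition genuinely overlaps ($b=\Theta(\tfrac{d\log d}{k})$), and the estimator controls Frobenius norms of $k\times k$ submatrices rather than their vectorized $\ell_1$ norms, so one must convert between the two norm families, pay the overlap multiplicity $b$ in the upper bound, and arrange that the weights in $\lambda$ make these factors cancel against $w$ and $b$ so the final coefficient in front of $\|\vec(\bm{\Sigma}-\bI_d)\|_1$ stays $\cO(\sqrt{k})$. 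Threading these factors through so they match the stated inequality — and, as a secondary chore, verifying the covariance tester's dimension precondition across all search scales — is where the genuine work lies; the rest transcribes the \textsc{ApproxL1} proof.
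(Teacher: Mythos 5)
Your plan coincides with the paper's proof in all but one place: the per-block exponential-search analysis (if the tester accepts at scale $i^*$ then $\|\bm{\Sigma}_{\bB_j}-\bI\|_F\le 2o_j$, and $o_j=\alpha$ or $o_j\le 2\|\bm{\Sigma}_{\bB_j}-\bI\|_F$ according to whether $\|\bm{\Sigma}_{\bB_j}-\bI\|_F\le 2\alpha$), the union bound over the $w\lceil\log_2\zeta/\alpha\rceil$ tester calls with failure parameter $\delta'$, the $\Fail$ case via monotonicity of the Frobenius norm under passing to a principal submatrix, and the lower bound on $\lambda$ via $a=1$ together with the $\ell_1$-versus-$\ell_2$ conversion on each vectorized block are exactly the paper's steps. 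The divergence is the upper-bound reassembly, which you explicitly leave open ("where the genuine work lies") and for which your sketched route cannot succeed as described: if you charge the overlap multiplicity $b=\Theta(\frac{d\log d}{k})$ when passing from $\sum_j\|\vec(\bm{\Sigma}_{\bB_j}-\bI)\|_1$ to $\|\vec(\bm{\Sigma}-\bI_d)\|_1$, there is nothing in the definition of $\lambda$ (whose weights are fixed at $2\sqrt{|\bB_j|}$) or in $w$ that cancels it, so you end up with $\lambda\le 2\sqrt{k}\,(w\alpha+2b\,\|\vec(\bm{\Sigma}-\bI_d)\|_1)$, weaker than the stated inequality by the factor $b$ on the main term.

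For comparison, the paper's proof never introduces $b$ in this lemma at all: after substituting $o_j\le\alpha$ or $o_j\le 2\|\bm{\Sigma}_{\bB_j}-\bI\|_F\le 2\|\vec(\bm{\Sigma}_{\bB_j}-\bI)\|_1$, it bounds $\sum_j\|\vec(\bm{\Sigma}_{\bB_j}-\bI)\|_1$ directly by $\|\vec(\bm{\Sigma}-\bI_d)\|_1$, treating the blocks as if they tile the matrix (the multiplicity parameter $b$ is used only later, in the early-termination remark of \cref{sec:general-covariance}). So your instinct that overlapping blocks force an overcount is in fact a fair criticism of that step as written (the identity $\sum_j\|\vec(\bm{\Sigma}_{\bB_j}-\bI)\|_1=\|\vec(\bm{\Sigma}-\bI_d)\|_1$ is exact only when every cell appears exactly once), but as a proof of the lemma as stated your proposal is incomplete precisely there: you must either argue the no-overcount bound the paper uses, or accept the extra factor of $b$ and a correspondingly weaker conclusion. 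A secondary wrinkle to pin down is the meaning of $\sqrt{|\bB_j|}$: the conversion $\|\vec(M)\|_1\le k\|M\|_F$ for a $k\times k$ block matches the weight in $\lambda$ only under the reading $|\bB_j|=k^2$ (blocks as subsets of $[d]^2$), which is the reading you adopt but which then makes the advertised prefactor $2\sqrt{k}$ in the upper bound a $2k$; this tension is inherited from the paper's notation and should be stated explicitly rather than left implicit.
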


Now, suppose \textsc{VectorizedApproxL1} tells us that $\| \vec(\bm{\Sigma} - \bI_d) \|_1 \leq r$.
We can then construct a SDP to search for a candidate $\wh{\bm{\Sigma}} \in \R^{d \times d}$ using $\cO \left( \frac{r^2}{\eps^4} \log \frac{1}{\delta} \right)$ samples from $N(\bm{0}, \bm{\Sigma})$.

\begin{lemma}
\label{lem:SDP-given-l1-upper-bound}
Fix $d \geq 1$, $r \geq 0$, and $\eps, \delta > 0$.
Given $\cO \left( \frac{r^2}{\eps^4} \log \frac{1}{\delta} + \frac{d + \sqrt{d \log(1/\delta)}}{\eps^2} \right)$ samples from $N(\bm{0}, \bm{\Sigma})$ for some unknown $\bm{\Sigma} \in \R^{d \times d}$ with $\| \vec(\bm{\Sigma} - \bI_d) \|_1 \leq r$, one can produce estimates $\wh{\bm{\mu}} \in \R^d$ and $\wh{\bm{\Sigma}} \in \R^{d \times d}$ in $\poly(n, d, \log(1/\eps))$ time such that $\tv(N(\bm{\mu}, \bm{\Sigma}), N(\wh{\bm{\mu}}, \wh{\bm{\Sigma}})) \leq \eps$ with success probability at least $1 - \delta$.
\end{lemma}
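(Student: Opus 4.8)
The plan is to mirror the LASSO argument of \cref{lem:constrained-LASSO-given-l1-upper-bound}, replacing the quadratic program for the mean by the constrained SDP
\[
\wh{\bm{\Sigma}} = \argmin_{\substack{\bA \succeq \bI_d\\ \| \vec(\bA - \bI_d) \|_1 \leq r}} \sum_{i=1}^n \| \bA - \by_i \by_i^\top \|_F^2 ,
\]
fed with $n \in \cO(\frac{r^2}{\eps^4}\log\frac{1}{\delta})$ samples $\by_1,\dots,\by_n \sim N(\bm{0}, \bm{\Sigma})$. Note the constraint $\bA \succeq \bI_d$ is exactly $\|\bA^{-1}\|_2 \le 1$ and keeps the feasible set a spectrahedron, so this is a genuine SDP with a convex quadratic objective and can be solved to the required accuracy in $\poly(n,d,\log(1/\eps))$ time (see \cref{sec:appendix-cov-estimation-program}). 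First I would spend $\cO(d)$ of the samples on the preconditioning of \cref{lem:preconditioning-adjustment} so that $\|\bm{\Sigma}^{-1}\|_2 \le 1$; together with the hypothesis $\|\vec(\bm{\Sigma} - \bI_d)\|_1 \le r$ this makes $\bm{\Sigma}$ feasible for the SDP. I would also spend $\cO(\frac{d + \sqrt{d\log(1/\delta)}}{\eps^2})$ samples to set $\wh{\bm{\mu}}$ to the empirical mean, which by \cref{lem:empirical-is-good-with-linear-samples} satisfies $(\wh{\bm{\mu}} - \bm{\mu})^\top \bm{\Sigma}^{-1}(\wh{\bm{\mu}} - \bm{\mu}) \le \eps^2$ with probability at least $1-\delta/3$.

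Writing $\bG_i = \by_i\by_i^\top - \bm{\Sigma}$ (a centered matrix), feasibility of $\bm{\Sigma}$ and optimality of $\wh{\bm{\Sigma}}$ give, by exactly the expansion leading to \cref{eq:mean-estimation-program-optimality-after-manipulation} (now with the Frobenius inner product $\langle \bA,\bB\rangle_F = \langle \vec(\bA),\vec(\bB)\rangle$),
\[
\| \wh{\bm{\Sigma}} - \bm{\Sigma} \|_F^2 \le \frac{2}{n} \Big\langle \vec\big(\textstyle\sum_{i=1}^n \bG_i\big),\, \vec(\wh{\bm{\Sigma}} - \bm{\Sigma}) \Big\rangle \le \frac{2}{n} \Big\| \vec\big(\textstyle\sum_{i=1}^n \bG_i\big) \Big\|_\infty \cdot \| \vec(\wh{\bm{\Sigma}} - \bm{\Sigma}) \|_1 ,
\]
using H\"older. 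By the triangle inequality of \cref{lem:vectorized-inequalities}, the $\ell_1$ constraint on $\wh{\bm{\Sigma}}$, and the hypothesis on $\bm{\Sigma}$, the last factor is at most $2r$. It then remains to control $\|\vec(\sum_i \bG_i)\|_\infty = \max_{j,k} \big|\sum_i \big((\by_i)_j(\by_i)_k - \bm{\Sigma}_{jk}\big)\big|$. Each summand $(\by_i)_j(\by_i)_k - \bm{\Sigma}_{jk}$ is a centered sub-exponential random variable with variance $\bm{\Sigma}_{jj}\bm{\Sigma}_{kk} + \bm{\Sigma}_{jk}^2 \le 2\bm{\Sigma}_{jj}\bm{\Sigma}_{kk}$ and sub-exponential parameter $\cO(\sqrt{\bm{\Sigma}_{jj}\bm{\Sigma}_{kk}})$, so Bernstein's inequality plus a union bound over the $d^2$ entries yields, with probability at least $1-\delta/3$, $\|\vec(\sum_i \bG_i)\|_\infty \le \cO\big((\max_j \bm{\Sigma}_{jj})\cdot\sqrt{n\log(d/\delta)}\big)$, where $\max_j \bm{\Sigma}_{jj} \le 1 + \|\vec(\bm{\Sigma} - \bI_d)\|_1 \le 1+r$ is controlled by the advice quality. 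Plugging in and taking $n$ of the claimed order gives $\| \wh{\bm{\Sigma}} - \bm{\Sigma} \|_F \le \cO(\eps)$.

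Finally I would convert this Frobenius guarantee into a TV guarantee via Pinsker (\cref{thm:pinsker}) and the Gaussian KL bound (\cref{lem:kl-known-fact}). Using \cref{lem:applying-rotating-norm} and the SDP constraint $\|\wh{\bm{\Sigma}}^{-1}\|_2 \le 1$, the covariance term of the KL bound is $\| \wh{\bm{\Sigma}}^{-1/2}\bm{\Sigma}\wh{\bm{\Sigma}}^{-1/2} - \bI_d \|_F = \| \wh{\bm{\Sigma}}^{-1}(\bm{\Sigma} - \wh{\bm{\Sigma}}) \|_F \le \| \wh{\bm{\Sigma}}^{-1} \|_2 \| \bm{\Sigma} - \wh{\bm{\Sigma}} \|_F \le \cO(\eps)$; and since $\| \bm{\Sigma}^{-1}(\wh{\bm{\Sigma}} - \bm{\Sigma}) \|_2 \le \| \bm{\Sigma}^{-1} \|_2 \| \wh{\bm{\Sigma}} - \bm{\Sigma} \|_F \le \cO(\eps)$ forces $\wh{\bm{\Sigma}}^{-1} \preceq (1+\cO(\eps))\bm{\Sigma}^{-1}$, the mean term is $(\wh{\bm{\mu}} - \bm{\mu})^\top \wh{\bm{\Sigma}}^{-1}(\wh{\bm{\mu}} - \bm{\mu}) \le (1+\cO(\eps))\eps^2 = \cO(\eps^2)$. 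Summing these and applying Pinsker gives $\tv(N(\bm{\mu},\bm{\Sigma}), N(\wh{\bm{\mu}},\wh{\bm{\Sigma}})) \le \cO(\eps)$, which after rescaling $\eps$ by a constant is $\le \eps$; a union bound over the three bad events gives overall success probability at least $1-\delta$. The main obstacle is the concentration step: unlike the linear statistic $\sum_i \bg_i$ in the mean case, where \cref{lem:gaussian-max-concentration} applies off the shelf, here $\sum_i \bG_i$ is a sum of quadratic forms in Gaussians whose scale depends on $\bm{\Sigma}$, so one must set up a Bernstein-type bound carefully and verify that the $\bm{\Sigma}$-dependence is absorbed by the advice-quality parameter $r$ (via $\max_j\bm{\Sigma}_{jj} \le 1+r$) rather than inflating the sample complexity.
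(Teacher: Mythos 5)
Your proposal is correct in its overall architecture and reaches the same conclusion, but the heart of the argument — how the noise term is controlled — takes a genuinely different route from the paper. The paper factors $\by_i = \bm{\Sigma}^{1/2}\bg_i$, reduces the basic inequality to a trace term, and then splits it via the trace inequality (\cref{lem:trace-inequality}) into a deterministic $\ell_1$ factor $\|\vec(\bm{\Sigma}\wh{\bm{\Sigma}} - \bm{\Sigma}^2)\|_1 \leq 2r(r+1)$ (bounded by \cref{lem:vectorized-inequalities}) times the spectral norm $\|\frac{1}{n}\sum_i \bg_i\bg_i^\top - \bI_d\|_2$, which is handled by \cref{lem:concentration-of-empirical-covariance}; the $\bm{\Sigma}$-dependence is thus pushed into the deterministic factor. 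You instead keep the raw noise $\bG_i = \by_i\by_i^\top - \bm{\Sigma}$, apply H\"older on the vectorized matrices exactly as in the mean case, pair the factor $\|\vec(\wh{\bm{\Sigma}} - \bm{\Sigma})\|_1 \leq 2r$ with $\|\vec(\sum_i \bG_i)\|_\infty$, and control the latter by an entrywise sub-exponential Bernstein bound plus a union bound over $d^2$ entries, absorbing the $\bm{\Sigma}$-dependence through $\max_j \bm{\Sigma}_{jj} \leq 1+r$. This requires you to supply a concentration estimate that has no off-the-shelf analogue of \cref{lem:gaussian-max-concentration} in the paper, but it is a standard sub-exponential maximal inequality and your parameters are right. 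The trade-off: your route pays a $\log d$ from the union bound but avoids the ambient-dimension factor that the spectral-norm route genuinely costs, so your accounting is, if anything, tighter than the paper's. One quantitative caveat you should not gloss over: your final bound is $\|\wh{\bm{\Sigma}} - \bm{\Sigma}\|_F^2 \lesssim r(1+r)\sqrt{\log(d/\delta)/n}$, so you actually need $n \gtrsim r^2(1+r)^2 \log(d/\delta)/\eps^4$ rather than the stated $r^2\log(1/\delta)/\eps^4$ — the extra $(1+r)$ factor is not removable by your argument as written; note, however, that the paper's own chain carries exactly the same $4r(r+1)$ factor, so this slack is shared with the original proof rather than a defect unique to yours. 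Your handling of the endgame also differs mildly but harmlessly: you bound the KL with $\wh{\bm{\Sigma}}^{-1}$ using the SDP constraint $\lambda_{\min}(\wh{\bm{\Sigma}}) \geq 1$, whereas the paper bounds the reverse KL using $\lambda_{\min}(\bm{\Sigma}) \geq 1$ guaranteed by the preconditioning of \cref{lem:preconditioning-adjustment}; both are legitimate given Pinsker and the symmetry of TV.
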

\begin{proof}
Suppose we get $n$ samples $\by_1, \ldots, \by_n \sim N(\bm{0}, \bm{\Sigma})$.
For $i \in [n]$, we can re-express each $\by_i$ as $\by_i = \bm{\Sigma}^{1/2} \bg_i$, for some $\bg_i \sim N(\bm{0}, \bI_d)$.
Let us define $\bT = \frac{1}{n} \sum_{i=1}^n \bg_i \bg_i^\top$ and $\bS = \frac{1}{n} \sum_{i=1}^n \by_i \by_i^\top = \bm{\Sigma}^{1/2} \left( \frac{1}{n} \sum_{i=1}^n \bg_i \bg_i^\top \right) \bm{\Sigma}^{1/2} = \bm{\Sigma}^{1/2} \bT \bm{\Sigma}^{1/2}$.

Let us define $\wh{\bm{\Sigma}} \in \R^{d \times d}$ as follows:
\begin{equation}
\label{eq:cov-estimation-program}
\wh{\bm{\Sigma}} = \argmin_{\substack{\text{$\bA \in \R^{d \times d}$ is p.s.d.}\\ \| \vec(\bA - \bI_d) \|_1 \leq r\\ \lambda_{\min}(\bA) \geq 1}} \sum_{i=1}^n \| \bA - \by_i \by_i^\top \|_F^2
\end{equation}
Observe that $\bm{\Sigma}$ is a feasible solution to \cref{eq:cov-estimation-program}.
We show in \cref{sec:appendix-cov-estimation-program} that \cref{eq:cov-estimation-program} is a semidefinite program (SDP) that is polynomial time solvable.

Since $\bm{\Sigma}$ and $\wh{\bm{\Sigma}}$ are symmetric p.s.d.\ matrices, observe that
\begin{align*}
\sum_{i=1}^n \| \wh{\bm{\Sigma}} - \by_i \by_i^\top \|_F^2
&= \sum_{i=1}^n \| \wh{\bm{\Sigma}} - \bm{\Sigma}^{1/2} \bg_i \bg_i^\top \bm{\Sigma}^{1/2} \|_F^2 \tag{Since $\by_i = \bm{\Sigma}^{1/2} \bg_i$}\\
&= \sum_{i=1}^n \Tr \left( \left( \wh{\bm{\Sigma}} - \bm{\Sigma}^{1/2} \bg_i \bg_i^\top \bm{\Sigma}^{1/2} \right)^\top \left( \wh{\bm{\Sigma}} - \bm{\Sigma}^{1/2} \bg_i \bg_i^\top \bm{\Sigma}^{1/2} \right) \right) \tag{Since $\| \bA \|_F^2 = \Tr(\bA^\top \bA)$ for any matrix $\bA$}\\
&= \sum_{i=1}^n \Tr \left( \wh{\bm{\Sigma}}^2 - 2 \bg_i \bg_i^\top \bm{\Sigma}^{1/2} \wh{\bm{\Sigma}} \bm{\Sigma}^{1/2} + \bg_i \bg_i^\top \bm{\Sigma} \bg_i \bg_i^\top \bm{\Sigma} \right) \tag{Expanding and applying cyclic property of trace}
\end{align*}

Similarly, by replacing $\wh{\bm{\Sigma}}$ with $\bm{\Sigma}$, we see that
\[
\sum_{i=1}^n \| \bm{\Sigma} - \by_i \by_i^\top \|_F^2
= \sum_{i=1}^n \Tr \left( \bm{\Sigma}^2 - 2 \bg_i \bg_i^\top \bm{\Sigma}^2 + \bg_i \bg_i^\top \bm{\Sigma} \bg_i \bg_i^\top \bm{\Sigma} \right)
\]

By standard SDP results (e.g.\ see \cite{vandenberghe1996semidefinite,freund2004introduction,gartner2012approximation}), \cref{eq:cov-estimation-program} can be solved optimally up to up to additive $\eps$ in the objective function.
We show explicitly in \cref{sec:appendix-cov-estimation-program} that our problem can be transformed into a SDP and be solved in $\poly(n, d, \log(1/\eps))$ time.
Since we solve up to additive $\eps$ in the objective function, we have
\begin{equation}
\label{eq:sdp-additive-error}
\sum_{i=1}^n \| \wh{\bm{\Sigma}} - \by_i \by_i^\top \|_F^2
\leq \eps + \sum_{i=1}^n \| \bm{\Sigma} - \by_i \by_i^\top \|_F^2    
\end{equation}
which implies that
\[
\sum_{i=1}^n \Tr \left( \wh{\bm{\Sigma}}^2 - 2 \bg_i \bg_i^\top \bm{\Sigma}^{1/2} \wh{\bm{\Sigma}} \bm{\Sigma}^{1/2} + \bg_i \bg_i^\top \bm{\Sigma} \bg_i \bg_i^\top \bm{\Sigma} \right)
\leq
\eps +
\sum_{i=1}^n \Tr \left( \bm{\Sigma}^2 - 2 \bg_i \bg_i^\top \bm{\Sigma}^2 + \bg_i \bg_i^\top \bm{\Sigma} \bg_i \bg_i^\top \bm{\Sigma} \right)
\]

Cancelling the common $\bg_i \bg_i^\top \bm{\Sigma} \bg_i \bg_i^\top \bm{\Sigma}$ term and rearranging, we get
\begin{equation}
\label{eq:program-trace-inequality}
\Tr \left( \wh{\bm{\Sigma}}^2 - \bm{\Sigma}^2 \right)
\leq
\frac{\eps}{n} +
\frac{2}{n} \sum_{i=1}^n \Tr \left( \bg_i \bg_i^\top \left( \bm{\Sigma}^{1/2} \wh{\bm{\Sigma}} \bm{\Sigma}^{1/2} - \bm{\Sigma}^2 \right) \right)
\end{equation}

Therefore,
\begin{align*}
\| \wh{\bm{\Sigma}} - \bm{\Sigma} \|_F^2
&= \Tr \left( \left( \wh{\bm{\Sigma}} - \bm{\Sigma} \right)^\top \left( \wh{\bm{\Sigma}} - \bm{\Sigma} \right) \right)\\
&= \Tr \left( \wh{\bm{\Sigma}}^2 - 2 \wh{\bm{\Sigma}} \bm{\Sigma} + \bm{\Sigma}^2  \right)\\
&\leq \frac{\eps}{n} +\frac{2}{n} \sum_{i=1}^n \Tr \left( \bg_i \bg_i^\top \left( \bm{\Sigma}^{1/2} \wh{\bm{\Sigma}} \bm{\Sigma}^{1/2} - \bm{\Sigma}^2 \right) - \wh{\bm{\Sigma}} \bm{\Sigma} + \bm{\Sigma}^2 \right) \tag{Add $2 \bm{\Sigma}^2 - 2 \wh{\bm{\Sigma}} \bm{\Sigma}$ to both sides of \cref{eq:program-trace-inequality}}\\
&= \frac{\eps}{n} +\frac{2}{n} \sum_{i=1}^n \Tr \left( \left( \bg_i \bg_i^\top - \bI_d \right) \cdot \left( \bm{\Sigma}^{1/2} \wh{\bm{\Sigma}} \bm{\Sigma}^{1/2} - \bm{\Sigma}^2 \right) \right) \tag{Since $\Tr(\wh{\bm{\Sigma}} \bm{\Sigma}) = \Tr(\bm{\Sigma}^{1/2} \wh{\bm{\Sigma}} \bm{\Sigma}^{1/2})$}\\
&= \frac{\eps}{n} +2 \cdot \Tr \left( \left( \bm{\Sigma}^{1/2} \wh{\bm{\Sigma}} - \bm{\Sigma}^{1/2} \bm{\Sigma} \right) \cdot \bm{\Sigma}^{1/2} \cdot \left( \left( \frac{1}{n} \sum_{i=1}^n \bg_i \bg_i^\top \right) - \bI_d \right) \right) \tag{Rearranging with cyclic property of trace}\\
&\leq \frac{\eps}{n} +2 \cdot \left\| \vec \left( \bm{\Sigma} \wh{\bm{\Sigma}} - \bm{\Sigma}^2 \right) \right\|_1 \cdot \left\| \left( \frac{1}{n} \sum_{i=1}^n \bg_i \bg_i^\top \right) - \bI_d \right\|_2 \tag{By \cref{lem:trace-inequality} with $\bA = \bm{\Sigma}^{1/2} \wh{\bm{\Sigma}} - \bm{\Sigma}^{1/2} \bm{\Sigma}$, $\bB = \bm{\Sigma}^{1/2}$, and $\bC = \left( \frac{1}{n} \sum_{i=1}^n \bg_i \bg_i^\top \right) - \bI_d$}
\end{align*}

Recall that $\bT = \frac{1}{n} \sum_{i=1}^n \bg_i \bg_i^\top$ and \cref{lem:concentration-of-empirical-covariance} tells us that $\Pr \left( \left\| \bT - \bI_d \right\|_2 > \eps \right) \leq 2 \exp(-t^2 d)$ when the number of samples $n = \frac{c_0}{\eps^2} \log \frac{2}{\delta}$, for some absolute constant $c_0$.
So, to complete the proof, it suffices to upper bound $\left\| \vec \left( \bm{\Sigma} \wh{\bm{\Sigma}} - \bm{\Sigma}^2 \right) \right\|_1$.
Consider the following:

\begin{align*}
\left\| \vec \left( \bm{\Sigma} \wh{\bm{\Sigma}} - \bm{\Sigma}^2 \right) \right\|_1
&= \left\| \vec \left( ( \bI_d - \bm{\Sigma} ) ( \bm{\Sigma} - \wh{\bm{\Sigma}} ) - \bm{\Sigma} + \wh{\bm{\Sigma}} \right) \right\|_1\\
&\leq \left\| \vec ( \bI_d - \bm{\Sigma} ) \right\|_1 \cdot \left\| \vec ( \bm{\Sigma} - \wh{\bm{\Sigma}} ) \right\|_1 + \left\| \vec ( \wh{\bm{\Sigma}} - \bm{\Sigma} ) \right\|_1 \tag{By \cref{lem:vectorized-inequalities}}\\
&= \left( \left\| \vec ( \bI_d - \bm{\Sigma} ) \right\|_1 + 1 \right) \cdot \left\| \vec ( \wh{\bm{\Sigma}} - \bI_d + \bI_d - \bm{\Sigma} ) \right\|_1 \tag{Rearranging and adding 0}\\
&\leq \left( \left\| \vec \left( \bI_d - \bm{\Sigma} \right) \right\|_1 + 1 \right) \cdot \left( \| \vec ( \wh{\bm{\Sigma}} - \bI_d ) \|_1 + \left\| \vec ( \bI_d - \bm{\Sigma} ) \right\|_1 \right) \tag{By \cref{lem:vectorized-inequalities}}\\
&\leq (r + 1) \cdot 2r \tag{Since $\| \vec ( \bI_d - \bm{\Sigma} ) \|_1 \leq r$ and $\left\| \vec ( \wh{\bm{\Sigma}} - \bI_d ) \right\|_1 \leq r$}
\end{align*}

When $\frac{2}{\eps} \leq n$ and $n \in \cO \left( \frac{r^2}{\eps^4} \log \frac{1}{\delta} \right)$, the following holds with probability at least $1 - \delta$:
\[
\| \wh{\bm{\Sigma}} - \bm{\Sigma} \|_F^2
\leq \frac{\eps}{n} + 2 \cdot \left\| \vec \left( \bm{\Sigma} \wh{\bm{\Sigma}} - \bm{\Sigma}^2 \right) \right\|_1 \cdot \left\| \bT - \bI_d \right\|_2
\leq \frac{\eps}{n} + 4r(r+1) \cdot \left\| \bT - \bI_d \right\|_2
\leq \frac{\eps}{n} + \frac{\eps^2}{2}
\leq \eps^2
\]

Now, \cref{lem:empirical-is-good-with-linear-samples} tells us that the empirical mean $\wh{\bm{\mu}}$ formed using $\cO \left( \frac{d + \sqrt{d \log(1/\delta)}}{\eps^2} \right)$ samples satisfies $(\wh{\bm{\mu}} - \bm{\mu})^\top \bm{\Sigma}^{-1} (\wh{\bm{\mu}} - \bm{\mu}) \leq \eps^2$, with failure probability at most $\delta$.
So,
\begin{align*}
&\; \kl(N(\wh{\bm{\mu}}, \wh{\bm{\Sigma}}), N(\bm{\mu}, \bm{\Sigma}))\\
= &\; \frac{1}{2} \cdot \left( \Tr(\bm{\Sigma}^{-1} \wh{\bm{\Sigma}}) - d + (\bm{\mu} - \wh{\bm{\mu}})^\top \bm{\Sigma}^{-1} (\bm{\mu} - \wh{\bm{\mu}}) + \ln \left( \frac{\det \bm{\Sigma}}{\det \wh{\bm{\Sigma}}} \right) \right)\\
\leq &\; \frac{1}{2} \cdot \left( (\bm{\mu} - \wh{\bm{\mu}})^\top \bm{\Sigma}^{-1} (\bm{\mu} - \wh{\bm{\mu}}) + \| \bm{\Sigma}^{-1/2} \wh{\bm{\Sigma}} \bm{\Sigma}^{-1/2} - \bI_d \|_F^2 \right) \tag{By \cref{lem:kl-known-fact}}\\
= &\; \frac{1}{2} \cdot \left( (\bm{\mu} - \wh{\bm{\mu}})^\top \bm{\Sigma}^{-1} (\bm{\mu} - \wh{\bm{\mu}}) + \| \wh{\bm{\Sigma}} \bm{\Sigma}^{-1} - \bI_d \|_F^2 \right) \tag{By \cref{lem:rotating-norm}}\\
\leq &\; \frac{1}{2} \cdot \left( \eps^2 + \| \wh{\bm{\Sigma}} \bm{\Sigma}^{-1} - \bI_d \|_F^2 \right) \tag{Since $(\wh{\bm{\mu}} - \bm{\mu})^\top \bm{\Sigma}^{-1} (\wh{\bm{\mu}} - \bm{\mu}) \leq \eps$, with probability at least $1 - \delta$}\\
\leq &\; \frac{1}{2} \cdot \left( \eps^2 + \| \bm{\Sigma}^{-1} \|_2^2 \cdot \| \wh{\bm{\Sigma}} - \bm{\Sigma} \|_F^2 \right) \tag{Submultiplicativity of Frobenius norm}\\
\leq &\; \frac{1}{2} \cdot \left( \eps^2 + \| \wh{\bm{\Sigma}} - \bm{\Sigma} \|_F^2 \right) \tag{Since $\| \bm{\Sigma}^{-1} \|_2 = \frac{1}{\lambda_{\min}(\bm{\Sigma})} \leq 1$}\\
\leq &\; \frac{1}{2} \cdot \left( \eps^2 + \eps^2 \right) \tag{From above, with probability at least $1 - \delta$}\\
= &\; \eps^2
\end{align*}
By union bound, the above events jointly hold with probability at least $1 - 2 \delta$.
Thus, by symmetry of TV distance and \cref{thm:pinsker}, we see that
\[
\tv(N(\bm{\mu}, \bI_d), N(\wh{\bm{\mu}}, \bI_d))
= \tv(N(\wh{\bm{\mu}}, \bI_d), N(\bm{\mu}, \bI_d))
\leq \sqrt{\frac{1}{2} \kl(N(\wh{\bm{\mu}}, \bI_d), N(\bm{\mu}, \bI_d))}
\leq \sqrt{\eps^2}
= \eps
\]
The claim holds by repeating the same argument after scaling $\delta$ by an appropriate constant.
\end{proof}

\begin{algorithm}[htb]
\begin{algorithmic}[1]
\caption{The \textsc{TestAndOptimizeCovariance} algorithm.}
\label{alg:testandoptimizecovariance}
    \Statex \textbf{Input}: Error rate $\eps > 0$, failure rate $\delta \in (0,1)$, parameter $\eta \in [0, 1]$, and sample access to $N(\bm{0}, \bm{\Sigma})$
    \Statex \textbf{Output}: $\wh{\bm{\Sigma}} \in \R^{d \times d}$

    \State Define $k = \lceil d^{\eta} \rceil$, $\alpha = \eps d^{-(2 - \eta)/2}$, $\zeta = 4 \eps d$, and $\delta' = \frac{\delta}{w \cdot \lceil \log_2 \zeta/\alpha \rceil}$ \Comment{Note: $\zeta > 2 \alpha$}

    \State Draw $m'(k, \alpha, \delta')$ i.i.d.\ samples from $N(\bm{0}, \bm{\Sigma})$ and store it into a set $\cS$ \Comment{See \cref{def:m-prime-d-alpha-delta}}

    \State Let \texttt{Outcome} be the output of the \textsc{VectorizedApproxL1} algorithm given $\eps$, $\delta$, $k$, $\alpha$, $\zeta$, and $\bS$ as inputs

    \If{\texttt{Outcome} is $\lambda \in \R$ and $\lambda < \eps d$}
        \State Draw $n \in \wt{\cO}(\lambda^2/\eps^4)$ i.i.d.\ samples $\by_1, \ldots, \by_n \in \R^d$ from $N(\bm{0}, \bI_d)$

        \State \Return $\wh{\bm{\Sigma}} = \argmin_{\substack{\text{$\bA \in \R^{d \times d}$ is p.s.d.}\\ \| \vec(\bA - \bI_d) \|_1 \leq \lambda\\ \lambda_{\min}(\bA) \geq 1}} \sum_{i=1}^n \| \bA - \by_i \by_i^\top \|_F^2$ \Comment{See \cref{eq:cov-estimation-program}}
    \Else
        \State Draw $2n \in \wt{\cO}(d^2/\eps^2)$ i.i.d.\ samples $\by_1, \ldots, \by_{2n} \in \R^d$ from $N(\bm{0}, \bI_d)$

        \State \Return $\wh{\bm{\Sigma}} = \frac{1}{2n} \sum_{i=1}^{2n} (\by_{2i} - \by_{2i-1}) (\by_{2i} - \by_{2i-1})^\top$ \Comment{Empirical covariance}
    \EndIf
\end{algorithmic}
\end{algorithm}

\generalcovariance*
\begin{proof}
Without loss of generality, we may assume that $\wt{\bm{\Sigma}} = \bI_d$.
This is because we can pre-process all samples by pre-multiplying $\wt{\bm{\Sigma}}^{-1/2}$ each of them to yield i.i.d.\ samples from $N(\bm{\mu}, \wt{\bm{\Sigma}}^{-1/2} \bm{\Sigma} \wt{\bm{\Sigma}}^{-1/2})$ and then post-process the estimated $\wh{\bm{\Sigma}}$ by outputting $\wt{\bm{\Sigma}}^{1/2} \wh{\bm{\Sigma}} \wt{\bm{\Sigma}}^{1/2}$ instead.

\paragraph{Correctness of $\wh{\bm{\Sigma}}$ output.}
Consider the \textsc{TestAndOptimizeCovariance} algorithm given in \cref{alg:testandoptimizecovariance}.
Using the empirical mean $\wh{\bm{\mu}} = \frac{1}{n} \sum_{i=1}^n \by_i$ formed by $\cO \left( \frac{d + \sqrt{d \log(1/\delta)}}{\eps^2} \right) \subseteq \wt{\cO}(d/\eps^2)$ samples, \cref{lem:empirical-is-good-with-linear-samples} tells us that $(\wh{\bm{\mu}} - \bm{\mu})^\top \bm{\Sigma}^{-1} (\wh{\bm{\mu}} - \bm{\mu}) \leq \eps$ with probability at least $1 - \delta$.
There are two possible outputs for $\wh{\bm{\Sigma}}$:
\begin{enumerate}
    \item $\wh{\bm{\Sigma}} = \argmin_{\substack{\text{$\bA \in \R^{d \times d}$ is p.s.d.}\\ \| \vec(\bA - \bI_d) \|_1 \leq r\\ \lambda_{\min}(\bA) \geq 1 \leq 1}} \sum_{i=1}^n \| \bA - \by_i \by_i^\top \|_F^2$, which can only happen when \texttt{Outcome} is $\lambda \in \R$
    \item $\wh{\bm{\Sigma}} = \frac{1}{2n} \sum_{i=1}^{2n} (\by_{2i} - \by_{2i-1}) (\by_{2i} - \by_{2i-1})^\top$
\end{enumerate}
Conditioned on \textsc{VectorizedApproxL1} succeeding, with probability at least $1 - \delta$, we will now show that $\tv(N(\bm{\mu}, \bm{\Sigma}), N(\wh{\bm{\mu}}, \wh{\bm{\Sigma}})) \leq \eps$ and failure probability at most $2 \delta$ in each of these cases, which implies the theorem statement as we can repeat the argument by scaling $\eps$ and $\delta$ by appropriate constants.

\textbf{Case 1:}
Using $r = \lambda$ as the upper bound, \cref{lem:SDP-given-l1-upper-bound} tells us that $\tv(N(\bm{\mu}, \bm{\Sigma}), N(\wh{\bm{\mu}}, \wh{\bm{\Sigma}})) \leq \eps$ with failure probability at most $\delta$ when $\wt{\cO}(\frac{\lambda^2}{\eps^4} + \frac{d}{\eps^2})$ i.i.d.\ samples are used.

\textbf{Case 2:}
With $\wt{\cO}(d^2/\eps^2)$ samples, \cref{lem:empirical-is-good-with-linear-samples} tells us that $\tv(N(\bm{\mu}, \bm{\Sigma}), N(\wh{\bm{\mu}}, \wh{\bm{\Sigma}})) \leq \eps$ with failure probability at most $\delta$.

\paragraph{Sample complexity used.}
By \cref{def:m-prime-d-alpha-delta}, \textsc{VectorizedApproxL1} uses $|\bS| = m'(k, \alpha, \delta') \in \wt{\cO}(k/\alpha^2)$ samples to produce \texttt{Outcome}.
Then, \textsc{VectorizedApproxL1} further uses $\wt{\cO}(\lambda^2/\eps^4)$ samples or $\wt{\cO}(d^2/\eps^2)$ samples depending on whether $\lambda < \eps d$.
So, \textsc{TestAndOptimizeCovariance} has a total sample complexity of
\begin{equation}
\label{eq:generalcovariance-samplecomplexity}
\wt{\cO} \left( \frac{k}{\alpha^2} + \min \left\{ \frac{\lambda^2}{\eps^4} + \frac{d}{\eps^2}, \frac{d^2}{\eps^2} \right\} \right)
\subseteq \wt{\cO} \left( \frac{k}{\alpha^2} + \frac{d}{\eps^2} + \min \left\{ \frac{\lambda^2}{\eps^4}, \frac{d^2}{\eps^2} \right\} \right)
\end{equation}
Meanwhile, \cref{lem:guarantees-of-vectorizedapproxl1} states that
\[
\| \vec(\bm{\Sigma} - \bI_d) \|_1 \leq \lambda \leq 2 \sqrt{k} \cdot \left( \frac{10 d(d-1) \log d}{k(k-1)} \cdot \alpha + 2 \| \vec(\bm{\Sigma} - \bI_d) \|_1 \right)
\]
whenever \texttt{Outcome} is $\lambda \in \R$.
Since $(a+b)^2 \leq 2a^2 + 2b^2$ for any two real numbers $a,b \in \R$, we see that
\begin{equation}
\label{eq:generalcovariance-samplecomplexity2}
\frac{\lambda^2}{\eps^4}
\in \cO \left( \frac{k}{\eps^4} \cdot \left( \frac{d^4 \alpha^2}{k^4} + \| \vec(\bm{\Sigma} - \bI_d) \|_1^2 \right) \right)
\subseteq \cO \left( \frac{d^2}{\eps^2} \cdot \left( \frac{d^2 \alpha^2}{\eps^2 k^3} + \frac{k \cdot \| \vec(\bm{\Sigma} - \bI_d) \|_1^2}{d^2 \eps^2} \right) \right)
\end{equation}
Putting together \cref{eq:generalcovariance-samplecomplexity} and \cref{eq:generalcovariance-samplecomplexity2}, we see that the total sample complexity is
\[
\wt{\cO} \left( \frac{k}{\alpha^2} + \frac{d}{\eps^2} + \frac{d^2}{\eps^2} \cdot \min \left\{ 1, \frac{d^2 \alpha^2}{\eps^2 k^3} + \frac{k \cdot \| \vec(\bm{\Sigma} - \bI_d) \|_1^2}{d^2 \eps^2} \right\} \right)
\]
Recalling that $\bm{\Sigma}$ in the analysis above actually refers to the pre-processed $\wt{\bm{\Sigma}}^{-1/2} \bm{\Sigma} \wt{\bm{\Sigma}}^{-1/2}$, and that \textsc{TestAndOptimizeCovariance} sets $k = \lceil d^{\eta} \rceil$, $\alpha = \eps d^{-(2 - \eta)/2}$, with $0 \leq \eta \leq 1$, the above expression simplifies to
\[
\wt{\cO} \left( \frac{d^2}{\eps^2} \cdot \left( d^{- \eta} + \min \left\{ 1, f(\bm{\Sigma}, \wt{\bm{\Sigma}}, d, \eta, \eps) \right\} \right) \right)
\]
where $f(\bm{\Sigma}, \wt{\bm{\Sigma}}, d, \eta, \eps) = \frac{\| \vec( \wt{\bm{\Sigma}}^{-1/2} \bm{\Sigma} \wt{\bm{\Sigma}}^{-1/2} - \bI_d) \|_1^2}{d^{2 - \eta} \eps^2}$.
\end{proof}

\paragraph{Remark on setting upper bound $\zeta$.}
As $\zeta$ only affects the sample complexity logarithmically, one may be tempted to use a larger value than $\zeta = 4 \eps d$.
However, observe that running \textsc{VectorizedApproxL1} with a larger upper bound than $\zeta = 4 \eps \sqrt{d}$ would not be helpful since $\| \bm{\Sigma} - \bI_d \|_F^2 > \zeta / 2$ whenever \textsc{VectorizedApproxL1} currently returns $\Fail$ and we have $\| \vec(\bm{\Sigma} - \bI_d) \|_1 \leq \lambda$ whenever \textsc{VectorizedApproxL1} returns $\lambda \in \R$.
So,
$
\eps d
= \zeta/4
< \| \bm{\Sigma} - \bI_d \|_F^2
= \| \vec(\bm{\Sigma} - \bI_d) \|_2
\leq \| \vec(\bm{\Sigma} - \bI_d) \|_1
\leq \lambda
$ and \textsc{TestAndOptimizeMean} would have resorted to using the empirical mean anyway.

\paragraph{Remark about early termination without the optimization step.}
If there is no $\Fail$ amongst $\{o_1, \ldots, o_w\}$ and $4b \sum_{j=1}^w o_j^2 \leq \eps^2$ after Line 9 of \textsc{VectorizedApproxL1}, then we could have just output $\wh{\bm{\Sigma}} = \bI_d$ without running the optimization step.
This ie because since $4b \sum_{j=1}^w o_j^2 \leq \eps^2$ would imply $\| \bm{\Sigma} - \bI_d \|_F^2 \leq \eps^2$ via
\[
\| \bm{\Sigma} - \bI_d \|_F^2
\leq b \cdot \sum_{j=1}^w \| \bm{\Sigma}_{\bB_j} - \bI_d \|_F^2
\leq b \cdot \sum_{j=1}^w (2 o_j)^2
\leq \eps^2
\]
Meanwhile, \cref{lem:empirical-is-good-with-linear-samples} tells us that $(\wh{\bm{\mu}} - \bm{\mu})^\top \bm{\Sigma}^{-1} (\wh{\bm{\mu}} - \bm{\mu}) \leq \eps^2$.
Therefore, we see that
\begin{align*}
&\; \kl(N(\wh{\bm{\mu}}, \wh{\bm{\Sigma}}), N(\bm{\mu}, \bm{\Sigma}))\\
= &\; \frac{1}{2} \cdot \left( \Tr(\bm{\Sigma}^{-1} \wh{\bm{\Sigma}}) - d + (\bm{\mu} - \wh{\bm{\mu}})^\top \bm{\Sigma}^{-1} (\bm{\mu} - \wh{\bm{\mu}}) + \ln \left( \frac{\det \bm{\Sigma}}{\det \wh{\bm{\Sigma}}} \right) \right)\\
\leq &\; \frac{1}{2} \cdot \left( (\bm{\mu} - \wh{\bm{\mu}})^\top \bm{\Sigma}^{-1} (\bm{\mu} - \wh{\bm{\mu}}) + \| \bm{\Sigma}^{-1/2} \wh{\bm{\Sigma}} \bm{\Sigma}^{-1/2} - \bI_d \|_F^2 \right) \tag{By \cref{lem:kl-known-fact}}\\
= &\; \frac{1}{2} \cdot \left( (\bm{\mu} - \wh{\bm{\mu}})^\top \bm{\Sigma}^{-1} (\bm{\mu} - \wh{\bm{\mu}}) + \| \bm{\Sigma} - \bI_d \|_F^2 \right) \tag{Since $\wh{\bm{\Sigma}} = \bI_d$}\\
\leq &\; \frac{1}{2} \cdot \left( \eps^2 + \| \bm{\Sigma} - \bI_d \|_F^2 \right) \tag{Since $(\wh{\bm{\mu}} - \bm{\mu})^\top \bm{\Sigma}^{-1} (\wh{\bm{\mu}} - \bm{\mu}) \leq \eps$, with probability at least $1 - \delta$}\\
\leq &\; \frac{1}{2} \cdot \left( \eps^2 + \alpha^2 \right) \tag{Since $\| \bm{\Sigma} - \bI_d \|_F^2 \leq \alpha^2$, with probability at least $1 - \delta$}\\
\leq &\; \frac{1}{2} \cdot \left( \eps^2 + \eps^2 \right) \tag{since $\alpha = \frac{\eps k}{d} \leq \eps$ as $k \leq d$}\\
= &\; \eps^2
\end{align*}
Thus, by symmetry of TV distance and \cref{thm:pinsker}, we see that
\[
\tv(N(\bm{\mu}, \bm{\Sigma}), N(\wh{\bm{\mu}}, \wh{\bm{\Sigma}}))
= \tv(N(\wh{\bm{\mu}}, \wh{\bm{\Sigma}}), N(\bm{\mu}, \bm{\Sigma}))
\leq \sqrt{\frac{1}{2} \kl(N(\wh{\bm{\mu}}, \wh{\bm{\Sigma}}), N(\bm{\mu}, \bm{\Sigma}))}
\leq \sqrt{\eps^2}
= \eps
\]
\section{Lower Bounds}
\label{sec:lower-bounds}

\subsection{Learning the mean given advice}

\cref{thm:mainresultlowerbound} and \cref{thm:mainresultlowerboundcovariance} are implied by \cref{lem:implymainresultlowerbound} and \cref{lem:implymainresultlowerboundcovariance} respectively.
For the proofs of both our lower bounds, we use the following corollary of Fano's inequality.

\begin{lemma}[Lemma 6.1 of \cite{ashtiani2020gaussian}]
\label{lem:cover_fano}
Let $\kappa: \R \rightarrow \R$ be a function and let $\cF$ be a class of distributions such that, for all $\eps > 0$, there exist distributions $f_1,\ldots,f_M \in \cF$ such that
\[
\kl(f_i, f_j) \leq \kappa(\eps) \mbox{ and } \tv(f_i, f_j) > 2 \eps \,\,\forall i \neq j \in [M]
\]
Then any method that learns $\cF$ to within total variation distance $\eps$ with probability $\geq 2/3$ has sample complexity $\Omega\left(\frac{\log M}{\kappa(\eps) \log(1/\eps)}\right)$.
\end{lemma}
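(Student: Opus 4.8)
The plan is to use the classical \emph{Fano's method}: reduce density estimation to a multiple-hypothesis \emph{identification} problem among $f_1,\dots,f_M$ and then lower bound the identification error via the generalized Fano inequality. Fix $\eps>0$ and let $f_1,\dots,f_M\in\cF$ be the promised packing. Suppose $\cA$ is any procedure that, given $n$ i.i.d.\ samples from an arbitrary $f\in\cF$, returns some $\wh f$ with $\tv(f,\wh f)\le\eps$ with probability at least $2/3$. I would draw $\Theta$ uniformly from $[M]$, run $\cA$ on $n$ i.i.d.\ samples $X_1,\dots,X_n\sim f_\Theta$, and set $\wh\Theta\in\argmin_{j\in[M]}\tv(\wh f,f_j)$. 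The key observation is that $\wh\Theta=\Theta$ on the event that $\cA$ succeeds: if $\tv(\wh f,f_\Theta)\le\eps$, then for every $j\ne\Theta$ the triangle inequality gives $\tv(\wh f,f_j)\ge\tv(f_\Theta,f_j)-\tv(\wh f,f_\Theta)>2\eps-\eps\ge\tv(\wh f,f_\Theta)$, so $\Theta$ is the unique minimizer. Hence $\Pr[\wh\Theta\ne\Theta]\le 1/3$, where the probability is over $\Theta$, the samples, and the coins of $\cA$.

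Next I would apply the generalized Fano inequality to the (randomized) Markov chain $\Theta\to(X_1,\dots,X_n)\to\wh\Theta$, which gives
\[
\Pr[\wh\Theta\ne\Theta]\ \ge\ 1-\frac{I(\Theta;X_1,\dots,X_n)+\log 2}{\log M}.
\]
It remains to bound the mutual information. Using the identity $I(\Theta;X_1,\dots,X_n)=\frac1M\sum_i\kl\!\big(f_i^{\otimes n},\frac1M\sum_j f_j^{\otimes n}\big)$, convexity of $\kl(\cdot,\cdot)$ in its second argument, and additivity of KL over independent coordinates,
\[
I(\Theta;X_1,\dots,X_n)\ \le\ \frac1{M^2}\sum_{i,j}\kl\!\big(f_i^{\otimes n},f_j^{\otimes n}\big)\ =\ \frac{n}{M^2}\sum_{i,j}\kl(f_i,f_j)\ \le\ n\,\kappa(\eps).
\]
Combining this with $\Pr[\wh\Theta\ne\Theta]\le 1/3$ forces $n\,\kappa(\eps)+\log 2\ge\frac23\log M$, i.e.\ $n=\Omega\!\big(\log M/\kappa(\eps)\big)$ for $M$ large enough, which in particular implies the stated $\Omega\!\big(\log M/(\kappa(\eps)\log(1/\eps))\big)$; the extra $\log(1/\eps)$ factor is harmless slack (it is convenient because in the applications $\kappa(\eps)\propto\eps^2$).

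I do not expect a genuine obstacle — this is a textbook argument. The two points that need care are: (i) the decoding step, where the separation being $>2\eps$ rather than merely $>\eps$ is precisely what guarantees that nearest-in-$\tv$ decoding recovers the true index whenever $\cA$ succeeds; and (ii) the mutual-information bound, which combines the worst-case pairwise KL hypothesis with the tensorization identity $\kl(f_i^{\otimes n},f_j^{\otimes n})=n\,\kl(f_i,f_j)$ and the data-processing inequality (the latter absorbing any internal randomness of $\cA$). An equivalent route would replace Fano by Le Cam's two-point method applied to a well-chosen pair together with a union bound, but the Fano argument above is the cleanest.
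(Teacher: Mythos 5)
Your argument is correct, and it is essentially the route the paper itself points to: the paper does not prove this lemma but imports it verbatim as Lemma 6.1 of Ashtiani et al., describing it in the technical overview as a consequence of the generalized Fano inequality, which is exactly what you carry out (nearest-in-$\tv$ decoding using the $>2\eps$ separation, Fano with $\Pr[\wh\Theta\neq\Theta]\le 1/3$, and the mutual-information bound via convexity of $\kl$ in its second argument plus tensorization $\kl(f_i^{\otimes n},f_j^{\otimes n})=n\,\kl(f_i,f_j)$). Note that you actually establish the stronger bound $\Omega(\log M/\kappa(\eps))$; since the packing hypothesis forces $\eps<1/2$, we have $\log(1/\eps)\ge\log 2$, so this does imply the stated $\Omega\bigl(\log M/(\kappa(\eps)\log(1/\eps))\bigr)$ — the $\log(1/\eps)$ slack is an artifact of how the cited source derives it and is not needed in your derivation. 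The only cosmetic caveat is the additive $\log 2$ in Fano, which makes the bound vacuous for very small $M$; this is harmlessly absorbed by the $\Omega(\cdot)$ and by the fact that the lemma is applied with $M=2^{\Omega(k)}$ or larger.
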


\begin{lemma}
\label{lem:implymainresultlowerbound}
Fix $\eps \leq \frac{1}{400}$.
Suppose we are given sample access to $N(\bm{\mu}, \bI_d)$ for some unknown $\bm{\mu} \in \R^d$, and an advice $\wt{\bm{\mu}} \in \R^d$.
Then, any algorithm that $(\eps,\frac{2}{3})$-PAC learns $N(\bm{\mu}, \bI_d)$ requires $\wt{\Omega} \left( \max \left\{ \frac{\|\bm{\mu} - \wt{\bm{\mu}}\|_1^2}{\eps^4}, \frac{d}{\eps^2} \right\} \right)$ samples.
In particular, when $\|\bm{\mu} - \wt{\bm{\mu}}\|_1 \geq \eps \sqrt{d}$, then $\wt{\Omega}(\frac{d}{\eps^2})$ samples are necessary.
\end{lemma}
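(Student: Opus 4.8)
The plan is to instantiate the Fano-type bound of \cref{lem:cover_fano}. First, translate so that $\wt{\bm{\mu}} = \bm{0}$; this is without loss of generality, since subtracting $\wt{\bm{\mu}}$ from every sample and adding it back to the output is measure-preserving and leaves $\|\bm{\mu}-\wt{\bm{\mu}}\|_1$ unchanged. Write $\Delta = \|\bm{\mu}\|_1$. It then suffices to exhibit, for the given $d$ and $\Delta$, a family of means $\bm{\mu}_1,\ldots,\bm{\mu}_M \in \R^d$ such that (i) $\|\bm{\mu}_i\|_1 = \Delta$ for every $i$ (so the single fixed advice $\bm{0}$ has exactly the prescribed quality against each $N(\bm{\mu}_i,\bI_d)$, and an adversary may hand it out regardless of which $\bm{\mu}_i$ is the ground truth), (ii) $\tv(N(\bm{\mu}_i,\bI_d),N(\bm{\mu}_j,\bI_d)) > 2\eps$ for all $i\neq j$, and (iii) $\kl(N(\bm{\mu}_i,\bI_d),N(\bm{\mu}_j,\bI_d)) = \tfrac12\|\bm{\mu}_i-\bm{\mu}_j\|_2^2 \le \kappa$ with $\kappa = O(\eps^2)$ (using \cref{lem:kl-known-fact}); then \cref{lem:cover_fano} gives a lower bound of $\Omega\!\left(\tfrac{\log M}{\eps^2\log(1/\eps)}\right)$, and it remains only to arrange $\log M = \Omega(\min\{d,\Delta^2/\eps^2\})$.

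For the construction, set $k = \max\{1,\ \min\{\lfloor d/2\rfloor,\ \lfloor c\,\Delta^2/\eps^2\rfloor\}\}$ for a small absolute constant $c$. By the Gilbert--Varshamov bound there is a binary code $\cC \subseteq \{0,1\}^k$ with $|\cC| = M \ge 2^{\Omega(k)}$ and minimum Hamming distance at least $k/4$ (and, trivially, maximum Hamming distance at most $k$). Map each codeword $\bm{c}\in\cC$ to $\bm{v}(\bm{c})\in\R^k$ with $\bm{v}(\bm{c})_\ell = \gamma(2\bm{c}_\ell-1)\in\{\pm\gamma\}$, where $\gamma$ is chosen so that the minimum pairwise separation $\|\bm{v}(\bm{c})-\bm{v}(\bm{c}')\|_2 = 2\gamma\sqrt{d_H(\bm{c},\bm{c}')}\ge \gamma\sqrt{k}$ equals a suitable constant multiple of $\eps$ (large enough to clear the TV threshold: the univariate Gaussian CDF $\Phi$ gives $\tv = 2\Phi(\rho/2)-1 \approx \sqrt{2/\pi}\,\rho/2$ for small $\rho$, so $\rho = \Theta(\eps)$ with a big enough constant forces $\tv > 2\eps$, which is exactly where the hypothesis $\eps\le\tfrac1{400}$ is used). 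Each $\bm{v}(\bm{c})$ then has $\|\bm{v}(\bm{c})\|_1 = k\gamma = \Theta(\eps\sqrt k)\le \Delta$ (this is the role of the constant $c$), and for $\bm{c}\neq\bm{c}'$ all $\ell_2$-separations lie in the narrow band $[\gamma\sqrt k,\ 2\gamma\sqrt k] = [\Theta(\eps),\Theta(\eps)]$, so (ii) and (iii) hold with $\kappa = O(\eps^2)$. Finally embed $\bm{v}(\bm{c})$ into the first $k$ coordinates of $\R^d$ and append a single \emph{fixed} padding vector supported on coordinates $k+1,\dots,d$ (nonempty since $k\le d/2$) of $\ell_1$-norm $\Delta - k\gamma \ge 0$; the padding is identical for all $i$, hence cancels in every difference $\bm{\mu}_i-\bm{\mu}_j$ while making $\|\bm{\mu}_i\|_1 = \Delta$ exactly, giving (i).

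It remains to read off the bound in the two regimes. If $\Delta \lesssim \eps\sqrt d$ then $k = \Theta(\Delta^2/\eps^2)$, so $\log M = \Omega(\Delta^2/\eps^2)$ and \cref{lem:cover_fano} yields $\wt\Omega(\Delta^2/\eps^4)$; if $\Delta \gtrsim \eps\sqrt d$ then $k = \lfloor d/2\rfloor$ (and $k\gamma = \Theta(\eps\sqrt d)\le\Delta$ still holds), so $\log M = \Omega(d)$ and the bound is $\wt\Omega(d/\eps^2)$. In both cases this is $\wt\Omega\big(\min\{d,\ \Delta^2/\eps^2\}/\eps^2\big)$, matching the claimed expression; in particular, whenever $\Delta = \|\bm{\mu}-\wt{\bm{\mu}}\|_1 \ge \eps\sqrt d$ the bound is $\wt\Omega(d/\eps^2)$, as asserted. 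The only delicate points are (a) fixing the absolute constants so that the narrow $\ell_2$-band simultaneously exceeds the TV threshold $2\eps$ (invoking $\eps\le\tfrac1{400}$ to control the higher-order terms of $\Phi$) while keeping $\kappa = O(\eps^2)$, and (b) tuning $c$ so that $k\gamma\le\Delta$; I expect this constant-chasing in the TV-versus-$\ell_2$ comparison to be the main, if routine, obstacle, with the rest being the standard Gilbert--Varshamov-plus-Fano pipeline.
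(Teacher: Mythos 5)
Your proposal is correct and follows the same pipeline as the paper's proof: reduce to $\wt{\bm{\mu}}=\bm{0}$, build a cover of means at a fixed $\ell_1$-distance from the advice by embedding a Gilbert--Varshamov code of length $k\approx\min\{d,\Delta^2/\eps^2\}$ as $\pm$-signs in the first $k$ coordinates, check pairwise TV $\gtrsim\eps$ and pairwise KL $\lesssim\eps^2$, and invoke \cref{lem:cover_fano}. The one substantive difference is how the $\ell_1$ budget is met: the paper scales the active coordinates to $\pm\lambda/k$ so the code itself carries all of the $\ell_1$ mass, which ties the $\ell_2$ separation to $\lambda/\sqrt{k}$ and forces $k=\lceil\lambda^2/\eps^2\rceil$; you instead fix the entry magnitude at $\gamma=\Theta(\eps/\sqrt{k})$ and top up the $\ell_1$ norm to exactly $\Delta$ with a fixed padding vector on the unused coordinates, which cancels in every pairwise difference. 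This decoupling is a genuine (if small) improvement: in the regime $\Delta\geq\eps\sqrt{d}$ the paper's scaling would require capping $k$ near $d$ and would then yield separations of order $\Delta/\sqrt{d}\gg\eps$, breaking the KL bound, whereas your padding keeps all separations in $[\Theta(\eps),\Theta(\eps)]$ and makes the ``in particular'' clause ($\wt{\Omega}(d/\eps^2)$ when $\Delta\geq\eps\sqrt{d}$) rigorous with no extra work. Note also that what both arguments actually establish is $\wt{\Omega}\bigl(\min\{d,\Delta^2/\eps^2\}/\eps^2\bigr)$, i.e.\ the form appearing in \cref{thm:mainresultlowerbound}, rather than the ``$\max$'' as literally written in the lemma (which cannot hold for, say, $\Delta=0$); your two-regime reading of the bound is the right one. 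The remaining constant-chasing you flag (choosing $\rho=\Theta(\eps)$ large enough that the Gaussian TV formula exceeds $2\eps$ while KL stays $O(\eps^2)$, and choosing $c$ so $k\gamma\leq\Delta$) is routine and mirrors the paper's $\eps'=\eps/400$ rescaling against the $[\|\cdot\|_2/200,\|\cdot\|_2/2]$ TV bounds from \cite{devroye2018total}.
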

\begin{proof}
Without loss of generality, we can consider $\wt{\bm{\mu}} = 0$ since we can easily sample from $N(\bm{\mu} - \wt{\bm{\mu}}, \bI_d)$ by sampling from $N(\bm{\mu},  \bI_d)$ and subtracting $\wt{\bm{\mu}}$ from each sample. Let $\widehat{\bm{\mu}}$ denote the mean-estimate produced by the learning algorithm.
Note that the TV distance between $N(\bm{\mu}, \bI_d)$ and $N(\bm{\mu}^\prime, \bI_d)$ is $\Theta(\|\bm{\mu} - \bm{\mu}^\prime\|_2)$, specifically in $\left[\frac{\|\bm{\mu} - \bm{\mu}^\prime\|_2}{200}, \frac{\|\bm{\mu} - \bm{\mu}^\prime\|_2}{2}\right]$, by Theorem 1.2 and Proposition 2.1 of \cite{devroye2018total}, as long as $\|\bm{\mu} - \bm{\mu}^\prime\|_2 \leq 1$.
Also, we have $\kl(N(\bm{\mu}, \bI_d), N(\bm{\mu}^\prime, \bI_d)) = \frac{1}{2}\|\bm{\mu} - \bm{\mu}^\prime\|_2^2$.

Now, for an arbitrary $\eps$ sufficiently small, we want to choose a large $M$ such that we can show the existence of $M$ vectors $\bm{\mu}_1,\ldots,\bm{\mu}_M \in \R^{d}$ with
\begin{equation}
\label{eqn:lb_mean_cover_cond}
\|\bm{\mu}_i - \wt{\bm{\mu}}\|_1 = \lambda  \mbox{ and } \|\bm{\mu}_i - \bm{\mu}_j\|_2 \in \left[\eps, 2\eps\right] \mbox{ for each } i \neq j \in [M].
\end{equation}
As long as $\eps \leq \frac{1}{2}$, \cref{eqn:lb_mean_cover_cond} would imply that (i) the pairwise total variation distance is at least $\frac{\eps}{200}$, and (ii) the KL divergence is at most $2 \eps^2$ (in both directions).
Suppose we take $\eps^\prime = \frac{\eps}{400}$, so that the pairwise total variation is at least $2\eps^\prime = \frac{\eps}{200}$ and the pairwise KL divergence is at most $\kappa(\eps^\prime) = 2\eps^2$ for $\kappa(x) = 2 \cdot 400^2 \cdot x^2$.
Then, \cref{lem:cover_fano} will give a sample complexity lower bound of $\Omega\left(\frac{\log M}{\kappa(\eps^\prime) \log(1/\eps^\prime)}\right) = \Omega\left(\frac{\log M}{\eps^2 \log(1/\eps)}\right)$ for learning in total variation up to $\eps^\prime$ given advice.

Our randomized construction of the covering set is as follows: 
Choose a $0 < k < d$ to be fixed later.
The first $k$ coordinates of each $\bm{\mu}_i$ are set to $\frac{\lambda}{k} \cdot \bm{v}_i$ for some $\bm{v}_i \in \{\pm 1\}^k$ and the remaining $d-k$ coordinates are set identically to $0$. 
Then, by construction, $\|\bm{\mu}_i - \wt{\bm{\mu}}\|_1 = \|\bm{\mu}_i\|_1 = k\left(\frac{\lambda}{k}\right) = \lambda$ for each $\bm{\mu}_i$, and $\|\bm{\mu}_i - \bm{\mu}_j\|_2 = \left(2\frac{\lambda}{k}\right) \sqrt{\|\bm{v}_i - \bm{v}_j\|_0}$.

By the Gilbert-Varshamov bound, for any $k > 4$, there exists a code $C \subseteq \{0,1\}^k$ with pairwise Hamming distance $\in [k/4, k]$ such that $|C| \geq \frac{2^{k-1}}{\sum_{i=0}^{k/4-1} \binom{k}{i}} \geq \frac{2^{k-1}}{\left(\frac{4ek}{k}\right)^{k/4}} \geq 2^{\Omega(k)}$ (the second inequality via Stirling's approximation). 
We can thus show the existence of our $\{\bm{v}_1,\ldots,\bm{v}_M\} \subseteq \{\pm 1\}^{k}$ by taking $M = 2^{\Omega(k)}$ to get the code $C$ as above and applying the transformation $(x_1,\ldots,x_k) \mapsto ((-1)^{x_1},\ldots,(-1)^{x_k})$ to each binary codeword in $C$.

Thus, from the above construction, we will have $\|\bm{\mu}_i - \bm{\mu}_j\|_2 \in \left[\frac{\lambda}{\sqrt{k}}, \frac{2\lambda}{\sqrt{k}}\right]$ for each $i \neq j \in [M]$.
To satisfy \cref{eqn:lb_mean_cover_cond}, we can choose $k = \left\lceil\frac{\lambda^2}{\eps^2}\right\rceil$.
By the above discussion, this gives us a sample complexity lower bound of $\Omega\left(\frac{\lambda^2}{\eps^4\log(1/\eps)}\right)$ for learning Gaussian means given advice $\wt{\bm{\mu}}$ with $\|\bm{\mu} - \wt{\bm{\mu}}\|_1 = \lambda$.
\end{proof}

\begin{lemma}
\label{lem:implymainresultlowerboundcovariance}
Suppose we are given advice $\wt{\bm{\Sigma}} \in \R^{d \times d}$ which is symmetric and positive-definite, and sample access to $N(\bm{0}, \bm{\Sigma})$ for some unknown symmetric positive-definite $\bm{\Sigma} \in \R^{d \times d}$, with only the constraint that $\|\vec\left(\wt{\bm{\Sigma}}^{-\frac{1}{2}} \bm{\Sigma} \wt{\bm{\Sigma}}^{-\frac{1}{2}} -\bI_d\right)\|_1 \leq \Delta$. Then, any algorithm that $(\eps,\frac{2}{3})$-PAC learns $N(\bm{0}, \bm{\Sigma})$ in total variation requires {$\widetilde{\Omega}\left(\min\left(\frac{d^2}{\eps^2}, \frac{\Delta^2}{\eps^4}\right)\right)$} samples.
\end{lemma}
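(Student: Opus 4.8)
The plan is to mirror the proof of \cref{lem:implymainresultlowerbound}: exhibit a large family of centred Gaussians whose pairwise KL divergence is $O(\eps^2)$ and pairwise TV distance exceeds $2\eps$, all of them simultaneously consistent with one fixed advice matrix, and then apply \cref{lem:cover_fano}. First I would reduce to $\wt{\bm\Sigma} = \bI_d$: premultiplying each sample from $N(\bm 0, \bm\Sigma)$ by $\wt{\bm\Sigma}^{-1/2}$ produces a sample from $N(\bm 0, \wt{\bm\Sigma}^{-1/2}\bm\Sigma\wt{\bm\Sigma}^{-1/2})$; this transformation is a bijection, so it preserves TV distance and hence the learning task, and the advice-quality hypothesis becomes exactly $\|\vec(\bm\Sigma' - \bI_d)\|_1 \le \Delta$ for the transformed covariance $\bm\Sigma'$. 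So it suffices to build symmetric positive-definite $\bm\Sigma_1,\dots,\bm\Sigma_M$, each with $\|\vec(\bm\Sigma_i - \bI_d)\|_1 \le \Delta$, with pairwise $\kl \le O(\eps^2)$ and pairwise $\tv > 2\eps$, and with $\log M = \Omega(\min\{d^2, \Delta^2/\eps^2\})$; then \cref{lem:cover_fano} with $\kappa(\eps) = O(\eps^2)$ (after the usual rescaling of $\eps$ by a constant, as in the proof of \cref{lem:implymainresultlowerbound}) yields the claimed $\wt\Omega(\min\{d^2/\eps^2, \Delta^2/\eps^4\})$ bound, since the adversary may present $\bI_d$ as the advice no matter which $\bm\Sigma_i$ is the ground truth.

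For the construction I would adapt the covariance packing of \cite{ashtiani2020gaussian}, but confine the perturbation to a principal $k\times k$ block, taking $k = \Theta(\min\{d, \Delta/\eps\})$. Fix $\bD_k \in \R^{k\times k}$ diagonal with $k/2$ entries $1+c$ and $k/2$ entries $1-c$, where $c = \Theta(\eps/\sqrt k)$, and set $\bD = \bD_k \oplus \bI_{d-k}$. Take orthogonal matrices $\bV_1,\dots,\bV_M \in O(k)$ with $M = 2^{\Omega(k^2)}$ that are ``well spread'', meaning $\|\bV_i \bD_k \bV_i^\top - \bV_j \bD_k \bV_j^\top\|_F \in [c_1 c\sqrt k,\, c_2 c\sqrt k]$ for all $i\ne j$; such a packing exists because the orthogonal group has metric entropy $\Theta(k^2)$, exactly as in the lower bound of \cite{ashtiani2020gaussian}. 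Define $\bm\Sigma_i = (\bV_i \oplus \bI_{d-k})\,\bD\,(\bV_i \oplus \bI_{d-k})^\top$, which is $\bI_d$ outside the top-left $k\times k$ block and has eigenvalues in $\{1, 1\pm c\}$, hence is positive-definite since $c < 1$.

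Then I would check the three requirements. Advice quality: $\bm\Sigma_i - \bI_d$ is supported on the $k\times k$ block, where it equals $\bV_i(\bD_k - \bI_k)\bV_i^\top$, so $\|\vec(\bm\Sigma_i - \bI_d)\|_1 \le k\,\|\bm\Sigma_i - \bI_d\|_F = k\,\|\bD_k - \bI_k\|_F = k\cdot c\sqrt k = \Theta(\eps k) \le \Delta$ by the choice of $k$. KL: by \cref{lem:kl-known-fact}, $\kl(N(\bm 0,\bm\Sigma_i), N(\bm 0,\bm\Sigma_j)) \le \tfrac12\|\bm\Sigma_j^{-1/2}\bm\Sigma_i\bm\Sigma_j^{-1/2} - \bI_d\|_F^2$, and since $\bm\Sigma_j$ is $O(1)$-conditioned a first-order expansion in $c$ shows this Frobenius norm is $\Theta(c\sqrt k) = \Theta(\eps)$, so the pairwise KL is $O(\eps^2)$. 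TV: I would use the sharp two-sided estimate $\tv(N(\bm 0,\bm\Sigma_i), N(\bm 0,\bm\Sigma_j)) = \Theta(\min\{1, \|\bm\Sigma_j^{-1/2}\bm\Sigma_i\bm\Sigma_j^{-1/2} - \bI_d\|_F\})$ from \cite{devroye2018total}, which by the same expansion is $\Theta(c\sqrt k) = \Theta(\eps)$; calibrating the absolute constants in $c$ and in the packing separation makes this exceed $2\eps$ while keeping KL $\le O(\eps^2)$. Finally $\log M = \Omega(k^2) = \Omega(\min\{d^2, \Delta^2/\eps^2\})$, and \cref{lem:cover_fano} closes the argument.

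I expect the main obstacle to be the construction and two-sided analysis of the orthogonal packing $\{\bV_i\}$: controlling $\|\bV_i\bD_k\bV_i^\top - \bV_j\bD_k\bV_j^\top\|_F$ (and its conjugated analogue $\|\bm\Sigma_j^{-1/2}\bm\Sigma_i\bm\Sigma_j^{-1/2} - \bI_d\|_F$) from both above and below, simultaneously for all $2^{\Omega(k^2)}$ pairs, which is precisely the technical core borrowed from \cite{ashtiani2020gaussian}; the remaining work is calibration bookkeeping. A minor point is the range of validity ($\eps$ below an absolute constant so the TV-versus-Frobenius estimate applies, and $\Delta \gtrsim \eps$ so that $k \gtrsim 1$ and the bound is non-vacuous). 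If one prefers to avoid \cite{ashtiani2020gaussian} entirely, an alternative is to take $\bm\Sigma_i - \bI_d$ supported on $s = \Theta(\min\{d^2, \Delta^2/\eps^2\})$ fixed off-diagonal symmetric position-pairs with $\pm c$ entries ($c \asymp \eps/\sqrt s$) chosen from a Gilbert--Varshamov code, which reproduces the mean-case argument almost verbatim.
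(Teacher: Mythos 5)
Your proposal is correct in substance and follows the same overall skeleton as the paper (reduce to $\wt{\bm{\Sigma}} = \bI_d$, fix the advice, build a packing of covariances all satisfying the advice-quality bound, control pairwise KL and TV via Frobenius norms and the bounds of \cite{devroye2018total}, then invoke \cref{lem:cover_fano} with a constant rescaling of $\eps$), but your cover construction is genuinely different. The paper spreads the perturbation over the whole matrix: it writes $\bm{\Sigma}_i$ as a block-diagonal matrix with $s = d/p$ blocks of size $p = \min\bigl(d, \tfrac{10}{d}\lceil \Delta^2/\eps^2\rceil\bigr)$, each block of the form $\bI_p + \mu\, \bU \bU^\top$ with the rank-$(p/10)$ slices of \cref{lem:good_orthonormal_matrix_slices}, and then needs an extra Gilbert--Varshamov-type code over tuples (\cref{lem:gilbert_varshamov_type_sets}) to get $\log M = \Omega(p^2 s)$ while keeping the blocks pairwise distinguishable. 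You instead confine the entire perturbation to a single $k\times k$ principal block with $k = \Theta(\min\{d,\Delta/\eps\})$ and take one packing of conjugates $\bV \bD_k \bV^\top$ of size $2^{\Omega(k^2)}$; this removes the tuple-code layer and the cross-block bookkeeping, at the price of larger per-entry perturbations (which is harmless here, since the advice is measured in entrywise $\ell_1$ and your $k\,c\sqrt{k} = \Theta(\eps k)\le\Delta$ computation is exactly right). Two small points to tighten: the upper bound $\|\bV_i\bD_k\bV_i^\top - \bV_j\bD_k\bV_j^\top\|_F \le 2c\sqrt{k}$ is free by the triangle inequality, so the packing only needs one-sided (lower) separation; and "metric entropy of $O(k)$" is not quite the right justification, since $\bV\mapsto \bV\bD_k\bV^\top$ collapses the stabilizer---what you need is a packing of the conjugation orbit, equivalently of the Grassmannian of $(k/2)$-dimensional eigenspaces (note $\bV\bD_k\bV^\top - \bI_k = c(2\bP_{\bV}-\bI_k)$ for the eigenprojection $\bP_{\bV}$), which has dimension $\Theta(k^2)$ and for which the required $2^{\Omega(k^2)}$-packing at constant relative separation is standard and is essentially what \cref{lem:good_orthonormal_matrix_slices} supplies. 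With that substitution, and the usual caveats $\eps$ below an absolute constant and $\Delta\gtrsim\eps$, your argument goes through and yields the same $\wt{\Omega}\bigl(\min(d^2/\eps^2,\Delta^2/\eps^4)\bigr)$ bound.
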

\begin{proof}
Without loss of generality, we can assume $\wt{\bm{\Sigma}} =\bI_d$ since, we can transform the input samples from $N(\bm{0}, \bm{\Sigma})$ as $\bm{x} \mapsto \wt{\bm{\Sigma}}^{-\frac{1}{2}} \bm{x}$ to get samples from $N\left(\bm{0}, \wt{\bm{\Sigma}}^{-\frac{1}{2}} \bm{\Sigma} \wt{\bm{\Sigma}}^{-\frac{1}{2}}\right)$, so that the advice quality in the transformed space (with advice taken to be $\bI_d$) would be $\|\vec\left(\bI_d \left(\wt{\bm{\Sigma}}^{-\frac{1}{2}} \bm{\Sigma} \wt{\bm{\Sigma}}^{-\frac{1}{2}}\right)\bI_d -\bI_d\right)\|_1$, which is equal to the original advice quality $\|\vec\left(\wt{\bm{\Sigma}}^{-\frac{1}{2}} \bm{\Sigma} \wt{\bm{\Sigma}}^{-\frac{1}{2}} -\bI_d\right)\|_1$.

To use \cref{lem:cover_fano}, we need to construct a set of $M$ distributions $f_1,\ldots,f_M$ with $f_i \triangleq N(\bm{0},\bm{\Sigma}_i)$ such that
\begin{enumerate}[(i)]
\item Advice quality $\|\vec\left(\bm{\Sigma}_i -\bI_d\right)\|_1 \leq \Delta$ for each $i \in [M]$,
\item the pairwise KL divergence $\kl(f_i \| f_j) \leq  \cO(\eps^2)$,
\item the the pairwise TV distance $\tv(f_i, f_j) \geq \Omega(\eps)$, and
\item $\log M \geq \Omega\left(\min\left(d^2 ,\frac{\Delta^2}{\eps^2}\right)\right)$.
\end{enumerate}

If we can construct such a family, \cref{lem:cover_fano} would give us a sample complexity lower bound of
\[
\Omega\left(\min\left(\frac{d^2}{\eps^2\log(1/\eps)}, \frac{\Delta^2}{\eps^4\log(1/\eps)}\right)\right)
\]
to $(\eps, 2/3)$-PAC learn the true disitribution, even given advice with quality $\leq \Delta$.

The following claim is a Gilbert-Varshamov like bound on the existence of large sets of $s$-tuples of $[N]$ with pairwise distance $\geq (1-\frac{1}{40})s$.

\begin{lemma}\label{lem:gilbert_varshamov_type_sets}
For any $N \geq 200$ and $s > 0$, there exists $A = \{A_1,\ldots,A_M\} \subseteq [N]^s$ with $M \geq N^{\Omega(s)}$ such that for all pairs $i \neq j \in [M]$, $A_i$ and $A_j$ agree on $\leq s/40$ coordinates. 
\end{lemma}
And the following claim follows from \cite{ashtiani2020gaussian}, Lemma 6.4.

\begin{lemma}\label{lem:good_orthonormal_matrix_slices}For $p \geq 10$, there exist $N \geq 2^{\Omega(p^2)}$ matrices $\bU_1,\ldots,\bU_N \in \R^{p \times (p/10)}$ such that the columns of each $\bU_i$ are the first $p \times 10$ columns of a $p \times p$ orthogonal matrix, and for each pair $i \neq j \in [N]$, $\|\bU_i^\top \bU_j\|_F^2 \leq p/20$.
\end{lemma}

Let $d$ be a positive integer such that $d$ is a multiple of $10$, and either $d^2$ is a multiple of $10\left\lceil \frac{\Delta^2}{\eps^2} \right\rceil$ or $d^2 < 10\left\lceil \frac{\Delta^2}{\eps^2} \right\rceil$.  For every $\eps > 0$ and $\Delta \geq \eps$, there exist infinitely many choices of $d$ that satisfy these criteria. Take $p = \min\left(d, \frac{10}{d}\left\lceil \frac{\Delta^2}{\eps^2} \right\rceil\right)$. Then, we will have $d = s \cdot p$  for some integer $s \geq 1$, and $p$ will be a multiple of $10$. Also take $\mu = \frac{\Delta}{d} \sqrt\frac{10}{p} \lesssim \eps/\sqrt{d}$ (using $p \leq (10/d)\lceil\Delta^2/\eps^2\rceil$).

 Let $\bU_1, \ldots, \bU_N \in \R^{p \times (p/10)}$ be the $N \geq 2^{\Omega(p^2)}$ matrices as in \Cref{lem:good_orthonormal_matrix_slices}.

Also let $A_1,\ldots,A_M$ denote the $M \geq 2^{\Omega(p^2 s)} = 2^{\Omega\left(\min\left(d^2, \Delta^2/\eps^2\right)\right)}$ tuples in $[N]^s$ which agree pairwise only on $\leq s/40$ coordinates as guaranteed by \Cref{lem:gilbert_varshamov_type_sets}.

Then, we use the construction in Theorem 6.3 of \cite{ashtiani2020gaussian} block-wise to construct each covariance matrix $\bm{\Sigma}_i, i \in [M]$. We construct each $\bm{\Sigma}_i = \begin{bmatrix}\bm{\Sigma}_{i,1} & 0 & \cdots & 0\\ 0 & \bm{\Sigma}_{i,2} & \cdots & 0\\0 & 0 & \cdots & \bm{\Sigma}_{i,s}\end{bmatrix} \in \R^{d \times d}$, where each $\bm{\Sigma}_{i,j} = \bI_p + \mu \bU_{A_i(j)} \bU_{A_i(j)}^\top \in \R^{p \times p}$.

By \Cref{lem:good_orthonormal_matrix_slices}, each $\bm{\Sigma}_{i,j} - \bI_p = \mu \bU_{A_i(j)} \bU_{A_i(j)}^\top$ has $p/10$ eigenvalues which are equal to $\mu$ and the remaining $p-p/10$ eigenvalues equal to $0$. Thus, we have $\|\bm{\Sigma}_i - \bI_d\|_1 = \sum_{j=1}^{s} \|\bm{\Sigma}_{i,j} - I_p\|_1$ (decomposing the sum in the $\ell_1$ norm definition)
$\leq \sum_{j=1}^{s} p \cdot \|\bm{\Sigma}_{i,j} - \bI_p\|_{\textrm{F}}$ (by Cauchy-Schwarz)
$\leq s \cdot p \cdot \sqrt{\frac{p}{10} \mu^2}$(since Frobenius norm = Schatten-$2$ norm)
$\leq d \mu \sqrt{p/10} \leq \Delta$ (substituting $s p  = d$ and $\mu = (\Delta/d) \sqrt{10/p}$).

We have $\bm{\Sigma}_{i,j}^{-1} =\bI_p - \frac{\mu}{1 + \mu} \bU_{A_i(j)} \bU_{A_i(j)}^\top$ by construction of $\bU_1,\ldots,\bU_N$.
By a similar calculation as in Theorem 6.3 of \cite{ashtiani2020gaussian}, we have $\kl(f_i, f_j) = \frac{1}{2} \Tr(\bm{\Sigma}_i^{-1} \bm{\Sigma}_j - \bI_d) = \sum_{r=1}^{s} \frac{1}{2}\Tr(\bm{\Sigma}_{i,r}^{-1} \bm{\Sigma}_{j,r} - \bI_p) \leq s \mu^{2} \frac{p}{10} \leq \frac{d}{10} \mu^2 \leq \cO(\eps^2)$ (using $\mu \lesssim \eps/\sqrt{d}$).

By using a similar argument as in Lemma 6.6 of \cite{ashtiani2020gaussian}, we can lower bound the pairwise TV distance. By Theorem 1.1 in \cite{devroye2018total}, we have $\tv(f_i, f_j) \geq \Theta\left(\min\{1, \|\bm{\Sigma}_i^{-1/2} \bm{\Sigma}_j \bm{\Sigma}_i^{-1/2} - \bI_d\|_{\rm F}\}\right)$. Since $\sigma_{\rm min}(\bm{\Sigma}_i^{-1/2}) = (1 + \mu)^{-1/2} = \Theta(1)$ when $\eps \leq \sqrt{d}$, we have $\tv(f_i, f_j) \geq \Omega(\eps)$ when $\|\bm{\Sigma}_i - \bm{\Sigma}_j\|_{\rm F} \geq \Omega(\eps)$. We then have
\begin{equation*}
\begin{aligned}
\|\bm{\Sigma}_i - \bm{\Sigma}_j\|_{\rm F}^2 &= \sum_{r=1}^{s} \|\bm{\Sigma}_{i,r} - \bm{\Sigma}_{j,r}\|_{\rm F}^2 = \sum_{r=1}^{s} \mu^2 \|\bU_{A_i(r)} \bU_{A_i(r)}^\top - \bU_{A_j(r)} \bU_{A_j(r)}^\top\|_{\rm F}^2\\
&= \sum_{r=1}^{s} \mu^2 \Tr\left(\left(\bU_{A_i(r)} \bU_{A_i(r)}^\top - \bU_{A_j(r)} \bU_{A_j(r)}^\top\right) \left(\bU_{A_i(r)} \bU_{A_i(r)}^\top - \bU_{A_j(r)} \bU_{A_j(r)}^\top\right)\right)\\
&= \sum_{r=1}^{s} \mu^2 \left(\Tr(\bU_{A_i(r)} U_{A_i(r)}^\top) + \Tr(\bU_{A_j(r)} U_{A_j(r)}^\top) - 2 \|\bU_{A_i(r)}^\top \bU_{A_j(r)}\|_{\rm F}^2\right)\\
&\hspace*{10pt}(\mbox{using $\bU_{A_i(r)}^\top \bU_{A_i(r)} = \bI_{p/10}$, cyclic property of trace, and $\|A\|_{\rm F}^2 = \Tr(A^\top A)$})\\
&= \cdot \frac{2 \mu^2 d}{10} - 2 \mu^2 \sum_{r=1}^{s}  \|\bU_{A_i(r)}^\top \bU_{A_j(r)}\|_{\rm F}^2\,\,(\mbox{using $\Tr(\bU_n \bU_n^\top) = \tfrac{p}{10} \,\forall\,n \in [N],$ $d = sp$})\\
&\geq \frac{2\mu^2 d}{10} - 2\mu^2 \left(\#\{A_i(r) = A_j(r)\} \frac{p}{10} + \#\{A_i(r) \neq A_j(r)\} \frac{p}{20}\right)\\
&\hspace*{10pt}\mbox{(using $\bU_n^\top \bU_n = \bI_{p/10}$ and $\|\bU_m^\top \bU_n\|_{\rm F}^2 \leq p/20$ for $m \neq n$ by \Cref{lem:good_orthonormal_matrix_slices})}\\
&\geq  \frac{2\mu^2 d}{10} - 2\mu^2\left(\frac{sp}{40} - \frac{sp}{20}\right) \geq \frac{9\mu^2d}{40} \geq \Omega(\eps^2)\,\,(\mbox{using \Cref{lem:gilbert_varshamov_type_sets}}).
\end{aligned}
\end{equation*}
\end{proof}

\section{Experiments}
\label{sec:experiments}

Here, we explore the sample complexity gains in the identity covariance setting when one is given high quality advice, specifically the benefits of performing the optimization in line 6 of \cref{alg:testandoptimizemean} versus returning 
the empirical mean as in line 9.
As such, we do \emph{not} invoke \textsc{ApproxL1} but instead explore how to $\|\bm{\mu} - \wh{\bm{\mu}}_{\textsc{ALG}}\|_2$ behaves as a function of $\|\bm{\mu} - \wh{\bm{\mu}}\|_1$ and number of samples, where \textsc{ALG} is either our \textsc{TestAndOptimize} approach or simply computing the empirical mean.
Our simple script is given in \cref{sec:appendix-python-code}.

We perform two experiments on multivariate Gaussians of dimension $d = 500$ while varying two parameters: sparsity $s \in [d]$ and advice quality $q \in \R_{\geq 0}$.
In both experiments, the difference vector $\bm{\mu} - \wt{\bm{\mu}} \in \R^d$ is generated with random $\pm q/s$ values in the first $s$ coordinates and zeros in the remaining $d-s$ coordinates.
In the first experiment (see \cref{fig:exp1}), we fix $q = 50$ and vary $s \in \{100, 200, 300\}$.
In the second experiment (see \cref{fig:exp2}), we fix $s = 100$ and vary $q \in \{0.1, 20, 30\}$.
In both experiments, we see that \textsc{TestAndOptimize} beats the empirical mean estimate in terms of incurred $\ell_2$ error (which translate directly to $\tv$), with the diminishing benefits as $q$ or $s$ increases.
While running our experiments, we observed an interesting phenomenon: the rate of improvement does not worsen as $\ell_1$ increases if we fixed the $\ell_0$ sparsity; see \cref{fig:exp3}.
As such, it would be interesting to show theoretical guarantees with advice error in the $\ell_0$-norm.

For computational efficiency, we solve the LASSO optimization in its Lagrangian form
\[
\wh{\bm{\mu}} = \argmin_{\bm{\beta} \in \R^d} \frac{1}{n} \sum_{i=1}^n \| \by_i - \bm{\beta} \|_2^2 + \lambda \|\bm{\beta}\|_1
\]
using the \texttt{LassoLarsCV} method in \texttt{scikit-learn}, instead of the equivalent penalized form.
The value of the hyperparameter $\lambda$ is chosen using 5-fold cross-validation.

\begin{figure}[htb]
\centering
\begin{subfigure}[b]{0.3\textwidth}
    \centering
    \includegraphics[width=\textwidth]{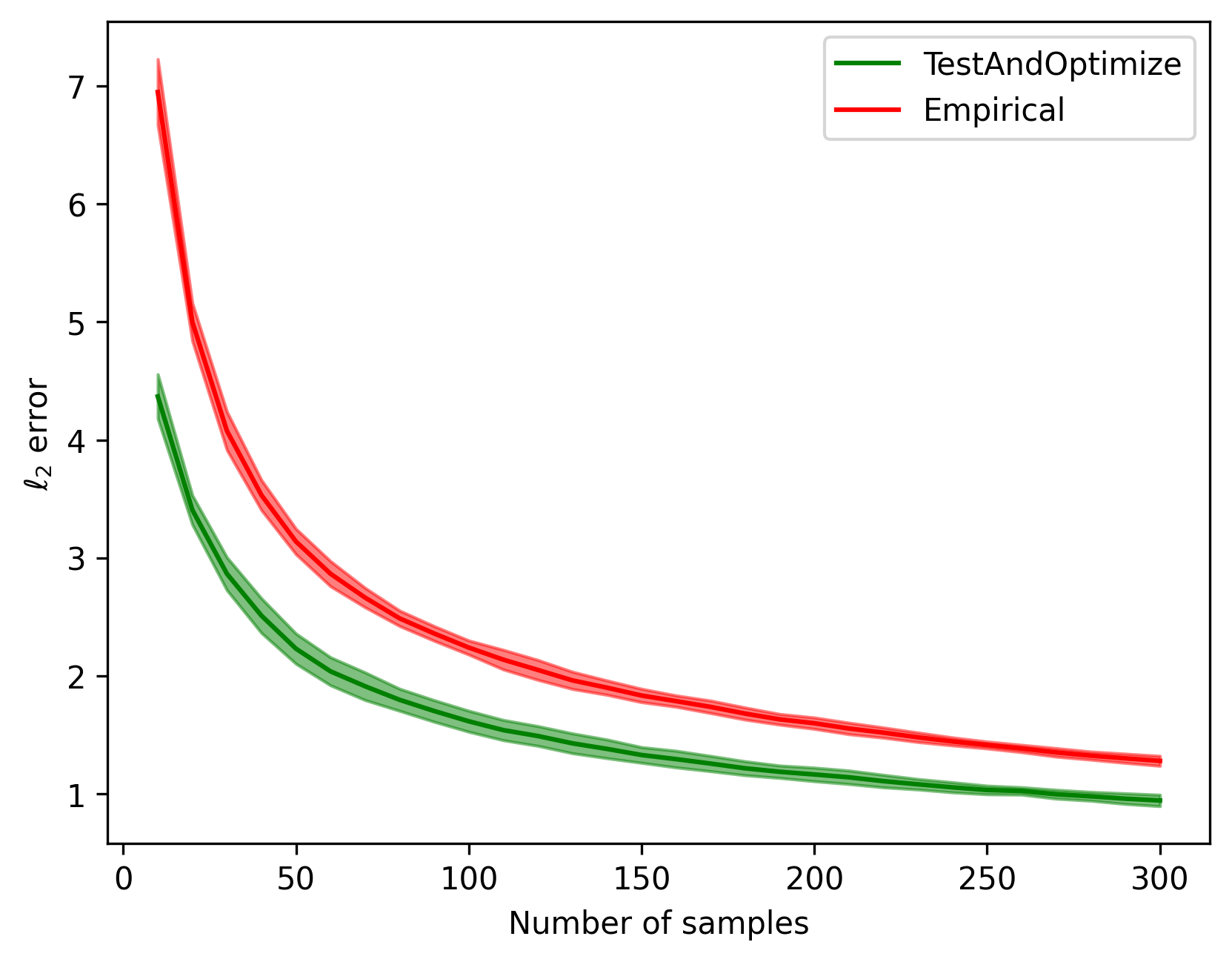}
\end{subfigure}
\hfill
\begin{subfigure}[b]{0.3\textwidth}
    \centering
    \includegraphics[width=\textwidth]{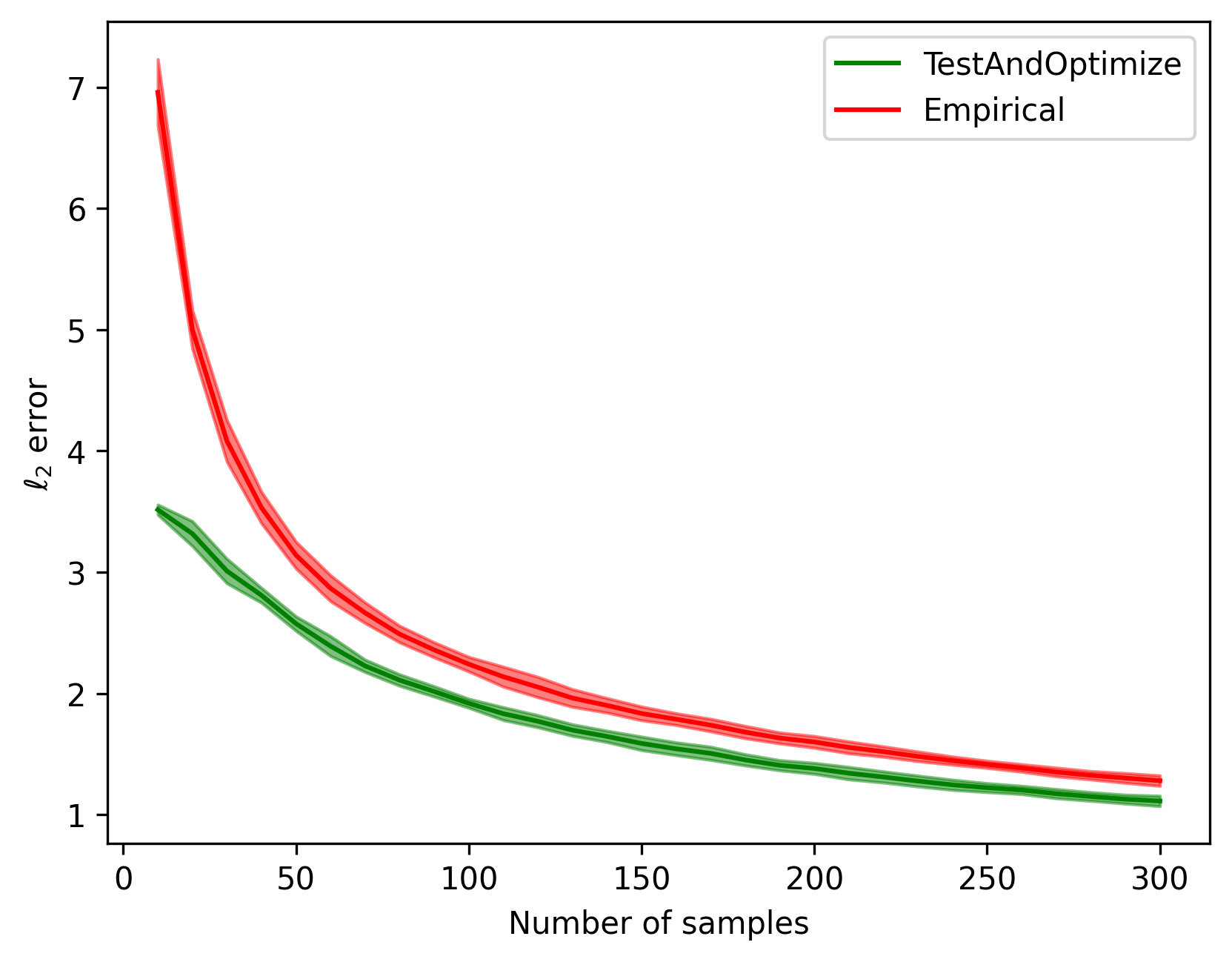}
\end{subfigure}
\hfill
\begin{subfigure}[b]{0.3\textwidth}
    \centering
    \includegraphics[width=\textwidth]{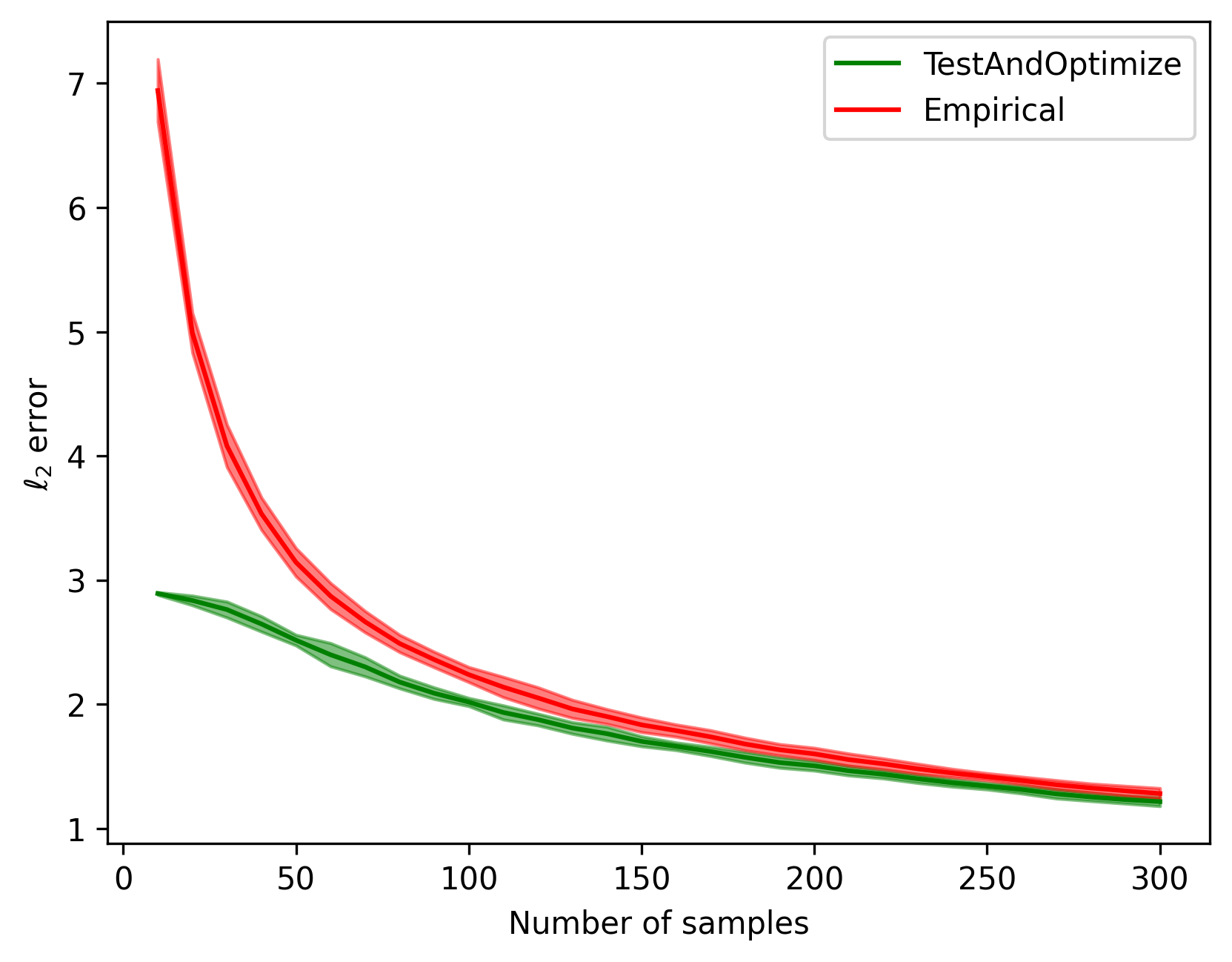}
\end{subfigure}
\caption{Here, $d = 500$, $s = \{100, 200, 300\}$, and $q = \|\bm{\mu} - \wt{\bm{\mu}} \|_1 = 50$. Error bars show standard deviation over $10$ runs.}
\label{fig:exp1}
\end{figure}

\begin{figure}[htb]
\centering
\begin{subfigure}[b]{0.3\textwidth}
    \centering
    \includegraphics[width=\textwidth]{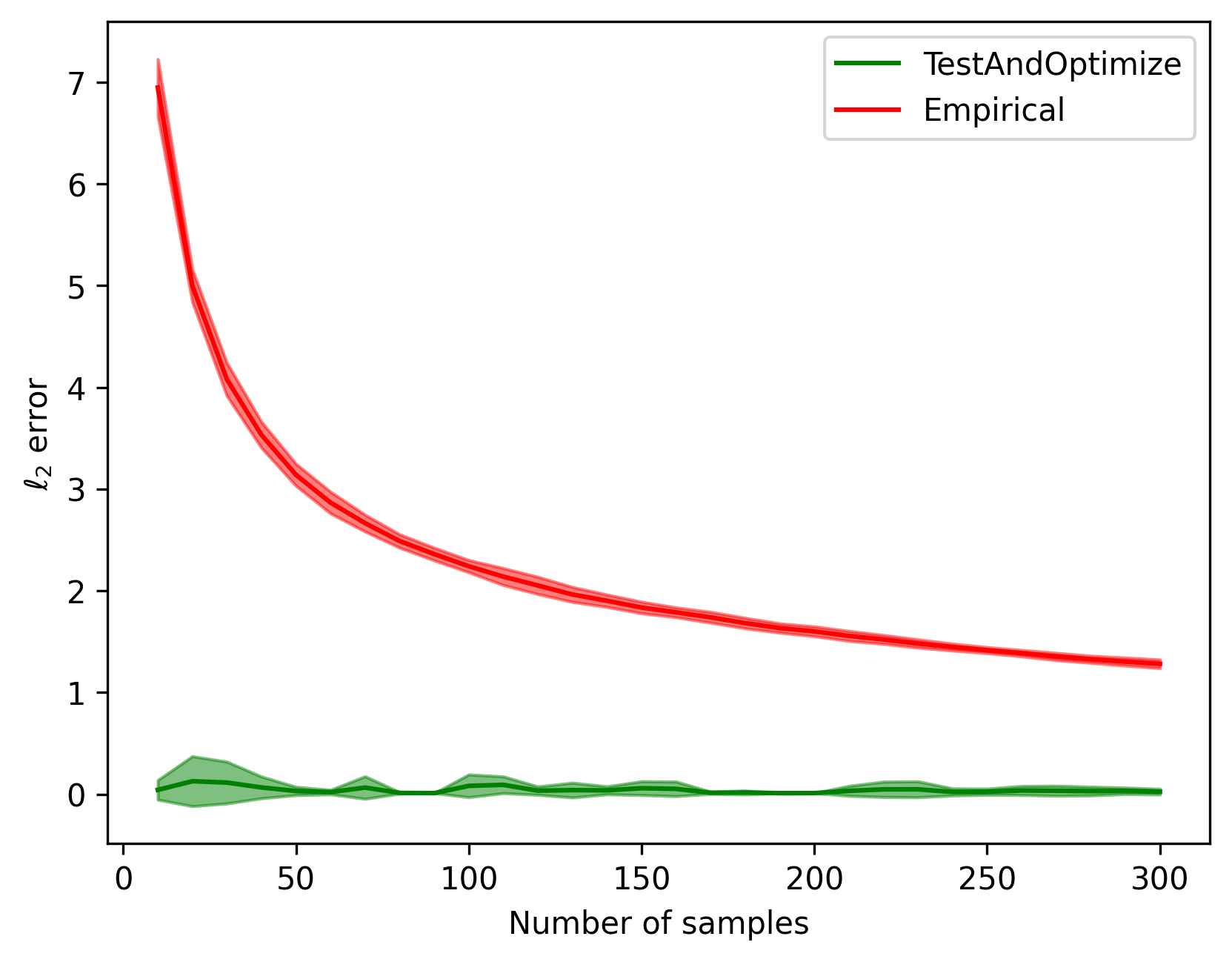}
\end{subfigure}
\hfill
\begin{subfigure}[b]{0.3\textwidth}
    \centering
    \includegraphics[width=\textwidth]{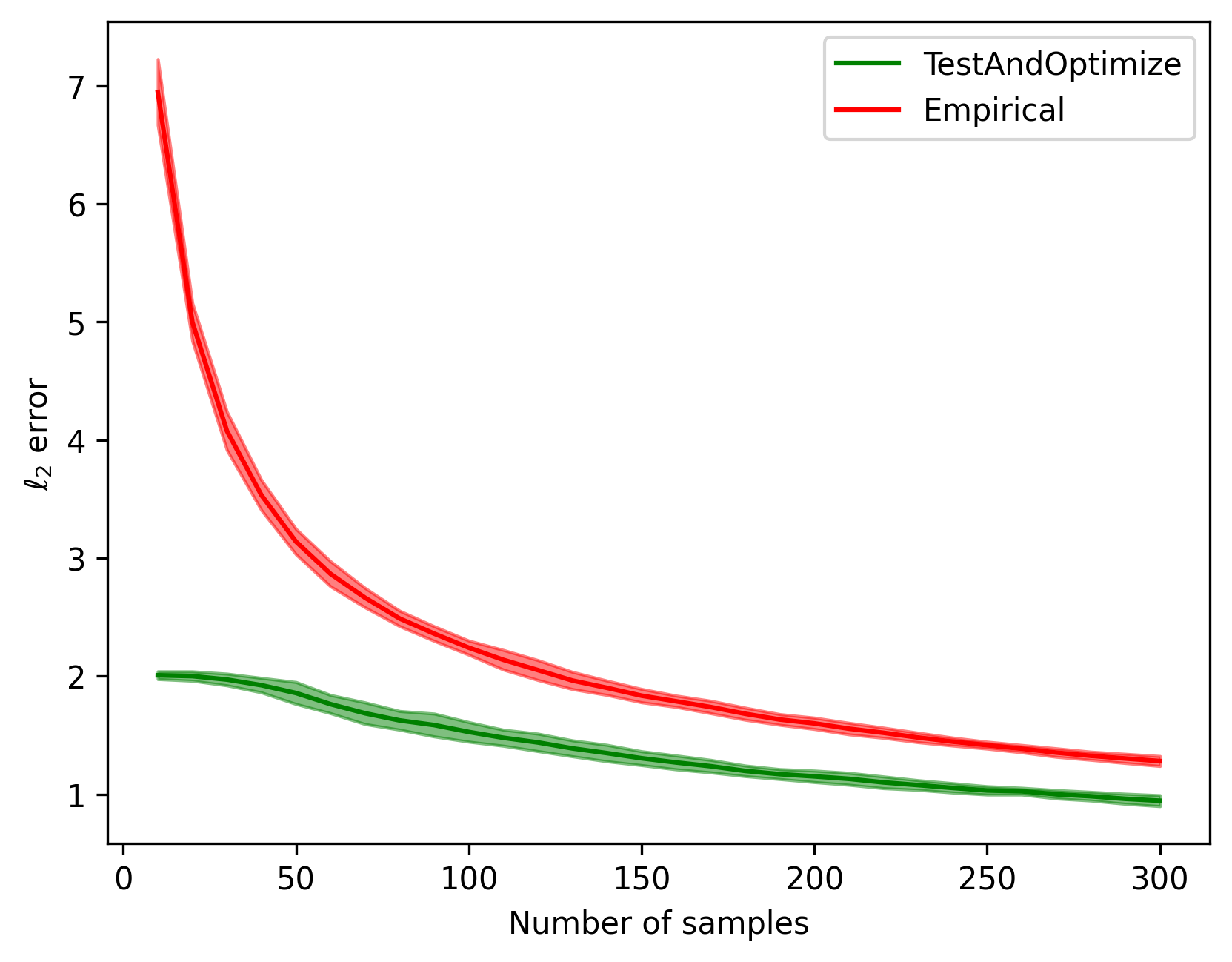}
\end{subfigure}
\hfill
\begin{subfigure}[b]{0.3\textwidth}
    \centering
    \includegraphics[width=\textwidth]{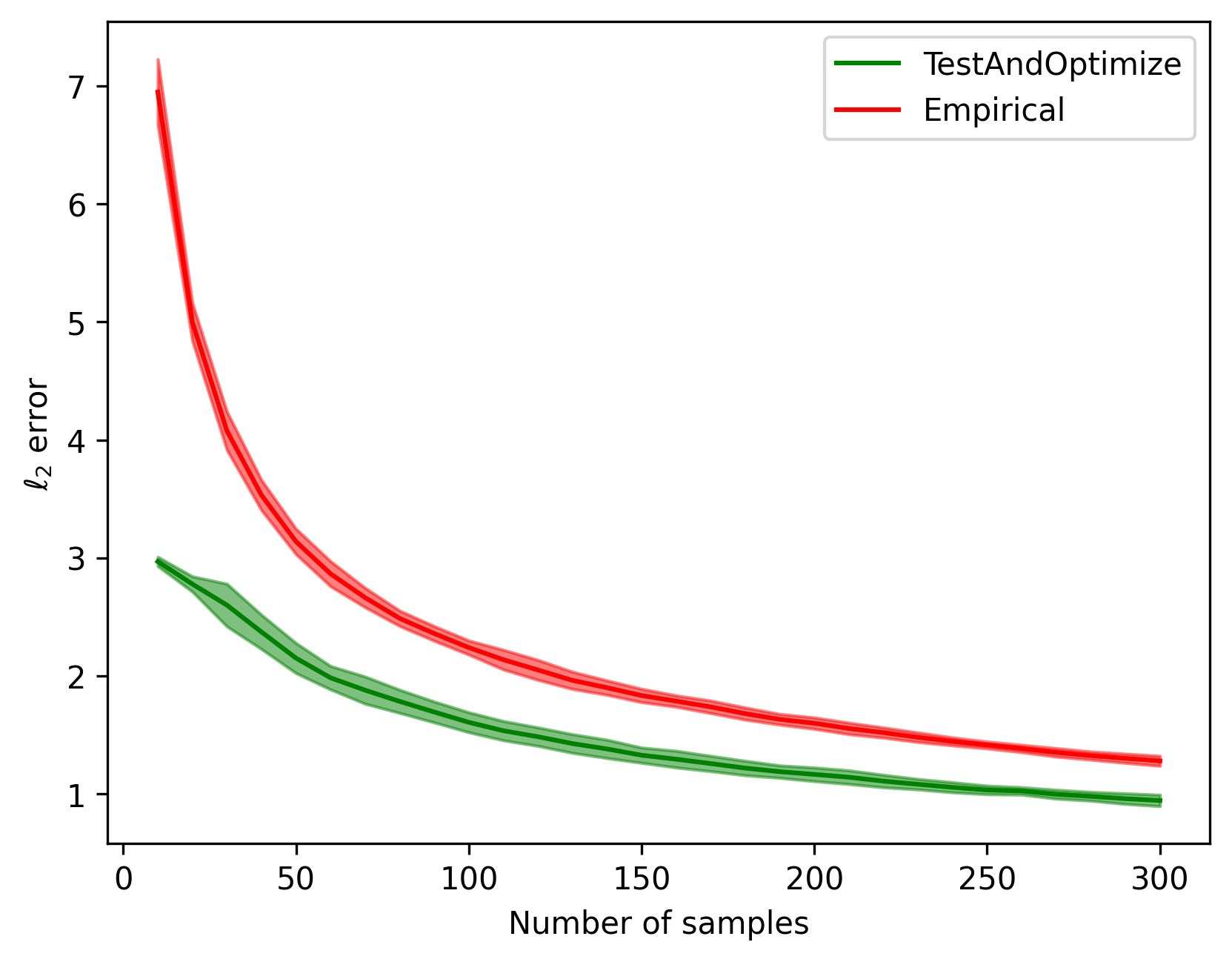}
\end{subfigure}
\caption{Here, $d = 500$, $s = 100$, and $q = \|\bm{\mu} - \wt{\bm{\mu}} \|_1 \in \{0.1, 20, 30\}$. Error bars show standard deviation over $10$ runs.}
\label{fig:exp2}
\end{figure}

\begin{figure}[htb]
\centering
\begin{subfigure}[b]{0.3\textwidth}
    \centering
    \includegraphics[width=\textwidth]{plots/plot_d500_sparsity100_L1norm0.1_Nmax=300_runs=10.png}
\end{subfigure}
\hfill
\begin{subfigure}[b]{0.3\textwidth}
    \centering
    \includegraphics[width=\textwidth]{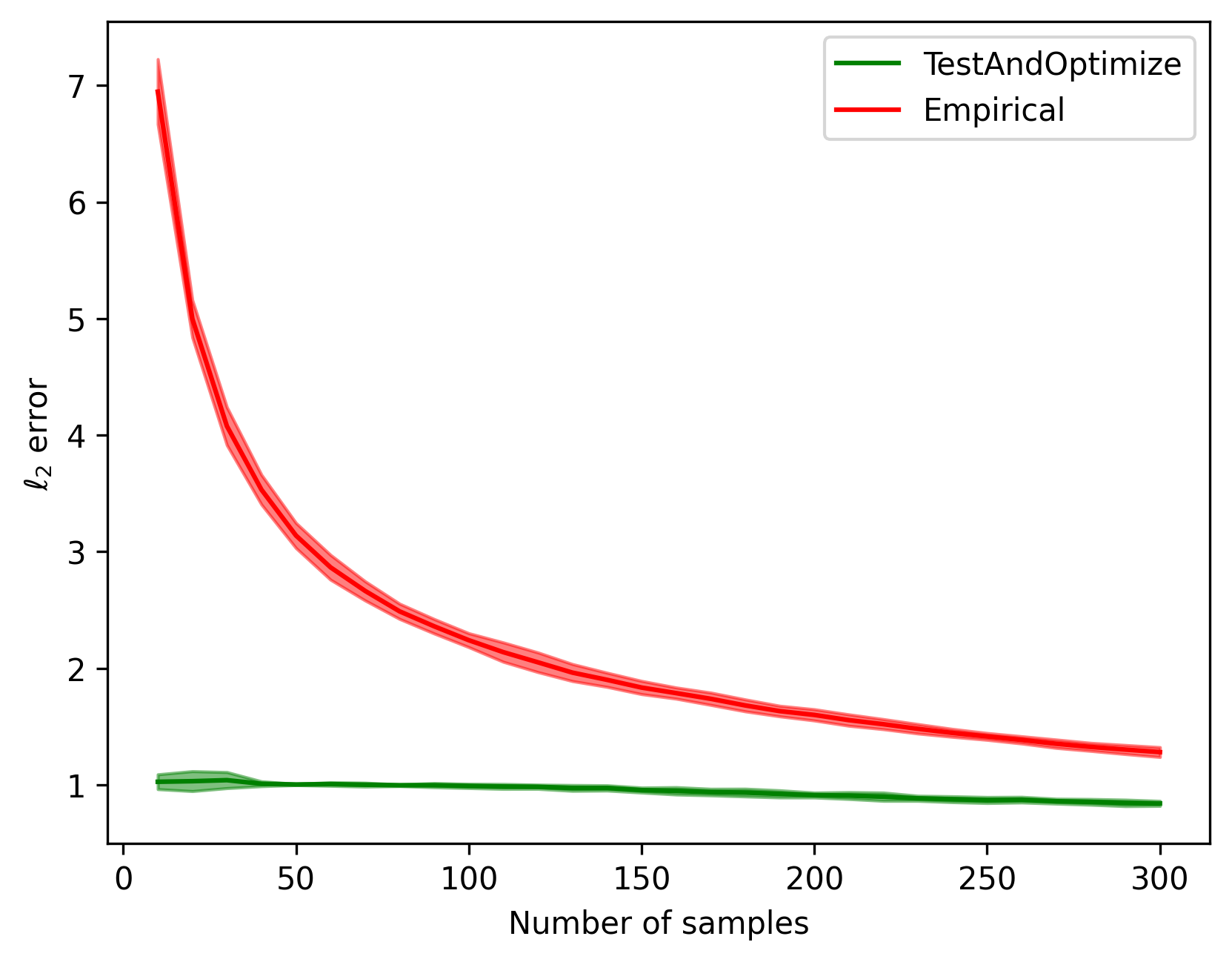}
\end{subfigure}
\hfill
\begin{subfigure}[b]{0.3\textwidth}
    \centering
    \includegraphics[width=\textwidth]{plots/plot_d500_sparsity100_L1norm20_Nmax=300_runs=10.png}
\end{subfigure}
\hfill
\begin{subfigure}[b]{0.3\textwidth}
    \centering
    \includegraphics[width=\textwidth]{plots/plot_d500_sparsity100_L1norm30_Nmax=300_runs=10.png}
\end{subfigure}
\hfill
\begin{subfigure}[b]{0.3\textwidth}
    \centering
    \includegraphics[width=\textwidth]{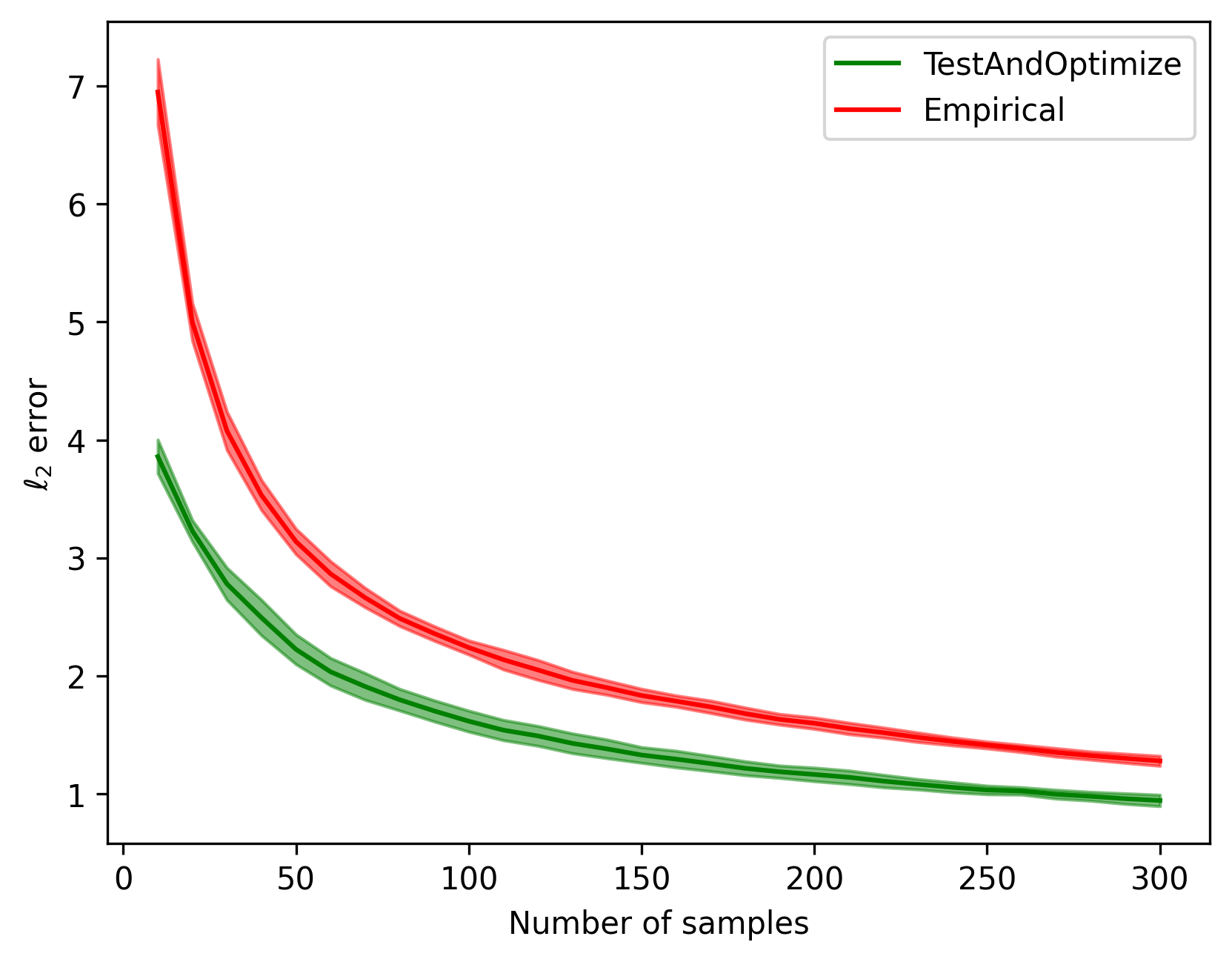}
\end{subfigure}
\hfill
\begin{subfigure}[b]{0.3\textwidth}
    \centering
    \includegraphics[width=\textwidth]{plots/plot_d500_sparsity100_L1norm50_Nmax=300_runs=10.png}
\end{subfigure}
\hfill
\begin{subfigure}[b]{0.3\textwidth}
    \centering
    \includegraphics[width=\textwidth]{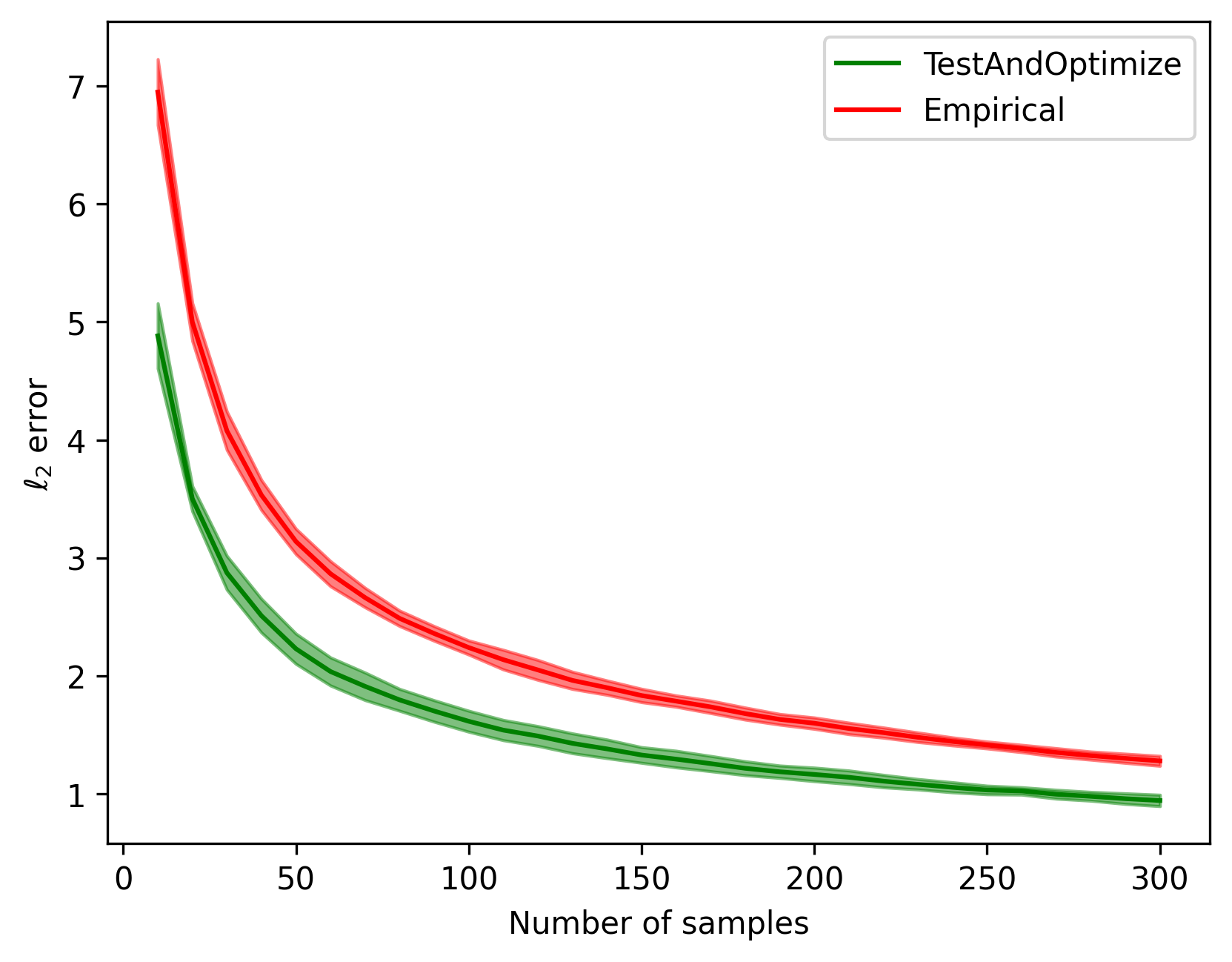}
\end{subfigure}
\hfill
\begin{subfigure}[b]{0.3\textwidth}
    \centering
    \includegraphics[width=\textwidth]{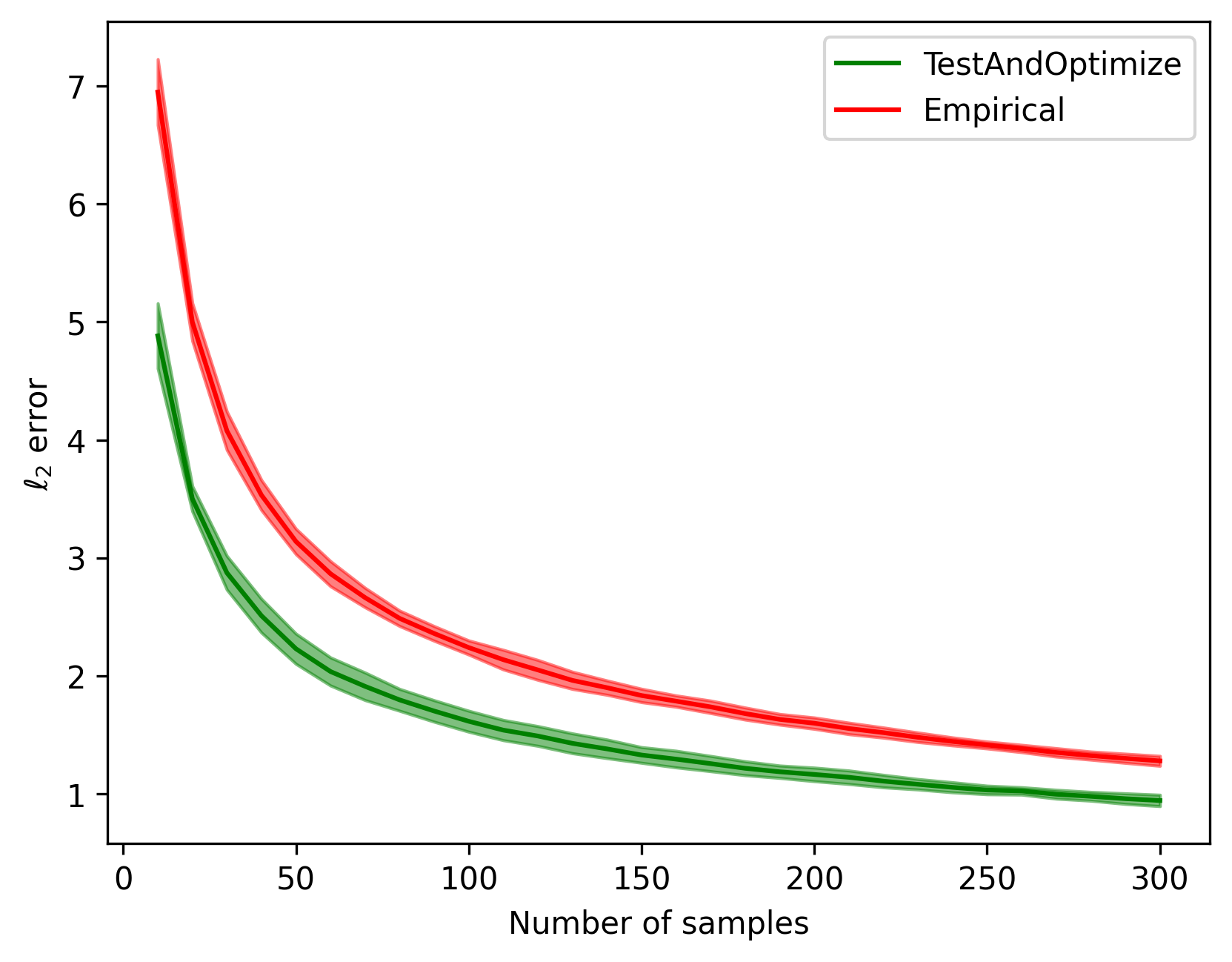}
\end{subfigure}
\hfill
\begin{subfigure}[b]{0.3\textwidth}
    \centering
    \includegraphics[width=\textwidth]{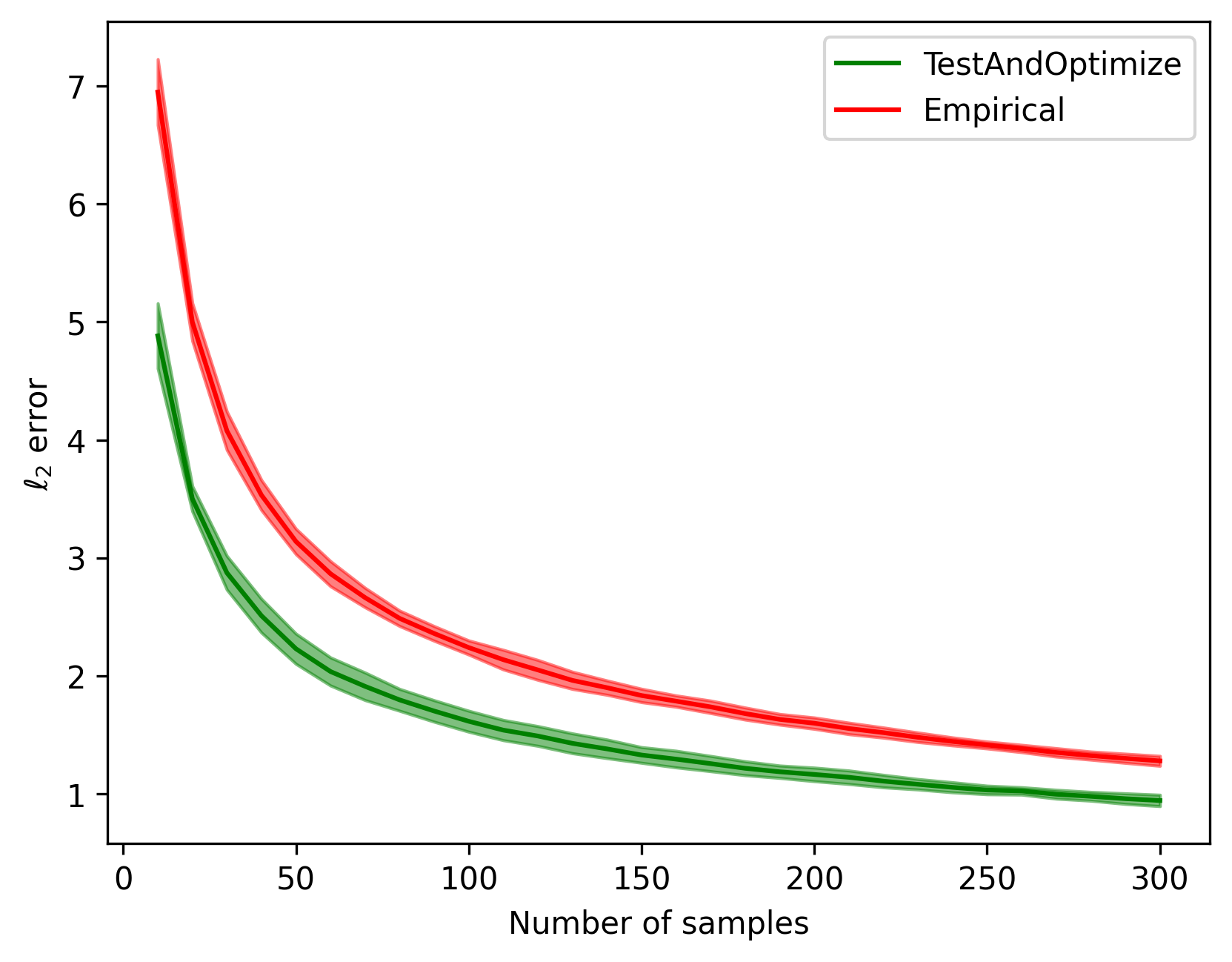}
\end{subfigure}
\caption{Here, $d = 500$, $s = 100$, and $q = \|\bm{\mu} - \wt{\bm{\mu}} \|_1 \in \{0.1, 10, 20, 30, 40, 50, 1000, 10000, 100000\}$. Error bars show standard deviation over $10$ runs. Observe that the slope of the green line looks the same for all $q \geq 1000$ instances.}
\label{fig:exp3}
\end{figure}

\subsubsection*{Acknowledgements}
\begin{itemize}
\item AB: This research was supported by the National Research Foundation (NRF), Prime Minister’s Office, Singapore under its Campus for Research Excellence and Technological Enterprise (CREATE) programme, by the NRF-AI Fellowship R-252-100-B13-281, Amazon Faculty Research Award, and Google South \& Southeast Asia Research Award. 
\item DC: This research/project is supported by the National Research Foundation, Singapore under its AI Singapore Programme (AISG Award No: AISG-PhD/2021-08-013).
\item TG: This research was partially supported by MoE AcRF Tier 1 A-8000980-00-00 while at NUS and by an NTU startup grant while at NTU.
\end{itemize}

\bibliographystyle{alpha}
\bibliography{refs}

\newpage
\appendix

\section{Additional results}

\subsection{Tolerant testing}
\label{sec:appendix-tolerant-testing}

In this section, we present an algorithm for testing whether an unknown distribution is close to a standard normal distribution. More specifically, we first describe a tolerant tester for the property that the mean of an isotropic Gaussian distribution equals zero. Subsequently, we present a tolerant tester for the property that the covariance matrix equals the identity matrix.    

\subsubsection{Tolerant testing for mean}

The definition of a tolerant tester for the mean of an isotropic Gaussian distribution is given below.

\begin{definition}[Tolerant testing of isotropic Gaussian mean]
Fix $m \geq 1$, $d \geq 1$, $\eps_2 > \eps_1 > 0$, and $\delta > 0$.
Suppose $\bm{\mu} \in \R^d$ is a hidden mean vector and we draw $m$ samples $\bx_1, \ldots, \bx_m \sim N(\bm{\mu}, \bI_d)$.
An algorithm $\textsc{ALG}$ is said to be a $(\eps_1, \eps_2, \delta)$-tolerant isotropic Gaussian mean tester if it satisfies the following two conditions:
\begin{enumerate}
    \item If $\| \bm{\mu} \|_2 \leq \eps_1$, then $\textsc{ALG}$ should \emph{Accept} with probability at least $1 - \delta$
    \item If $\| \bm{\mu} \|_2 \geq \eps_2$, then $\textsc{ALG}$ should \emph{Reject} with probability at least $1 - \delta$.
\end{enumerate}
$\textsc{ALG}$ is allowed to decide arbitrarily when $\eps_1 < \| \bm{\mu} \|_2 < \eps_2$.
\end{definition}

It is known that the test statistic $y_n = \left\|\frac{1}{\sqrt{n}} \sum_{i=1}^{n} \bx_i \right\|_2^2$ can be used for \emph{non-tolerant} isotropic Gaussian mean testing with an appropriate threshold; see \cite[Appendix C]{DiakonikolasKaneStewart2017}.
With the following lemma we show that $y_n$ can also be used for \emph{tolerant} isotropic Gaussian mean testing.

\begin{algorithm}[htb]
\begin{algorithmic}[1]
\caption{The \textsc{TolerantIGMT} algorithm.}
\label{alg:isotropic-gaussian-mean-testing}
\Statex \textbf{Input}: $\eps_2 > \eps_1 > 0$, $\delta \in (0, 1)$, $m$ i.i.d.\ samples of $N(\bm{\mu}, \bI_d)$, where $\bm{\mu} \in \R^d$
\Statex \textbf{Output}: $\Fail$ (too little samples), $\Accept$ ($\| \bm{\mu} \|_2 \leq \eps_1$), or $\Reject$ ($\| \bm{\mu} \|_2 \geq \eps_2$).

\State Define sample batch size $n = \lceil \frac{16\sqrt{d}}{\eps_2^2 - \eps_1^2} \rceil$

\State Define number of rounds $r = \left\lceil \log(\frac{12}{\delta}) \right\rceil$ if $\left\lceil \log(\frac{12}{\delta}) \right\rceil$ is odd, otherwise define $r = 1 + \left\lceil \log(\frac{12}{\delta}) \right\rceil$

\State Define testing threshold $\tau = d + \frac{n(\eps_1^2 + \eps_2^2)}{2}$

\If{$m < nr$}
    \State \Return $\Fail$
\Else
    \For{$i \in \{1, \ldots, r\}$}
        \State Use an unused batch of $n$ i.i.d.\ samples $\bx^{(i)}_1, \ldots, \bx^{(i)}_n \sim N(\bm{\mu}, \bI_d)$
        
        \State Compute test statistic $y^{(i)}_n = \left\| \frac{1}{\sqrt{n}} \sum_{i=1}^{n} \bx^{(i)}_i \right\|_2^2$ for the $i^{th}$ test
        
        \State Define $i^{th}$ outcome $\texttt{R}^{(i)}$ as $\Accept$ if $y^{(i)}_n \leq \tau$, and $\Reject$ otherwise
    \EndFor
    \State \Return $\maj(\texttt{R}^{(1)}, \ldots, \texttt{R}^{(r)})$
\EndIf
\end{algorithmic}
\end{algorithm}

\begin{lemma}
\label{lem:tolerant-isotropic-GMT}
Fix $m \geq 1$, $d \geq 1$, $\eps_2 > \eps_1 > 0$, and $\delta > 0$.
Suppose $\bm{\mu} \in \R^d$ is a hidden mean vector and we draw $m$ i.i.d.\ samples $\bx_1, \ldots, \bx_m \sim N(\bm{\mu}, \bI_d)$.
When $d \geq \left( \frac{16 \eps_2^2}{\eps_2^2 - \eps_1^2} \right)^2$ and $m \in \cO \left(\frac{\sqrt{d}}{\eps_2^2 - \eps_1^2} \log\left( \frac{1}{\delta} \right) \right)$, \textsc{TolerantIGMT} (\cref{alg:isotropic-gaussian-mean-testing}) is a $(\eps_1, \eps_2, \delta)$-tolerant isotropic Gaussian mean tester.
\end{lemma}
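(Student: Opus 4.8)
The plan is to show that each of the $r$ rounds of \textsc{TolerantIGMT} independently classifies correctly with probability bounded away from $1/2$ by an absolute constant, and then to amplify this via the majority vote; the sample count follows by bookkeeping.

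First I would analyze a single round. By \cref{lem:non-central-chi-square}, the statistic $y_n$ computed from $n$ i.i.d.\ samples of $N(\bm{\mu},\bI_d)$ is distributed as a non-central chi-squared $\chi^{\prime^2}_d(\lambda)$ with $\lambda = n\|\bm{\mu}\|_2^2$, so $\E[y_n] = d+\lambda$, and the threshold $\tau = d + n(\eps_1^2+\eps_2^2)/2$ sits exactly halfway between the candidate means $d+n\eps_1^2$ and $d+n\eps_2^2$. If $\|\bm{\mu}\|_2 \le \eps_1$ then $\E[y_n] = d+\lambda \le d+n\eps_1^2$, so the gap to the threshold is at least $t := n(\eps_2^2-\eps_1^2)/2$, and I would bound $\Pr(y_n > \tau)$ by the upper-tail estimate in part~2 of \cref{lem:non-central-chi-square}. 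If $\|\bm{\mu}\|_2 \ge \eps_2$ then $\E[y_n] = d+\lambda \ge d+n\eps_2^2$, so the gap is at least $t := \lambda - n(\eps_1^2+\eps_2^2)/2$, which satisfies $0 < n(\eps_2^2-\eps_1^2)/2 \le t < d+\lambda$, and I would bound $\Pr(y_n \le \tau)$ by the lower-tail estimate in part~3 of \cref{lem:non-central-chi-square}. In either case the single-round error is at most $\exp(-\Omega(dt^2/(d+2\lambda)^2))$.

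The hypothesis $d \ge (16\eps_2^2/(\eps_2^2-\eps_1^2))^2$ is exactly what forces this exponent to be an absolute constant. Since $n = \lceil 16\sqrt d/(\eps_2^2-\eps_1^2)\rceil$, the hypothesis gives $n\eps_2^2 \le d + \eps_2^2 = O(d)$ and simultaneously $t \ge n(\eps_2^2-\eps_1^2)/2 \ge 8\sqrt d$. In the accept case this yields $\lambda \le n\eps_1^2 = O(d)$, hence $d+2\lambda = O(d)$ and $dt^2/(d+2\lambda)^2 = \Omega(1)$; in the reject case the same computation handles $\lambda = O(d)$, and when $\lambda$ is large one has $t \sim \lambda$ and $d+2\lambda \sim 2\lambda$, so $dt^2/(d+2\lambda)^2 \to d/16 = \Omega(1)$ as well. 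Having shown that each round errs with probability at most some absolute constant $q < 1/2$, I would finish with a Chernoff bound on the majority of the $r = \Theta(\log(1/\delta))$ independent round outcomes (forced to be odd so there is no tie): the probability that the majority is wrong is at most $\Pr[\mathrm{Bin}(r,q) \ge r/2] \le e^{-\Omega(r)} \le \delta$. Since the algorithm draws $m = nr$ samples in total, this is $O\!\big(\tfrac{\sqrt d}{\eps_2^2-\eps_1^2}\log(1/\delta)\big)$ samples, matching the claimed bound, and \textsc{TolerantIGMT} returns $\Fail$ only when the supplied $m$ is below $nr$.

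I expect the single-round estimate to be the main obstacle: one must verify that the exponent $dt^2/(4(d+2\lambda)(d+2\lambda+t))$ (equivalently $\tfrac d2(x-\log(1+x))$ with $x = t/(d+2\lambda)$) is bounded below by a universal constant \emph{uniformly} over every admissible $\|\bm{\mu}\|_2$ in the accept region and in the (unbounded) reject region, which requires carefully tracking the interplay between the hypothesis on $d$, the value of $n$, and the placement of $\tau$.
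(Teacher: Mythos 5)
Your proposal follows the paper's proof essentially verbatim: you analyze a single round with parts 2 and 3 of the non-central chi-squared tail bounds using the same threshold and gap quantities, invoke the hypothesis $d \geq \left(\frac{16\eps_2^2}{\eps_2^2-\eps_1^2}\right)^2$ to make the exponent an absolute constant, and finish with a Chernoff bound on the majority of $r = \Theta(\log(1/\delta))$ independent rounds together with the bookkeeping $m = nr$. The only point to nail down is the one you already flag: the per-round error must be shown to be a constant strictly below $\tfrac12$ (the paper computes $e^{-16/9} < \tfrac13$, using $n\eps_2^2 \leq d$ from the hypothesis and, for the reject region, inequalities equivalent to evaluating at the worst case $\lambda = n\eps_2^2$, to which your monotonicity-in-$\lambda$ observation also reduces), since a bare $\exp(-\Omega(1))$ bound would not by itself justify the majority amplification.
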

\begin{proof}
The total number of samples $m$ required is
$
nr \in \cO \left(\frac{\sqrt{d}}{\eps_2^2 - \eps_1^2} \log\left( \frac{1}{\delta} \right) \right)
$
since \textsc{TolerantIGMT} uses $n = \frac{16 \sqrt{d}}{\eps_2^2 - \eps_1^2}$ i.i.d.\ samples in each of the $r \in \cO(\log(\frac{1}{\delta}))$ rounds.

For correctness, we will prove that each round $i \in \{1, \ldots, r\}$ succeeds with probability at least $2/3$.
Then, by Chernoff bound, the majority outcome out of $r \geq \log(\frac{12}{\delta})$ independent tests will be correct with probability at least $1 - \delta$.
    
Now, fix an arbitrary round $i \in \{1, \ldots, r\}$.
\textsc{TolerantIGMT} uses $n = \frac{16 \sqrt{d}}{\eps_2^2 - \eps_1^2} \geq 1$ i.i.d.\ samples to form a statistic $y^{(i)}_n$ and tests against the threshold $\tau = d + \frac{n (\eps_1^2 + \eps_2^2)}{2}$.
From \cref{lem:non-central-chi-square} (first item), we know that $y^{(i)}_n \sim \chi^{\prime^2}_d(\lambda)$ is a non-central chi-square random variable with $\lambda = n \| \bm{\mu}\|_2^2$.
Let us define $t = \frac{n (\eps_2^2 - \eps_1^2)}{2} > 0$.
Observe that we can rewrite the testing threshold $\tau$ in two different ways: $\tau = d + \frac{n(\eps_1^2 + \eps_2^2)}{2} = d + n \eps_1^2 + t = d + n \eps_2^2 - t$.

\textbf{Case 1}: $\| \bm{\mu} \|_2 \leq \eps_1$

In this case, we have $\lambda = n \| \bm{\mu}\|_2^2 \leq n \eps_1^2$ and $\tau = d + n \eps_1^2 + t$.
So,
\begin{align*}
\Pr(y^{(i)}_n > \tau)
&= \Pr(y^{(i)}_n > d + n \eps_1^2 + t) \tag{since $\tau = d + n \eps_1^2 + t$}\\
&\leq \Pr(y^{(i)}_n > d + \lambda + t) \tag{since $\lambda \leq n \eps_1^2$}\\
&\leq \exp\left(-\frac{dt^2}{4 (d + 2\lambda)(d + 2\lambda + t)}\right) \tag{apply \cref{lem:non-central-chi-square} (second item) with $t > 0$}\\
&\leq \exp\left(-\frac{dt^2}{4 (d + 2n\eps_1^2)(d + 2n\eps_1^2 + t)}\right) \tag{since $\lambda \leq n \eps_1^2$}\\
&\leq \exp\left(-\frac{dn^2(\eps_2^2 - \eps_1^2)^2}{16(d + 2n\eps_1^2)(d + 2n\eps_2^2)}\right) \tag{since $t = \frac{n (\eps_2^2 - \eps_1^2)}{2} 
\leq 2n (\eps_2^2 - \eps_1^2)$}\\
&= \exp\left(-\frac{16^2 d^2}{16(d + 2n\eps_1^2)(d + 2n\eps_2^2)}\right) \tag{since $n = \frac{16 \sqrt{d}}{\eps_2^2 - \eps_1^2}$}\\
&= \exp\left(-\frac{16}{\left(1 + \frac{2n\eps_1^2}{d}\right) \left(1 + \frac{2n\eps_2^2}{d}\right)}\right) \tag{dividing both numerator and denominator by $16 d^2$}\\
&= \exp\left(-\frac{16}{\left(1 + \frac{32 \eps_1^2}{\sqrt{d}(\eps_2^2 - \eps_1^2)}\right)\left(1 + \frac{32 \eps_2^2}{\sqrt{d}(\eps_2^2 - \eps_1^2)}\right)}\right) \tag{since $n = \frac{16 \sqrt{d}}{\eps_2^2 - \eps_1^2}$}\\
&= \exp\left(-\frac{16}{(1+2)(1+2)}\right) \tag{since $d \geq \left( \frac{16 \eps_2^2}{\eps_2^2 - \eps_1^2} \right)^2 \geq \left( \frac{16 \eps_1^2}{\eps_2^2 - \eps_1^2} \right)^2$}\\
&= \exp\left(-\frac{16}{9} \right) < \frac{1}{3}
\end{align*}

Thus, when $\| \bm{\mu} \|_2 \leq \eps_1$, we have $\Pr(y^{(i)}_n \leq \tau) \geq 2/3$ and the $i^{th}$ test outcome will be correctly an $\Accept$ with probability at least $2/3$.

\textbf{Case 2}: $\| \bm{\mu} \|_2 \geq \eps_2$

In this case, we have $\lambda = n \| \bm{\mu}\|_2^2 \geq n \eps_2^2 > n \eps_1^2$ and $\tau = d + n \eps_2^2 - t$.
We first observe the following inequalities:
\begin{itemize}
    \item Since $n \geq 1$, $d \geq 1$, $\lambda \geq n \eps_2^2$, and $\eps_2 > \eps_1 > 0$, we see that
    \begin{equation}
    \label{eq:tolerant-isotropic-GMT-inequality1}
    \left( 2 - \frac{n \eps_1^2}{\lambda} - \frac{n \eps_2^2}{\lambda} \right)^2 \geq \left( 1 - \frac{\eps_1^2}{\eps_2^2} \right)^2
    \quad \text{and} \quad
    \left( \frac{d}{\lambda} + 2 \right)^2 \leq \left( \frac{d}{n \eps_2^2} + 2 \right)^2
    \end{equation}
    \item Since $n = \frac{16\sqrt{d}}{\eps_2^2 - \eps_1^2} \geq 1$ and $d \geq \left( \frac{16 \eps_2^2}{\eps_2^2 - \eps_1^2} \right)^2 \geq 1$, we see that
    \begin{equation}
    \label{eq:tolerant-isotropic-GMT-inequality2}
    \left( 1 + \frac{2 n \eps_2^2}{d} \right)^2 \leq 3^2
    \end{equation}
\end{itemize}
So,
\begin{align*}
\Pr(y^{(i)}_n < \tau)
&= \Pr(y^{(i)}_n < d + n \eps_2^2 - t) \tag{since $\tau = d + n \eps_2^2 - t$}\\
&= \Pr(y^{(i)}_n < d + \lambda  - (\lambda + t - n \eps_2^2)) \tag{Rewriting}\\
&\leq \exp\left( -\frac{d (\lambda + t - n\eps_2^2)^2}{4(d+2\lambda)^2} \right) \tag{apply \cref{lem:non-central-chi-square} (third item) with $0 < \lambda + t - n \eps_2^2 < d + \lambda$}\\
&= \exp \left(-\frac{d \left( \lambda - \frac{n}{2} \eps_1^2 - \frac{n}{2} \eps_2^2 \right)^2}{4 (d+2\lambda)^2} \right) \tag{since $t = \frac{n (\eps_2^2 - \eps_1^2)}{2}$}\\
&= \exp \left(-\frac{d \left(2 - \frac{n \eps_1^2}{ \lambda} - \frac{n \eps_2^2}{ \lambda} \right)^2}{16 \left( \frac{d}{\lambda} + 2 \right)^2} \right) \tag{Pulling out the factor of $\frac{\lambda}{2}$ from numerator}\\
&\leq \exp \left(-\frac{d \left( 1 - \frac{\eps_1^2}{\eps_2^2} \right)^2}{16 \left( \frac{d}{n \eps_2^2} + 2 \right)^2} \right) \tag{by \cref{eq:tolerant-isotropic-GMT-inequality1}}\\
&\leq \exp \left(-\frac{n^2 \left( \eps_2^2 - \eps_1^2 \right)^2}{16 d \left( 1 + \frac{n \eps_2^2}{d} \right)^2} \right) \tag{Pulling out factors of $n$, $d$, and $\eps_2^2$}\\
&= \exp \left(-\frac{16}{\left( 1 + \frac{n \eps_2^2}{d} \right)^2} \right) \tag{since $n = \frac{16\sqrt{d}}{\eps_2^2 - \eps_1^2}$}\\
&= \exp \left(-\frac{16}{3^2} \right) = \exp\left(-\frac{16}{9} \right) < \frac{1}{3} \tag{by \cref{eq:tolerant-isotropic-GMT-inequality2}}
\end{align*}

Thus, when $\| \bm{\mu} \|_2 \geq \eps_2$, we have $\Pr(y^{(i)}_n \geq \tau) \geq 2/3$ and the $i^{th}$ test outcome will be correctly a $\Reject$ with probability at least $2/3$.
\end{proof}

We are now ready to state the main theorem below.

\tolerantmeantester*
\begin{proof}
Use the guarantee of \cref{lem:tolerant-isotropic-GMT} on \textsc{TolerantIGMT} (\cref{alg:isotropic-gaussian-mean-testing}) with parameters $\eps_1 = \eps$ and $\eps_2 = 2 \eps$.
\end{proof}

\subsubsection{Tolerant testing for covariance matrix}

We now give the definition of a tolerant tester for the unknown covariance matrix being equal to identity. 
\begin{definition}[Tolerant testing of zero-mean Gaussian covariance matrix]
Fix $m \geq 1$, $d \geq 1$, $\eps_2 > \eps_1 > 0$, and $\delta > 0$.
Suppose $\bm{\Sigma} \in \R^{d \times d}$ is a hidden full rank covariance matrix and we draw $m$ samples $\bx_1, \ldots, \bx_m \sim N(\bm{0}, \bm{\Sigma})$.
An algorithm $\textsc{ALG}$ is said to be a $(\eps_1, \eps_2, \delta)$-tolerant zero-mean Gaussian covariance tester if it satisfies the following two conditions:
\begin{enumerate}
    \item If $\| \bm{\Sigma} - \bI_d \|_F \leq \eps_1$, then $\textsc{ALG}$ should \emph{Accept} with probability at least $1 - \delta$
    \item If $\| \bm{\Sigma} - \bI_d \|_F \geq \eps_2$, then $\textsc{ALG}$ should \emph{Reject} with probability at least $1 - \delta$.
\end{enumerate}
$\textsc{ALG}$ is allowed to decide arbitrarily when $\eps_1 < \| \bm{\Sigma} - \bI_d \|_2 < \eps_2$.
\end{definition}

\begin{definition}[Test statistic 
$\texttt{T}_n$]
\label{def:test-statistic-Tn}
Let $x_1, \ldots, x_n$ be $n$ i.i.d.\ samples from $\sim N(\bm{0}, \bm{\Sigma})$ for an unknown $\bm{\Sigma} \in \R^{d \times d}$.
For $i \neq j$, we define $h(x_i, x_j) = (x_i^\top x_j)^2 - (x_i^\top x_i + x_j^\top x_j) + d$.
Then, we define $\texttt{T}_n$ as
\[
\texttt{T}_n = \frac{2}{n (n-1)} \sum_{1 \leq i < j \leq n} h(x_i, x_j)
\]    
\end{definition}

It is known that the test statistic $\texttt{T}_n$ (\cref{def:test-statistic-Tn}) can be used for \emph{non-tolerant} zero-mean Gaussian covariance testing with an appropriate threshold; see \cite{cai2013optimal}.
With the following lemma, we show that $\texttt{T}_n$ can also be used for \emph{tolerant} zero-mean Gaussian covariance testing.

\begin{algorithm}[htb]
\begin{algorithmic}[1]
\caption{\textsc{TolerantZMGCT}.}
\label{alg:zero-mean-covariance-testing}
\Statex \textbf{Input}: $\eps_2 > \eps_1 > 0$, $\delta \in (0,1)$, $m$ i.i.d.\ samples of $N(\bm{0}, \bm{\Sigma})$, where $\bm{\Sigma} \in \R^{d \times d}$
\Statex \textbf{Output}: $\Fail$ (too little samples), $\Accept$ ($\| \bm{\Sigma} - \bI_d \|_F^2 \leq \eps_1^2$), or $\Reject$ ($\| \bm{\Sigma} - \bI_d \|_F^2 \geq \eps_2^2$)

\State Define sample batch size $n = \left\lceil 3200 \cdot d \cdot \max \left\{ \frac{1}{\eps_1^2}, \left( \frac{\eps_1^2}{\eps_2^2 - \eps_1^2} \right)^2, 2 \left( \frac{\eps_2}{\eps_2^2 - \eps_1^2} \right)^2\right\} \right\rceil$

\State Define number of rounds $r = \left\lceil \log(\frac{12}{\delta}) \right\rceil$ if $\left\lceil \log(\frac{12}{\delta}) \right\rceil$ is odd, otherwise define $r = 1 + \left\lceil \log(\frac{12}{\delta}) \right\rceil$

\State Define testing threshold $\tau = \frac{\eps_2^2 + \eps_1^2}{2}$

\If{$m < nr$}
    \State \Return $\Fail$
\Else
    \For{$i \in \{1, \ldots, r\}$} 
        \State Use an unused batch of $n$ i.i.d.\ samples $\bx^{(i)}_1, \ldots, \bx^{(i)}_n \sim N(\bm{0}, \bm{\Sigma})$
        
        \State Compute test statistic $T^{(i)}_n$ according to \cref{def:test-statistic-Tn} for the $i^{th}$ test
        
        \State Define $i^{th}$ outcome $R^{(i)}$ as $\Accept$ if $T^{(i)}_n \leq \tau$, and $\Reject$ otherwise
    \EndFor
    \State \Return $\maj(R^{(1)}, \ldots, R^{(r)})$
\EndIf
\end{algorithmic}
\end{algorithm}

\begin{lemma}
\label{lem:tolerant-ZMGCT}
Fix $m \geq 1$, $d \geq 1$, $\eps_2 > \eps_1 > 0$, and $\delta > 0$.
Suppose $\bm{\Sigma} \in \R^{d \times d}$ is a hidden full rank covariance matrix and we draw $m$ i.i.d.\ samples $\bx_1, \ldots, \bx_m \sim N(\bm{0}, \bm{\Sigma})$.
When $d \geq \eps_2^2$ and
\[
m \geq \cO \left( d \cdot \max \left\{ \frac{1}{\eps_1^2}, \left( \frac{\eps_1^2}{\eps_2^2 - \eps_1^2} \right)^2, \left( \frac{\eps_2}{\eps_2^2 - \eps_1^2} \right)^2 \right\} \cdot \log\left( \frac{1}{\delta} \right) \right) \;,
\]
\textsc{TolerantZMGCT} (\cref{alg:zero-mean-covariance-testing}) is a $(\eps_1, \eps_2, \delta)$-tolerant zero-mean Gaussian covariance tester.
\end{lemma}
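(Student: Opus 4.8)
The plan is to mirror the structure of \cref{lem:tolerant-isotropic-GMT}. The algorithm \textsc{TolerantZMGCT} runs $r = \Theta(\log(1/\delta))$ independent rounds, each consuming a fresh batch of $n$ samples, computing the statistic $\texttt{T}_n$ of \cref{def:test-statistic-Tn}, comparing it against the threshold $\tau = \tfrac{\eps_1^2 + \eps_2^2}{2}$, and outputting the majority vote. The total sample count is $n r \in \cO\!\left( d \cdot \max\left\{ \tfrac{1}{\eps_1^2}, \left(\tfrac{\eps_1^2}{\eps_2^2 - \eps_1^2}\right)^2, \left(\tfrac{\eps_2}{\eps_2^2 - \eps_1^2}\right)^2 \right\} \cdot \log\tfrac1\delta \right)$ by construction, so the sample-complexity claim is immediate. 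Since $r$ is odd and $r \ge \log(12/\delta)$, a Chernoff bound reduces the correctness claim to showing that a \emph{single} round errs with probability at most $1/3$ in each of the two regimes.

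For one round, I would first pin down the first two moments of $\texttt{T}_n$. As $\texttt{T}_n$ is a degree-$2$ $U$-statistic with kernel $h$, and since $\E[(\bx_i^\top \bx_j)^2] = \Tr(\bm{\Sigma}^2)$ and $\E[\bx_i^\top\bx_i] = \Tr(\bm{\Sigma})$ for independent $\bx_i,\bx_j \sim N(\bm{0},\bm{\Sigma})$, one obtains $\E[\texttt{T}_n] = \E[h(\bx_i,\bx_j)] = \Tr(\bm{\Sigma}^2) - 2\Tr(\bm{\Sigma}) + d = \Tr((\bm{\Sigma} - \bI_d)^2) = \|\bm{\Sigma} - \bI_d\|_F^2$, so $\tau$ sits exactly between $\eps_1^2$ and $\eps_2^2$. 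For the variance I would invoke the Hoeffding decomposition $\Var(\texttt{T}_n) = \tfrac{4(n-2)}{n(n-1)}\sigma_1^2 + \tfrac{2}{n(n-1)}\sigma_2^2$, where $\sigma_1^2 = \Var(h_1(\bx))$ for the projected kernel $h_1(\bx) = \E_{\by}[h(\bx,\by)] = \bx^\top(\bm{\Sigma} - \bI_d)\bx - \Tr(\bm{\Sigma} - \bI_d)$ with $\by \sim N(\bm{0},\bm{\Sigma})$ independent of $\bx$. Since $\bm{\Sigma}-\bI_d$ and $\bm{\Sigma}$ commute, this is a centered quadratic form in a Gaussian, so $\sigma_1^2 = 2\Tr\!\left((\bm{\Sigma} - \bI_d)\bm{\Sigma}(\bm{\Sigma} - \bI_d)\bm{\Sigma}\right) \le 2\,\|\bm{\Sigma}\|_2^2\,\|\bm{\Sigma} - \bI_d\|_F^2$, while $\sigma_2^2 = \Var(h(\bx,\by))$ is a sum of fourth-moment terms that can be bounded by $\cO(\|\bm{\Sigma}\|_2^4\, d^2)$ (this is essentially the variance computation underlying \cite{cai2013optimal}). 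Combining, $\Var(\texttt{T}_n) \le \cO\!\left(\tfrac1n \|\bm{\Sigma}\|_2^2 \|\bm{\Sigma} - \bI_d\|_F^2 + \tfrac{1}{n^2}\|\bm{\Sigma}\|_2^4\, d^2\right)$, and I would also use $\|\bm{\Sigma}\|_2 \le 1 + \|\bm{\Sigma} - \bI_d\|_2 \le 1 + \|\bm{\Sigma} - \bI_d\|_F$.

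With these moments in hand, write $t = \tfrac{\eps_2^2 - \eps_1^2}{2}$, so that $\tau = \eps_1^2 + t = \eps_2^2 - t$, and apply Chebyshev's inequality within each round. In the \emph{accept} regime $\|\bm{\Sigma} - \bI_d\|_F \le \eps_1$ we have $\E[\texttt{T}_n] \le \eps_1^2 < \tau$ and $\|\bm{\Sigma}\|_2 \le 1 + \eps_1$, so $\Pr(\texttt{T}_n > \tau) \le \Pr(|\texttt{T}_n - \E[\texttt{T}_n]| > t) \le \Var(\texttt{T}_n)/t^2$; plugging in the variance bound, the two contributing terms are $\cO(\|\bm{\Sigma}\|_2^2\,\eps_1^2/(n t^2))$ and $\cO(\|\bm{\Sigma}\|_2^4\, d^2/(n^2 t^2))$, and the three branches of the maximum defining $n$ — namely $d/\eps_1^2$, $d(\eps_1^2/(\eps_2^2-\eps_1^2))^2$, and $d(\eps_2/(\eps_2^2-\eps_1^2))^2$ — together with the hypothesis $d \ge \eps_2^2$ and the constant $3200$ drive this below $1/3$. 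In the \emph{reject} regime $\|\bm{\Sigma} - \bI_d\|_F \ge \eps_2$ we have $\E[\texttt{T}_n] = \|\bm{\Sigma} - \bI_d\|_F^2 \ge \eps_2^2 > \tau$, hence $\Pr(\texttt{T}_n \le \tau) \le \Var(\texttt{T}_n)/(\E[\texttt{T}_n] - \tau)^2$; here $\E[\texttt{T}_n] - \tau \ge t$, and when $\|\bm{\Sigma} - \bI_d\|_F$ is large the gap $\E[\texttt{T}_n] - \tau$ grows like $\|\bm{\Sigma} - \bI_d\|_F^2$ while the variance grows only like $\|\bm{\Sigma} - \bI_d\|_F^4 + \|\bm{\Sigma}\|_2^4 d^2$, so the ratio stays controlled; splitting into the sub-cases $\eps_2 \le \|\bm{\Sigma} - \bI_d\|_F \le 1$ and $\|\bm{\Sigma} - \bI_d\|_F \ge 1$ and again invoking the maximum in $n$ and $d \ge \eps_2^2$ gives $< 1/3$.

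The main obstacle is precisely this variance estimate for $\texttt{T}_n$ together with the bookkeeping in the reject regime: since $\|\bm{\Sigma} - \bI_d\|_F$ is only \emph{lower}-bounded there, the spectral norm $\|\bm{\Sigma}\|_2$ is unbounded, so one cannot apply $\Var(\texttt{T}_n)/t^2 < 1/3$ uniformly but must trade the growth of the variance against the growth of the gap $\E[\texttt{T}_n] - \tau$, and then verify in each sub-regime of $(\eps_1,\eps_2,d)$ that the three-way maximum in the batch size $n$ and the constant $3200$ indeed suffice. Everything else — the moment identities, the Hoeffding decomposition, and the Chernoff boosting across the $r$ rounds — is routine and parallels the proof of \cref{lem:tolerant-isotropic-GMT}.
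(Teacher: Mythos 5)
Your plan has the same architecture as the paper's proof: same statistic, same threshold $\tau=\tfrac{\eps_1^2+\eps_2^2}{2}$, Chebyshev within each batch, and Chernoff boosting of the majority vote; your reject-case idea of measuring the deviation relative to $\E[\texttt{T}_n]=\|\bm{\Sigma}-\bI_d\|_F^2$ (so the gap grows with $\|\bm{\Sigma}-\bI_d\|_F^2$ while the variance grows comparably) is exactly how the paper handles the unbounded regime, and your bound $\sigma_1^2\le 2\|\bm{\Sigma}\|_2^2\|\bm{\Sigma}-\bI_d\|_F^2$ for the linear-in-$1/n$ term is fine (indeed sharper than the paper's Frobenius version). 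The genuine gap is in the quartic term. The true contribution is $\tfrac{4}{n(n-1)}\bigl[\Tr^2(\bm{\Sigma}^2)+\Tr(\bm{\Sigma}^4)\bigr]=\Theta\!\bigl(\|\bm{\Sigma}\|_F^4/n^2\bigr)$, but you replace it by $\cO\!\bigl(d^2\|\bm{\Sigma}\|_2^4/n^2\bigr)$ and then control $\|\bm{\Sigma}\|_2\le 1+\|\bm{\Sigma}-\bI_d\|_F$. Since the lemma only assumes $d\ge\eps_2^2$, the tolerances may be polynomially large in $d$, and this substitution loses up to a $\poly(d)$ factor: in the accept regime $\|\bm{\Sigma}\|_F\le\sqrt d+\eps_1\le 2\sqrt d$, so the correct quartic term is $\cO(d^2/n^2)$, whereas $d^2(1+\eps_1)^4$ can be as large as $d^4$. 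Concretely, take $\eps_1=d^{1/4}$, $\eps_2=d^{3/8}$, $\bm{\Sigma}=\bI_d+\eps_1\bm{e}_1\bm{e}_1^\top$: the algorithm's batch size is $n=\Theta(\sqrt d)$ and $t=\tfrac{\eps_2^2-\eps_1^2}{2}=\Theta(d^{3/4})$, so your bound on the Chebyshev ratio is at least $d^2(1+\eps_1)^4/(nt)^2=\Theta(\sqrt d)$, which cannot be pushed below $1/3$ by any combination of the three branches of the maximum or the constant $3200$; the analogous blow-up (by roughly $(\eps_2^2-\eps_1^2)^2$) occurs in your reject sub-case. So the step ``the three branches \ldots drive this below $1/3$'' fails as stated, not merely for bookkeeping reasons.

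The fix is exactly the paper's device: keep the Frobenius norm throughout, using the triangle inequality $\|\bm{\Sigma}\|_F\le\sqrt d+\|\bm{\Sigma}-\bI_d\|_F$ (\cref{lem:X-to-X+I-bound}), which turns the quartic term into $\cO\!\bigl((d+\|\bm{\Sigma}-\bI_d\|_F^2)^2/n^2\bigr)$ rather than $\cO(d^2\|\bm{\Sigma}\|_2^4/n^2)$; this is the content of \cref{lem:test-statistic-variance-upper-bound}, stated with the reparameterization $\|\bm{\Sigma}-\bI_d\|_F^2=b^2d/n$. With that replacement, your accept-case Chebyshev at deviation $t$ and your reject-case relative-deviation argument close exactly as in the paper's proof of \cref{lem:tolerant-ZMGCT}, and the remaining ingredients (mean identity $\E[\texttt{T}_n]=\|\bm{\Sigma}-\bI_d\|_F^2$, per-round error $1/3$, majority vote over $r=\Theta(\log(1/\delta))$ rounds) are as you describe.
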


To prove \cref{lem:tolerant-ZMGCT}, we first state the expectation and variance of $\texttt{T}_n$ known from \cite{cai2013optimal}, and give an upper bound on the variance that will be useful for subsequent analysis.

\begin{lemma}[\cite{cai2013optimal}]
\label{lem:expectation-and-variance-of-test-statistic}
For the test statistic $\texttt{T}_n$ defined in \cref{def:test-statistic-Tn}, we have $\E(\texttt{T}_n) = \| \bm{\Sigma} - \bI_d \|_F^2$ and $\sigma^2(\texttt{T}_n) = \frac{4}{n (n-1)} \left[ \Tr^2(\bm{\Sigma}^2) + \Tr(\bm{\Sigma}^4) \right] + \frac{8}{n} \Tr(\bm{\Sigma}^2 (\bm{\Sigma} - \bI_d)^2)$.
\end{lemma}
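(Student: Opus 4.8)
The plan is to treat $\texttt{T}_n$ as a degree-two $U$-statistic with the symmetric kernel $h(\bx,\by)=(\bx^\top\by)^2-\|\bx\|_2^2-\|\by\|_2^2+d$, and to compute its first two moments by reducing every expectation to a moment of a standard Gaussian vector via the reparametrization $\bx_i=\bm{\Sigma}^{1/2}\bg_i$ with $\bg_i\sim N(\bm{0},\bI_d)$ i.i.d. Everything then comes down to Wick's (Isserlis's) theorem together with the elementary identities $\E[\bg^\top\bA\bg]=\Tr(\bA)$ and $\Var(\bg^\top\bA\bg)=2\Tr(\bA^2)$ for symmetric $\bA$, equivalently $\Var(\bx^\top\bA\bx)=2\Tr(\bA\bm{\Sigma}\bA\bm{\Sigma})$ for $\bx\sim N(\bm{0},\bm{\Sigma})$.

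For the mean, linearity and exchangeability give $\E(\texttt{T}_n)=\E[h(\bx_1,\bx_2)]$. Conditioning on $\bx_1$, the inner product $\bx_1^\top\bx_2=(\bm{\Sigma}\bg_1)^\top\bg_2$ is a centered Gaussian of variance $\|\bm{\Sigma}\bg_1\|_2^2$, so $\E[(\bx_1^\top\bx_2)^2]=\E[\bg_1^\top\bm{\Sigma}^2\bg_1]=\Tr(\bm{\Sigma}^2)$, while $\E\|\bx_i\|_2^2=\Tr(\bm{\Sigma})$. Hence $\E(\texttt{T}_n)=\Tr(\bm{\Sigma}^2)-2\Tr(\bm{\Sigma})+d=\Tr((\bm{\Sigma}-\bI_d)^2)=\|\bm{\Sigma}-\bI_d\|_F^2$.

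For the variance I would invoke the Hoeffding variance formula for an order-two $U$-statistic, $\sigma^2(\texttt{T}_n)=\frac{4(n-2)}{n(n-1)}\zeta_1+\frac{2}{n(n-1)}\zeta_2$, where $\zeta_1=\Var(h_1(\bx_1))$ with $h_1(\bx)=\E_{\bx_2}[h(\bx,\bx_2)]$ and $\zeta_2=\Var(h(\bx_1,\bx_2))$. A short computation gives $h_1(\bx)=\bx^\top\bm{\Sigma}\bx-\|\bx\|_2^2-\Tr(\bm{\Sigma})+d=\bx^\top(\bm{\Sigma}-\bI_d)\bx-\Tr(\bm{\Sigma}-\bI_d)$, so by the quadratic-form identity above and the fact that $\bm{\Sigma}$ and $\bm{\Sigma}-\bI_d$ commute, $\zeta_1=2\Tr(\bm{\Sigma}^2(\bm{\Sigma}-\bI_d)^2)$; since $\frac{4(n-2)}{n(n-1)}<\frac{4}{n}$, this contributes at most $\frac{8}{n}\Tr(\bm{\Sigma}^2(\bm{\Sigma}-\bI_d)^2)$, the second term of the claimed bound.

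The remaining step, and the one I expect to be the real obstacle, is computing $\zeta_2$. Here I would expand $h(\bx_1,\bx_2)^2$ into the monomials $(\bx_1^\top\bx_2)^4$, $(\bx_1^\top\bx_2)^2\|\bx_i\|_2^2$, $\|\bx_1\|_2^2\|\bx_2\|_2^2$, $\|\bx_i\|_2^4$, and the lower-order terms involving the constant $d$, substitute $\bx_i=\bm{\Sigma}^{1/2}\bg_i$, and evaluate each resulting Gaussian moment by enumerating Wick pairings; every term becomes a polynomial in $\Tr(\bm{\Sigma})$, $\Tr(\bm{\Sigma}^2)$ and $\Tr(\bm{\Sigma}^4)$. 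The delicate part is the eighth-order moment $\E[(\bg_1^\top\bm{\Sigma}\bg_2)^4]$ and the bookkeeping needed to see that, after subtracting $(\E h)^2=\|\bm{\Sigma}-\bI_d\|_F^4$, all contributions other than $2\,\Tr^2(\bm{\Sigma}^2)$ and $2\,\Tr(\bm{\Sigma}^4)$ cancel, yielding $\zeta_2=2[\Tr^2(\bm{\Sigma}^2)+\Tr(\bm{\Sigma}^4)]$ and hence the first term $\frac{4}{n(n-1)}[\Tr^2(\bm{\Sigma}^2)+\Tr(\bm{\Sigma}^4)]$. This is precisely the moment calculation carried out in \cite{cai2013optimal}, so in the writeup the cleanest route is simply to cite it rather than reproduce the enumeration of pairings.
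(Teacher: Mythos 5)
The paper itself gives no derivation for this lemma; it is quoted verbatim from \cite{cai2013optimal}, so your closing instinct to cite rather than re-derive is exactly what the authors do. Your sketch of the mean is correct, the Hoeffding variance formula $\sigma^2(\texttt{T}_n)=\frac{4(n-2)}{n(n-1)}\zeta_1+\frac{2}{n(n-1)}\zeta_2$ is the right tool, and your computation $\zeta_1=\Var\bigl(\bx^\top(\bm{\Sigma}-\bI_d)\bx\bigr)=2\Tr(\bm{\Sigma}^2(\bm{\Sigma}-\bI_d)^2)$ is correct.

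The gap is in $\zeta_2$ and in how you treat the statement. First, the claimed cancellation fails: carrying out the Wick calculation with $\E[(\bx_1^\top\bx_2)^4]=3\Tr^2(\bm{\Sigma}^2)+6\Tr(\bm{\Sigma}^4)$, $\E[(\bx_1^\top\bx_2)^2\|\bx_1\|_2^2]=\Tr(\bm{\Sigma})\Tr(\bm{\Sigma}^2)+2\Tr(\bm{\Sigma}^3)$ and $\E[\|\bx\|_2^4]=\Tr^2(\bm{\Sigma})+2\Tr(\bm{\Sigma}^2)$, one gets $\zeta_2=2\bigl[\Tr^2(\bm{\Sigma}^2)+\Tr(\bm{\Sigma}^4)\bigr]+4\bigl[\Tr(\bm{\Sigma}^4)-2\Tr(\bm{\Sigma}^3)+\Tr(\bm{\Sigma}^2)\bigr]=2\bigl[\Tr^2(\bm{\Sigma}^2)+\Tr(\bm{\Sigma}^4)\bigr]+4\Tr(\bm{\Sigma}^2(\bm{\Sigma}-\bI_d)^2)$; your value $2[\Tr^2(\bm{\Sigma}^2)+\Tr(\bm{\Sigma}^4)]$ is only the degenerate case $\bm{\Sigma}=\bI_d$. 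Second, the lemma asserts an \emph{equality}, so the step ``$\frac{4(n-2)}{n(n-1)}\zeta_1$ contributes at most $\frac{8}{n}\Tr(\bm{\Sigma}^2(\bm{\Sigma}-\bI_d)^2)$'' does not prove the statement, and with your $\zeta_2$ the two pieces sum to $\frac{4}{n(n-1)}[\Tr^2(\bm{\Sigma}^2)+\Tr(\bm{\Sigma}^4)]+\frac{8(n-2)}{n(n-1)}\Tr(\bm{\Sigma}^2(\bm{\Sigma}-\bI_d)^2)$, which is strictly smaller than the stated variance, i.e.\ a false identity. The fix is pleasant: the extra $4\Tr(\bm{\Sigma}^2(\bm{\Sigma}-\bI_d)^2)$ in $\zeta_2$ contributes $\frac{8}{n(n-1)}\Tr(\bm{\Sigma}^2(\bm{\Sigma}-\bI_d)^2)$, and since $\frac{8(n-2)}{n(n-1)}+\frac{8}{n(n-1)}=\frac{8}{n}$, the Hoeffding decomposition then reproduces the exact formula of the lemma with no inequality needed.
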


\begin{lemma}
\label{lem:X-to-X+I-bound}
Fix $d, n \geq 1$, $\bm{\Sigma} \in \R^{d \times d}$, and $b \geq 0$.
If $\| \bm{\Sigma} - \bI_d \|_F^2 = \frac{b^2 d}{n}$, then $\| \bm{\Sigma} \|_F^2 \leq d \cdot \left( 1 + \frac{b}{\sqrt{n}} \right)^2$.
\end{lemma}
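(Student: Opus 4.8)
The plan is to apply the triangle inequality for the Frobenius norm directly. Write $\bm{\Sigma} = (\bm{\Sigma} - \bI_d) + \bI_d$, so that
\[
\| \bm{\Sigma} \|_F \leq \| \bm{\Sigma} - \bI_d \|_F + \| \bI_d \|_F.
\]
The first term on the right is $\sqrt{b^2 d / n} = b\sqrt{d}/\sqrt{n}$ by hypothesis, and the second term is $\| \bI_d \|_F = \sqrt{d}$ since the identity has $d$ diagonal entries equal to $1$. Hence $\| \bm{\Sigma} \|_F \leq \sqrt{d}\,(1 + b/\sqrt{n})$.

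To finish, I would square both sides; since both sides are nonnegative (using $b \geq 0$), squaring preserves the inequality, giving $\| \bm{\Sigma} \|_F^2 \leq d\,(1 + b/\sqrt{n})^2$, as claimed.

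There is essentially no obstacle here — the only things being used are the triangle inequality for $\|\cdot\|_F$ (a genuine norm) and the elementary fact $\|\bI_d\|_F = \sqrt{d}$. The lemma is a bookkeeping step that will later be combined with the variance bound in \cref{lem:expectation-and-variance-of-test-statistic} to control $\Tr(\bm{\Sigma}^2) = \|\bm{\Sigma}\|_F^2$ and the higher trace terms $\Tr(\bm{\Sigma}^4) \le \|\bm{\Sigma}\|_F^4$, $\Tr(\bm{\Sigma}^2(\bm{\Sigma}-\bI_d)^2) \le \|\bm{\Sigma}\|_F^2 \|\bm{\Sigma}-\bI_d\|_F^2$, in the analysis of \textsc{TolerantZMGCT}; but that is outside the scope of this statement.
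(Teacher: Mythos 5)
Your proof is correct and matches the paper's argument exactly: both apply the triangle inequality for the Frobenius norm to $\bm{\Sigma} = (\bm{\Sigma} - \bI_d) + \bI_d$, use $\|\bI_d\|_F = \sqrt{d}$, and square the resulting bound. No gaps.
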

\begin{proof}
Since the matrices can be treated as vectors in $\mathbb{R}^{d^2}$ and then the Frobenius norm corresponds to the $\ell_2$ norm, we see that
\begin{align*}
\| \bm{\Sigma} \|_F
&\leq \| \bm{\Sigma} - \bI_d \|_F + \| \bI_d \|_F \tag{Triangle inequality}\\
&= b \cdot \sqrt{\frac{d}{n}} + \sqrt{d} \tag{Since $\| \bm{\Sigma} - \bI_d \|_F^2 = \frac{b^2 d}{n}$ and $\| \bI_d \|_F^2 = d$}\\
&=\sqrt{d}\left(1+\frac{b}{\sqrt{n}}\right)
\end{align*}
Therefore, $\| \bm{\Sigma} \|_F^2 \leq d \cdot \left( 1 + \frac{b}{\sqrt{n}} \right)^2$ as desired.
\end{proof}

\begin{restatable}{lemma}{teststatisticvarianceupperbound}
\label{lem:test-statistic-variance-upper-bound}
Fix $d \geq 1$, $n \geq 2$, $\bm{\Sigma} \in \R^{d \times d}$, and $b \geq 0$.
If $\| \bm{\Sigma} - \bI_d \|_F^2 = \frac{b^2 d}{n}$, then for the test statistic $\texttt{T}_n$ defined in \cref{def:test-statistic-Tn}, we have
\[
\sigma^2(\texttt{T}_n)
\leq \frac{64 d^2}{n^2} \cdot \left( 1 + \frac{b^2}{n} \right) \cdot \left( 1 + \frac{b^2}{n} + b^2 \right)
\]
\end{restatable}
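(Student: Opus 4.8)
The plan is to start from the exact variance formula supplied by \cref{lem:expectation-and-variance-of-test-statistic}, namely
\[
\sigma^2(\texttt{T}_n) = \frac{4}{n(n-1)}\left[\Tr^2(\bm{\Sigma}^2) + \Tr(\bm{\Sigma}^4)\right] + \frac{8}{n}\,\Tr\!\left(\bm{\Sigma}^2(\bm{\Sigma}-\bI_d)^2\right),
\]
and to bound each of the three trace quantities by elementary spectral estimates in terms of $\|\bm{\Sigma}\|_F^2$ and $\|\bm{\Sigma}-\bI_d\|_F^2$. The hypothesis fixes $\|\bm{\Sigma}-\bI_d\|_F^2 = b^2 d/n$, and \cref{lem:X-to-X+I-bound} then lets me replace $\|\bm{\Sigma}\|_F^2$ by $d(1+b/\sqrt n)^2$; after that it is pure bookkeeping.

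Concretely, I would first record the three spectral inequalities, all obtained by diagonalizing the symmetric matrix $\bm{\Sigma}$ with eigenvalues $\lambda_1,\dots,\lambda_d$: (i) $\Tr(\bm{\Sigma}^2) = \sum_i \lambda_i^2 = \|\bm{\Sigma}\|_F^2$; (ii) $\Tr(\bm{\Sigma}^4) = \sum_i \lambda_i^4 \le \bigl(\sum_i \lambda_i^2\bigr)^2 = \|\bm{\Sigma}\|_F^4$, so $\Tr^2(\bm{\Sigma}^2)+\Tr(\bm{\Sigma}^4) \le 2\|\bm{\Sigma}\|_F^4$; and (iii), since $\bm{\Sigma}$ and $\bm{\Sigma}-\bI_d$ are simultaneously diagonalizable, $\Tr(\bm{\Sigma}^2(\bm{\Sigma}-\bI_d)^2) = \sum_i \lambda_i^2(\lambda_i-1)^2 \le \|\bm{\Sigma}\|_2^2\,\|\bm{\Sigma}-\bI_d\|_F^2 \le \|\bm{\Sigma}\|_F^2\,\|\bm{\Sigma}-\bI_d\|_F^2$. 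Substituting $\|\bm{\Sigma}-\bI_d\|_F^2 = b^2 d/n$ and, via \cref{lem:X-to-X+I-bound}, $\|\bm{\Sigma}\|_F^2 \le d\,c$ where $c := (1+b/\sqrt n)^2$, the variance formula becomes
\[
\sigma^2(\texttt{T}_n) \le \frac{8 d^2 c^2}{n(n-1)} + \frac{8}{n}\cdot\frac{b^2 d^2 c}{n} \le \frac{8 d^2 c}{n^2}\left(2c + b^2\right),
\]
where the last step uses $n\ge 2$ so that $n/(n-1)\le 2$.

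The final step is the scalar inequality $c = 1 + 2b/\sqrt n + b^2/n \le 2(1+b^2/n)$, which is just $(1-b/\sqrt n)^2\ge 0$. It gives $8c \le 16(1+b^2/n)$ and $2c+b^2 \le 4(1+b^2/n)+b^2 \le 4(1+b^2/n+b^2)$, whence
\[
\sigma^2(\texttt{T}_n) \le \frac{d^2}{n^2}\cdot 16\!\left(1+\tfrac{b^2}{n}\right)\cdot 4\!\left(1+\tfrac{b^2}{n}+b^2\right) = \frac{64 d^2}{n^2}\left(1+\tfrac{b^2}{n}\right)\!\left(1+\tfrac{b^2}{n}+b^2\right),
\]
as claimed. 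I do not expect any real obstacle: the argument is a sequence of standard norm inequalities. The only thing requiring mild care is tracking the constant $64$ — one is deliberately wasteful in converting powers of $(1+b/\sqrt n)$ into powers of $(1+b^2/n)$ (e.g. $2c+b^2 \le 4(1+b^2/n+b^2)$ loses a factor), and one must remember that the $n\ge 2$ assumption is exactly what tames $1/(n-1)$; a tighter constant is available by keeping $n/(n-1)$ explicit and skipping the crude $c\le 2(1+b^2/n)$ bound, but it is not needed here.
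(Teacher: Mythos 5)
Your proposal is correct and follows essentially the same route as the paper's proof: both start from the exact variance formula of \cref{lem:expectation-and-variance-of-test-statistic}, bound $\Tr(\bm{\Sigma}^4)$ by $\|\bm{\Sigma}\|_F^4$ and $\Tr(\bm{\Sigma}^2(\bm{\Sigma}-\bI_d)^2)$ by $\|\bm{\Sigma}\|_F^2\|\bm{\Sigma}-\bI_d\|_F^2$, invoke \cref{lem:X-to-X+I-bound}, use $n\ge 2$ to control $1/(n-1)$, and finish with $(1+b/\sqrt{n})^2 \le 2(1+b^2/n)$ to reach the constant $64$. The only differences are cosmetic (you prove the third trace bound by simultaneous diagonalization rather than Frobenius submultiplicativity, and order the bookkeeping slightly differently), and your constants check out.
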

\begin{proof}
We begin by observing two simple upper bounds for $\Tr(\bm{\Sigma}^4)$ and $\Tr(\bm{\Sigma}^2 (\bm{\Sigma} - \bI_d)^2)$.

\begin{equation}
\label{eq:upper-bound-second-term}
\Tr(\bm{\Sigma}^4)
= \| \bm{\Sigma}^2 \|_F^2
\leq \| \bm{\Sigma} \|_F^2 \cdot \| \bm{\Sigma} \|_F^2
= \| \bm{\Sigma} \|_F^4
= \Tr^2(\bm{\Sigma}^2)
\end{equation}

Since $\bm{\Sigma} (\bm{\Sigma} - \bI_d) = \bm{\Sigma}^2 - \bm{\Sigma} = (\bm{\Sigma} - \bI_d) \bm{\Sigma}$, i.e.\ $\bm{\Sigma}$ and $\bm{\Sigma} - \bI_d$ commute, we have
\begin{equation}
\label{eq:upper-bound-third-term}
\Tr(\bm{\Sigma}^2 (\bm{\Sigma} - \bI_d)^2)
= \Tr((\bm{\Sigma} (\bm{\Sigma} - \bI_d))^2)
= \| \bm{\Sigma} (\bm{\Sigma} - \bI_d) \|_F^2
\leq \| \bm{\Sigma} \|_F^2 \cdot \| \bm{\Sigma} - \bI_d \|_F^2
= \Tr(\bm{\Sigma}^2) \cdot \Tr((\bm{\Sigma} - \bI_d)^2)
\end{equation}

\begin{align*}
&\; \bm{\Sigma}^2(\texttt{T}_n)\\
= &\; \frac{4}{n (n-1)} \left[ \Tr^2(\bm{\Sigma}^2) + \Tr(\bm{\Sigma}^4) \right] + \frac{8}{n} 
\Tr(\bm{\Sigma}^2 (\bm{\Sigma} - \bI_d)^2) && \tag{By \cref{lem:expectation-and-variance-of-test-statistic}}\\
\leq &\; \frac{8}{n (n-1)} \left[ \Tr^2(\bm{\Sigma}^2) + (n-1) \cdot 
\Tr(\bm{\Sigma}^2 (\bm{\Sigma} - \bI_d)^2) \right] && \tag{By \cref{eq:upper-bound-second-term}}\\
\leq &\; \frac{8}{n (n-1)} \left[ \Tr^2(\bm{\Sigma}^2) + (n-1) \cdot \Tr(\bm{\Sigma}^2) \cdot \Tr((\bm{\Sigma} - \bI_d)^2) \right] && \tag{By \cref{eq:upper-bound-third-term}}\\
= &\; \frac{8}{n (n-1)} \cdot \Tr(\bm{\Sigma}^2) \cdot \left[ \Tr(\bm{\Sigma}^2) + (n-1) \cdot \Tr((\bm{\Sigma} - \bI_d)^2) \right]\\
\leq &\; \frac{8}{n (n-1)} \cdot \Tr(\bm{\Sigma}^2) \cdot \left[ \Tr(\bm{\Sigma}^2) + n \cdot \Tr((\bm{\Sigma} - \bI_d)^2) \right] && \tag{Since $\Tr((\bm{\Sigma} - \bI_d)^2) \geq 0$}\\
\leq &\; \frac{8}{n (n-1)} \cdot d \cdot \left( 1 + \frac{b}{\sqrt{n}} \right)^2 \cdot \left( d \cdot \left( 1 + \frac{b}{\sqrt{n}} \right)^2 + n \cdot \Tr((\bm{\Sigma} - \bI_d)^2)
\right) && \tag{Since $\Tr(\bm{\Sigma}^2) = \| \bm{\Sigma} \|_F^2$ and by \cref{lem:X-to-X+I-bound}}\\
= &\; \frac{8}{n (n-1)} \cdot d \cdot \left( 1 + \frac{b}{\sqrt{n}} \right)^2 \cdot \left( d \cdot \left( 1 + \frac{b}{\sqrt{n}} \right)^2 + b^2 \cdot d \right) && \tag{Since $\Tr((\bm{\Sigma} - \bI_d)^2) = \| \bm{\Sigma} - \bI_d \|_F^2 = \frac{b^2 d}{n}$}\\
= &\; \frac{8 d^2}{n (n-1)} \cdot \left( 1 + \frac{b}{\sqrt{n}} \right)^2 \cdot \left( \left( 1 + \frac{b}{\sqrt{n}} \right)^2 + b^2 \right)\\
\leq &\; \frac{16 d^2}{n^2} \cdot \left( 1 + \frac{b}{\sqrt{n}} \right)^2 \cdot \left( \left( 1 + \frac{b}{\sqrt{n}} \right)^2 + b^2 \right) && \tag{Since $n \geq 2$}\\
\leq &\; \frac{64 d^2}{n^2} \cdot \left( 1 + \frac{b^2}{n} \right) \cdot \left( 1 + \frac{b^2}{n} + b^2 \right) && \tag{Since $(a+b)^2 \leq 2a^2 + 2b^2$}
\end{align*}
\end{proof}

\begin{proof}[Proof of \cref{lem:tolerant-ZMGCT}]
Let us define $\Delta_{\eps_1, \eps_2} = \max \left\{ \frac{1}{\eps_1^2}, \left( \frac{\eps_1^2}{\eps_2^2 - \eps_1^2} \right)^2, 2 \left( \frac{\eps_2}{\eps_2^2 - \eps_1^2} \right)^2\right \} > 0$ and suppose $\| \bm{\Sigma} - \bI_d \|_F^2 = \frac{b^2 d}{n}$ for some $b \geq 0$.

The total number of samples $m$ required is
$
nr \in \cO \left( d \cdot \Delta_{\eps_1, \eps_2} \cdot \log\left( \frac{1}{\delta} \right) \right)
$
since \textsc{TolerantZMGCT} uses $n = 3200 \cdot d \cdot \Delta_{\eps_1, \eps_2}$ i.i.d.\ samples in each of the $r \in \cO(\log(\frac{1}{\delta}))$ rounds.

For correctness, we will prove that each round $i \in \{1, \ldots, r\}$ succeeds with probability at least $2/3$.
Then, by Chernoff bound, the majority outcome out of $r \geq \log(\frac{12}{\delta})$ independent tests will be correct with probability at least $1 - \delta$.
    
Now, fix an arbitrary round $i \in \{1, \ldots, r\}$.
\textsc{TolerantZMGCT} uses $n = 3200 \cdot d \cdot \Delta_{\eps_1, \eps_2}$ i.i.d.\ samples to form a statistic $T^{(i)}_n$ (\cref{def:test-statistic-Tn}) and tests against the threshold $\tau = \frac{\eps_2^2 + \eps_1^2}{4}$.

\textbf{Case 1}: $\| \bm{\Sigma} - \bI_d \|_F^2 \leq \eps_1^2$

We see that
\begin{align*}
b^2
&= \frac{n}{d} \cdot \| \bm{\Sigma} - \bI_d \|_F^2 \tag{Since $\| \bm{\Sigma} - \bI_d \|_F^2 = \frac{b^2 d}{n}$}\\
&= 3200 \cdot \Delta_{\eps_1, \eps_2} \cdot \| \bm{\Sigma} - \bI_d \|_F^2 \tag{Since $n = 3200 \cdot d \cdot \Delta_{\eps_1, \eps_2}$}\\
&\leq 3200 \cdot \Delta_{\eps_1, \eps_2} \cdot \eps_1^2 \tag{Since $\| \bm{\Sigma} - \bI_d \|_F^2 \leq \eps_1^2$}
\end{align*}
and
\begin{align*}
1 + \frac{b^2}{n}
&= 1 + \frac{\| \bm{\Sigma} - \bI_d \|_F^2}{d} \tag{Since $\| \bm{\Sigma} - \bI_d \|_F^2 = \frac{b^2 d}{n}$}\\
&\leq 1 + \frac{\eps_1^2}{d} \tag{Since $\| \bm{\Sigma} - \bI_d \|_F^2 \leq \eps_1^2$}\\
&\leq 2 \tag{Since $d \geq \eps_2^2 > \eps_1^2$}
\end{align*}

So,
\begin{align*}
\sigma^2(\texttt{T}_n)
& \leq \frac{64 d^2}{n^2} \cdot \left( 1 + \frac{b^2}{n} \right) \cdot \left( 1 + \frac{b^2}{n} + b^2 \right) \tag{By \cref{lem:test-statistic-variance-upper-bound}}\\
&\leq \frac{64 d^2}{n^2} \cdot 2 \cdot \left( 2 + 3200 \cdot \Delta_{\eps_1, \eps_2} \cdot \eps_1^2 \right) \tag{From above}\\
&= \frac{64 \cdot 2}{3200^2} \cdot \frac{1}{\Delta_{\eps_1, \eps_2}^2} \cdot \left( 2 + 3200 \cdot \Delta_{\eps_1, \eps_2} \cdot \eps_1^2 \right) \tag{Since $n = 3200 \cdot d \cdot \Delta_{\eps_1, \eps_2}$}\\
&\leq \frac{64 \cdot 2}{3200^2} \cdot \frac{1}{\Delta_{\eps_1, \eps_2}^2} \cdot 3202 \cdot \Delta_{\eps_1, \eps_2} \cdot \eps_1^2 \tag{Since $\Delta_{\eps_1, \eps_2} \eps_1^2 \geq 1$}\\
&\leq \frac{64 \cdot 2 \cdot 3202}{3200^2} \cdot (\eps_2^2 - \eps_1^2)^2 \tag{Since $\left( \frac{\eps_1^2}{\eps_2^2 - \eps_1^2} \right)^2 \leq \Delta_{\eps_1, \eps_2}$}
\end{align*}

Chebyshev's inequality then tells us that
\begin{align*}
\Pr\left( \texttt{T}_n > \tau \right)
&= \Pr\left( \texttt{T}_n > \eps_1^2 + \frac{\eps_2^2 - \eps_1^2}{2} \right) \tag{Since $\tau = \frac{\eps_2^2 + \eps_1^2}{2} = \eps_1^2 + \frac{\eps_2^2 - \eps_1^2}{2}$}\\
&\leq \Pr\left( \texttt{T}_n > \| \bm{\Sigma} - \bI_d \|_F^2 + \frac{\eps_2^2 - \eps_1^2}{2} \right) \tag{Since $\| \bm{\Sigma} - \bI_d \|_F^2 \leq \eps_1^2$}\\
&= \Pr\left( \texttt{T}_n > \E[\texttt{T}_n] + \frac{\eps_2^2 - \eps_1^2}{2} \right) \tag{By \cref{lem:expectation-and-variance-of-test-statistic}}\\
&\leq \Pr\left( \left| \texttt{T}_n - \E[\texttt{T}_n] \right| > \frac{\eps_2^2 - \eps_1^2}{2} \right) \tag{Adding absolute sign}\\
&\leq \sigma^2(\texttt{T}_n) \cdot \left( \frac{2}{\eps_2^2 - \eps_1^2} \right)^2 \tag{Chebyshev's inequality}\\
&\leq \frac{64 \cdot 2 \cdot 3202}{3200^2} \cdot (\eps_2^2 - \eps_1^2)^2 \cdot \frac{4}{(\eps_2^2 - \eps_1^2)^2} \tag{From above}\\
&< \frac{1}{3}
\end{align*}
Thus, when $\| \bm{\Sigma} - \bI_d \|_F^2 \leq \eps_1^2$, we have $\Pr\left( \texttt{T}_n < \tau \right) \geq 2/3$ and the $i^{th}$ test outcome will be correctly an $\Accept$ with probability at least $2/3$.

\textbf{Case 2}: $\| \bm{\Sigma} - \bI_d \|_F^2 \geq \eps_2^2$

We can lower bound $b^2$ as follows:
\begin{align*}
{\color{blue}b^2}
&= \frac{n}{d} \cdot \| \bm{\Sigma} - \bI_d \|_F^2 \tag{Since $\| \bm{\Sigma} - \bI_d \|_F^2 = \frac{b^2 d}{n}$}\\
&= 3200 \cdot \Delta_{\eps_1, \eps_2} \cdot \| \bm{\Sigma} - \bI_d \|_F^2 \tag{Since $n = 3200 \cdot d \cdot \Delta_{\eps_1, \eps_2}$}\\
&\geq {\color{blue}3200 \cdot \Delta_{\eps_1, \eps_2} \cdot \eps_2^2} \tag{Since $\| \bm{\Sigma} - \bI_d \|_F^2 \geq \eps_2^2$}
\end{align*}

Meanwhile, we can lower bound $n$ as follows:
\begin{align*}
{\color{red}n}
&= 3200 \cdot d \cdot \Delta_{\eps_1, \eps_2} \tag{Since $n = 3200 \cdot d \cdot \Delta_{\eps_1, \eps_2}$}\\
&\geq 3200 \cdot \eps_2^2 \cdot \Delta_{\eps_1, \eps_2} \tag{Since $d \geq \eps_2^2$}\\
&\geq {\color{red}\frac{3200 \cdot \eps_2^2 \cdot \Delta_{\eps_1, \eps_2}}{\Delta_{\eps_1, \eps_2} \cdot \left( \frac{\eps_2^2 - \eps_1^2}{\eps_2} \right)^2 - 1}} \tag{Since $\Delta_{\eps_1, \eps_2} \geq 2 \left( \frac{\eps_2}{\eps_2^2 - \eps_1^2} \right)^2$}  
\end{align*}

Using these lower bounds on $b^2$ and $n$ (which we color for convenience), we can conclude that $1 + \frac{b^2}{n} \leq \frac{b^2}{3200} \cdot \left( \frac{\eps_2^2 - \eps_1^2}{\eps_2^2} \right)^2$ via the following two equivalences:
\[
1 + \frac{b^2}{n} \leq \frac{b^2}{3200} \cdot \left( \frac{\eps_2^2 - \eps_1^2}{\eps_2^2} \right)^2 \iff {\color{blue}b^2} \geq \frac{n}{\frac{n}{3200} \cdot \left( \frac{\eps_2^2 - \eps_1^2}{\eps_2^2} \right)^2 - 1}
\]
and
\[
{\color{blue}3200 \cdot \Delta_{\eps_1, \eps_2} \cdot \eps_2^2}
\geq \frac{n}{\frac{n}{3200} \cdot \left( \frac{\eps_2^2 - \eps_1^2}{\eps_2^2} \right)^2 - 1}
\iff
{\color{red}n}
\geq \frac{3200 \cdot \Delta_{\eps_1, \eps_2} \cdot \eps_2^2}{\Delta_{\eps_1, \eps_2} \cdot \eps_2^2 \cdot \left( \frac{\eps_2^2 - \eps_1^2}{\eps_2^2} \right)^2 - 1}
= {\color{red}\frac{3200 \cdot \eps_2^2 \cdot \Delta_{\eps_1, \eps_2}}{\Delta_{\eps_1, \eps_2} \cdot \left( \frac{\eps_2^2 - \eps_1^2}{\eps_2} \right)^2 - 1}}
\]

So,
\begin{align*}
\sigma^2(\texttt{T}_n)
& \leq \frac{64 d^2}{n^2} \cdot \left( 1 + \frac{b^2}{n} \right) \cdot \left( 1 + \frac{b^2}{n} + b^2 \right) \tag{By \cref{lem:test-statistic-variance-upper-bound}}\\
&\leq 64 \cdot 2 \cdot \frac{d^2}{n^2} \cdot \left( \frac{b^2}{3200} \cdot \left( \frac{\eps_2^2 - \eps_1^2}{\eps_2^2} \right)^2 \right) \cdot \left( \frac{b^2}{3200} \cdot \left( \frac{\eps_2^2 - \eps_1^2}{\eps_2^2} \right)^2 + b^2 \right) \tag{Since $1 + \frac{b^2}{n} \leq \frac{b^2}{3200} \cdot \left( \frac{\eps_2^2 - \eps_1^2}{\eps_2^2} \right)^2$}\\
&= \frac{64 \cdot 2 \cdot 2}{3200} \cdot \left( \frac{\eps_2^2 - \eps_1^2}{\eps_2^2} \right)^2 \cdot \frac{d^2}{n^2} \cdot b^4 \tag{Since $\frac{1}{3200} \left( \frac{\eps_2^2 - \eps_1^2}{\eps_2^2} \right)^2 \leq 1$}\\
&= \frac{64 \cdot 2 \cdot 2}{3200} \cdot \left( \frac{\eps_2^2 - \eps_1^2}{\eps_2^2} \right)^2 \cdot \| \bm{\Sigma} - \bI_d \|_F^4 \tag{Since $\| \bm{\Sigma} - \bI_d \|_F^2 = \frac{b^2 d}{n}$}
\end{align*}

Chebyshev's inequality then tells us that
\begin{align*}
\Pr\left( \texttt{T}_n < \tau \right)
&= \Pr\left( \texttt{T}_n < \eps_2^2 \cdot \left( 1 - \frac{\eps_2^2 - \eps_1^2}{2 \eps_2^2} \right) \right) \tag{Since $\tau = \frac{\eps_2^2 + \eps_1^2}{2} = \eps_2^2 - \frac{\eps_2^2 - \eps_1^2}{2} = \eps_2^2 \cdot \left( 1 - \frac{\eps_2^2 - \eps_1^2}{2 \eps_2^2} \right)$}\\
&\leq \Pr\left( \texttt{T}_n < \| \bm{\Sigma} - \bI_d \|_F^2 \cdot \left( 1 - \frac{\eps_2^2 - \eps_1^2}{2 \eps_2^2} \right) \right) \tag{Since $\| \bm{\Sigma} - \bI_d \|_F^2 \geq \eps_2^2$}\\
&= \Pr\left( \| \bm{\Sigma} - \bI_d \|_F^2 - \texttt{T}_n > \| \bm{\Sigma} - \bI_d \|_F^2 \cdot  \frac{\eps_2^2 - \eps_1^2}{2 \eps_2^2} \right) \tag{Rearranging}\\
&= \Pr\left( \E[\texttt{T}_n] - \texttt{T}_n > \| \bm{\Sigma} - \bI_d \|_F^2 \cdot  \frac{\eps_2^2 - \eps_1^2}{2 \eps_2^2} \right) \tag{By \cref{lem:expectation-and-variance-of-test-statistic}}\\
&\leq \Pr\left( \left| \E[\texttt{T}_n] - \texttt{T}_n \right| > \| \bm{\Sigma} - \bI_d \|_F^2 \cdot  \frac{\eps_2^2 - \eps_1^2}{2 \eps_2^2} \right) \tag{Adding absolute sign}\\
&\leq \sigma^2(\texttt{T}_n) \cdot \left( \frac{1}{\| \bm{\Sigma} - \bI_d \|_F^2} \cdot \frac{2 \eps_2^2}{\eps_2^2 - \eps_1^2} \right)^2 \tag{Chebyshev's inequality}\\
&\leq \frac{64 \cdot 2 \cdot 2}{3200} \cdot \left( \frac{\eps_2^2 - \eps_1^2}{\eps_2^2} \right)^2 \cdot \| \bm{\Sigma} - \bI_d \|_F^4 \cdot \left( \frac{1}{\| \bm{\Sigma} - \bI_d \|_F^2} \cdot \frac{2 \eps_2^2}{\eps_2^2 - \eps_1^2} \right)^2 \tag{From above}\\
&= \frac{64 \cdot 2 \cdot 2 \cdot 4}{3200}\\
&< \frac{1}{3}
\end{align*}
Thus, when $\| \bm{\Sigma} - \bI_d \|_F^2 \geq \eps_2^2$, we have $\Pr\left( \texttt{T}_n > \tau \right) \geq 2/3$ and the $i^{th}$ test outcome will be correctly an $\Reject$ with probability at least $2/3$.
\end{proof}

\tolerantcovariancetester*
\begin{proof}
Use the guarantee of \cref{lem:tolerant-ZMGCT} on \textsc{TolerantZMGCT} (\cref{alg:zero-mean-covariance-testing}) with parameters $\eps_1^2 = \eps^2$ and $\eps_2^2 = 2 \eps^2$.
\end{proof}

\subsection{Basic results from \texorpdfstring{\cref{sec:preliminaries}}{Section 2}}
\label{sec:appendix-preliminaries-proofs}

\rotatingnorm*
\begin{proof}
Exercise 5.6.P58(b) of \cite{horn2012matrix} tells us that $\| \bA \bB \| = \| \bB \bA \|$ when $\bA$ normal and $\bB$ is Hermitian.
Since normal matrices are invertible and every real matrix is Hermitian, the claim follows.
\end{proof}

\applyingrotatingnorm*
\begin{proof}
$
\| \bA^{-1/2} \bB \bA^{-1/2} - \bI_d \|
= \| (\bA^{-1/2} \bB - \bA^{1/2}) \bA^{-1/2}\|
= \| \bA^{-1/2} (\bA^{-1/2} \bB - \bA^{1/2}) \|
= \| \bA^{-1} \bB - \bI_d \|.
$
\end{proof}

\traceinequality*
\begin{proof}
Let $\lambda_1(\bM), \ldots, \lambda_d(\bM)$ denote the eigenvalues of a matrix $\bM \in \R^{d \times d}$.
\begin{align*}
\Tr(\bA \bB \bC)
&\leq \sum_i \lambda_i(\bA \bB) \cdot \lambda_i(\bC) \tag{by von Neumann trace inequality}\\
&= \sum_i \lambda_i(\bB \bA) \cdot \lambda_i(\bC) \tag{e.g.\ see Theorem 1.3.22 of \cite{horn2012matrix}}\\
&\leq \sum_i | \lambda_i(\bB \bA) \cdot \lambda_i(\bC) |\\
&\leq \left\| \begin{pmatrix}\lambda_1(\bB \bA)\\ \vdots\\ \lambda_d(\bB \bA)\end{pmatrix} \right\|_1 \cdot \left\| \begin{pmatrix}\lambda_1(\bC)\\ \vdots\\ \lambda_d(\bC)\end{pmatrix} \right\|_\infty \tag{H\"{o}lder's inequality}\\
&= \sum_i |\lambda_i(\bB \bA)| \cdot \max_i \lambda_i(\bC) \tag{Definitions of vector $\ell_1$ and $\ell_\infty$ norms}\\
&\leq \sum_i |\lambda_i(\bB \bA)| \cdot \| \bC \|_2 \tag{Definition of matrix spectral norm}
\end{align*}

It remains to argue that $\sum_i |\lambda_i(\bB \bA)| \leq \| \vec(\bB \bA) \|_1$.
To this end, consider the singular value decomposition (SVD) of $\bB \bA = \bU \bm{\Sigma} \bV^\top$ with unitary matrices $\bU, \bV$ and diagonal matrix $\bm{\Sigma} = \textrm{diag}(\sigma_1, \ldots, \sigma_d)$.
Let us denote the eigenvalues of $\bB \bA$ by $\sigma_1, \ldots, \sigma_d$ and the columns of $\bB \bA$ by $\bz_1, \ldots, \bz_d \in \R^d$.
Then,
\begin{align*}
\sum_i |\lambda_i(\bB \bA)|
&\leq \sum_i \sigma_i \tag{e.g.\ see Equation (7.3.17) in \cite{horn2012matrix}}\\
&= \Tr(\bm{\Sigma}) \tag{By definition of $\bm{\Sigma}$}\\
&= \Tr(\bV^\top \bV \bU^\top \bU \bm{\Sigma}) \tag{Since $\bU$ and $\bV$ are unitary matrices}\\
&= \Tr(\bV \bU^\top \bU \bm{\Sigma} \bV^\top) \tag{By cyclic property of trace}\\
&= \Tr(\bV \bU^\top \bB \bA) \tag{By SVD of $\bB \bA$}\\
&= \sum_{i=1}^d (\bV \bU^\top \bz_i)_i \tag{By definition of trace}\\
&\leq \sum_{i=1}^d \| \bV \bU^\top \bz_i \|_2 \tag{Since $(\bV \bU^\top \bz_i)_i^2$ is just one term in summation of $\| \bV \bU^\top \bz_i \|_2^2$}\\
&= \sum_{i=1}^d \| \bz_i \|_2 \tag{Since $\bU$ and $\bV$ are unitary matrices}\\
&\leq \sum_{i=1}^d \| \bz_i \|_1 \tag{Since $\ell_2 \leq \ell_1$}\\
&= \sum_{i=1}^d \sum_{j=1}^d |(\bB \bA)_{i,j}| \tag{By definition of vector $\ell_1$ norm}\\
&= \| \vec(\bB \bA) \|_1 \tag{By definition of $\| \vec(\bB \bA) \|_1$}
\end{align*}
Putting together, we get $\Tr(\bA \bB \bC) \leq \sum_i |\lambda_i(\bB \bA)| \cdot \| \bC \|_2 \leq \| \vec(\bB \bA) \|_1 \cdot \| \bC \|_2$ as desired.
\end{proof}

\vectorizedinequalities*
\begin{proof}
To see $\| \vec(\bA + \bB) \|_1 \leq \| \vec(\bA) \|_1 + \| \vec(\bB) \|_1$, observe that
\[
\| \vec(\bA + \bB) \|_1
= \sum_{i=1}^d \sum_{j=1}^d | \bA_{ij} + \bB_{ij} |
\leq \sum_{i=1}^d \sum_{j=1}^d | \bA_{ij} | + \sum_{i=1}^d \sum_{j=1}^d | \bB_{ij} |
= \| \vec(\bA) \|_1 + \| \vec(\bB) \|_1
\]
To see $\| \vec(\bA \bB) \|_1 \leq \| \vec(\bA) \|_1 \cdot \| \vec(\bB) \|_1$, observe that
\[
\| \vec(\bA \bB) \|_1
= \sum_{i=1}^d \sum_{j=1}^d \sum_{k=1}^d | \bA_{ij} \bB_{jk} |
\leq \left( \sum_{i=1}^d \sum_{j=1}^d | \bA_{ij}  | \right) \cdot \left( \sum_{j=1}^d \sum_{k=1}^d | \bB_{jk} | \right)
= \| \vec(\bA) \|_1 \cdot \| \vec(\bB) \|_1
\]
\end{proof}

\klknownfact*
\begin{proof}
Let $\cP \sim N(\bm{\mu}_{\cP}, \bm{\Sigma}_{\cP})$ and $\cQ \sim N(\bm{\mu}_{\cQ}, \bm{\Sigma}_{\cQ})$ be two $d$-dimensional multivariate Gaussian distributions where $\bm{\Sigma}_{\cP}$ and $\bm{\Sigma}_{\cQ}$ are full rank invertible covariance matrices.

By definition, the KL divergence between $\cP$ and $\cQ$ is
\begin{equation}
\label{eq:KL-definition}
\kl(\cP, \cQ)
= \frac{1}{2} \cdot \left( \Tr(\bm{\Sigma}_{\cQ}^{-1} \bm{\Sigma}_{\cP}) - d + (\bm{\mu}_{\cQ} - \bm{\mu}_{\cP})^\top \bm{\Sigma}_{\cQ}^{-1} (\bm{\mu}_{\cQ} - \bm{\mu}_{\cP}) + \ln \left( \frac{\det \bm{\Sigma}_{\cQ}}{\det \bm{\Sigma}_{\cP}} \right) \right)
\end{equation}

Let us define the matrix $\bX = \bm{\Sigma}_{\cQ}^{-1/2} \bm{\Sigma}_{\cP} \bm{\Sigma}_{\cQ}^{-1/2} - \bI_d$ with eigenvalues $\lambda_1, \ldots, \lambda_d$.
Note that $\bX$ is invertible because $\bm{\Sigma}_{\cP}$ and $\bm{\Sigma}_{\cQ}$ are invertible, so $\lambda_1, \ldots, \lambda_d > 0$.
Then, \cref{eq:KL-definition} can be upper bounded as
\begin{multline}
\label{eq:KL-definition-equal-means}
\kl(\cP, \cQ)
= \frac{1}{2} \cdot \left( \Tr(\bm{\Sigma}_{\cQ}^{-1} \bm{\Sigma}_{\cP}) - d + (\bm{\mu}_{\cQ} - \bm{\mu}_{\cP})^\top \bm{\Sigma}_{\cQ}^{-1} (\bm{\mu}_{\cQ} - \bm{\mu}_{\cP}) + \ln \left( \frac{\det \bm{\Sigma}_{\cQ}}{\det \bm{\Sigma}_{\cP}} \right) \right)\\
\leq \frac{1}{2} \left( (\bm{\mu}_{\cQ} - \bm{\mu}_{\cP})^\top \bm{\Sigma}_{\cQ}^{-1} (\bm{\mu}_{\cQ} - \bm{\mu}_{\cP}) + \| \bX \|_F^2 \right)
\end{multline}

This is because $\Tr(\bm{\Sigma}_{\cQ}^{-1} \bm{\Sigma}_{\cP}) = \Tr(\bm{\Sigma}_{\cQ}^{-1/2} \bm{\Sigma}_{\cP} \bm{\Sigma}_{\cQ}^{-1/2}) = \Tr(\bX + \bI_d) = \Tr(\bX) + d$ and
\begin{multline*}
-\ln \left( \frac{\det \bm{\Sigma}_{\cQ}}{\det \bm{\Sigma}_{\cP}} \right)
= \ln \det \left( \bm{\Sigma}_{\cQ}^{-1} \bm{\Sigma}_{\cP} \right)
= \ln \det (\bX + \bI_d)
= \ln \prod_{i=1}^d (1 + \lambda_i)\\
= \sum_{i=1}^d \ln (1 + \lambda_i)
\geq \sum_{i=1}^d (\lambda_i - \lambda_i^2)
= \Tr(\bX) - \sum_{i=1}^d \lambda_i^2
= \Tr(\bX) - \| \bX \|_F^2
\end{multline*}
where the inequality holds due to $\lambda_1, \ldots, \lambda_d > 0$.

When $\bm{\Sigma}_{\cP} = \bm{\Sigma}_{\cQ} = \bI_d$, \cref{eq:KL-definition} reduces to $\kl(\cP, \cQ) = \frac{1}{2} \| \bm{\mu}_{\cQ} - \bm{\mu}_{\cP} \|_2^2$.
Meanwhile, when $\bm{\mu}_{\cP} = \bm{\mu}_{\cQ}$, \cref{eq:KL-definition-equal-means} reduces to $\kl(\cP, \cQ) \leq \frac{1}{2} \left( \| \bX \|_F^2 \right)$.
\end{proof}

\knownpropertiesofempiricalcovariance*
\begin{proof}
For item 1, let $1 \leq r \leq d$ be the rank of $\bm{\Sigma}$.
We consider the case of the $d$-dimensional Gaussian with zero mean and covariance $\bm{\Gamma}_r = \begin{bmatrix}\bI_r & \mathbf{0}\\ \mathbf{0} & \mathbf{0}\end{bmatrix}$, where $\bI_r$ denotes the $r$-dimensional identity matrix and the zero-padding is added when $r < d$.
Note that there is an invertible transformation between samples from $N(\bm{0}, \bm{\Gamma}_r)$ and $N(\bm{0}, \bm{\Sigma})$ with samples from $N(0, \bm{\Gamma}_r)$ having the $r+1, \ldots, d$ coordinates be fixed to $0$.
Now, let us denote the $i$-th standard basis vector by $\bm{e}_i$ and apply an induction argument on $r$ from $1$ to $d$.
The base case ($r = 1$) is obviously true since a single sample $\bm{x}_1$ will span $\{\bm{e}_1\}$ unless $\bm{x}_1 = \bm{0}$, which will happen with probability $0$.
When $r > 1$, by strong induction, $r$ samples $\bm{x}_1, \ldots, \bm{x}_r$ will not span $\{\bm{e}_1, \ldots, \bm{e}_r\}$ only if the $r$-th sample $\bm{x}_r$ lies in the subspace spanned by $\bm{x}_1, \ldots, \bm{x}_{r-1}$.
This is a measure $0$ event under the $N(\bm{0},\bm{\Gamma}_r)$ measure.

For item 2, see Fact 3.4 of \cite{kamath2019privately}.
\end{proof}

\noncentralchisquare*
\begin{proof}
The first item follows from the definition of the non-central chi-squared distribution, noting that the random vector $\bz_n$ is distributed as $N(\sqrt{n} \cdot \bm{\mu}, \bI_d)$.
The second and third items follow from Theorems 3 and 4 of \cite{Ghosh2021} respectively.
\end{proof}

\gaussianmaxconcentration*
\begin{proof}
Since $\bg_1, \ldots, \bg_n \sim N(0, \bI_d)$, we see that $\by = \bg_1 + \ldots + \bg_n \sim N(0, n \bI_d)$.
Furthermore, each coordinate $i \in [d]$ of $\by_i = (y_1, \ldots, y_d)$ is distributed according to $N(0, n)$.
By standard Gaussian tail bounds, we know that $\Pr(|y_i| \geq t) \leq 2 \exp \left(- \frac{t^2}{2n} \right)$ for any $i \in [d]$ and $t > 0$.
So,
\begin{align*}
\Pr \left( \left\| \sum_{i=1}^n \bg_i \right\|_\infty \geq \sqrt{2n \log \left( \frac{2d}{\delta} \right)} \right)
&= \Pr \left( \left\| \by \right\|_\infty \geq \sqrt{2n \log \left( \frac{2d}{\delta} \right)} \right)\\
&= \Pr \left( \max_{i \in [d]} \left\| y_i \right\| \geq \sqrt{2n \log \left( \frac{2d}{\delta} \right)} \right)\\
&\leq \sum_{i=1}^d \Pr \left( \left\| y_i \right\| \geq \sqrt{2n \log \left( \frac{2d}{\delta} \right)} \right) \tag{Union bound over all $d$ coordinates}\\
&\leq 2d \exp \left(- \frac{2n \log \left( \frac{2d}{\delta} \right)}{2n} \right) \tag{Setting $t = 2n \log \left( \frac{2d}{\delta} \right)$}\\
&= \delta
\end{align*}
\end{proof}
\section{Identity covariance setting}

\subsection{Guarantees of \textsc{ApproxL1}}
\label{sec:appendix-approxl1-guarantees}

Here, we show that the guarantees of the \textsc{ApproxL1} algorithm (\cref{alg:approxl1}).

\guaranteesofapproxlone*
\begin{proof}
We begin by stating some properties of $o_1, \ldots, o_w$.
Fix an arbitrary index $j \in \{1, \ldots, w\}$ and suppose $o_j$ is \emph{not} a $\Fail$, i.e.\ the tolerant tester of \cref{lem:tolerant-mean-tester} outputs $\Accept$ for some $i^* \in \{1, 2, \ldots, \lceil \log_2 \zeta/\alpha \rceil\}$.
Note that \textsc{ApproxL1} sets $o_j = \ell_{i^*}$ and the tester outputs $\Reject$ for all smaller indices $i \in \{1, \ldots, i^* - 1\}$.
Since the tester outputs $\Accept$ for $i^*$, we have that $\| \bm{\mu}_{\bB_j} \|_2 \leq 2 \ell_{i^*} = 2 o_j$.
Meanwhile, if $i^* > 1$, then $\| \bm{\mu}_{\bB_j} \|_2 > \ell_{i^* - 1} = \ell_{i^*}/2 = o_j/2$ since the tester outputs $\Reject$ for $i^* - 1$.
Thus, we see that
\begin{itemize}
    \item When $o_j$ is not $\Fail$, we have $\| \bm{\mu}_{\bB_j} \|_2 \leq 2 o_j$.
    \item When $\| \bm{\mu}_{\bB_j} \|_2 \leq 2 \alpha$, we have $i^* = 1$ and $o_j = \ell_1 = \alpha$.
    \item When $\| \bm{\mu}_{\bB_j} \|_2 > 2 \alpha = 2 \ell_1$, we have $i^* > 1$ and so $o_j < 2 \| \bm{\mu}_{\bB_j} \|_2$.
\end{itemize}

\paragraph{Success probability.}
Fix an arbitrary index $i \in \{1, 2, \ldots, \lceil \log_2 \zeta/\alpha \rceil\}$ with $\ell_i = 2^{i-1} \alpha$, where $\ell_i \leq \ell_1 = \alpha$ for any $i$.
We invoke the tolerant tester with $\eps_2 = 2 \ell_i = 2 \eps_1$, so the $i^{th}$ invocation uses at most $n_{k, \eps} \cdot r_{\delta}$ i.i.d.\ samples to succeed with probability at least $1 - \delta$; see \cref{def:m-d-alpha-delta} and \cref{alg:isotropic-gaussian-mean-testing}.
So, with $m(k, \alpha, \delta')$ samples, \emph{any} call to the tolerant tester succeeds with probability at least $1 - \delta'$, where $\delta' = \frac{\delta}{w \cdot \lceil \log_2 \zeta/\alpha \rceil}$.
By construction, there will be at most $w \cdot \lceil \log_2 \zeta/\alpha \rceil$ calls to the tolerant tester.
Therefore, by union bound, \emph{all} calls to the tolerant tester jointly succeed with probability at least $1 - \delta$.

\paragraph{Property 1.}
When \textsc{ApproxL1} outputs $\Fail$, there exists a $\Fail$ amongst $\{o_1, \ldots, o_w\}$.
For any fixed index $j \in \{1, \ldots, w\}$, this can only happen when all calls to the tolerant tester outputs $\Reject$.
This means that $\| \bm{x}_{\bB_j} \|_2 > \eps_1 = \ell_i = 2^{i-1} \cdot \alpha$ for all $i \in \{1, 2, \ldots, \lceil \log_2 \zeta/\alpha \rceil\}$.
In particular, this means that $\| \bm{x}_{\bB_j} \|_2 > \zeta/2$.

\paragraph{Property 2.}
When \textsc{ApproxL1} outputs $\lambda = 2 \sum_{j=1}^w \sqrt{|\bB_j|} \cdot o_j \in \R$, we can lower bound $\lambda$ as follows:
\begin{align*}
\lambda
&= 2 \sum_{j=1}^w \sqrt{|\bB_j|} \cdot o_j\\
&\geq 2 \sum_{j=1}^w \sqrt{|\bB_j|} \cdot \frac{\| \bm{\mu}_{\bB_j} \|_2}{2} \tag{since $\| \bm{\mu}_{\bB_j} \|_2 \leq 2 o_j$}\\
&\geq \sum_{j=1}^w \| \bm{\mu}_{\bB_j} \|_1 \tag{since $\| \bm{\mu}_{\bB_j} \|_1 \leq \sqrt{|\bB_j|} \cdot \| \bm{\mu}_{\bB_j} \|_2$}\\
&= \| \bm{\mu} \|_1 \tag{since $\sum_{j=1}^w \| \bm{\mu}_{\bB_j} \|_1 = \| \bm{\mu}_{\bB_j} \|_1$}
\end{align*}
That is, $\lambda \geq \| \bm{\mu} \|_1$.
Meanwhile, we can also upper bound $\lambda$ as follows:
\begin{align*}
\lambda
&= 2 \sum_{j=1}^w \sqrt{|\bB_j|} \cdot o_j\\
&\leq 2 \sqrt{k} \sum_{j=1}^w o_j \tag{since $|\bB_j| \leq k$}\\
&= 2 \sqrt{k} \cdot \left( \sum_{\substack{j=1\\ \| \bm{\mu}_{\bB_j} \|_2 \leq 2 \alpha}}^w o_j + \sum_{\substack{j=1\\ \| \bm{\mu}_{\bB_j} \|_2 > 2 \alpha}}^w o_j \right) \tag{partitioning the blocks based on $\| \bm{\mu}_{\bB_j} \|_2$ versus $2 \alpha$}\\
&= 2 \sqrt{k} \cdot \left( \sum_{\substack{j=1\\ \| \bm{\mu}_{\bB_j} \|_2 \leq 2 \alpha}}^w \alpha + \sum_{\substack{j=1\\ \| \bm{\mu}_{\bB_j} \|_2 > 2 \alpha}}^w o_j \right) && \tag{since $\| \bm{\mu}_{\bB_j} \|_2 \leq 2 \alpha$ implies $o_j = \alpha$}\\
&\leq 2 \sqrt{k} \cdot \left( \sum_{\substack{j=1\\ \| \bm{\mu}_{\bB_j} \|_2 \leq 2 \alpha}}^w \alpha + \sum_{\substack{j=1\\ \| \bm{\mu}_{\bB_j} \|_2 > 2 \alpha}}^w 2 \| \bm{\mu}_{\bB_j} \|_2 \right) && \tag{since $\| \bm{\mu}_{\bB_j} \|_2 > 2 \alpha$ implies $o_j \leq 2 \| \bm{\mu}_{\bB_j} \|_2$}\\
&\leq 2 \sqrt{k} \cdot \left( \sum_{\substack{j=1\\ \| \bm{\mu}_{\bB_j} \|_2 \leq 2 \alpha}}^w \alpha + 2 \sum_{\substack{j=1\\ \| \bm{\mu}_{\bB_j} \|_2 > 2 \alpha}}^w \| \bm{\mu}_{\bB_j} \|_1 \right) && \tag{since $\| \bm{\mu}_{\bB_j} \|_2 \leq \| \bm{\mu}_{\bB_j} \|_1$}\\
&\leq 2 \sqrt{k} \cdot \left( \lceil d/k \rceil \cdot \alpha + 2 \sum_{\substack{j=1\\ \| \bm{\mu}_{\bB_j} \|_2 > 2 \alpha}}^w \| \bm{\mu}_{\bB_j} \|_1 \right) && \tag{since $|\{j \in [w]: \bm{\mu}_{\bB_j} \|_2 \leq 2 \alpha\}| \leq w$}\\
&\leq 2 \sqrt{k} \cdot \left( \lceil d/k \rceil \cdot \alpha + 2 \| \bm{\mu} \|_1 \right) && \tag{since $\sum_{\substack{j=1\\ \| \bm{\mu}_{\bB_j} \|_2 > 2 \alpha}}^w \| \bm{\mu}_{\bB_j} \|_1 \leq \sum_{j=1}^w \| \bm{\mu}_{\bB_j} \|_1 = \| \bm{\mu}_{\bB_j} \|_1$}
\end{align*}
That is, $\lambda \leq 2 \sqrt{k} \cdot \left( \lceil d/k \rceil \cdot \alpha + 2 \| \bm{\mu} \|_1 \right)$.
The property follows by putting together both bounds.
\end{proof}

\subsection{Deferred derivation}
\label{sec:appendix-identity-covariance-deferred-derivation}

Here, we show how to derive \cref{eq:mean-estimation-program-optimality-after-manipulation} from \cref{eq:mean-estimation-program-optimality}.

For any two vectors $\ba, \bb \in \R^d$, observe that $\| \ba - \bb \|_2^2 = \langle \ba - \bb, \ba - \bb \rangle = (\ba - \bb)^\top (\ba - \bb) = \ba^\top \ba - 2 \ba^\top \bb + \bb^\top \bb$, since $\ba^\top \bb = \bb^\top \ba$ is just a number.
So,
\begin{align*}
\frac{1}{n} \sum_{i=1}^n \| \by_i - \wh{\bm{\mu}} \|_2^2
&= \frac{1}{n} \sum_{i=1}^n \left( \by_i^\top \by_i - {\color{blue}2 \by_i^\top \wh{\bm{\mu}}} + {\color{blue}\wh{\bm{\mu}}^\top \wh{\bm{\mu}}} \right)\\
\frac{1}{n} \sum_{i=1}^n \| \by_i - X \bm{\mu} \|_2^2
&= \frac{1}{n} \sum_{i=1}^n \left( \by_i^\top \by_i - {\color{blue}2 \by_i^\top \bm{\mu}} + {\color{blue}\bm{\mu}^\top \bm{\mu}} \right)
\end{align*}

Therefore,
\begin{align*}
\| \wh{\bm{\mu}} - \bm{\mu} \|_2^2
& = \frac{1}{n} \sum_{i=1}^n \| \wh{\bm{\mu}} - \bm{\mu} \|_2^2\\
& = \frac{1}{n} \sum_{i=1}^n \left( {\color{blue}\wh{\bm{\mu}}^\top \wh{\bm{\mu}}} - {\color{red}2 \bm{\mu}^\top \wh{\bm{\mu}}} + {\color{red}\bm{\mu}^\top \bm{\mu}} \right)\\
& \leq \frac{1}{n} \sum_{i=1}^n \left( {\color{blue}2 \by_i^\top \wh{\bm{\mu}}} - {\color{blue}2 \by_i^\top \bm{\mu}} + {\color{blue}\bm{\mu}^\top \bm{\mu}} - {\color{red}2 \bm{\mu}^\top \wh{\bm{\mu}}} + {\color{red} \bm{\mu}^\top \bm{\mu}} \right) \tag{Since \cref{eq:mean-estimation-program-optimality-after-manipulation} tells us that $\frac{1}{n} \sum_{i=1}^n \| \by_i - \wh{\bm{\mu}} \|_2^2 \leq \frac{1}{n} \sum_{i=1}^n \| \by_i - \bm{\mu} \|_2^2$}\\
& = \frac{2}{n} \sum_{i=1}^n \left( \left( \bm{\mu} + \bg_i \right)^\top \left( \wh{\bm{\mu}} - \bm{\mu} \right) - \bm{\mu}^\top \wh{\bm{\mu}} + \bm{\mu}^\top \bm{\mu} \right) \tag{Since $\by_i = \bm{\mu} + \bg_i$}\\
& = \frac{2}{n} \sum_{i=1}^n \left( \bg_i^\top \left( \wh{\bm{\mu}} - \bm{\mu} \right) \right)\\
& = \frac{2}{n} \sum_{i=1}^n \langle \bg_i, \wh{\bm{\mu}} - \bm{\mu} \rangle\\
& = \frac{2}{n} \langle \sum_{i=1}^n \bg_i, \wh{\bm{\mu}} - \bm{\mu} \rangle \tag{Linearity of inner product}
\end{align*}
establishing \cref{eq:mean-estimation-program-optimality-after-manipulation} as desired.

\section{General covariance setting}

\subsection{The adjustments}
\label{sec:appendix-the-adjustments}

Here, we provide the deferred proofs of \cref{lem:preconditioning-adjustment} and \cref{lem:probabilistic-partitioning-construction} from \cref{sec:the-adjustments}.

\preconditioningadjustment*
\begin{proof}
Suppose $\wh{\bm{\Sigma}} \in \R^{d \times d}$ be the empirical covariance constructed from $n = d$ i.i.d.\ samples from $N(\bm{0}, \bm{\Sigma})$.
Let $\lambda_1 \leq \ldots \leq \lambda_d$ and $\wh{\lambda}_1 \leq \ldots \leq \wh{\lambda}_d$ be the eigenvalues of $\bm{\Sigma}$ and $\wh{\bm{\Sigma}}$ respectively.
By \cref{lem:known-properties-of-empirical-covariance}, we know that:
\begin{itemize}
    \item With probability 1, we have that $\wh{\bm{\Sigma}}$ and $\bm{\Sigma}$ share the same eigenspace.
    \item With probability at least $1 - \delta$, we have $\frac{\wh{\lambda}_1}{\lambda_1} \leq 1 + c_0 \cdot \sqrt{\frac{d + \log 1/\delta}{d}}$ for some absolute constant $c_0$.
\end{itemize}

Let $\wh{\bv}_1, \ldots, \wh{\bv}_d$ be the eigenvectors corresponding to the eigenvalues $\wh{\lambda}_1, \ldots, \wh{\lambda}_d$.
Define the following terms:
\begin{itemize}
    \item $\bV_{\textrm{small}} = \{i \in [d]: \wh{\lambda}_i < 1$\} and $\bV_{\textrm{big}} = [d] \setminus \bV_{\textrm{small}}$
    \item $\bm{\Pi}_{\textrm{small}} = \sum_{i \in \bV_{\textrm{small}}} \wh{\bv}_i \wh{\bv}_i^\top$ and $\bm{\Pi}_{\textrm{big}} = \sum_{i \in \bV_{\textrm{big}}} \wh{\bv}_i \wh{\bv}_i^\top$
    \item $\bA = \sqrt{k} \bm{\Pi}_{\textrm{small}} + \bm{\Pi}_{\textrm{big}}$, where $k = \left(1 + c_0 \cdot \sqrt{\frac{d + \log 1/\delta}{n}} \right) \cdot \frac{1}{\wh{\lambda}_1}$
\end{itemize}

We first argue that the smallest eigenvalue of $\bA \bm{\Sigma} \bA$ is at least $1$, i.e.\ $\lambda_{\min}(\bA \bm{\Sigma} \bA) \geq 1$.
To show this, it suffices to show that $\bu^\top \bA \bm{\Sigma} \bA \bu \geq 1$ for any unit vector $\bu \in \R^d$.
By definition,
\[
\bu^\top \bA \bm{\Sigma} \bA \bu
= k \bu^\top \bm{\Pi}_{\textrm{small}} \bm{\Sigma} \bm{\Pi}_{\textrm{small}} \bu + \bu^\top \bm{\Pi}_{\textrm{big}} \bm{\Sigma} \bm{\Pi}_{\textrm{big}} \bu
\]
since the cross terms are zero because $\bu^\top \bm{\Pi}_{\textrm{small}} \bm{\Sigma} \bm{\Pi}_{\textrm{big}} \bu = \bu^\top \bm{\Pi}_{\textrm{big}} \bm{\Sigma} \bm{\Pi}_{\textrm{small}} \bu = 0$.

Now, observe that $\bu^\top \bm{\Pi}_{\textrm{small}} \bm{\Sigma} \bm{\Pi}_{\textrm{small}} \bu \geq \lambda_1 \cdot \| \bm{\Pi}_{\textrm{small}} \bu \|_2^2$ and $\bu^\top \bm{\Pi}_{\textrm{big}} \bm{\Sigma} \bm{\Pi}_{\textrm{big}} \bu \geq \| \bm{\Pi}_{\textrm{big}} \bu \|_2^2$.
Meanwhile, by Pythagoras theorem, we know that $\| \bm{\Pi}_{\textrm{small}} \bu \|_2^2 + \| \bm{\Pi}_{\textrm{big}} \bu \|_2^2 = 1$.
Therefore,
\begin{align*}
\bu^\top \bA \bm{\Sigma} \bA \bu 
= & k \bu^\top \bm{\Pi}_{\textrm{small}} \bm{\Sigma} \bm{\Pi}_{\textrm{small}} \bu + \bu^\top \bm{\Pi}_{\textrm{big}} \bm{\Sigma} \bm{\Pi}_{\textrm{big}} \bu\\
\geq & k \lambda_1 \cdot \| \bm{\Pi}_{\textrm{small}} \bu \|_2^2 + \| \bm{\Pi}_{\textrm{big}} \bu \|_2^2\\
\geq & \left( \| \bm{\Pi}_{\textrm{small}} \bu \|_2^2 +  \| \bm{\Pi}_{\textrm{big}} \bu \|_2^2 \right)\\
= & 1
\end{align*}
where the last inequality is because $k = \left(1 + c_0 \cdot \sqrt{\frac{d + \log 1/\delta}{n}} \right) \cdot \frac{1}{\wh{\lambda}_1} \geq \frac{1}{\lambda_1}$.

To complete the proof, note that for any full rank PSD matrix $\wt{\bm{\Sigma}} \in \R^{d \times d}$, we have
\begin{align*}
\| (\bA \wt{\bm{\Sigma}} \bA)^{-1/2} \bA \bm{\Sigma} \bA (\bA \wt{\bm{\Sigma}} \bA)^{-1/2} - \bI_d \|
&= \| (\bA \wt{\bm{\Sigma}} \bA)^{-1} \bA \bm{\Sigma} \bA  - \bI_d \|\\
&= \| \bA^{-1} \wt{\bm{\Sigma}}^{-1} \bm{\Sigma} \bA  - \bI_d \|\\
&= \| \wt{\bm{\Sigma}}^{-1} \bm{\Sigma} \bA \bA^{-1} - \bI_d \|\\
&= \| \wt{\bm{\Sigma}}^{-1} \bm{\Sigma} - \bI_d \|\\
&= \| \wt{\bm{\Sigma}}^{-1/2} \bm{\Sigma} \wt{\bm{\Sigma}}^{-1/2} - \bI_d \|
\end{align*}
\end{proof}

\probabilisticpartitioningconstruction*
\begin{proof}
By definition, we have $|\bB_1|, \ldots, |\bB_w| = k$.
Let us define $\cE_{1,i,j}$ as the event that the cell $(i,j)$ of $\bT$ \emph{never} appears in any of the submatrices $\bT_{\bB_1}, \ldots, \bT_{\bB_w}$, and $\cE_{2,i,j}$ as the event that the cell $(i,j)$ of $\bT$ appears in strictly more than $b$ submatrices.
In the rest of this proof, our goal is to show that $\Pr[\cE_1]$ and $\Pr[\cE_2]$ are small, where $\cE_1 = \cup_{(i,j) \in [d] \times [d]} \cE_{1,i,j}$ and $\cE_2 = \cup_{(i,j) \in [d] \times [d]} \cE_{2,i,j}$.

Fix any two \emph{distinct} $i,j \in [d]$.
For $\ell \in [w]$, let us define $X^{i,j}_\ell$ as the indicator event that the cell $(i,j)$ in $\bT$ appears in the $\ell^{th}$ principal submatrix $\bT_{\bB_\ell}$ when $i,j \in \bB_\ell$.
By construction,
\[
\Pr[X^{i,j}_\ell = 1] =
\begin{cases}
\frac{\binom{d-2}{k-2}}{\binom{d}{k}} = \frac{k(k-1)}{d(d-1)} & \text{if $i \neq j$}\\
\frac{\binom{d-1}{k-1}}{\binom{d}{k}} = \frac{k}{d} & \text{if $i = j$}
\end{cases}
\]

To analyze $\cE_1$, we first consider $i, j \in [d]$ where $i \neq j$.
We see that
\[
\Pr[\cE_{1,i,j}]
= \prod_{\ell = 1}^w \Pr[X^{i,j}_\ell = 0]
= \left( 1 - \frac{k(k-1)}{d(d-1)} \right)^w
\leq \exp \left( - \frac{wk(k-1)}{d(d-1)} \right)
= \exp \left( - 10 \log d \right)
= \frac{1}{d^{10}}
\]
Meanwhile, when $i = j$,
\[
\Pr[\cE_{1,i,i}]
= \prod_{\ell = 1}^w \Pr[X^{i,i}_\ell = 0]
= \left( 1 - \frac{k}{d} \right)^w
\leq \exp \left( - \frac{wk}{d} \right)
\leq \exp \left( - 10 \log d \right)
= \frac{1}{d^{10}}
\]
Taking union bound over $(i,j) \in [d] \times [d]$, we get
\[
\Pr[\cE_{1}]
\leq \sum_{(i,j) \in [d] \times [d]} \Pr[\cE_{1,i,j}]
\leq \frac{d^2}{d^{10}}
= \frac{1}{d^8}
\]

To analyze $\cE_2$, let us first define $Z^{i,j} = \sum_{\ell = 1}^w X^{i,j}_\ell$ for any $i,j \in [d]$.
Since the $X^{i,j}_\ell$ variables are indicators, linearity of expectations tells us that
\[
\E[Z^{i,j}]
= \sum_{\ell = 1}^w \E[X^{i,j}_\ell]
=
\begin{cases}
\sum_{\ell = 1}^w \frac{k(k-1)}{d(d-1)} = \frac{wk(k-1)}{d(d-1)} & \text{if $i \neq j$}\\
\sum_{\ell = 1}^w \frac{k}{d} = \frac{wk}{d}& \text{if $i = j$}
\end{cases}
\]
For $i \neq j$, applying Chernoff bound yields
\begin{multline*}
\Pr[Z^{i,j} > (1 + 2) \cdot \E[Z^{i,j}]]
\leq \exp \left( - \frac{\E[Z^{i,j}] \cdot 2^2}{2 + 2} \right)
\leq \exp \left( - \E[Z^{i,j}] \right)\\
= \exp \left( - \frac{wk(k-1)}{d(d-1)} \right)
= \exp \left( - 10 \log d \right)
= \frac{1}{d^{10}}
\end{multline*}
Meanwhile, when $i = j$,
\[
\Pr[Z^{i,i} > (1 + 2) \cdot \E[Z^{i,i}]]
\leq \exp \left( - \frac{\E[Z^{i,i}] \cdot 2^2}{2 + 2} \right)
\leq \exp \left( - \E[Z^{i,i}] \right)
= \exp \left( - \frac{wk}{d} \right)
\leq \exp \left( - 10 \log d \right)
= \frac{1}{d^{10}}
\]

By defining
\[
b
= 3 \cdot \max_{i,j \in [d]} \E[Z^{i,j}]
= \frac{3wk}{d}
= \frac{30 (d-1) \log d}{(k-1)}
\;,
\]
we see that $\Pr[E_{2,i,j}] = \Pr[Z^{i,j} > b] \leq \Pr[Z^{i,j} > (1 + 2) \cdot \E[Z^{i,j}]] \leq \frac{1}{d^{10}}$ and $\Pr[E_{2,i,i}] = \Pr[Z^{i,j} > b] \leq \Pr[Z^{i,i} > (1 + 2) \cdot \E[Z^{i,i}]] \leq \frac{1}{d^{10}}$.
Therefore, taking union bound over $(i,j) \in [d] \times [d]$, we get
\[
\Pr[\cE_{2}]
\leq \sum_{(i,j) \in [d] \times [d]} \Pr[\cE_{2,i,j}]
\leq \frac{d^2}{d^{10}}
= \frac{1}{d^8}
\]

In conclusion, this construction satisfy all 3 conditions of \cref{def:partitioning-scheme} with high probability in $d$.
\end{proof}

\subsection{Guarantees of \textsc{VectorizedApproxL1}}
\label{sec:appendix-vectorizedapproxl1-guarantees}

Here, we show that the guarantees of the \textsc{VectorizedApproxL1} algorithm (\cref{alg:vectorizedapproxl1}).

\guaranteesofvectorizedapproxlone*
\begin{proof}
We begin by stating some properties of $o_1, \ldots, o_w$.
Fix an arbitrary index $j \in \{1, \ldots, w\}$ and suppose $o_j$ is \emph{not} a $\Fail$, i.e.\ the tolerant tester of \cref{lem:tolerant-covariance-tester} outputs $\Accept$ for some $i^* \in \{1, 2, \ldots, \lceil \log_2 \zeta/\alpha \rceil\}$.
Note that \textsc{VectorizedApproxL1} sets $o_j = \ell_{i^*}$ and the tester outputs $\Reject$ for all smaller indices $i \in \{1, \ldots, i^* - 1\}$.
Since the tester outputs $\Accept$ for $i^*$, we have that $\| \bm{\Sigma}_{\bB_j} - \bI_d \|_F \leq 2 \ell_{i^*} = 2 o_j$.
Meanwhile, if $i^* > 1$, then $\| \bm{\Sigma}_{\bB_j} - \bI_d \|_F > \ell_{i^* - 1} = \ell_{i^*}/2 = o_j/2$ since the tester outputs $\Reject$ for $i^* - 1$.
Thus, we see that
\begin{itemize}
    \item When $o_j$ is not $\Fail$, we have $\| \bm{\Sigma}_{\bB_j} - \bI_d \|_F \leq 2 o_j$.
    \item When $\| \bm{\Sigma}_{\bB_j} - \bI_d \|_F \leq 2 \alpha$, we have $i^* = 1$ and $o_j = \ell_1 = \alpha$.
    \item When $\| \bm{\Sigma}_{\bB_j} - \bI_d \|_F > 2 \alpha = 2 \ell_1$, we have $i^* > 1$ and so $o_j < 2 \| \bm{\Sigma}_{\bB_j} - \bI_d \|_F$.
\end{itemize}

\paragraph{Success probability.}
Fix an arbitrary index $i \in \{1, 2, \ldots, \lceil \log_2 \zeta/\alpha \rceil\}$ with $\ell_i = 2^{i-1} \alpha$, where $\ell_i \leq \ell_1 = \alpha$ for any $i$.
We invoke the tolerant tester with $\eps_2 = 2 \ell_i = 2 \eps_1$, so the $i^{th}$ invocation uses at most $n'_{k, \eps} \cdot r_{\delta}$ i.i.d.\ samples to succeed with probability at least $1 - \delta$; see \cref{def:m-prime-d-alpha-delta} and \cref{alg:zero-mean-covariance-testing}.
So, with $m(k, \alpha, \delta')$ samples, \emph{any} call to the tolerant tester succeeds with probability at least $1 - \delta'$, where $\delta' = \frac{\delta}{w \cdot \lceil \log_2 \zeta/\alpha \rceil}$.
By construction, there will be at most $w \cdot \lceil \log_2 \zeta/\alpha \rceil$ calls to the tolerant tester.
Therefore, by union bound, \emph{all} calls to the tolerant tester jointly succeed with probability at least $1 - \delta$.

\paragraph{Property 1.}
When \textsc{VectorizedApproxL1} outputs $\Fail$, there exists a $\Fail$ amongst $\{o_1, \ldots, o_w\}$.
For any fixed index $j \in \{1, \ldots, w\}$, this can only happen when all calls to the tolerant tester outputs $\Reject$.
This means that $\| \bm{\Sigma}_{\bB_j} - \bI_d \|_F > \eps_1 = \ell_i = 2^{i-1} \cdot \alpha$ for all $i \in \{1, 2, \ldots, \lceil \log_2 \zeta/\alpha \rceil\}$.
In particular, this means that $\| \bm{\Sigma}_{\bB_j} - \bI_d \|_F > \zeta/2$.

\paragraph{Property 2.}
When \textsc{VectorizedApproxL1} outputs $\lambda = 2 \sum_{j=1}^w \sqrt{|\bB_j|} \cdot o_j \in \R$, we can lower bound $\lambda$ as follows:
\begin{align*}
\lambda
&= 2 \sum_{j=1}^w \sqrt{|\bB_j|} \cdot o_j\\
&\geq 2 \sum_{j=1}^w \sqrt{|\bB_j|} \cdot \frac{\| \bm{\Sigma}_{\bB_j} - \bI_d \|_F}{2} && \tag{since $\| \bm{\Sigma}_{\bB_j} - \bI_d \|_F \leq 2 o_j$}\\
&= \sum_{j=1}^w \sqrt{|\bB_j| \cdot \| \vec(\bm{\Sigma}_{\bB_j} - \bI_d) \|_2^2} && \tag{since $\| \bm{\Sigma}_{\bB_j} - \bI_d \|_F^2 = \| \vec(\bm{\Sigma}_{\bB_j} - \bI_d) \|_2^2$}\\
&\geq \sum_{j=1}^w \| \vec(\bm{\Sigma}_{\bB_j} - \bI_d) \|_1 && \tag{since $\| \vec(\bm{\Sigma}_{\bB_j} - \bI_d) \|_1^2 \leq |\bB_j| \cdot \| \vec(\bm{\Sigma}_{\bB_j} - \bI_d) \|_2^2$}\\
&\geq \| \vec(\bm{\Sigma} - \bI_d) \|_1 && \tag{Since each cell in $\bm{\Sigma}$ appears at least $a=1$ times across all submatrices $\bm{\Sigma}_{\bB_1}, \ldots, \bm{\Sigma}_{\bB_w}$}
\end{align*}
That is, $\lambda \geq \| \vec(\bm{\Sigma} - \bI_d) \|_1$.
Meanwhile, we can also upper bound $\lambda$ as follows:
\begin{align*}
\lambda
&= 2 \sum_{j=1}^w \sqrt{|\bB_j|} \cdot o_j\\
&\leq 2 \sqrt{k} \cdot \sum_{j=1}^w o_j && \tag{since $|\bB_j| \leq k$}\\
&= 2 \sqrt{k} \cdot \left( \sum_{\substack{j=1\\ \| \bm{\Sigma}_{\bB_j} - \bI_d \|_F \leq 2 \alpha}}^w o_j + \sum_{\substack{j=1\\ \| \bm{\Sigma}_{\bB_j} - \bI_d \|_F > 2 \alpha}}^w o_j \right) && \tag{partitioning based on $\| \bm{\Sigma}_{\bB_j} - \bI_d \|_F$ versus $2 \alpha$}\\
&= 2 \sqrt{k} \cdot \left( \sum_{\substack{j=1\\ \| \bm{\Sigma}_{\bB_j} - \bI_d \|_F \leq 2 \alpha}}^w \alpha + \sum_{\substack{j=1\\ \| \bm{\Sigma}_{\bB_j} - \bI_d \|_F > 2 \alpha}}^w o_j \right) && \tag{since $\| \bm{\Sigma}_{\bB_j} - \bI_d \|_F \leq 2 \alpha$ implies $o_j = \alpha$}\\
&\leq 2 \sqrt{k} \cdot \left( \sum_{\substack{j=1\\ \| \bm{\Sigma}_{\bB_j} - \bI_d \|_F \leq 2 \alpha}}^w \alpha + 2 \sum_{\substack{j=1\\ \| \bm{\Sigma}_{\bB_j} - \bI_d \|_F^2 \leq 2 \alpha}}^w \| \bm{\Sigma}_{\bB_j} - \bI_d \|_F \right) && \tag{since $\| \bm{\Sigma}_{\bB_j} - \bI_d \|_F > 2 \alpha$ implies $o_j \leq 2 \| \bm{\Sigma}_{\bB_j} - \bI_d \|_F$}\\
&= 2 \sqrt{k} \cdot \left( \sum_{\substack{j=1\\ \| \bm{\Sigma}_{\bB_j} - \bI_d \|_F \leq 2 \alpha}}^w \alpha + 2 \sum_{\substack{j=1\\ \| \bm{\Sigma}_{\bB_j} - \bI_d \|_F \leq 2 \alpha}}^w \| \vec(\bm{\Sigma}_{\bB_j} - \bI_d) \|_2 \right) && \tag{since $\| \bm{\Sigma}_{\bB_j} - \bI_d \|_F^2 = \| \vec(\bm{\Sigma}_{\bB_j} - \bI_d) \|_2^2$}\\
&\leq 2 \sqrt{k} \cdot \left( \sum_{\substack{j=1\\ \| \bm{\Sigma}_{\bB_j} - \bI_d \|_F \leq 2 \alpha}}^w \alpha + 2 \sum_{\substack{j=1\\ \| \bm{\Sigma}_{\bB_j} - \bI_d \|_F \leq 2 \alpha}}^w \| \vec(\bm{\Sigma}_{\bB_j} - \bI_d) \|_1 \right) && \tag{since $\| \vec(\bm{\Sigma}_{\bB_j} - \bI_d) \|_2 \leq \| \vec(\bm{\Sigma}_{\bB_j} - \bI_d) \|_1$}\\
&\leq 2 \sqrt{k} \cdot \left( w \alpha + 2 \sum_{\substack{j=1\\ \| \bm{\Sigma}_{\bB_j} - \bI_d \|_F^2 \leq 2 \alpha}}^w \| \vec(\bm{\Sigma}_{\bB_j} - \bI_d) \|_1 \right) && \tag{since $|\{j \in [w]: \| \bm{\Sigma}_{\bB_j} - \bI_d \|_F \leq 2 \alpha \}| \leq w$}\\
&\leq 2 \sqrt{k} \cdot \left( w \alpha + 2 \| \vec(\bm{\Sigma} - \bI_d) \|_1 \right) && \tag{since $\sum\limits_{\substack{j=1\\ \| \bm{\Sigma}_{\bB_j} - \bI_d \|_F \leq 2 \alpha}}^w \| \vec(\bm{\Sigma}_{\bB_j} - \bI_d) \|_1 \leq \sum_{j=1}^w \| \vec(\bm{\Sigma}_{\bB_j} - \bI_d) \|_1 = \| \vec(\bm{\Sigma} - \bI_d) \|_1$}
\end{align*}
That is, $\lambda \leq 2 \sqrt{k} \cdot \left( w \alpha + 2 \| \vec(\bm{\Sigma} - \bI_d) \|_1 \right)$, where $w = \frac{10 d(d-1) \log d}{k(k-1)}$.
The property follows by putting together both bounds.
\end{proof}

\subsection{Polynomial running time of \texorpdfstring{\cref{eq:cov-estimation-program}}{Eq. (6)}}
\label{sec:appendix-cov-estimation-program}

In this section, we show that \cref{eq:cov-estimation-program} in \cref{lem:SDP-given-l1-upper-bound} can be reformulated as a semidefinite program (SDP) that is polynomial time solvable.
Recall that we are given $n$ samples $\by_1, \ldots, \by_n \sim N(\bm{0}, \bm{\Sigma})$ under the assumption that $\| \vec(\bm{\Sigma} - \bI_d) \|_1 \leq r$ for some $r > 0$, and \cref{eq:cov-estimation-program} was defined as follows:
\[
\wh{\bm{\Sigma}} = \argmin_{\substack{\text{$\bA \in \R^{d \times d}$ is p.s.d.}\\ \| \vec(\bA - \bI_d) \|_1 \leq r\\ \lambda_{\min}(\bA) \geq 1}} \sum_{i=1}^n \| \bA - \by_i \by_i^\top \|_F^2
\]

To convert our optimization problem to the standard SDP form, we ``blow up'' the problem dimension into some integer $n' \in \poly(d)$.
Let $m$ be the number of constraints and $n'$ be the problem dimension.
For symmetric matrices $\bC, \bD_1, \ldots, \bD_m \in \R^{n' \times n'}$ and values $b_1, \ldots, b_m \in \R$, the standard form of a SDP is written as follows:
\begin{equation}
\label{eq:SDP-standard-form}
\begin{array}{crlll}
\min\limits_{\bX \in \R^{n' \times n'}} & \langle \bC, \bX \rangle\\
\text{subject to}
& \langle \bD_1, \bX \rangle & = b_1\\
& & \vdots\\
& \langle \bD_m, \bX \rangle & = b_m\\
& \bX & \succeq 0
\end{array}
\end{equation}
where the inner product between two matrices $\bA, \bB \in \R^{n' \times n'}$ is written as
\[
\langle \bA, \bB \rangle = \sum_{i=1}^{n'} \sum_{j=1}^{n'} \bA_{i,j} \bB_{i,j}
\]
For further expositions about SDPs, we refer readers to \cite{vandenberghe1996semidefinite,boyd2004convex,freund2004introduction,gartner2012approximation}.
In this section, we simply rely on the following known result to argue that our optimization problem will be polynomial time (in terms of $n$, $d$, and $r$) after showing how to frame \cref{eq:cov-estimation-program} in the standard SDP form.

\begin{theorem}[Implied by \cite{huang2022solving}]
\label{thm:sdp-solving}
Consider an SDP instance of the form \cref{eq:SDP-standard-form}.
Suppose it has an optimal solution $\bX^* \in \R^{n' \times n'}$ and any feasible solution $\bX \in \R^{n' \times n'}$ satisfies $\|\bX\|_2 \leq R$ for some $R > 0$. 
Then, there is an algorithm that produces $\wh{\bX}$ in $\cO(\poly(n, d, \log(1/\eps)))$ time such that $\langle \bC, \wh{\bX} \rangle \leq \langle \bC, \bX^* \rangle + \eps R \cdot \| \bC \|_2$.
\end{theorem}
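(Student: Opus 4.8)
The plan is to obtain the claimed guarantee as a direct corollary of the fast semidefinite programming solver of \cite{huang2022solving}, via three steps: (i) recalling the precise (normalized) guarantee of that solver, (ii) undoing the normalization by a rescaling argument to land on the error form $\eps R\,\|\bC\|_2$, and (iii) accounting for the running time in the intended application to \cref{eq:cov-estimation-program}. For step (i), \cite{huang2022solving} gives an algorithm that, on an SDP in the standard form \cref{eq:SDP-standard-form} with variable dimension $n'$ and $m$ equality constraints, runs in time $\poly(n', m) \cdot \poly(\log(1/\eps'))$ and returns a (near-)feasible $\wh{\bX}$ whose objective value is within an additive $\eps'$ of the optimum once the instance is placed in a normalized scale (unit-norm cost matrix and feasible region of unit radius).

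For step (ii), I would rescale: replace $\bC$ by $\bC/\|\bC\|_2$ and apply the change of variables $\bX \mapsto \bX / R$, both of which merely rescale the data $\bC, \bD_1, \dots, \bD_m, b_1, \dots, b_m$ and hence keep the instance in the form \cref{eq:SDP-standard-form}. After rescaling, the cost matrix has unit spectral norm and every feasible point has spectral norm at most $1$, so the solver's additive-$\eps'$ guarantee on the rescaled objective $\tfrac{1}{R\,\|\bC\|_2}\langle \bC, \cdot\rangle$ pulls back to an additive $\eps' R\,\|\bC\|_2$ guarantee on $\langle \bC, \cdot\rangle$; taking $\eps' = \eps$ gives $\langle \bC, \wh{\bX}\rangle \le \langle \bC, \bX^*\rangle + \eps R\,\|\bC\|_2$.

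For step (iii), I would note that when the solver is applied to \cref{eq:cov-estimation-program} the blown-up SDP has $n', m \in \poly(d)$: the variable $\bA$ is $d \times d$; PSD-ness and the constraint $\lambda_{\min}(\bA) \ge 1$ (equivalently $\bA - \bI_d \succeq 0$) are each a single PSD block; the constraint $\|\vec(\bA - \bI_d)\|_1 \le r$ becomes $\cO(d^2)$ linear inequalities via sign/slack variables; and the quadratic objective $\sum_{i=1}^n \|\bA - \by_i \by_i^\top\|_F^2$ (which, once expanded, is linear in $\bA$ plus a term proportional to $\|\bA\|_F^2$) is linearized by introducing an epigraph variable together with a Schur-complement PSD block of size $\cO(d)$. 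Inequalities and equalities reduce to the homogeneous form \cref{eq:SDP-standard-form} by standard slack-variable padding. Thus $\poly(n', m) = \poly(d)$, assembling $\bC$ and the right-hand sides from the $n$ samples adds a $\poly(n, d)$ factor, and the accuracy dependence contributes $\poly(\log(1/\eps))$, for a total of $\cO(\poly(n, d, \log(1/\eps)))$ time.

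The main obstacle I expect is the bookkeeping in step (iii): one must implement the reduction of the quadratic objective and the entrywise $\ell_1$ constraint into \cref{eq:SDP-standard-form} while keeping $n'$ and $m$ polynomial, and simultaneously exhibit a uniform bound $R$ on $\|\bX\|_2$ over feasible $\bX$. For the latter, any feasible $\bA$ has $\|\bA - \bI_d\|_F \le \|\vec(\bA - \bI_d)\|_1 \le r$, so $\|\bA\|_2 \le \|\bA\|_F \le r + \sqrt{d}$, and the epigraph variable is at most the objective value at the feasible point $\bA = \bI_d$, which is $\poly(n, d, r)$; hence $R \in \poly(n, d, r)$, which enters the error only additively (through $\eps R\,\|\bC\|_2$) and so is harmless. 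None of this is deep, but it must be assembled carefully to fall exactly within the hypotheses of \cite{huang2022solving}.
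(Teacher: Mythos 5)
Your steps (i)–(ii) are essentially what the paper does: it does not prove this statement at all, but imports it as Theorem 8.1 of \cite{huang2022solving} (whose guarantee is already stated in the additive $\eps R \cdot \|\bC\|_2$ form for a bounded feasible region), with the only adjustments being the maximization-to-minimization flip and dropping the solver's approximate-feasibility guarantees; your normalization/rescaling derivation is a reasonable reconstruction of that. Your step (iii) is not part of this theorem's proof but of its application (the paper's appendix on \cref{eq:cov-estimation-program}), and there one claim is off: bounding the epigraph/auxiliary block $\bB$ by the objective value at the feasible point $\bA = \bI_d$ does not establish the hypothesis that \emph{every} feasible $\bX$ satisfies $\|\bX\|_2 \leq R$, because $\bB$ is only constrained by $\bB \succeq \bA^\top \bA$ and can be arbitrarily large at feasible, non-optimal points, making the feasible region unbounded. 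The paper closes exactly this hole by adding the explicit constraint $\Tr(\bB) + s_{\bB} = d\,(r+d)^4$ with a nonnegative slack $s_{\bB}$, which bounds $\bB$ over the entire feasible region while keeping the true optimum (where $\Tr(\bB) = \Tr(\bA^\top \bA) \leq d (r+d)^4$) feasible; if you want your version of step (iii) to satisfy the theorem's hypotheses, you need some such explicit cap on the epigraph variable rather than an appeal to the optimal value.
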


\begin{remark}
Apart from notational changes, Theorem 8.1 of \cite{huang2022solving} actually deals with the maximization problem but here we transform it to our minimization setting.
They also guarantee additional bounds on the constraints with respect to $\wh{\bX}$, which we do not use.
\end{remark}

In the following formulation, for any indices $i$ and $j$, we define $\delta_{i,j} \in \{0,1\}$ as the indicator indicating whether $i = j$.
This will be useful for representation of the identity matrix.

\subsubsection{Re-expressing the objective function}

Observe that for any $i \in [n]$, we have
\begin{align*}
\| \bA - \by_i \by_i^\top \|_F^2
&= \Tr \left( (\bA - \by_i \by_i^\top)^\top (\bA - \by_i \by_i^\top) \right)\\
&= \Tr \left( \bA^\top \bA \right) - 2 \Tr \left( \by_i \by_i^\top \bA \right) + \Tr \left( \by_i \by_i^\top \by_i \by_i^\top \right)
\end{align*}
Since $\by_1, \ldots, \by_n \in \R^d$ are constants with respect to the optimization problem, we can ignore the $\Tr \left( \by_i \by_i^\top \by_i \by_i^\top \right)$ term and instead minimize $n \Tr \left( \bA^\top \bA \right) - 2 \sum_{i=1}^n \Tr \left( \by_i \by_i^\top \bA \right)$.
As $\bA^\top \bA$ is a quadratic expression, let us define an auxiliary matrix $\bB \in \R^{d \times d}$ which we will later enforce $\Tr(\bB) \geq \Tr(\bA^T \bA)$.
Defining a symmetric matrix $\bY = \sum_{i=1}^n \by_i \by_i^\top \in \R^{d \times d}$, the minimization objective becomes
\begin{equation}
\label{eq:sdp-objective}
n \Tr \left( \bB \right) - 2 \Tr \left( \bY \bA \right) = n \bB_{1,1} + \ldots + n \bB_{d,d} - 2 \langle \bY, \bA \rangle
\end{equation}

\subsubsection{Defining the variable matrix \texorpdfstring{$\bX$}{X}}

Let $n' = 2d^2 + 3d + 2$ and let us define the SDP variable matrix $\bX \in \R^{n' \times n'}$ as follows:
\[
\bX =
\begin{bmatrix}
\bB & \bA^\top\\
\bA & \bI_d\\
&& \bA - \bI_d\\
&&& \bU\\
&&&& \bS\\
&&&&& s_{\bU}\\
&&&&&& s_{\bB}\\
\end{bmatrix}
\in \R^{n' \times n'}
\]
where the empty parts of $\bX$ are zero matrices of appropriate sizes, $\bB \in \R^{d \times d}$ is an auxiliary matrix aiming to capture $\bA^\top \bA$, and $\bU$ and $\bS$ are diagonal matrices of size $d^2$:
\begin{align*}
\bU &= \diag(u_{1,1}, u_{1,2}, \ldots, u_{1,d}, \ldots, u_{d,1}, \ldots, u_{d,d}) \in \R^{d^2 \times d^2}\\
\bS &= \diag(s_{1,1}, s_{1,2}, \ldots, s_{1,d}, \ldots, s_{d,1}, \ldots, s_{d,d}) \in \R^{d^2 \times d^2}
\end{align*}
For convenience, we define
\[
\bM =
\begin{bmatrix}
\bB & \bA^\top\\
\bA & \bI_d\\
\end{bmatrix}
\in \R^{2d \times 2d}
\]
so we can write
\begin{equation}
\label{eq:sdp-variable-matrix}
\bX =
\begin{bmatrix}
\bM\\
& \bA - \bI_d\\
&& \bU\\
&&& \bS\\
&&&& s_{\bU}\\
&&&&& s_{\bB}
\end{bmatrix}
\in \R^{n' \times n'}
\end{equation}

In the following subsections, we explain how to ensure that submatrices in $\bX$ model the desired notions and constraints on $\bA$, $\bB$, and so on.
For instance, we will use $\bU$ to enforce $\| \vec(\bA - \bI_d) \|_1 \leq r$ in an element-wise fashion and use $\bS$ and $s_{\bU}$ for slack variables to transform inequality constraints to equality ones.
The slack variable $s_{\bB}$ is used for upper bounding the norm of $\bB$ later, so that we can argue that the feasible region is bounded.

\subsubsection{Defining the cost matrix \texorpdfstring{$\bC$}{C}}

To capture the objective function \cref{eq:sdp-objective}, let us define a symmetric cost matrix $\bC \in \R^{n' \times n'}$ as follows:
\begin{equation}
\label{eq:sdp-cost-matrix}
\bC =
\begin{bmatrix}
\diag(n, \ldots, n) & -\bY\\
-\bY & \mathbf{0}_{d \times d}\\
&& \mathbf{0}_{(2d^2 + d + 2) \times (2d^2 + d + 2)}
\end{bmatrix}
\in \R^{n' \times n'}
\end{equation}
One can check that $\langle \bC, \bX \rangle = n \bB_{1,1} + \ldots + n \bB_{d,d} - 2 \langle \bY, \bA \rangle$.

\subsubsection{Enforcing zeroes, ones, and linking \texorpdfstring{$\bA$}{A} entries with \texorpdfstring{$\bA - \bI_d$}{A-I}}

To enforce that the empty parts of $\bX$ always solves to zeroes, we can define a symmetric constraint matrix $\bD^{zero}_{i,j} \in \R^{n' \times n'}$ such that
\[
(\bD^{zero}_{i,j})_{i',j'} =
\begin{cases}
1 & \text{if $i' = i$ and $j' = j$}\\
0 & \text{otherwise}
\end{cases}
\]
and $b^{zero}_{i,j} = 0$.
Then, $\langle \bD^{zero}_{i,j}, \bX \rangle = b^{zero}_{i,j}$ resolves to $\bX_{i,j} = \langle \bD^{zero}_{i,j}, \bX \rangle = b^{zero}_{i,j} = 0$.
We can similarly enforce that the appropriate part of $\bX$ in $\bM$ resolves to $\bI_d$.

Now, to ensure that the $\bA$ submatrices within $\bM$ are appropriately linked to $\bA - \bI_d$, we can define a symmetric constraint matrix $\bD^{\bA}_{i,j} \in \R^{n' \times n'}$ such that
\[
\bD^{\bA}_{i,j} =
\begin{bmatrix}
\mathbf{0}_{d \times d} & \ast\\
\ast & \mathbf{0}_{d \times d}\\
&& \dag\\
&&& \mathbf{0}_{d^2 \times d^2}\\
&&&& \mathbf{0}_{d^2 \times d^2}\\
&&&&& 0\\
&&&&&& 0\\
\end{bmatrix}
\in \R^{n' \times n'}
\]
and $b^{\bA}_{i,j} = 0$, where $\ast$ contains $\frac{1}{4}$ at the $(i,j)$-th and $(j,i)$-th entries and $\dag$ contains $\delta_{i,j} - \frac{1}{2}$ at the $(i,j)$-th and $(j,i)$-th entries, with 0 everywhere else; if $i = j$, we double the value.
So, $\langle \bD^{\bA}_{i,j}, \bX \rangle = b^{\bA}_{i,j}$ would enforce that the $(i,j)$-th and $(j,i)$-th entries between the $\bA$ submatrices within $\bM$ and those in $\bA - \bI_d$ are appropriately linked.

\subsubsection{Modeling the \texorpdfstring{$\ell_1$}{L1} constraint}

To encode $\| \vec(\bA - \bI_d) \|_1 \leq r$ in SDP form, let us define auxiliary variables $\{u_{i,j}\}_{i,j \in [d]}$ and define the linear constraints:
\begin{itemize}
    \item $- A_{i,j} -u_{i,j} \leq - \delta_{i,j}$, for all $i,j \in [d]$
    \item $A_{i,j} - u_{i,j} \leq \delta_{i,j}$, for all $i,j \in [d]$
    \item $\sum_{i=1}^d \sum_{j=1}^d u_{i,j} \leq r$
\end{itemize}
The first two constraints effectively encode $|A_{i,j} - \delta_{i,j}| \leq u_{i,j}$ and so the third constraint captures $\| \vec(\bA - \bI_d) \|_1 \leq r$ as desired.
To convert the inequality constraint to an equality one, we use the slack variables $\{s_{i,j}\}_{i,j \in [d]}$ in $\bS$.
For instance, we can define symmetric constraint matrices $\bD^{+}_{i,j} \in \R^{n' \times n'}$, $\bD^{-}_{i,j} \in \R^{n' \times n'}$, and $\bD^{r}_{i,j} \in \R^{n' \times n'}$ with $b^{+}_{i,j} = b^{-}_{i,j} = 0$ and $b^{r} = r$ as follows:
\[
\bD^{+}_{i,j} =
\begin{bmatrix}
\mathbf{0}_{d \times d} & \ast\\
\ast & \mathbf{0}_{d \times d}\\
&& \mathbf{0}_{d \times d}\\
&&& \dag\\
&&&& \ddag\\
&&&&& 0\\
&&&&&& 0\\
\end{bmatrix}
\qquad
\bD^{-}_{i,j} =
\begin{bmatrix}
\mathbf{0}_{d \times d} & -\ast\\
-\ast & \mathbf{0}_{d \times d}\\
&& \mathbf{0}_{d \times d}\\
&&& \dag\\
&&&& \ddag\\
&&&&& 0\\
&&&&&& 0\\
\end{bmatrix}
\]
\[
\bD^{r}_{i,j} =
\begin{bmatrix}
\mathbf{0}_{2d \times 2d}\\
& \mathbf{0}_{d \times d}\\
&& \mathbf{1}_{d^2 \times d^2}\\
&&& \mathbf{0}_{d^2 \times d^2}\\
&&&& 1\\
&&&&& 0
\end{bmatrix}
\]
where $\ast$ contains $\frac{\delta_{i,j} - 1}{4}$ at the $(i,j)$-th and $(j,i)$-th entries, $\dag$ contains $-\frac{1}{2}$ at the $(i,j)$-th and $(j,i)$-th entries, and $\ddag$ contains $\frac{1}{2}$ at the $(i,j)$-th and $(j,i)$-th entries, with 0 everywhere else; if $i = j$, we double the value.
So, $\langle \bD^{+}_{i,j}, \bX \rangle = b^{+}_{i,j}$ models $\delta_{i,j} - A_{i,j} - u_{i,j} + s_{i,j} = 0$, $\langle \bD^{-}_{i,j}, \bX \rangle = b^{-}_{i,j}$ models $A_{i,j} - \delta_{i,j} - u_{i,j} + s_{i,j} = 0$, and $\langle \bD^{r}_{i,j}, \bX \rangle = b^{r}_{i,j}$ models $s_{\bS} + \sum_{i=1} \sum_{j=1} u_{i,j} = r$.

\subsubsection{Positive semidefinite constraints}

By known properties of the (generalized) Schur complement \cite[Section 1.4 and Section 1.6]{zhang2005schur}, it is known that $\bX \succeq \mathbf{0}$ if and only if the following properties hold simultaneously:
\begin{enumerate}
    \item $\bM \succeq \mathbf{0}$
    \item $\bA - \bI_d \succeq \mathbf{0} \iff \bA \succeq \bI_d \iff \lambda_{\min}(\bA) \geq 1$, which also implies that $\bA$ is psd
    \item $\bU \succeq \mathbf{0} \iff u_{1,1}, u_{1,2}, \ldots, u_{1,d}, \ldots, u_{d,1}, \ldots, u_{d,d} \geq 0$
    \item $\bS \succeq \mathbf{0} \iff s_{1,1}, s_{1,2}, \ldots, s_{1,d}, \ldots, s_{d,1}, \ldots, s_{d,d} \geq 0$
    \item $s_{\bU} \geq 0$
    \item $s_{\bB} \geq 0$
\end{enumerate}

For the first property, since $\bI_d \succ \mathbf{0}$, Schur complement tells us that
$
\bM =
\begin{bmatrix}
\bB & \bA^\top\\
\bA & \bI_d
\end{bmatrix}
\succeq 0
$
if and only if $\bB \succeq \bA^\top \bA$.
Observe that $\bB \succeq \bA^\top \bA$ implies $\Tr(\bB) \geq \Tr(\bA^\top \bA)$, which aligns with our intention of modeling $\bA^\top \bA$ by $\bB$.
Note that the objective function is $n \Tr(\bB) - 2 \Tr(\bY \bA)$ and we have that $\Tr(\bB) \geq \Tr(\bA^\top \bA)$ for all feasible matrices $\bB$.
Thus, for any pair $(\bA^*, \bB^*)$ that minimizes of the objective function, it has to be that $\Tr(\bB^*) = \Tr((\bA^*)^\top \bA^*)$, since otherwise, the pair $(\bA^*, \bB^{**} = (\bA^*)^\top \bA^*)$ would have a smaller value.

\subsubsection{Enforcing an upper bound on \texorpdfstring{$\| \bB \|_2$}{spectral norm of B}}

To apply \cref{thm:sdp-solving}, we need to argue that the feasible region of our SDP is bounded and non-empty, so that $\| \bX \|_2$ is upper bounded.
To do so, we need to enforce an upper bound on $\| \bB \|_2$.

Since $\| \vec(\bA - \bI_d) \|_1 \leq r$, by triangle inequality and standard norm inequalities, we see that
\begin{multline}
\label{eq:bounding-A-by-r}
\| \bA \|_2
\leq \| \bA - \bI_d \|_2 + \| \bI_d \|_2
\leq \| \bA - \bI_d \|_F + \| \bI_d \|_2\\
= \| \vec(\bA - \bI_d) \|_2 + d
\leq \| \vec(\bA - \bI_d) \|_1 + d
\leq r + d
\end{multline}
As $\bB$ is supposed to model $\bA^T \bA$ and is constrained only by $\bB \succeq \bA^T \bA$, it is feasible to enforce $\Tr(\bB) \leq \| \bB \|_F^2 \leq d \cdot (r + d)^4$ because
\[
\| \bA^T \bA \|_F^2
\leq d \cdot \| \bA^T \bA \|_2^2
= d \cdot \| \bA \|_2^4
\leq d \cdot (r + d)^4
\]
To this end, let us define a symmetric constraint matrix $\bD^{\bB}_{i,j} \in \R^{n' \times n'}$ such that 
\[
\bD^{\bB} =
\begin{bmatrix}
\bI_{d}\\
& \mathbf{0}_{(2d^2 + 2d + 1) \times (2d^2 + 2d + 1)}\\
&& 1
\end{bmatrix}
\in \R^{n' \times n'}
\]
and $b^{\bB} = d \cdot (r + d)^4$.
Then, $\langle \bD^{\bB}, \bX \rangle = b^{\bB}$ resolves to $\Tr(\bB) + s_{\bB} = \langle \bD^{\bB}, \bX \rangle = b^{\bB} = d \cdot (r + d)^4$.
In other words, since the slack variable $s_{\bB}$ is non-negative, i.e.\ $s_{\bB} \geq 0$, we have
\begin{equation}
\label{sec:B-norm-bound}
\| \bB \|_2
\leq \Tr(\bB)
\leq \| \bB \|_F^2
\leq d \cdot (r + d)^4
\end{equation}

\subsubsection{Bounding \texorpdfstring{$\| \bC \|_2$}{spectral norm of C} and \texorpdfstring{$\| \bX \|_2$}{spectral norm of X}}

Recalling the definition of $\bC$ in \cref{eq:sdp-cost-matrix}, we see that
\[
\| \bC \|_2
\leq
\left\|
\begin{bmatrix}
\diag(n, \ldots, n) & -\bY\\
-\bY & \mathbf{0}_{d \times d}
\end{bmatrix}
\right\|_2
\leq n + \| \bY \|_2
\]
Meanwhile, we know from \cref{lem:known-properties-of-empirical-covariance} that
\[
\| \bY \|_2
\leq \| \bm{\Sigma} \|_2 \cdot \left( 1 + \cO\left( \sqrt{\frac{d + \log 1/\delta}{n}} \right) \right)
\]
with probability at least $1 - \delta$.

Recall from \cref{alg:testandoptimizecovariance} that when we solve the optimization problem of \cref{eq:cov-estimation-program}, we have that $\| \vec(\bm{\Sigma} - \bI) \|_1 \leq r$.
So, by a similar chain of arguments as \cref{eq:bounding-A-by-r}, we see that
\[
\| \bm{\Sigma} \|_2
\leq \| \bm{\Sigma} - \bI_d \|_2 + \| \bI_d \|_2
\leq \| \bm{\Sigma} - \bI_d \|_F + \| \bI_d \|_2
= \| \vec(\bm{\Sigma} - \bI_d) \|_2 + d
\leq \| \vec(\bm{\Sigma} - \bI_d) \|_1 + d
= r + d
\]
Therefore,
\[
\| \bC \|_2
\leq n + \| \bm{\Sigma} \|_2 \cdot \left( 1 + \cO\left( \sqrt{\frac{d + \log 1/\delta}{n}} \right) \right)
\leq n + (r + d) \cdot \left( 1 + \cO\left( \sqrt{\frac{d + \log 1/\delta}{n}} \right) \right)
\in \poly(n,d,r)
\]

Meanwhile, recalling definition of $\bX$ from \cref{eq:sdp-variable-matrix}, we see that for \emph{any} feasible solution $\bX$,
\[
\| \bX \|_2
\leq \max \left\{ \| \bM \|_2, \| \bA - \bI_d \|_2, \| \bU \|_2, \| \bS \|_2, s_{\bU}, s_{\bB} \right\}
\]
By \cref{sec:B-norm-bound}, we have that $\| \bB \|_2 \leq \sqrt{d} \cdot (r + d)^2$.
So,
\[
\| \bM \|_2
\leq \| \bB \|_2 + \| \bA \|_2 + 1
\leq d \cdot (r + d)^4 + r + d + 1
\in \poly(d,r)
\]
Also, all the remaining terms are in $\poly(r,d)$ since $\| \vec(\bA - \bI_d) \|_1 \leq r$.
Therefore, $\| \bX \|_2 \in \poly(d, r)$ with probability $1-\delta$.
So, $\| \bX \|_2 \leq R$ for some $R \in \poly(d, r)$.

\subsubsection{Putting together}

Suppose we aim for an additive error of $\eps' > 0$ in \cref{eq:sdp-additive-error} when we solve \cref{eq:cov-estimation-program}.
From above, we have that $\| \bC \|_2, R \in \poly(n, d, r)$.
Let us define $\eps = \frac{\eps'}{R \cdot \| \bC \|_2}$ in \cref{thm:sdp-solving}.
Then, the algorithm of \cref{thm:sdp-solving} produces $\wh{\bX} \in \R^{n' \times n'}$ in $\poly(n, d, \log(1/\eps)) \subseteq \poly(n, d, \log(\frac{R \cdot \| \bC \|_2}{\eps'})) \subseteq \poly(n, d, r, \log(1/\eps'))$ time such that $\langle \bC, \wh{\bX} \rangle \leq \langle \bC, \bX^* \rangle + \eps R \cdot \| \bC \|_2 = \langle \bC, \bX^* \rangle + \eps'$ as desired.

\section{Python code for reproducing experiments}
\label{sec:appendix-python-code}

\lstdefinestyle{mystyle}{
    basicstyle=\ttfamily\footnotesize,
    backgroundcolor=\color{lightgray!20},
    keywordstyle=\color{blue}\bfseries,
    commentstyle=\color{green!50!black},
    stringstyle=\color{red},
    showstringspaces=false,
    numbers=left,
    numberstyle=\tiny\color{gray},
    breaklines=true,
    frame=single
}

\lstset{style=mystyle}

\begin{lstlisting}[language=Python, caption={Python script for experiments}, label={lst:python_script}]
import numpy as np
import matplotlib.pyplot as plt
import pickle
import sys

from sklearn import linear_model
from tqdm import tqdm
from typing import Tuple

def estimate(samples: np.ndarray) -> Tuple[np.ndarray, np.ndarray]:
    N, d = samples.shape
    X = np.concatenate([np.identity(d) for _ in range(N)])
    y = np.concatenate(samples)
    reg = linear_model.LassoLarsCV(cv=5)
    reg.fit(X, y)
    opt_est = reg.coef_
    emp_est = 1./N * sum(y_i for y_i in samples)
    return opt_est, emp_est

def run_experiments(
        rng: np.random.Generator,
        d: int,
        s: int,
        q: float,
        Nmin: int,
        Nmax: int,
        Nstep: int,
        Nrepeats: int,
        fname: str
    ) -> None:
    assert 0 <= s and s <= d

    # Generate random ground truth mu
    mu = [0 for _ in range(d)]
    for i in range(s):
        mu[i] = q/s * rng.choice([-1, 1])
    mu = np.array(mu)

    # Run
    N_vals = np.arange(Nmin, Nmax+1, Nstep)
    opt_err = [[] for _ in range(Nrepeats)]
    emp_err = [[] for _ in range(Nrepeats)]
    for run_idx in tqdm(range(Nrepeats)):
        samples = rng.multivariate_normal(mu, np.identity(d), size=Nmax)
        for N in tqdm(N_vals):
            opt_est, emp_est = estimate(samples[:N])
            opt_err[run_idx].append(np.linalg.norm(opt_est - mu, 2))
            emp_err[run_idx].append(np.linalg.norm(emp_est - mu, 2))

    # Save results
    results = [N_vals, opt_err, emp_err]
    with open("{0}.pkl".format(fname), 'wb') as file:
        pickle.dump(results, file)

    # Generate plot
    generate_plot(fname)

def generate_plot(fname: str) -> None:
    with open("{0}.pkl".format(fname), 'rb') as file:
        results = pickle.load(file)
    N_vals, opt_err, emp_err = results

    opt_mean = np.mean(opt_err, axis=0)
    opt_std = np.std(opt_err, axis=0)
    emp_mean = np.mean(emp_err, axis=0)
    emp_std = np.std(emp_err, axis=0)
    plt.plot(N_vals, opt_mean, label="TestAndOptimize", color='g')
    plt.plot(N_vals, emp_mean, label="Empirical", color='r')
    plt.fill_between(N_vals, opt_mean - opt_std, opt_mean + opt_std, color='g', alpha=0.5)
    plt.fill_between(N_vals, emp_mean - emp_std, emp_mean + emp_std, color='r', alpha=0.5)
    plt.xlabel("Number of samples")
    plt.ylabel(r"$\ell_2$ error")
    plt.legend()
    plt.savefig("{0}.png".format(fname), dpi=300, bbox_inches='tight')

if __name__ == "__main__":
    mode = int(sys.argv[1])
    d = int(sys.argv[2])
    s = int(sys.argv[3])
    q = float(sys.argv[4])
   
    seed = 314159
    rng = np.random.default_rng(seed)
    Nmin = 10
    Nmax = 300
    Nstep = 10
    Nrepeats = 10
    fname = "plot_d{0}_sparsity{1}_L1norm{2}_Nmax={3}_runs={4}".format(d, s, q, Nmax, Nrepeats)

    if mode == 0:
        run_experiments(rng, d, s, q, Nmin, Nmax, Nstep, Nrepeats, fname)
    elif mode == 1:
        generate_plot(fname)
    else:
        raise ValueError("Invalid mode. Use '0' for full run and '1' for just plotting.")

\end{lstlisting}

\end{document}